\newtheorem{theorem}{Theorem}
\newtheorem{definition}[theorem]{Definition}
\newtheorem{proposition}[theorem]{Proposition}
\newtheorem{corollary}[theorem]{Corollary}
\newtheorem{lemma}[theorem]{Lemma}
\newtheorem{remark}[theorem]{Remark}
\numberwithin{theorem}{section}
\newcommand{\Var}{\operatorname{Var}}
\newcommand{\Lip}{\text{Lip}}
\newcommand{\range}{\text{Range}}
\newcommand{\dom}{\text{dom}}
\newcommand{\sgn}{\text{sgn}}
\DeclareMathOperator*{\argmax}{argmax} 
\newcommand{\req}[1]{Eq.\,(\ref{#1})}
\title{$(f,\Gamma)$-Divergences: Interpolating between $f$-Divergences and Integral Probability Metrics}
\author{  Jeremiah Birrell\\
  TRIPODS Institute for Theoretical Foundations of Data Science\\
  University of Massachusetts Amherst\\
  Amherst, MA 01003,  USA \\
  \texttt{birrell@math.umass.edu} \\
   \And
 Paul Dupuis\\
  Division of Applied Mathematics\\
  Brown University\\
  Providence, RI 02912, USA \\
  \texttt{dupuis@dam.brown.edu} \\
  \And
    Markos A. Katsoulakis\\
    Department of Mathematics and Statistics\\
  University of Massachusetts Amherst\\
  Amherst, MA 01003,  USA \\
  \texttt{markos@math.umass.edu} \\
     \And
 Yannis Pantazis \\
  Institute of Applied and Computational Mathematics\\
  Foundation for Research and Technology - Hellas\\
  Heraklion, GR-70013, Greece \\
  \texttt{pantazis@iacm.forth.gr}\\
\And
    Luc Rey-Bellet\\
    Department of Mathematics and Statistics\\
  University of Massachusetts Amherst\\
  Amherst, MA 01003,  USA \\
  \texttt{luc@math.umass.edu} 
}
\begin{document}
\maketitle

\begin{abstract}
We develop a rigorous and general framework for  constructing  information-theoretic divergences that subsume both $f$-divergences and integral probability metrics (IPMs),  such as  the  $1$-Wasserstein distance. We prove under which assumptions these divergences, hereafter referred to as $(f,\Gamma)$-divergences,  provide a notion of `distance' between probability measures and show that they can be expressed as a two-stage mass-redistribution/mass-transport  process. The  $(f,\Gamma)$-divergences inherit features  from IPMs,   such as   the ability  to compare distributions which are not absolutely continuous, as well as   from $f$-divergences, namely   the strict concavity of their variational representations and the ability to control heavy-tailed distributions  for particular choices of $f$. When combined, these features  establish a divergence with improved properties for estimation, statistical learning, and uncertainty quantification applications. Using statistical learning as an example, we demonstrate their advantage in training generative adversarial networks (GANs) for heavy-tailed, not-absolutely continuous sample distributions. We also show improved performance and stability over gradient-penalized Wasserstein GAN in image generation. 
\end{abstract}

\keywords{$f$-divergences \and Integral probability metrics \and Wasserstein metric  \and Variational representations  \and GANs}

\section{Introduction}

Divergences and metrics provide a notion of `distance' between multivariate probability distributions, thus allowing for comparison of models with one another and with data. Divergences are used in many theoretical and practical problems in mathematics, engineering, and the natural sciences, ranging from statistical physics, large deviations theory, uncertainty quantification and statistics to   information theory, communication theory, and machine learning.    In this work, we introduce and study what we term the $(f,\Gamma)$-divergences, denoted by $D_f^\Gamma$ and defined by the variational expression
\begin{align}
D_f^\Gamma(Q\|P) \equiv& \sup_{ g\in\Gamma}\left\{E_Q[  g]-\Lambda_f^P[g]\right\}\,,\label{eq:Df_Gamma_intro}\\
\Lambda_f^P[g]\equiv&\inf_{\nu\in\mathbb{R}}\left\{\nu+E_P[f^*( g-\nu)]\right\}\,,\label{eq:Lambda_f_def_intro}
\end{align}
where $Q$ and $P$ are probability measures, $f$ is a convex function with $f(1)=0$, $f^*$ denotes the Legendre Transform (LT) of $f$, and  $\Gamma\subset \mathcal M_b(\Omega)$ is an appropriate function space\footnote{$\mathcal M_b(\Omega)$ denotes the set of all measurable and bounded real-valued functions on $\Omega$.}. The resemblance to the variational representation of the $f$-divergence is evident (see \req{eq:Df_variational_bounded_intro} below), however, the additional optimization over shifts $\nu$ in \eqref{eq:Lambda_f_def_intro}, 
which is motivated by the Gibbs variational principle \cite{BenTal2007}, will enable the derivation of many theoretical properties of the $(f,\Gamma)$-divergence.
In the special case of the Kullback-Leibler (KL)  divergence, 
$\Lambda_f^P[g]$ is exactly the cumulant generating function that arises in  the Donsker-Varadhan variational formula \cite{Dupuis_Ellis}.
We will show that the $(f,\Gamma)$-divergences are related to, interpolate between, and inherit key properties from  both the $f$-divergences 
and   the integral probability metrics (IPMs). 
To motivate the definition in \eqref{eq:Df_Gamma_intro}, we first recall  the definition and basic properties of $f$-divergences and IPMs.

The family of $f$-divergences includes among others the KL divergence \cite{kullback1951}, the total variation distance, the $\chi^2$-divergence, the Hellinger distance, and the Jensen-Shannon divergence   \cite{Ali1966,csiszar1967}.  The $f$-divergence  between two probability measures $Q$ and $P$ induced by a convex function $f$ satisfying $f(1)=0$ is defined by
\begin{equation}\label{eq:Df_def_intro}
D_f(Q\|P)\equiv
     E_P[f(dQ/dP)]\, . 
\end{equation}
This definition assumes absolute continuity between $Q$ and $P$, $Q\ll P$, which in particular means that the support of $Q$ is included in the support of $P$.
The estimation of an $f$-divergence directly from \eqref{eq:Df_def_intro} is challenging since it requires  knowledge  of  the likelihood ratio (i.e., Radon-Nikodym derivative) $dQ/dP$, such as when working within a parametric family, or  of a reasonable approximation to $dQ/dP$, usually through histogram binning, kernel density estimation \cite{Wang2005, KandasamyEtAl:2015vm}, or through $k$-nearest neighbor approximation \cite{Wang2006}. However, parametric methods greatly restrict the  collection of allowed models, resulting in reduced expressivity,  whereas non-parametric likelihood-ratio methods do not scale efficiently with the dimension of the data \cite{KrishnamurthyEtAl:2014rd}. To address such challenges, statistical estimators which are based on variational representations of divergences have recently been  introduced  \cite{Nguyen_Full_2010,MINE_paper}.  

Variational representation formulas for divergences, often referred to as dual formulations,  convert  divergence estimation into, in principle,  an infinite-dimensional optimization problem over a function space.  A typical example of a variational representation is the LT representation of the $f$-divergence between $Q$ and $P$, given by \cite{Broniatowski,Nguyen_Full_2010}
\begin{equation}
D_f(Q\|P)
= \sup_{ g\in \mathcal{M}_b(\Omega)} \big\{E_Q[ g]-E_P[f^*( g)]\big\}\,.
\label{eq:Df_variational_bounded_intro}
\end{equation}
Such representations offer a useful mathematical tool to measure  statistical similarity between data collections as well as to build, train, and compare complex probabilistic models. The main practical advantage of variational formulas is that an explicit form of the probability distributions or  their likelihood ratio, $dQ/dP$, is not necessary. Only samples from both distributions are required since the   difference of expected values in \eqref{eq:Df_variational_bounded_intro} can be approximated by statistical averages. In practice, the infinite-dimensional function space has to be approximated or even restricted. One of the first attempts was the restriction of the function space to a reproducing kernel Hilbert space (RKHS) and the corresponding kernel-based approximation in \cite{Nguyen_Full_2010}. More recently, the optimization \eqref{eq:Df_variational_bounded_intro} has been approximated   using flexible regression models and particularly by neural networks \cite{MINE_paper}  and these techniques are widely  used in the training of generative adversarial networks (GANs) \cite{GAN,WGAN,f_GAN,wgan:gp}. Variational representations of divergences have also been used to quantify the model uncertainty in a probabilistic model (arising, e.g., from insufficient data and partial expert knowledge).  For instance, applying the $f$-divergence formula \eqref{eq:Df_variational_bounded_intro} to $cg-\nu$,  solving for $E_Q[g]$, and optimizing over $c>0$, $\nu\in\mathbb{R}$ leads to the  uncertainty quantification (UQ) bound    \cite{chowdhary_dupuis_2013,DKPP}
\begin{align}\label{eq:UQ_Df}
E_Q[g] \le \inf_{ c>0}\left\{\frac{1}{c}\Lambda_f^P[cg]+\frac{1}{c}D_f(Q\|P)\right\}\, .
\end{align}
Similarly, one can obtain a corresponding lower bound for any  quantity of interest $g \in \mathcal{M}_b(\Omega)$.
The UQ inequality \eqref{eq:UQ_Df} bounds  the uncertainty in the expectation of $g$ under an alternative model $Q$ in terms of expectations under the baseline model $P$ and the discrepancy between $Q$ and $P$ (quantified via $D_f(Q\|P)$). Further discussion of the general connection between variational characterizations of divergences and UQ can be found in \cite{Glasserman2014,AtarChowdharyDupuis,Lam2016,Breuer2016,GKRW,Dupuis2019-AAP,Dupuis:Mao:2019,birrell2020optimizing}.

Integral probability metrics are defined directly in terms of a variational formula \cite{Muller1997,sriperumbudur2009integral}, generalizing the  Kantorovich–Rubinstein variational formula for the Wasserstein metric \cite{villani2008optimal}. More specifically, they are defined by maximizing the differences of respective expected  values over a function space $\Gamma$, 
\begin{equation}
W^{\Gamma}(Q, P)
= \sup_{ g\in \Gamma} \big\{E_Q[ g]-E_P[ g]\big\}\,,
\label{eq:IPM_intro}
\end{equation}
and we refer to this object as the $\Gamma$-IPM. Despite the name, IPMs are not necessarily metrics in the mathematical sense unless further assumptions on $\Gamma$ are made. This will not be an issue for us going forward, as we are not focused on the metric property; we will be concerned with the divergence property, as defined in Section \ref{sec:notation} below. Examples of IPMs include: the total variation  metric, which is derived when the function space $\Gamma$ is the unit ball in the space of  bounded measurable functions; the Wasserstein, metric where $\Gamma$ is  the space of Lipschitz continuous functions with Lipschitz constant less than or equal to one; the Dudley metric, where the function space $\Gamma$ is the unit ball in the space of  bounded and Lipschitz continuous functions; and the maximum mean discrepancy (MMD), where $\Gamma$ is the the unit ball in a RKHS, see also \cite{Muller1997,sriperumbudur2009integral,sriperumbudur2012}. The definition of an IPM through the variational formula \eqref{eq:IPM_intro} leads to straightforward and unbiased statistical estimation algorithms \cite{sriperumbudur2012}. Furthermore, the Wasserstein metric applied to generative adversarial networks (GANs) is known to substantially improve the stability of the training process \cite{WGAN,wgan:gp}, while MMD offers one of the most reliable two-sample tests for high dimensional statistical distributions \cite{gretton2012}.

In summary, there are two fundamental mathematical ingredients involved in  variational formulas for $f$-divergences and IPMs, with both families having their own strengths and weaknesses. 
\begin{enumerate}[a)]
    \item \emph{The Objective Functional}: The objective functional in a variational representation  is the quantity being maximized, namely  $E_Q[ g]-E_P[f^*( g)]$ for the $f$-divergences and $E_Q[ g]-E_P[ g]$ for the IPMs. The former depends on $f$ and for appropriate $f$'s it is  strictly concave in $ g$, while the latter is the same for all IPMs and is linear in $ g$.  Stronger convexity/concavity properties could result in improved statistical learning, estimation, and convergence performance. 
    The ability to vary the objective functional by choosing $f$ also allows one to tailor the divergence to the data source, e.g., for heavy tailed data. 
    Finally, note that  alternative objective functionals  can yield the same divergence \cite{BenTal2007,Ruderman,MINE_paper,birrell2020optimizing}, and their careful choice  can have a substantial impact on their statistical estimation \cite{MINE_paper,Ruderman, birrell2020optimizing}. 

    \item \emph{The Function Space}: This is the space over which the objective functional is optimized.  In \eqref{eq:Df_variational_bounded_intro}, it is the same function space for all $f$-divergences, namely $\mathcal{M}_b(\Omega)$, while the choice of function space $\Gamma$ is what defines an IPM in  \eqref{eq:IPM_intro}. The choice of $\Gamma$ has a profound impact on the properties of a divergence, e.g., the ability to meaningfully compare not-absolutely continuous distributions.
\end{enumerate}

As we will show, the properties of the $(f,\Gamma)$-divergences 
can be tailored to the requirements of a particular problem through the choice of the objective functional (via $f$) and the function space  $\Gamma$. The need for such a flexible family of divergences that combines the strengths of both $f$-divergences and IPMs is motivated by problems in machine learning and UQ, where properties of the  data source or baseline model dictate the requirements on $f$ and $\Gamma$, e.g., the $f$-divergence UQ bound \eqref{eq:UQ_Df} is unable to treat structurally different  alternative models $Q$, which can easily be mutually singular with $P$, as $D_f(Q\|P)=\infty$ under a loss of absolute continuity; similar issues appear in GANs, \cite{WGAN}.

Related approaches  include the recent studies \cite{arXiv:1809.04542,NIPS2018_7771,miyato2018spectral,Bridging_fGan_WGan,NEURIPS2019_eae27d77,Dupuis:Mao:2019,2021arXiv210608929G}.  {  In \cite{miyato2018spectral} the authors studied the use of spectral normalization to impose a Lipschitz constraint on the discriminator of a GAN; this is an example of \eqref{eq:Df_Gamma_intro} with  a particular choice of function space.} {  In  \cite{Bridging_fGan_WGan}, the authors proposed   a class of objective functionals with an additional optimization layer,
aiming to bridge the gap between the variational formulas for $f$-divergences and Wasserstein metrics and applied it to adversarial training of generative models. 
However, the paper does not  provide a rigorous connection  to the Wasserstein metric, since the  function space  appearing in their main Theorem 1  cannot include a Lipschitz constraint. This is in contrast to their practical implementation in Algorithm 1, which does employ a Lipschitz constraint. Our approach bridges this gap between theory and practice, as we are able to explicitly handle Lipschitz function spaces. Finally, our approach does not require the introduction of a third neural network, no matter what the choice of $f$-divergence may be.}
On the other hand, the authors in \cite{Dupuis:Mao:2019} developed a variational formula for general function spaces in the case of the KL divergence, providing a systematic and rigorous  interpolation between KL divergence and IPMs. Definition \eqref{eq:Df_Gamma_intro} can be also viewed as a regularization of the classical $f$-divergences, and related objects  {  have also been introduced and studied in \cite{arXiv:1809.04542,NEURIPS2019_eae27d77,NIPS2018_7771,2021arXiv210608929G}. While there is some overlap with several prior works}, the aim of this paper is to provide a systematic and rigorous   development of the  $(f,\Gamma)$-divergences, focusing on a number of new properties that are potentially beneficial in learning and UQ  applications. Specifically:
\begin{enumerate}
    \item We derive conditions under which $D_f^\Gamma$ has the divergence property, and thus provide a well-defined notion of `distance' (Part 4 of Theorem \ref{thm:general_ub} and Part 4 of Theorem \ref{thm:f_div_inf_convolution}). {  One key novelty is the introduction of the object \eqref{eq:Lambda_f_def_intro} which is critical in the proof of this property.}
    \item We show that $D_f^\Gamma$ interpolates between the $f$-divergence and $\Gamma$-IPM in the sense of infimal convolutions, including existence of an optimizer (Parts 1 and 2 of Theorem \ref{thm:f_div_inf_convolution}). {  Again, \eqref{eq:Lambda_f_def_intro} plays a critical role here.}
    \item Using the infimal convolution formula, we derive a mass-redistribution/mass-transport interpretation of the $(f,\Gamma)$-divergences (Section \ref{sec:mass_transport}).
    \item  We  show that the family of  $(f,\Gamma)$-divergences includes  $f$-divergences and  $\Gamma$-IPMs in suitable asymptotic limits  (Theorem \ref{thm:limit}).
    \item The relaxation of the hard constraint $ g\in\Gamma$ in \eqref{eq:Df_Gamma_intro} to a soft-constraint penalty term is presented in Theorem \ref{thm:soft_constraint}. This is a generalization of the gradient penalty method for Wasserstein metrics  \cite{wgan:gp} to a much larger class of objective functionals and penalties and a key tool in  designing numerically efficient  implementations while still preserving the divergence property.
    \item Relaxation of the condition $\Gamma\subset \mathcal{M}_b(\Omega)$ in \eqref{eq:Df_Gamma_intro}, i.e., allowing $\Gamma$ to contain appropriate unbounded  functions, is addressed in Theorem \ref{thm:unbounded_Lip}. This is a necessary point when employing neural network estimation with unbounded activation functions.
    \item We show that the $(f,\Gamma)$-divergences inherit several properties from both  $f$-divergences and the IPMs. The primary advantage inherited from  IPMs  is the ability  to compare distributions which are not absolute continuous. The primary advantages inherited from the $f$-divergence are the strict concavity of the objective functional with respect to the test function, $g$, and the ability to compare heavy-tailed distributions (Section \ref{sec:examples}). 
\end{enumerate}
When combined, these advantages  establish a divergence with better convergence and estimation properties. We numerically demonstrate these merits in the training of GANs. In Section \ref{sec:submanifold_ex}, we show that the proposed divergence is capable of adversarial learning of lower dimensional sub-manifold distributions with heavy tails. In this example,  both $f$-GAN \cite{f_GAN} and Wasserstein GAN with  gradient penalty  (WGAN-GP) \cite{wgan:gp}  fail to converge or perform very poorly. {   Furthermore, in Section \ref{ex:C10} we present improvements over WGAN-GP and WGAN with spectral-normalization (WGAN-SN) \cite{miyato2018spectral}, as measured by  the inception score \cite{salimans2016improved}  and  FID score \cite{10.5555/3295222.3295408}  (two standard performance measures), } in real datasets and particularly in CIFAR-10 \cite{krizhevsky2009learning} image generation. 
Interestingly, the training stability is significantly enhanced when using the proposed $(f,\Gamma)$-divergence, as compared to WGAN,  which is evident from the fact that increasing the  learning rate (i.e., stochastic gradient descent step size) eventually results in the collapse of WGAN but has comparatively little impact on our newly proposed method. We conjecture that this is due  to the strict concavity of the objective functional of the $(f,\Gamma)$-divergence. We refer to  these new proposed GANs which are based on  $(f,\Gamma)$-divergences as  $(f,\Gamma)$-GANs.

The organization of the paper is as follows. The key properties of the  $(f,\Gamma)$-divergences 
are presented in Section \ref{sec:gen_f_div}. The mass-redistribution/mass-transport interpretation of the $(f,\Gamma)$-divergences is discussed  in Section \ref{sec:mass_transport}. Section \ref{sec:soft_constraint} develops a general theory of soft-constraint penalization.  Section \ref{sec:unbounded_extension} provides conditions under which the function space $\Gamma$ can be expanded to contain unbounded functions. The application of the $(f,\Gamma$)-divergences in adversarial generative modelling is presented in Section \ref{sec:examples}.  We conclude the paper and discuss plans for future work in Section \ref{sec:concl}. Finally, detailed proofs can be found in the appendices.

\section{Construction and Properties of the $(f,\Gamma)$-Divergences}
\label{sec:gen_f_div}
In this section, we will derive the divergence property for the $(f,\Gamma)$-divergences  
and show that they interpolate between $f$-divergences and IPMs as it is described in our main result (Theorem \ref{thm:f_div_inf_convolution}).  First we introduce  our notation and recall some important properties of the  $f$-divergences.

\subsection{Notation}\label{sec:notation}
For the remainder of the paper $(\Omega,\mathcal{M})$ will denote a measurable space, $\mathcal{M}(\Omega)$ will be the set of all measurable real-valued functions on $\Omega$, $\mathcal{M}_b(\Omega)$ will denote the subspace of bounded measurable functions,  $\mathcal{P}(\Omega)$ will denote the space of probability measures on $(\Omega,\mathcal{M})$, and $M(\Omega)$ will be the set of finite signed measures on $(\Omega,\mathcal{M})$. A subset $\Psi\subset \mathcal{M}_b(\Omega)$ will be called {\bf $\mathcal{P}(\Omega)$-determining} if for all $Q,P\in\mathcal{P}(\Omega)$, $\int \psi dQ=\int \psi dP$ for all $\psi\in \Psi$ implies $Q=P$. The integral (expectation) of $ g$ with respect to $P\in\mathcal{P}(\Omega)$ will also be written as $E_P[ g]$.   We say that a map $D:\mathcal{P}(\Omega)\times\mathcal{P}(\Omega)\to[0,\infty]$ has the {\bf divergence property} if $D(Q,P)=0$ if and only if $Q=P$; such maps provide a notion of `distance' between probability measures.
 \begin{remark}
 We emphasize that despite the standard (but potentially confusing)  terminology, not all $f$-divergences have the divergence property; see Section \ref{sec:f_div_background} below for further information. Going forward, we will continue to distinguish between what we call a divergence and the divergence property. 
 \end{remark}

$(S,d)$ will denote a complete separable metric space (i.e., a Polish space), $C(S)$ will denote the space of continuous real-valued functions on $S$, and $C_b(S)$ will be the subspace of bounded continuous functions.  $\Lip(S)$ will denote the space of  Lipschitz functions on $S$, $\Lip_b(S)$ the subspace of bounded Lipschitz functions,  and for $L> 0$ we let $\Lip_b^L(S)$  denote the subspace consisting of bounded $L$-Lipschitz functions (i.e., functions having Lipschitz constant $L$). $\mathcal{P}(S)$ will denote the space of Borel probability measures on $S$ equipped with the Prohorov metric, thus making $\mathcal{P}(S)$ a Polish space. Recall that the Prohorov metric topology on $\mathcal{P}(S)$ is the same as the weak topology induced by the set of functions $\pi_ g:P\mapsto E_P[ g]$, $ g\in C_b(S)$.  For $\mu\in M(S)$ (finite signed Borel measures on $S$) we define $\tau_\mu:C_b(S)\to\mathbb{R}$ by $\tau_\mu(g)=\int gd\mu$ and we let $\mathcal{T}=\{\tau_\mu:\mu\in M(S)\}$.  $\mathcal{T}$ is a separating vector space of  linear functionals on $C_b(S)$.  We equip $C_b(S)$ with the weak topology from $\mathcal{T}$ (i.e., the weakest topology on $C_b(S)$ for which every $\tau\in \mathcal{T}$ is continuous), which makes $C_b(S)$ a locally convex topological vector space with dual space $C_b(S)^*=\mathcal{T}$ \cite[Theorem 3.10]{rudin2006functional}. We will let $\overline{\mathbb{R}}\equiv\mathbb{R}\cup\{-\infty,\infty\}$ denote the extended reals. Given a  function $h:\mathbb{R}\to\overline{\mathbb{R}}$, its Legendre transform is defined by $h^*(y)\equiv \sup_{x\in\mathbb{R}}\{yx-h(x)\}$. Recall that if $h:\mathbb{R}\to(-\infty,\infty]$ is convex and lower semicontinuous (LSC) then $(h^*)^*=h$ \cite[Theorem 2.3.5]{bot2009duality}. Also recall that if $h$ is convex and finite on $(a,b)$ then the left and right derivatives, which we denote by $h^\prime_-(x)$ and $h^\prime_+(x)$ respectively, exist for all $x\in(a,b)$ \cite[Chapter 1]{roberts1974convex}. We will denote the closure of a set $A$ by $\overline{A}$ and its interior by $A^o$. {  Finally,  we include in Table \ref{table:notation} a list of important notations, some of which are defined elsewhere in the manuscript, with corresponding  references.}

\begin{table}
\centering
{ 
\caption{List of main symbols used throughout the manuscript.}
\begin{tabular}{||c c c ||} 
 \hline
 Notation & Description   & Reference  \\  \hline
 \hline
 $(\Omega,\mathcal{M})$ & Measurable space  & Section \ref{sec:notation}\\ \hline
 $(S,d)$ & Metric  space  & Section \ref{sec:notation}\\ \hline
 $M(\Omega)$ \& $M(S)$ & Spaces of finite signed measures &  Section \ref{sec:notation}  \\  \hline
 $\mathcal P(\Omega)$ \& $\mathcal P(S)$ & Spaces of probability measures & Section \ref{sec:notation} \\ \hline
 $\mathcal M(\Omega)$ \& $\mathcal M_b(\Omega)$ & Spaces of measurable real-valued  functions &  Section \ref{sec:notation}  \\  \hline 
 $C(S)$ \& $C_b(S)$ & Spaces of continuous real-valued functions & Section \ref{sec:notation}  \\  \hline
 $\Lip(S)$ \& $\Lip_b(S)$ & Spaces of Lipschitz continuous functions & Section \ref{sec:notation}  \\  \hline
 $P$, $Q$ & Probability distributions/measures & Section \ref{sec:notation} \\ \hline
 $f$ & Convex function on $\mathbb{R}$ & Definition \ref{def:F_1}\\
 \hline
 $\mathcal F_1(a,b)$ & Set of convex functions  & Definition \ref{def:F_1}\\
 \hline
 $D_f$ & $f$-Divergence &  \req{eq:Df_def} \\ 
 \hline
 $\Lambda_f^P$ & Generalization of the cumulant generating function & \req{eq:Lambda_f_def}\\
 \hline
 $\Gamma$ & Test function space & Definition \ref{def:f_Gamma_Div} \\
 \hline
$D_f^\Gamma$ & $(f,\Gamma)$-Divergence &  \req{eq:gen_f_def} \\ 
 \hline 
$W^\Gamma$ & $\Gamma$-Integral probability metric & \req{eq:gen_wasserstein} \\ 
 \hline
 $W^\rho$ & Gradient-penalty Wasserstein divergence & \req{W_rho} \\ 
 \hline
 $D^L_\alpha$ & Lipschitz $\alpha$-divergence & \req{eq:f_alpha_nu} - \eqref{eq:Df_alpha_L2}  \\ 
 \hline
\end{tabular}
\label{table:notation}
}
\end{table}

\subsection{Background on $f$-Divergences}\label{sec:f_div_background}
The $f$-divergences are constructed using functions of the following form:
\begin{definition}\label{def:F_1}
For  $a,b$ with $-\infty\leq a<1<b\leq\infty$  we define   $\mathcal{F}_1(a,b)$ to be the set of convex functions $f:(a,b)\to\mathbb{R}$ with $f(1)=0$. For $f\in\mathcal{F}_1(a,b)$, if $b$ is finite we extend the definition of $f$ by $f(b)\equiv\lim_{x\nearrow b}f(x)$. Similarly, if $a$ is finite we define  $f(a)\equiv\lim_{x\searrow a}f(x)$ (convexity implies  these limits exist in $(-\infty,\infty]$). Finally, extend $f$ to $x\not\in [a,b]$ by $f(x)=\infty$.  The resulting function $f:\mathbb{R}\to(-\infty,\infty]$ is convex and  LSC.
\end{definition}
 The  $f$-divergences are then defined as follows:
 \begin{definition}\label{def:f_div}
 For $f\in\mathcal{F}_1(a,b)$ and $Q,P\in\mathcal{P}(\Omega)$ the corresponding $f$-divergence is defined by
 \begin{align}\label{eq:Df_def}
D_f(Q\|P)\equiv \begin{cases} 
     E_P[f(dQ/dP)], & Q\ll P\\
      \infty, &Q\not\ll P\,.
   \end{cases}
\end{align}
\end{definition}
A number of important properties of $f$-divergences are collected in Appendix \ref{app:f_div}.  An $f$-divergence defines a notion of `distance' between probability measures, as is made precise by the following divergence property: $D_f(Q\|P)\geq 0$ for all $f\in\mathcal{F}_1(a,b)$ and if $f$ is furthermore strictly convex at $1$ (i.e., $f$ is not affine on any neighborhood of $1$) then $D_f(Q\|P)=0$ if and only if $Q=P$.  However, the $f$-divergences are generally not probability metrics.   Our primary examples will be the KL divergence  and the family of $\alpha$-divergences, which are constructed from the following functions:
\begin{align}\label{eq:f_alpha_def}
f_{KL}(x)\equiv x\log(x)\in\mathcal{F}_1(0,\infty)\,,\,\,\,\,\,
    f_\alpha(x)=\frac{x^\alpha-1}{\alpha(\alpha-1)}\in\mathcal{F}_1(0,\infty)\,,\text{ where $\alpha>0$, $\alpha\neq 1$}\,.
\end{align}
See \cite[Table 1]{f_GAN} for further examples.

Key to our work are a  pair of variational formulas that  relate the $f$-divergence to  the functional
\begin{align}\label{eq:Lambda_f_def}
    \Lambda_f^P[g]\equiv\inf_{\nu\in\mathbb{R}}\{\nu+E_P[f^*( g-\nu)]\}\,,\,\,\,\,\,g\in\mathcal{M}_b(\Omega)\,,
\end{align}
As we will see, $\Lambda_f^P$ takes the place of the cumulant generating function when one generalizes from the KL divergence to $f$-divergences. The first of the following formulas expresses $D_f$ as an infinite-dimensional convex conjugate of $\Lambda_f^P$ and the second is the dual variational formula:
\begin{enumerate}
\item Let   $f\in\mathcal{F}_1(a,b)$ and  $Q,P\in\mathcal{P}(\Omega)$. Then,
\begin{align}
D_f(Q\|P)=&\sup_{ g\in \mathcal{M}_b(\Omega)}\{ E_Q[ g]-E_P[f^*( g)]\}\label{eq:Df_var_formula}\\
=&\sup_{ g\in \mathcal{M}_b(\Omega)}\{ E_Q[ g]-\Lambda_f^P[g]\}\, ,\label{eq:Df_var_formula_nu}
\end{align}
{  where the second equality follows from \eqref{eq:Lambda_f_def} and \eqref{eq:Df_var_formula} due to the invariance of $\mathcal{M}_b(\Omega)$ under the shift map $ g\mapsto g-\nu$ for $\nu\in\mathbb{R}$; see also Proposition \ref{prop:Df_var_formula}.} 
\item  Let $f\in\mathcal{F}_1(a,b)$ with $a\geq 0$,  $P\in\mathcal{P}(\Omega)$, and  $ g\in \mathcal{M}_b(\Omega)$. Then
we can rewrite $\Lambda_f^P[g]$ as
\begin{align}\label{eq:Df_Gibbs_var_formula}
\Lambda_f^P[g]=\sup_{Q\in\mathcal{P}(\Omega): D_f(Q\|P)<\infty}\{E_Q[ g]-D_f(Q\|P)\}\,.
\end{align}
\end{enumerate}
\begin{remark}
$f$-divergences can alternatively be defined in terms of the densities of $Q$ and $P$ with respect to some common dominating measure  \cite{LieseVajda}.  This definition agrees with \req{eq:Df_def} when $Q\ll P$ but in some cases the definition in \cite{LieseVajda} leads to a finite value even when $Q\not\ll P$.  In this paper, we use the definition \eqref{eq:Df_def} because it satisfies the variational formula \eqref{eq:Df_var_formula}, even when $Q\not\ll P$ (see the proof of Proposition \ref{prop:Df_var_formula}), as well as the dual formula \eqref{eq:Df_Gibbs_var_formula}.
\end{remark}
When $f=f_{KL}$ 
it is straightforward to show that $\Lambda_f^P$  becomes the cumulant generating function,
\begin{align}\label{eq:Lambda_KL}
      \Lambda_{f_{KL}}^P[g]=\log E_P[e^ g]\,,
\end{align} 
and \req{eq:Df_var_formula_nu} becomes the Donsker-Varadhan variational formula \cite[Appendix C.2]{Dupuis_Ellis}. Subsequently, \req{eq:Df_Gibbs_var_formula} becomes the  Gibbs variational formula \cite[Proposition 1.4.2]{Dupuis_Ellis}. For this reason, we will call \eqref{eq:Df_Gibbs_var_formula} the Gibbs variational formula for $f$-divergences. Versions of \req{eq:Df_var_formula} were proven in \cite{Broniatowski,Nguyen_Full_2010}; we provide an elementary proof in Theorem \ref{prop:Df_var_formula} of Appendix \ref{app:f_div} for completeness.  \req{eq:Df_var_formula_nu} is implicitly found in \cite[Theorem 1]{Ruderman}; see \cite{birrell2020optimizing} for further discussion of this relationship. More specifically, \cite{Ruderman,birrell2020optimizing} show that when $a\geq 0$ the representation in \eqref{eq:Df_var_formula} arises from convex duality over the space of finite positive measures  while \eqref{eq:Df_var_formula_nu} arises from convex duality over the space of probability measures. 
On a metric space $S$, the optimizations in \eqref{eq:Df_var_formula} - \eqref{eq:Df_var_formula_nu} can be restricted to $C_b(S)$ via the application of Lusin’s Theorem (see Corollary \ref{cor:Df_LSC}). The dual formula \eqref{eq:Df_Gibbs_var_formula} was proven in \cite{BenTal2007} and is also implicitly contained in \cite[Eq. (5)]{Ruderman} (we will require a  generalization that also covers the case $a<0$; see Proposition \ref{prop:Gibbs_M1}).  Under appropriate assumptions \cite[Theorem 4.4]{Broniatowski} the   optimizer of   \eqref{eq:Df_var_formula} is given by
\begin{align}\label{eq:Df_optimizer}
    g_*=f^\prime(dQ/dP)\,.
\end{align}
The definition in \eqref{eq:Df_def} does not depend on the value of $f(x)$ for $x<0$ and it is invariant under the transformation $f\mapsto f_c$ where $f_c(x)=f(x)+c(x-1)$, $c\in\mathbb{R}$. However, the objective functionals in the variational formulas \eqref{eq:Df_var_formula} and \eqref{eq:Df_var_formula_nu} can depend on these choices due to the presence of $f^*$. They both depend on the definition of $f(x)$ for $x<0$. The identity $f_c^*(y)=f^*(y-c)+c$ implies that  the objective functional in \eqref{eq:Df_var_formula} depends on the choice of $c$ but the objective functional in \req{eq:Df_var_formula_nu} does not.  Substituting $f_c$ into \req{eq:Df_var_formula} and then taking the supremum over $c\in\mathbb{R}$ is another way to derive \req{eq:Df_var_formula_nu}, thus providing additional motivation for the introduction of $\Lambda_f^P$.

\subsection{{ Definition and General Properties of the $(f,\Gamma)$-Divergences}}\label{sec:gen_f_div_thm}
Motivated by \req{eq:Df_var_formula_nu} - \eqref{eq:Df_Gibbs_var_formula}, by working with subsets of test functions   $\Gamma\subset \mathcal{M}_b(\Omega)$ we can construct a new family of so-called $(f,\Gamma)$-divergences whose convex conjugates at $g\in\Gamma$ equal $\Lambda_f^P[g]$  and that have  variational characterizations akin to \req{eq:Df_var_formula_nu}.  This is an extension of the ideas  in \cite{Dupuis:Mao:2019}, which studied generalizations of the  KL-divergence.  \emph{The identification of $\Lambda_f^P$ as the proper replacement for the cumulant generating function is the  key new insight required to extend from the KL case to general $f$}. Specifically, we make the following definition:
\begin{definition}\label{def:f_Gamma_Div}
Let $f\in\mathcal{F}_1(a,b)$ and  $\Gamma\subset \mathcal{M}_b(\Omega)$ be nonempty. For $Q,P\in\mathcal{P}(\Omega)$  we define the $(f,\Gamma)$-divergence by 
\begin{align}\label{eq:gen_f_def}
D_f^\Gamma(Q\|P)\equiv\sup_{ g\in\Gamma}\left\{E_Q[  g]-\Lambda_f^P[g]\right\}\,,
\end{align}
where $\Lambda_f^P$ was defined in \req{eq:Lambda_f_def},
and we define the $\Gamma$-IPM by
\begin{align}\label{eq:gen_wasserstein}
W^\Gamma(Q,P)\equiv\sup_{ g\in\Gamma}\left\{E_Q[ g ]-E_P[ g]\right\}\,.
\end{align}
\end{definition}
When we want to emphasize the distinction between $D_f(Q\|P)$ and $D_f^\Gamma(Q\|P)$ we will refer to the former as a classical $f$-divergence. When $f$ corresponds to the KL-divergence (see \req{eq:f_alpha_def}) we write $R(Q\|P)$ and $R^\Gamma(Q\|P)$ in place of $D_f(Q\|P)$ and $D_f^\Gamma(Q\|P)$, respectively.

The definition \eqref{eq:gen_f_def} is an infinite-dimensional  convex conjugate, akin to \req{eq:Df_var_formula_nu}. From \eqref{eq:Df_var_formula_nu}, we see that $D_f=D_f^\Gamma$ when $\Gamma=\mathcal{M}_b(\Omega)$ or, on a metric space $S$ (and for appropriate $f$'s), when $\Gamma=C_b(S)$ (see Corollary \ref{cor:Df_LSC} and Remark \ref{remark:Df_nu_shift}). The  $W^\Gamma$'s are generalizations of the classical Wasserstein metric on a metric space, which is obtained by setting $\Gamma=\Lip_b^1(S)$. Neither $W^\Gamma$ nor $D_f^\Gamma$   necessarily have the divergence property, however,  {  our main results present  conditions which do imply the divergence property. As we will see, the use of $\Lambda_f^P$ in \eqref{eq:gen_f_def} is crucial in our proof of the divergence property (see  Theorem \ref{thm:general_ub}), as well as in our derivation of the infimal convolution formula (see Theorem  \ref{thm:f_div_inf_convolution}). } 

One can alternatively write the $(f,\Gamma)$-divergence as
\begin{align}\label{eq:Df_Gamma_def2}
    D_f^\Gamma(Q\|P)=\sup_{ g\in\Gamma,\nu\in\mathbb{R}}\left\{E_Q[  g-\nu]-E_P[f^*( g-\nu)]\right\}\,.
\end{align}
This formulation is useful when computing a numerical approximation to $D_f^\Gamma(Q\|P)$. {  It shows that  $\Lambda_f^P$ in \eqref{eq:gen_f_def} does not need to be computed separately; one can formulate the computation as a single optimization problem, incorporating one additional $1$-dimensional parameter.} In addition, if $\Gamma$ is closed under the shift transformations $ g\mapsto g-\nu$, $\nu\in\mathbb{R}$ then one can write
\begin{align}\label{eq:Df_Gamma_no_shift}
    D_f^\Gamma(Q\|P)=\sup_{ g\in\Gamma}\left\{E_Q[  g]-E_P[f^*( g)]\right\}\,,
\end{align}
thus arriving at the objects defined in \cite{arXiv:1809.04542,NEURIPS2019_eae27d77,NIPS2018_7771}. 
In the KL case, one can simplify \req{eq:gen_f_def} by using \eqref{eq:Lambda_KL},
\begin{align}\label{eq:R_Gamma}
R^\Gamma(Q\|P)=\sup_{ g\in\Gamma}\left\{E_Q[  g]-\log E_P[e^{ g}]\}\right\}\,,
\end{align}
which results in the special case studied in \cite{Dupuis:Mao:2019}.

{ 
Several of our results will require us to work on a metric space (see Section \ref{sec:polish}), but first we present several properties that hold more generally.  In the following theorem we derive a   dual   variational formula to \eqref{eq:gen_f_def}, which shows that if $ g\in\Gamma$ then \req{eq:Df_Gibbs_var_formula} holds with $D_f$ replaced by $D_f^\Gamma$. This lends further credence to the definition \eqref{eq:gen_f_def} and its use of $\Lambda_f^P$.}
\begin{theorem}
\label{thm:gen_fdiv_dual_var}
Let $f\in\mathcal{F}_1(a,b)$ where $a\geq 0$, $P\in\mathcal{P}(\Omega)$, and $\Gamma\subset \mathcal{M}_b(\Omega)$ be nonempty.  For $g\in\Gamma$ we have
\begin{align}\label{eq:Gibbs_VF_Phi_P}
(D_f^\Gamma)^*( g;P)\equiv \sup_{Q\in \mathcal{P}(\Omega)}\{E_Q[ g] -D_f^\Gamma(Q\|P)\}=\Lambda_f^P[g]\,.
\end{align}
\end{theorem}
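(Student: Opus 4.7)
The plan is to establish the equality by proving the two inequalities separately, since the upper bound follows directly from the definition of $D_f^\Gamma$ and the lower bound reduces to the classical Gibbs variational formula for $f$-divergences.

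For the upper bound, fix $g\in\Gamma$. The defining formula \eqref{eq:gen_f_def} immediately yields, for every $Q\in\mathcal{P}(\Omega)$,
\begin{equation*}
D_f^\Gamma(Q\|P)\ge E_Q[g]-\Lambda_f^P[g],
\end{equation*}
and rearranging and taking the supremum over $Q$ gives $(D_f^\Gamma)^*(g;P)\le \Lambda_f^P[g]$.

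For the lower bound, I would exploit the containment $\Gamma\subset\mathcal{M}_b(\Omega)$. Since \eqref{eq:gen_f_def} takes a supremum over a smaller set than \eqref{eq:Df_var_formula_nu}, we obtain $D_f^\Gamma(Q\|P)\le D_f(Q\|P)$ for every $Q\in\mathcal{P}(\Omega)$. Consequently,
\begin{equation*}
E_Q[g]-D_f^\Gamma(Q\|P)\ge E_Q[g]-D_f(Q\|P).
\end{equation*}
Taking the supremum over $Q\in\mathcal{P}(\Omega)$ and invoking the Gibbs variational formula \eqref{eq:Df_Gibbs_var_formula} (which applies because the hypothesis $a\ge 0$ is exactly the one required there) yields
\begin{equation*}
(D_f^\Gamma)^*(g;P)\ge \sup_{Q\in\mathcal{P}(\Omega)}\{E_Q[g]-D_f(Q\|P)\}=\Lambda_f^P[g],
\end{equation*}
where one also notes that restricting to $Q$ with $D_f(Q\|P)<\infty$ in \eqref{eq:Df_Gibbs_var_formula} does not change the supremum, since $g$ is bounded and the contribution from $D_f(Q\|P)=\infty$ is $-\infty$.

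There is no real obstacle here: the argument is essentially a clean sandwich that leverages the already-established variational characterization of $D_f$ and the fact that $\Lambda_f^P$ is precisely the infinite-dimensional convex conjugate in both the $f$-divergence case and (by definition) the $(f,\Gamma)$ case. The hypothesis $a\ge 0$ enters only through the invocation of \eqref{eq:Df_Gibbs_var_formula}, and the hypothesis $g\in\Gamma$ (rather than an arbitrary bounded measurable function) is used only in the upper bound step where we apply the definition of $D_f^\Gamma$.
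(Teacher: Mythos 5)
Your proof is correct and follows essentially the same route as the paper's: the upper bound from the definition of $D_f^\Gamma$, and the lower bound from the sandwich $D_f^\Gamma\leq D_f$ combined with the Gibbs variational formula \eqref{eq:Df_Gibbs_var_formula} (the paper additionally proves a signed-measure version over $M_1(\Omega)$ first, but for the stated $\mathcal{P}(\Omega)$ case its argument is exactly your two-inequality sandwich). Your remark that restricting to $Q$ with $D_f(Q\|P)<\infty$ does not change the supremum is a correct and worthwhile detail.
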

\begin{remark}
We refer to Theorem \ref{thm:gen_fdiv_dual_var_app} in Appendix \ref{app:proofs} for the proof.  While most cases of interest like \req{eq:f_alpha_def} have $a\geq 0$, we also cover the case $a< 0$ in Theorem \ref{thm:gen_fdiv_dual_var_app}.
\end{remark}
Theorem \ref{thm:gen_fdiv_dual_var}  establishes $D_f^\Gamma$ as  a natural  generalization of $D_f$ when $\Gamma$ is used as the test-function space, generalizing the dual formula \eqref{eq:Df_Gibbs_var_formula} for $f$-divegences obtained in \cite{BenTal2007,Ruderman}. {  Next we show that the $D_f^\Gamma$ is bounded above by both $D_f$ and $W_\Gamma$. This fact allows the $(f,\Gamma)$-divergences to inherit many useful properties from both $f$-divergences and IPMs; see the examples in Section \ref{sec:examples}. We also give conditions under which $D^\Gamma_f$ has the divergence property and thus   provides a notion of `distance' between probability measures. This, along with Theorem \ref{thm:f_div_inf_convolution} below, constitute the main theoretical results of this paper. The proof of Theorem \ref{thm:general_ub} can be found in Theorem \ref{thm:general_ub_app} of Appendix \ref{app:proofs}.
\begin{theorem}\label{thm:general_ub}
Let $f\in\mathcal{F}_1(a,b)$, $\Gamma\subset\mathcal{M}_b(\Omega)$ be nonempty, and $Q,P\in\mathcal{P}(\Omega)$. 
\begin{enumerate}
    \item 
\begin{align}\label{eq:inf_conv_ineq}
    D_f^\Gamma(Q\|P)\leq \inf_{\eta\in\mathcal{P}(\Omega)}\{D_f(\eta\|P)+W^\Gamma(Q,\eta)\}\,.
\end{align}
In particular, $D_f^\Gamma(Q\|P)\leq \min\{D_f(Q\|P),W^\Gamma(Q,P)\}$.
\item The map $(Q,P)\in\mathcal{P}(S)\times\mathcal{P}(S)\mapsto D_f^\Gamma(Q\|P)$ is convex. 
\item If there exists $c_0\in \Gamma\cap\mathbb{R}$ then $D_f^\Gamma(Q\|P)\geq 0$.
\item Suppose $f$ and $\Gamma$  satisfy the following:
\begin{enumerate}
\item There exist a nonempty set $\Psi\subset\Gamma$ with the following properties:
\begin{enumerate}
\item $\Psi$ is $\mathcal{P}(\Omega)$-determining.
\item For all $\psi\in\Psi$  there exists $c_0\in\mathbb{R}$, $\epsilon_0>0$ such that $c_0+\epsilon \psi\in\Gamma$ for all $|\epsilon|<\epsilon_0$.
\end{enumerate}
\item $f$ is strictly convex on a neighborhood of $1$.
\item $f^*$ is finite and $C^1$ on a neighborhood of $\nu_0\equiv f_+^\prime(1)$.
\end{enumerate}
Then:
\begin{enumerate}[label=(\roman*)]
\item $D_f^{\Gamma}$ has the divergence property.
\item $W^{\Gamma}$ has the divergence property.
\end{enumerate}
\end{enumerate}
\end{theorem}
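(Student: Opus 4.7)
The plan is to handle the four parts in order, with the nontrivial content concentrated in Part 4.

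For Part 1, I would fix $g\in\Gamma$ and $\eta\in\mathcal{P}(\Omega)$ and split
\begin{align*}
E_Q[g]-\Lambda_f^P[g] = \big(E_Q[g]-E_\eta[g]\big) + \big(E_\eta[g]-\Lambda_f^P[g]\big) \le W^\Gamma(Q,\eta)+D_f(\eta\|P),
\end{align*}
where the first bracketed term is bounded by definition of $W^\Gamma$ and the second by the variational formula \eqref{eq:Df_var_formula_nu} applied to the pair $(\eta,P)$. Taking the supremum over $g\in\Gamma$ and the infimum over $\eta$ yields \eqref{eq:inf_conv_ineq}; specializing to $\eta=Q$ (with $W^\Gamma(Q,Q)=0$) and $\eta=P$ (with $D_f(P\|P)=0$) gives the minimum bound. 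Part 2 is then immediate: $Q\mapsto E_Q[g]$ is linear, while $P\mapsto\Lambda_f^P[g]$ is concave as an infimum of affine functionals of $P$, so $E_Q[g]-\Lambda_f^P[g]$ is jointly convex in $(Q,P)$, and supremum preserves convexity. For Part 3, I would plug $g=c_0$ into \eqref{eq:gen_f_def} and substitute $\mu=c_0-\nu$ in the infimum to obtain $\Lambda_f^P[c_0]=c_0-(f^*)^*(1)=c_0-f(1)=c_0$ (using $(f^*)^*=f$), so $E_Q[c_0]-\Lambda_f^P[c_0]=0$.

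For Part 4, the ``$Q=P\Rightarrow D_f^\Gamma=0$'' direction follows from Part 1 combined with Part 3, noting that assumption 4(a)(ii) with $\epsilon=0$ yields $c_0\in\Gamma\cap\mathbb{R}$; the corresponding direction for $W^\Gamma$ is trivial, and $W^\Gamma\ge 0$ follows by plugging in $g=c_0$. The interesting direction is the converse for $D_f^\Gamma$. Assuming $D_f^\Gamma(Q\|P)=0$, for each $\psi\in\Psi$ I would take $g=c_0+\epsilon\psi\in\Gamma$ for $|\epsilon|<\epsilon_0$. Using the shift identity $\Lambda_f^P[g+c]=c+\Lambda_f^P[g]$ (immediate from the change of variable $\nu\mapsto\nu-c$) this reduces to
\begin{align*}
\epsilon E_Q[\psi]-\Lambda_f^P[\epsilon\psi]\le 0\quad\text{for all }|\epsilon|<\epsilon_0.
\end{align*}

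The main obstacle is then to establish the one-sided expansion $\Lambda_f^P[\epsilon\psi]\le\epsilon E_P[\psi]+o(\epsilon)$ as $\epsilon\to 0$. To do this I would use $\nu=-\nu_0$ (where $\nu_0=f_+^\prime(1)$) as a trial value in the infimum defining $\Lambda_f^P[\epsilon\psi]$, giving $\Lambda_f^P[\epsilon\psi]\le-\nu_0+E_P[f^*(\epsilon\psi+\nu_0)]$. By assumption 4(c), $f^*$ is $C^1$ on a neighborhood of $\nu_0$, and since $\psi$ is bounded, for $|\epsilon|$ small the integrand lies in this neighborhood. The Fenchel–Young identity together with $\nu_0\in\partial f(1)$ yields $f^*(\nu_0)=\nu_0\cdot 1-f(1)=\nu_0$, while the duality $1\in\partial f^*(\nu_0)$ combined with differentiability gives $(f^*)'(\nu_0)=1$; a first-order Taylor expansion (uniform in $\psi$ because $\psi$ is bounded) then yields the claimed bound. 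Combining with the inequality above, $\epsilon(E_Q[\psi]-E_P[\psi])\le o(\epsilon)$, and dividing by $\epsilon>0$ and $\epsilon<0$ separately and sending $\epsilon\to 0$ forces $E_Q[\psi]=E_P[\psi]$. Since $\Psi$ is $\mathcal{P}(\Omega)$-determining, $Q=P$.

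The corresponding converse for $W^\Gamma$ is simpler: $W^\Gamma(Q,P)=0$ implies $E_Q[g]\le E_P[g]$ for all $g\in\Gamma$, and substituting $g=c_0+\epsilon\psi$ directly gives $\epsilon(E_Q[\psi]-E_P[\psi])\le 0$ for both signs of $\epsilon$, hence equality of expectations and $Q=P$. Assumption 4(b) (strict convexity of $f$ at $1$) is what guarantees that $\nu_0$ is effectively a singleton selection from $\partial f(1)$ that makes the Fenchel–Young computation well-posed, while assumption 4(c) is exactly what is needed for the Taylor expansion to be uniform enough to produce the $o(\epsilon)$ remainder; no other subtleties arise.
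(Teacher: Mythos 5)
Your proof is correct, and for the central content (Part 4) it is essentially the paper's argument: the paper also perturbs around a constant via $g_\epsilon=c_0+\epsilon\psi$, bounds $\Lambda_f^P[g_\epsilon]$ from above by the trial value $\nu=c_0-\nu_0$, and extracts the first-order condition $E_Q[\psi]=E_P[\psi]$ from the fact that $h(\epsilon)\equiv \nu_0+\epsilon E_Q[\psi]-E_P[f^*(\nu_0+\epsilon\psi)]$ is $\leq 0$ with $h(0)=0$; your Taylor-remainder phrasing versus the paper's computation of $h'(0)$ by dominated convergence is purely cosmetic. The genuine differences are in the easier parts, and they are mild improvements. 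For Part 1, the paper first replaces $f$ by its restriction $f_0$ to $(\max\{a,0\},b)$ so that it can invoke the Gibbs dual formula \eqref{eq:Df_Gibbs_var_formula} (valid only for $a\geq 0$) inside a minimax inequality; your pointwise splitting $E_Q[g]-\Lambda_f^P[g]=(E_Q[g]-E_\eta[g])+(E_\eta[g]-\Lambda_f^P[g])$ needs only the primal formula \eqref{eq:Df_var_formula_nu}, which holds for all $f\in\mathcal{F}_1(a,b)$, so the truncation step disappears. For Part 3, the paper plugs the specific value $\nu=c_0-\nu_0$ into the infimum and uses $f^*(\nu_0)=\nu_0$, whereas you evaluate $\Lambda_f^P[c_0]=c_0-(f^*)^*(1)=c_0$ exactly by biconjugation — slightly stronger and equally valid. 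Your direct argument for the divergence property of $W^\Gamma$ also differs from the paper, which deduces it from $D_f^\Gamma\leq W^\Gamma$; both work. One small remark: the role you ascribe to assumption 4(b) is not quite how the paper uses it — strict convexity near $1$ enters the paper's Lemma \ref{lemma:nu0} to establish $(f^*)'_+(\nu_0)=1$, whereas in your route the identity $(f^*)'(\nu_0)=1$ already follows from $1\in\partial f^*(\nu_0)$ plus the differentiability in 4(c); this does not affect correctness.
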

\begin{remark}
Under stronger assumptions one can show that \req{eq:inf_conv_ineq} is in fact an equality; see Theorem \ref{thm:f_div_inf_convolution} below.
\end{remark}
\begin{remark}
Assumptions 4(b) and 4(c) hold, for instance, if $f$ is strictly convex on $(a,b)$ and $\nu_0\in\{f^*<\infty\}^o$ (see Theorem 26.3 in \cite{rockafellar1970convex}).
\end{remark}
\req{eq:inf_conv_ineq} implies the following upper bound on $D^\Gamma_f$:
\begin{corollary}[Upper Bounds]\label{cor:upper_bound}
Let $\mathcal{U}\subset \mathcal{P}(\Omega)$.  Then
\begin{align}
    D_f^\Gamma(Q\|P)\leq \inf_{\eta\in\mathcal{U}}\{D_f(\eta\|P)+W^\Gamma(Q,\eta)\}\,.
\end{align}
For instance, $\mathcal{U}$ could be a pushforward family, i.e., the distributions of $h_\theta(X)$, $\theta\in \Theta$ where $h_\theta$ are $\Omega$-valued measurable maps and $X$ is a random quantity. Such families are used in GANs; see Section \ref{sec:examples}.
\end{corollary}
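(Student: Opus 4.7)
The plan is to obtain this as an immediate consequence of Part 1 of Theorem \ref{thm:general_ub}, relying only on the elementary monotonicity of the infimum under set restriction. Part 1 already establishes
\begin{align*}
    D_f^\Gamma(Q\|P)\leq \inf_{\eta\in\mathcal{P}(\Omega)}\{D_f(\eta\|P)+W^\Gamma(Q,\eta)\}\,,
\end{align*}
so the corollary requires nothing more than chaining this with the observation that shrinking the feasible set in an infimum can only increase its value.

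Concretely, first I would note that since $\mathcal{U}\subset\mathcal{P}(\Omega)$, every $\eta\in\mathcal{U}$ is admissible in the infimum over all of $\mathcal{P}(\Omega)$, whence
\begin{align*}
\inf_{\eta\in\mathcal{P}(\Omega)}\{D_f(\eta\|P)+W^\Gamma(Q,\eta)\}\leq \inf_{\eta\in\mathcal{U}}\{D_f(\eta\|P)+W^\Gamma(Q,\eta)\}\,.
\end{align*}
Combining this with the inequality from Theorem \ref{thm:general_ub} yields the claim. Note that no assumption on $\mathcal{U}$ is needed; in particular, the bound holds vacuously ($+\infty$ on the right) if $\mathcal{U}=\emptyset$, and it is meaningful whenever $\mathcal{U}$ contains at least one measure $\eta$ with $D_f(\eta\|P)<\infty$ and $W^\Gamma(Q,\eta)<\infty$.

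There is no real obstacle here: the corollary is a one-line consequence of Part 1 of Theorem \ref{thm:general_ub}. Its substance lies in the practical implication, which is worth emphasizing in the surrounding discussion rather than in the proof itself, namely that by choosing $\mathcal{U}$ to be a tractable parametric family (for instance, a pushforward family $\{(h_\theta)_\#\mu:\theta\in\Theta\}$ arising in generative models), one obtains a computable upper bound on $D_f^\Gamma(Q\|P)$ that avoids any search over the full space $\mathcal{P}(\Omega)$ and motivates the GAN-style training objectives used in Section \ref{sec:examples}.
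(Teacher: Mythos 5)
Your proposal is correct and matches the paper's reasoning exactly: the paper presents this corollary as an immediate consequence of \req{eq:inf_conv_ineq} (Part 1 of Theorem \ref{thm:general_ub}), with the only step being that restricting the infimum from $\mathcal{P}(\Omega)$ to the subset $\mathcal{U}$ can only increase its value. Nothing further is needed.
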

{\bf Examples of $P(\Omega)$-determining sets:}
\begin{enumerate}
    \item Exponentials, $e^{c\cdot x}$, $c\in\mathbb{R}^n$, i.e., the moment generating function; see Section 30 in \cite{billingsley2012probability}.
    \item The set of $1$-Lipschitz functions, $g$, on a metric space with $\|g\|_\infty\leq 1$. This follows from the Portmanteau Theorem (see, e.g., Theorem 2.1 in \cite{billingsley2013convergence}).
    \item The unit ball of a reproducing kernel Hilbert space (RKHS), under appropriate assumptions (see \cite{JMLR:v12:sriperumbudur11a}).
    \item The set of ReLU neural networks. This follows from the universal approximation theorem \cite{Cybenko1989} and also applies to other activation functions, e.g., sigmoid.
    \item      The set of ReLU neural networks with spectral normalization  \cite{miyato2018spectral}.
\end{enumerate}
Several of these classes of functions have been utilized in existing methods; see Table \ref{tab:related_work} below. Our  examples in Section \ref{sec:examples} will utilize Lipschitz functions and ReLU neural networks, including spectral normalization in Section \ref{sec:SN}.
\begin{remark}
Note that it is a well-known result that polynomials do not constitute a $\mathcal{P}(\Omega)$-determining set; there exist distinct measures that agree on all moments. 
\end{remark}
\begin{remark}
Depending on the domain, several of the above examples of $P(\Omega)$-determining sets consist of unbounded functions. To fit them into our framework it generally suffices to work with truncated versions of these functions; we refer to Section \ref{sec:unbounded_extension} for a detailed discussion.
\end{remark}
}

\subsection{$(f,\Gamma)$-Divergences on Polish Spaces}\label{sec:polish}
When working on a Polish space, $S$, and under further assumptions on  $f$ and $\Gamma$, we are able to show that $D_f^\Gamma$  interpolates between the classical $f$-divergence, $D_f$, and the $\Gamma$-IPM, $W^\Gamma$. At various points, we will require $f$ and $\Gamma$ to have the following properties:
\begin{definition}\label{def:admissible}
We will call  $f\in\mathcal{F}_1(a,b)$ {\bf admissible} if  
$\{f^*<\infty\}=\mathbb{R}$ and $\lim_{y\to-\infty}f^*(y)<\infty$ (note that this limit always exists by convexity).  If $f$ is also strictly convex at $1$ then we will call $f$  {\bf strictly admissible}. We will call $\Gamma\subset C_b(S)$ {\bf admissible} if $0\in\Gamma$, $\Gamma$ is convex, and $\Gamma$ is closed  in the weak topology generated by the maps $\tau_\mu$, $\mu\in M(S)$ (see Section \ref{sec:notation}).    $\Gamma$ will be called {\bf strictly admissible} if it also satisfies the following property: There exists a $\mathcal{P}(S)$-determining set $\Psi\subset C_b(S)$ such that for all $\psi\in\Psi$ there exists $c\in\mathbb{R}$, $\epsilon>0$ such that $c\pm\epsilon \psi\in\Gamma$.
\end{definition}
Our main result, Theorem \ref{thm:f_div_inf_convolution}, will require admissibility of both $f$ and $\Gamma$. The functions $f_{KL}$ and $f_\alpha$, $\alpha>1$, defined in \req{eq:f_alpha_def}, are strictly admissible but $f_\alpha$, $\alpha\in(0,1)$ is not admissible {  (however, Theorem \ref{thm:general_ub} above does apply to $f_\alpha$ for $\alpha\in(0,1)$)}.  The admissibility requirements that $\Gamma$ be convex and closed will let us express $D_f^\Gamma$  as the infinite-dimensional convex conjugate of a convex and LSC functional. This will allow us to analyze $D_f^\Gamma$ using tools from convex analysis.  Strict admissibility will be key in proving the divergence property for both $W^\Gamma$ and $D_f^\Gamma$.  

{\bf Examples of strictly admissible $\Gamma$:}
\begin{enumerate}
    \item $\Gamma=C_b(S)$, which leads to the classical $f$-divergences.
    \item $\Gamma=\Lip_b^1(S)$, i.e. all bounded 1-Lipschitz functions, which leads to  generalizations of the Wasserstein metric.
    \item $\Gamma=\{g\in C_b(S):|g|\leq 1\}$, which leads to generalizations of the total variation metric.
        \item $\Gamma=\{g\in \Lip^1_b(S):|g|\leq 1\}$, which leads to generalizations of the Dudley metric.
        \item {  $\Gamma=\{g\in X:\|g\|_X\leq 1\}$, the unit ball in a RKHS $X\subset C_b(S)$ (under appropriate assumptions given in Lemma \ref{lemma:RKHS}). This yields a generalization of MMD and is also related to the recent KL - MMD interpolation method in \cite{2021arXiv210608929G}; the latter employs a soft constraint rather than working on the RKHS unit ball and is based on the representation \eqref{eq:Df_var_formula} instead of \eqref{eq:Df_var_formula_nu}.}
\end{enumerate}
Note that the first two examples are shift invariant (hence \req{eq:Df_Gamma_no_shift} is applicable) while the latter three are not.

{ 
We are now ready to present the second key theorem in this paper, where  we derive the infimal convolution representation of $D_f^\Gamma$ and provide alternative (to Theorem \ref{thm:general_ub}) conditions that ensure $D_f^\Gamma$ possesses the divergence property. The proof can be found in Appendix \ref{app:proofs}, Theorem \ref{thm:f_div_inf_convolution_app}.}
\begin{theorem}
\label{thm:f_div_inf_convolution}
 Suppose   $f$ and $\Gamma$ are admissible. For $Q,P\in\mathcal{P}(S)$ let $D^\Gamma_f(Q\|P)$ be defined by \eqref{eq:gen_f_def}  and let $W^\Gamma(Q,P)$ be defined as in \eqref{eq:gen_wasserstein}. These have the following properties:
\begin{enumerate}
\item Infimal Convolution Formula: 
\begin{align}\label{eq:inf_conv}
D_f^\Gamma(Q\|P)=\inf_{\eta\in \mathcal{P}(S)}\{D_f(\eta\|P)+W^\Gamma(Q,\eta)\}\,.
\end{align}
In particular, $0\leq D_f^\Gamma(Q\|P)\leq \min\{D_f(Q\|P),W^\Gamma(Q,P)\}$. 
\item If $D_f^\Gamma(Q\|P)<\infty$ then there exists $\eta_*\in\mathcal{P}(S)$ such that
\begin{align}\label{eq:inf_conv_existence}
D_f^\Gamma(Q\|P)=D_f(\eta_*\|P)+W^\Gamma(Q,\eta_*)\,.
\end{align}
If $f$ is strictly convex then there is a unique such $\eta_*$.
\item Divergence Property for $W^\Gamma$: If $\Gamma$ is strictly admissible then $W^\Gamma$ has the divergence property. 
\item Divergence Property for $D^\Gamma_f$:  If $f$ and $\Gamma$ are both strictly admissible then $D_f^\Gamma$ has the divergence property.
\end{enumerate}
\end{theorem}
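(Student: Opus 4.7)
The plan is to derive Parts 1 and 2 through infinite-dimensional convex duality on $C_b(S)$ equipped with the $\mathcal{T}$-weak topology, and then deduce Parts 3 and 4 from the resulting infimal convolution structure. The upper bound $D_f^\Gamma(Q\|P)\le \inf_{\eta}\{D_f(\eta\|P)+W^\Gamma(Q,\eta)\}$ is already supplied by Theorem \ref{thm:general_ub}, so the new content in Part 1 is the matching lower bound together with attainment.

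First I would cast the problem in a convex-duality framework. Define $F_1,F_2\colon C_b(S)\to(-\infty,\infty]$ by $F_1(g)=\Lambda_f^P[g]$ and $F_2(g)=0$ if $g\in\Gamma$, $F_2(g)=+\infty$ otherwise. Both are convex; admissibility of $\Gamma$ (convex and $\mathcal{T}$-weakly closed with $0\in\Gamma$) makes $F_2$ proper and LSC, and admissibility of $f$ ($f^*$ finite everywhere with finite limit at $-\infty$) renders $F_1$ finite on all of $C_b(S)$, e.g.\ by choosing $\nu=\|g\|_\infty$ in \eqref{eq:Lambda_f_def}. Viewing $Q\in M(S)=C_b(S)^*$, one has $D_f^\Gamma(Q\|P)=(F_1+F_2)^*(Q)$. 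The Fenchel–Rockafellar theorem for conjugates of sums then yields $(F_1+F_2)^*(Q)=\inf_{\eta\in M(S)}\{F_1^*(\eta)+F_2^*(Q-\eta)\}$ together with attainment of the infimum, provided the qualification hypothesis holds; this is furnished by $F_1$ being finite, hence continuous in the relevant topology, on all of $C_b(S)$.

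Next I would identify the two conjugates. The support-function conjugate is immediate: $F_2^*(Q-\eta)=\sup_{g\in\Gamma}\{E_Q[g]-E_\eta[g]\}=W^\Gamma(Q,\eta)$. The conjugate $F_1^*(\eta)$ should equal $D_f(\eta\|P)$ when $\eta\in\mathcal{P}(S)$ and $+\infty$ otherwise: the generalized Gibbs formula of Theorem \ref{thm:gen_fdiv_dual_var}, together with the extension to signed measures (Proposition \ref{prop:Gibbs_M1}), identifies $\Lambda_f^P$ as $(D_f(\cdot\|P))^*$ on $C_b(S)$, and Fenchel–Moreau applied to the convex LSC functional $\eta\mapsto D_f(\eta\|P)$ inverts this. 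The infimum automatically restricts to $\eta\in\mathcal{P}(S)$ because $\Lambda_f^P$ fails to be invariant under additive shifts in a way that penalizes non-probability measures. Combining these gives \eqref{eq:inf_conv} and, via attainment in Fenchel–Rockafellar, the optimizer $\eta_*\in\mathcal{P}(S)$ of Part 2. Strict convexity of $f$ promotes $D_f(\cdot\|P)$ to a strictly convex functional on its effective domain while $W^\Gamma(Q,\cdot)$ remains convex, yielding uniqueness. For Part 3, strict admissibility of $\Gamma$ supplies a $\mathcal{P}(S)$-determining set $\Psi$ with the property that for each $\psi\in\Psi$ there exist $c\in\mathbb{R}$, $\epsilon>0$ so that $c\pm\epsilon\psi\in\Gamma$; from $W^\Gamma(Q,P)=0$, testing against both $c+\epsilon\psi$ and $c-\epsilon\psi$ forces $E_Q[\psi]=E_P[\psi]$ for every $\psi\in\Psi$, hence $Q=P$. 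For Part 4, Part 2 produces $\eta_*$ with $D_f(\eta_*\|P)+W^\Gamma(Q,\eta_*)=0$; both summands are nonnegative, so each vanishes. Strict convexity of $f$ at $1$ yields $\eta_*=P$ via the standard divergence property of $D_f$, and Part 3 then gives $Q=\eta_*=P$.

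The main obstacle is the duality step itself: verifying that $\Lambda_f^P$ is lower semicontinuous in the $\mathcal{T}$-weak topology on $C_b(S)$, which is strictly coarser than the sup-norm topology, and establishing the Fenchel–Rockafellar qualification so that no duality gap appears in the conjugate-of-a-sum identity. The admissibility conditions on $f$ are tailored precisely to make $F_1$ finite everywhere—from which continuity and hence the qualification follow by convexity on a locally convex TVS—but care is required with the interplay between the $\mathcal{T}$-weak topology on $C_b(S)$ and the inner infimum over $\nu\in\mathbb{R}$ that defines $\Lambda_f^P$. A secondary technical point, the identification $(\Lambda_f^P)^*=D_f(\cdot\|P)$ with the correct $+\infty$ assignment off $\mathcal{P}(S)$, rests on the signed-measure Gibbs formula of Proposition \ref{prop:Gibbs_M1} followed by Fenchel–Moreau.
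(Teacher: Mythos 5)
Your setup coincides with the paper's: the same decomposition $F_1=\Lambda_f^P$, $F_2=\infty\,1_{\Gamma^c}$, the same identification of $F_2^*$ with $W^\Gamma$ and of $F_1^*$ with $D_f(\cdot\|P)$ extended by $+\infty$ off $\mathcal{P}(S)$, and Parts 3 and 4 are handled exactly as in the paper. The gap is in the one step you yourself flag as the main obstacle: the constraint qualification for Fenchel--Rockafellar. You claim that $F_1=\Lambda_f^P$ being finite on all of $C_b(S)$ makes it continuous ``in the relevant topology,'' hence the qualification holds. But the relevant topology is the $\mathcal{T}$-weak topology generated by $M(S)$ (chosen precisely so that $C_b(S)^*=\mathcal{T}$), and on a non-barrelled locally convex space a finite convex function need not be continuous. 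Here it is in fact \emph{not}: since $\Lambda_f^P[g]\ge E_P[g]$, and every $\mathcal{T}$-weak neighborhood of $0$ contains a finite-codimensional subspace on which $E_P[g]$ is unbounded above, $\Lambda_f^P$ is unbounded above on every weak neighborhood of $0$ and so is nowhere weakly continuous. (Working instead in the sup-norm topology would restore continuity but changes the dual to finitely additive measures, reopening the problem of restricting the infimum to $\mathcal{P}(S)$.) So the interiority/continuity qualification you invoke fails, and both the no-gap identity and the attainment you derive from it are unsupported as written.

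The paper closes this gap by a different mechanism: it invokes the biconjugation form of the sum rule (Theorem 2.3.10 of \cite{bot2009duality}), under which $(F_1+F_2)^*=F_1^*\Box F_2^*$ holds provided $\dom F_1\cap\dom F_2\neq\emptyset$ and the infimal convolution $F_1^*\Box F_2^*$ is itself weak-* lower semicontinuous. The latter is then verified directly: the sublevel sets of $\eta\mapsto D_f(\eta\|P)$ are compact in $\mathcal{P}(S)$ (Lemma \ref{lemma:Df_compact_sublevel}, using Prohorov's theorem, tightness on the Polish space $S$, and the admissibility hypothesis $\{f^*<\infty\}=\mathbb{R}$), and a net/subnet extraction combined with lower semicontinuity of $D_f(\cdot\|P)$ and of $W^\Gamma$ shows the infimal convolution is LSC. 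This same compactness argument, not Fenchel--Rockafellar attainment, is what produces the optimizer $\eta_*$ in Part 2. A secondary point you pass over: restricting the infimum to \emph{positive} measures (not just mass-one signed measures) is not automatic from the Gibbs formula; the paper proves $F_1^*(\tau_\eta)=\infty$ for non-positive $\eta$ by a Lusin-type approximation that uses the second admissibility condition $\lim_{y\to-\infty}f^*(y)<\infty$ in an essential way. To repair your argument you would need to replace the continuity qualification with the compactness route (or an equivalent), and supply the positivity step.
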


\begin{remark}
If $a\geq 0$ in Definition \ref{def:F_1} then $f^*$ is nondecreasing and so the  condition  $\lim_{y\to-\infty}f^*(y)<\infty$  is satisfied; see Lemma \ref{lemma:f_star_inc}. In many cases, the divergence property for $D_f^\Gamma$ still holds even if one or both  of the conditions $\lim_{y\to-\infty}f^*(y)<\infty$, $\{f^*<\infty\}=\mathbb{R}$ are violated and also  under relaxed conditions on $\Gamma$; this was  shown in Theorem \ref{thm:general_ub}.
\end{remark}
The infimal convolution formula \eqref{eq:inf_conv} - \eqref{eq:inf_conv_existence} gives one precise sense in which the $(f,\Gamma)$-divergence variationally interpolates between the $\Gamma$-IPM, $W^\Gamma$,  and the classical $f$-divergence, $D_f$. It is a generalization of the results in \cite{NIPS2018_7771,Dupuis:Mao:2019}, the former assuming compactly supported measures and the latter covering the KL case. {  We end this subsection by referring the reader to Table~\ref{tab:related_work}, which lists related works and connections to our general framework.}

\begin{table}
\centering
{ 
\caption{{  Summarizing how our main Theorems extend or relate to certain existing methods.  Our theory either applies directly to the cited methods or motivates the construction of closely related interpolation and/or regularization methods that are based on \eqref{eq:Df_var_formula_nu}.}}
\begin{tabular}{ |p{2.7cm}||p{3cm}|p{4.5cm}|p{4.5cm}|  }
 \hline
 \multicolumn{4}{|c|}{Extension of \& connections to related work} \\
 \hline
 Related Paper & Function Space $\Gamma$ & Objective Functional & Relevant Theorems\\
 \hline
 Goodfellow et al., \cite{GAN}& Neural networks & JS divergence using \eqref{eq:Df_var_formula} &Theorem~\ref{thm:general_ub}\\
 Nowozin et al., \cite{f_GAN}&   Neural networks  &  f-divergence using \eqref{eq:Df_var_formula}  &
 Theorem~\ref{thm:general_ub}\\
 Belghazi et al., \cite{MINE_paper}&  Neural networks   & KL-div. using \eqref{eq:Df_var_formula} \&
 \eqref{eq:Df_var_formula_nu}& Theorem~\ref{thm:general_ub}\\
 Miyato et al., \cite{miyato2018spectral}& Neural networks \& spectral normalization  &  IPM \eqref{eq:gen_wasserstein} or f-divergence  \eqref{eq:Df_var_formula} & Theorem~\ref{thm:general_ub}\\
 Arjovsky et al., \cite{WGAN} & $\Lip_b^1(S)$ & IPM \eqref{eq:gen_wasserstein} &  Theorem~\ref{thm:f_div_inf_convolution}\\
 Gulrajani et al., \cite{wgan:gp}    & $\Lip_b(S)$ & IPM \eqref{eq:gen_wasserstein} \& gradient penalty &  Theorem~\ref{thm:f_div_inf_convolution} \& Theorem~\ref{thm:soft_constraint}\\
Song et al., \cite{Bridging_fGan_WGan} (Algorithm 1)& $\Lip_b^1(S)$ & KL divergence using \eqref{eq:Df_var_formula}  &  Theorem~\ref{thm:f_div_inf_convolution} \\
  Nguyen et al., \cite{Nguyen_Full_2010}   & RKHS    & KL, f-divergence using \eqref{eq:Df_var_formula} &   Theorem~\ref{thm:f_div_inf_convolution}\\
 Gretton et al., \cite{gretton2012} & Unit ball in RKHS  & IPM \eqref{eq:gen_wasserstein} & Theorem~\ref{thm:f_div_inf_convolution}\\
 Glaser et al., \cite{2021arXiv210608929G} & RKHS & KL-div. using \eqref{eq:Df_var_formula} \&  RKHS norm penalty  &  Theorem~\ref{thm:f_div_inf_convolution} \& Theorem~\ref{thm:soft_constraint}\\
 Dupuis et al., \cite{Dupuis:Mao:2019} & convex \& closed $\Gamma$
& KL-divergence & Theorem~\ref{thm:f_div_inf_convolution}\\
\hline
\end{tabular}
\label{tab:related_work} 
}
\end{table}

\subsection{Additional Properties}

The following theorem  details the behavior of $D_f^\Gamma$ in a pair of limiting regimes and further illustrates the manner in which $D_f^\Gamma$ interpolates between $D_f$ and $W^\Gamma$. These results again require (strict) admissibility (see Definition \ref{def:admissible}).
\begin{theorem}\label{thm:limit}
Let $Q,P\in\mathcal{P}(S)$ and $\Gamma$, $f$ both be admissible.  Then for all $c>0$ the set $\Gamma_c\equiv\{c g: g\in\Gamma\}$ is admissible and we have the following two limiting formulas.
\begin{enumerate}
\item If $\Gamma$ is strictly admissible then the sets $\Gamma_L$ are strictly admissible for all $L>0$ and
\begin{align}
\lim_{L\to\infty} D^{\Gamma_L}_f(Q\|P)=D_f(Q\|P)\,.
\end{align}
\item If $f$ is strictly admissible then 
\begin{align}
\lim_{\delta\searrow 0}\frac{1}{\delta} D_f^{\Gamma_\delta}(Q\|P)=W^\Gamma(Q,P)\,.
\end{align}
\end{enumerate}
\end{theorem}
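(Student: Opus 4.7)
The plan is to leverage the infimal convolution formula~\eqref{eq:inf_conv} from Theorem~\ref{thm:f_div_inf_convolution} (available since $f$ and $\Gamma$ are admissible), together with a first-order analysis of $\Lambda_f^P[\delta g]$ near $\delta=0$. As a preliminary, one checks that $\Gamma_c=c\Gamma$ inherits (strict) admissibility from $\Gamma$: convexity, closure, and $0\in\Gamma_c$ are preserved under the scaling $g\mapsto cg$, and if $\Psi$ with constants $(c_\psi,\epsilon_\psi)$ witnesses strict admissibility for $\Gamma$, then the same $\Psi$ works for $\Gamma_c$ with constants $(cc_\psi,c\epsilon_\psi)$.

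For Part 1, the upper bound $D_f^{\Gamma_L}(Q\|P)\leq D_f(Q\|P)$ follows from Theorem~\ref{thm:general_ub}(1) with $\eta=Q$. For the matching lower bound I would invoke Theorem~\ref{thm:f_div_inf_convolution}(2) to obtain an optimizer $\eta_L^*\in\mathcal{P}(S)$ with
\begin{equation*}
D_f^{\Gamma_L}(Q\|P) = D_f(\eta_L^*\|P) + L\, W^\Gamma(Q,\eta_L^*).
\end{equation*}
When $D_f(Q\|P)<\infty$, both summands are bounded by $D_f(Q\|P)$, forcing $W^\Gamma(Q,\eta_L^*)\to 0$. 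Admissibility of $f$ ($\{f^*<\infty\}=\mathbb{R}$) makes $f$ superlinear, so the $D_f$-bound together with de la Vallée-Poussin yields uniform integrability of $\{d\eta_L^*/dP\}$; combined with tightness of $P$ on the Polish space $S$ (Ulam), this upgrades to tightness of $\{\eta_L^*\}$. Strict admissibility of $\Gamma$ supplies a $\mathcal{P}(S)$-determining set $\Psi\subset C_b(S)$ with $c\pm\epsilon\psi\in\Gamma$, giving $|E_Q[\psi]-E_{\eta_L^*}[\psi]|\leq\epsilon^{-1}W^\Gamma(Q,\eta_L^*)\to 0$ for each $\psi\in\Psi$; tightness plus the determining property force $\eta_L^*\to Q$ weakly, and weak lower semicontinuity of $D_f(\cdot\|P)$ then gives $\liminf_L D_f^{\Gamma_L}(Q\|P)\geq\liminf_L D_f(\eta_L^*\|P)\geq D_f(Q\|P)$. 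The case $D_f(Q\|P)=\infty$ is handled by contradiction: a bounded subsequence of $D_f^{\Gamma_L}(Q\|P)$ would, by the same tightness argument, produce a finite $D_f(Q\|P)$.

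For Part 2, the infimal convolution with $\eta=P$ yields the upper bound $D_f^{\Gamma_\delta}(Q\|P)\leq W^{\Gamma_\delta}(Q,P)=\delta W^\Gamma(Q,P)$. For the lower bound I would rewrite $\frac{1}{\delta}D_f^{\Gamma_\delta}(Q\|P)=\sup_{g\in\Gamma}\{E_Q[g]-\delta^{-1}\phi_g(\delta)\}$ with $\phi_g(\delta):=\Lambda_f^P[\delta g]$, and show that $\phi_g(\delta)/\delta\to E_P[g]$ as $\delta\searrow 0$, monotonically from above. The inequality $\phi_g(\delta)/\delta\geq E_P[g]$ follows from the elementary bound $f^*(y)\geq y$ for all $y\in\mathbb{R}$ (a consequence of $f^*(\nu_0)=\nu_0$ and $1\in\partial f^*(\nu_0)$ at $\nu_0=f_+'(1)$), plugged directly into $\inf_\nu\{\nu+E_P[f^*(\delta g-\nu)]\}\geq\inf_\nu\{\nu+\delta E_P[g]-\nu\}=\delta E_P[g]$; this already gives $\frac{1}{\delta}D_f^{\Gamma_\delta}(Q\|P)\leq W^\Gamma(Q,P)$, recovering the upper bound. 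The matching upper Taylor bound on $\phi_g$ uses strict admissibility of $f$: strict convexity at $1$ forces $\partial f^*(\nu_0)=\{1\}$, so $f^*$ is differentiable at $\nu_0$ with derivative $1$ and $f^*(\nu_0+h)=\nu_0+h+o(h)$; setting $\nu=-\nu_0$ in the infimum (with $\|g\|_\infty<\infty$ making the $o$-estimate uniform in $x$ for $h=\delta g(x)$) gives $\phi_g(\delta)\leq\delta E_P[g]+o(\delta)$. Taking $\liminf_{\delta\searrow 0}$ of $E_Q[g]-\phi_g(\delta)/\delta$ for each fixed $g\in\Gamma$ and then supping over $g$ yields $\liminf_{\delta\searrow 0}\frac{1}{\delta}D_f^{\Gamma_\delta}(Q\|P)\geq W^\Gamma(Q,P)$.

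The main obstacle will be the tightness / weak-convergence step in Part 1: extracting tightness of $\{\eta_L^*\}$ from the $D_f$-bound via superlinearity of $f$, and then pairing this with the strict-admissibility determining set to upgrade $W^\Gamma$-convergence to weak convergence in Polish generality. For Part 2 the delicate point is the sharp identification $(f^*)'(\nu_0)=1$, which is exactly what strict admissibility of $f$ buys us and which is indispensable for the uniform Taylor estimate on bounded $g$; without it, the first-order behavior of $\phi_g(\delta)/\delta$ may differ from $E_P[g]$ and the Part 2 limit need not equal $W^\Gamma(Q,P)$.
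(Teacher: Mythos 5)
Your Part 1 argument is correct and is essentially the paper's proof (Theorem \ref{thm:limit_app}): extract the infimal-convolution optimizers $\eta_{*,L}$, use the uniform bound $D_f(\eta_{*,L}\|P)\leq D_f(Q\|P)$ to get compactness, identify the weak limit as $Q$ from $W^\Gamma(Q,\eta_{*,L})\leq D_f(Q\|P)/L\to 0$, and conclude by lower semicontinuity of $D_f(\cdot\|P)$, with the infinite case by contradiction. The only difference is cosmetic: you rebuild the compactness of $D_f$-sublevel sets from de la Vall\'ee-Poussin plus Ulam tightness, whereas the paper invokes this directly as Lemma \ref{lemma:Df_compact_sublevel}; and you identify the limit via the determining set $\Psi$ rather than via the divergence property of $W^\Gamma$ (Part 3 of Theorem \ref{thm:f_div_inf_convolution}) together with lower semicontinuity of $W^\Gamma(Q,\cdot)$. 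Both are fine.

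Part 2 takes a genuinely different route from the paper (the paper argues by contradiction, again extracting optimizers $\eta_{*,n}$, using $D_f(\eta_{*,n}\|P)\leq \delta_n\sup_n\delta_n^{-1}D_f^{\Gamma_{\delta_n}}(Q\|P)\to 0$, the divergence property of $D_f$ to get $\eta_{*,n}\to P$, and lower semicontinuity of $W^\Gamma(Q,\cdot)$), but your version has a genuine gap at its crux. The claim that ``strict convexity at $1$ forces $\partial f^*(\nu_0)=\{1\}$'' is false under the paper's definitions. Strict admissibility only requires that $f$ not be affine on any \emph{two-sided} neighborhood of $1$; it permits $f$ to coincide with its supporting line $x\mapsto\nu_0(x-1)$ on a one-sided interval, say $[1,c]$ with $c>1$, while being strictly convex to the left of $1$. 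For such an $f$ one has $\partial f^*(\nu_0)=\{x:f(x)=\nu_0(x-1)\}\supseteq[1,c]$, so $(f^*)'_+(\nu_0)\geq c>1$ and the expansion $f^*(\nu_0+h)=\nu_0+h+o(h)$ fails for $h>0$. With your fixed choice $\nu=-\nu_0$ this yields only $\delta^{-1}\Lambda_f^P[\delta g]\leq E_P[g]+((f^*)'_+(\nu_0)-1)E_P[g^+]+o(1)$, which does not close the lower bound. (This is exactly why Lemma \ref{lemma:nu0} assumes $f$ is strictly convex \emph{on} a neighborhood of $1$, a strictly stronger hypothesis than strict convexity \emph{at} $1$.)

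The gap is repairable within your strategy, but it requires a $\delta$-dependent shift. Since $f$ is not affine on any two-sided neighborhood of $1$, the interval $\partial f^*(\nu_0)$ has $1$ as an endpoint, so at least one of $(f^*)'_-(\nu_0)=1$ or $(f^*)'_+(\nu_0)=1$ holds. Choosing $\nu=-\nu_0+\delta s$ with $s=\sup g$ (resp.\ $s=\inf g$) makes the argument of $f^*$ equal to $\nu_0+\delta(g-s)$ with $\delta(g-s)$ of a definite sign, so only the one-sided derivative that equals $1$ is ever used, and the convex difference quotients give the uniform estimate $\Lambda_f^P[\delta g]\leq \delta E_P[g]+o(\delta)$; the term $\delta s$ cancels. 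With that modification your monotonicity remark ($\Lambda_f^P[0]=0$ and convexity of $\Lambda_f^P$ give that $\delta\mapsto\delta^{-1}\Lambda_f^P[\delta g]$ is nondecreasing) and the interchange $\liminf_\delta\sup_g\geq\sup_g\liminf_\delta$ complete a valid proof of Part 2. As written, however, the step is not justified under the stated hypotheses.
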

The proof of Theorem \ref{thm:limit} is very similar to that of  the corresponding results in the KL case \cite[Proposition 5.1 and 5.2]{Dupuis:Mao:2019}. For completeness, we include its proof in Appendix \ref{app:proofs} (Theorem \ref{thm:limit_app}).

{ 
Theorem \ref{thm:general_ub} implies the following convergence and continuity properties (see Theorem \ref{thm:conv_app} in  Appendix \ref{app:proofs} for the proof):
\begin{theorem}\label{thm:conv}
Let $f\in\mathcal{F}_1(a,b)$ and $\Gamma\subset\mathcal{M}_b(\Omega)$. Then:
\begin{enumerate}
    \item If there exists $c_0\in \Gamma\cap\mathbb{R}$ then  $W^\Gamma(Q_n,P)\to 0 \implies D_f^\Gamma(Q_n\|P) \to 0$ and $D_f(Q_n\|P)\to 0 \implies D_f^\Gamma(Q_n\|P) \to 0$, and similarly if one permutes the order of $Q_n$ and $P$.
    
    \item Suppose $f$ and $\Gamma$  satisfy the following:
\begin{enumerate}
\item There exist a nonempty set $\Psi\subset\Gamma$ with the following properties:
\begin{enumerate}
\item $\Psi$ is $\mathcal{P}(\Omega)$-determining.
\item For all $\psi\in\Psi$  there exists $c_0\in\mathbb{R}$, $\epsilon_0>0$ such that $c_0+\epsilon \psi\in\Gamma$ for all $|\epsilon|<\epsilon_0$.
\end{enumerate}
\item $f$ is strictly convex on a neighborhood of $1$.
\item $f^*$ is finite and $C^1$ on a neighborhood of $\nu_0\equiv f_+^\prime(1)$.
\end{enumerate}
Let $P,Q_n\in\mathcal{P}(\Omega)$, $n\in\mathbb{Z}_+$. If $D_f^\Gamma(Q_n\|P)\to 0$ or  $D_f^\Gamma(P\|Q_n)\to 0$ then $E_{Q_n}[\psi]\to E_P[\psi]$ for all $\psi\in\Psi$.  
\item On a metric space $S$, if $f$ is admissible then the map $(Q,P)\in\mathcal{P}(S)\times\mathcal{P}(S)\mapsto D_f^\Gamma(Q\|P)$ is  lower semicontinuous.
\end{enumerate}
\end{theorem}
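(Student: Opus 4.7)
The plan is to dispatch the three parts in increasing order of difficulty, leveraging Theorem \ref{thm:general_ub} throughout. Part 1 is essentially immediate: Part 1 of Theorem \ref{thm:general_ub} gives $D_f^\Gamma(Q\|P) \leq \min\{D_f(Q\|P), W^\Gamma(Q,P)\}$, while Part 3 of that theorem gives nonnegativity when $c_0 \in \Gamma \cap \mathbb{R}$. Both implications in both orderings of the arguments then follow at once.

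For Part 3, I would write $D_f^\Gamma(Q\|P) = \sup_{g \in \Gamma} \{E_Q[g] - \Lambda_f^P[g]\}$ and show that each map $(Q,P) \mapsto E_Q[g] - \Lambda_f^P[g]$ is lower semicontinuous on $\mathcal{P}(S) \times \mathcal{P}(S)$, since a supremum of lower semicontinuous functions is lower semicontinuous. The map $Q \mapsto E_Q[g]$ is continuous in the weak topology for $g \in C_b(S)$. For $P \mapsto \Lambda_f^P[g]$, admissibility of $f$ gives $\{f^* < \infty\} = \mathbb{R}$, so $f^*$ is finite and hence continuous on $\mathbb{R}$, and $f^*(g - \nu)$ is bounded continuous for each fixed $\nu$. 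Thus $P \mapsto \nu + E_P[f^*(g - \nu)]$ is weakly continuous for each $\nu$, and the infimum over $\nu$ is upper semicontinuous in $P$; the difference is therefore lower semicontinuous.

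Part 2 is the main technical step. Fix $\psi \in \Psi$ and $\epsilon > 0$ small enough that $c_0 \pm \epsilon \psi \in \Gamma$. The key estimate I would establish is
\[
\Lambda_f^\mu[c_0 + \epsilon \psi] \leq c_0 + \epsilon E_\mu[\psi] + r(\epsilon),
\]
where the remainder satisfies $|r(\epsilon)|/\epsilon \to 0$ as $\epsilon \to 0$ uniformly over all probability measures $\mu$. To prove this I would plug the specific value $\nu = c_0 - \nu_0$ into the infimum defining $\Lambda_f^\mu$, use the Fenchel-Young identity $f^*(\nu_0) = \nu_0 \cdot 1 - f(1) = \nu_0$ (valid since $\nu_0 = f_+'(1) \in \partial f(1)$ and $f(1) = 0$), and invoke the $C^1$ smoothness of $f^*$ at $\nu_0$ with $(f^*)'(\nu_0) = 1$. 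The mean value theorem then gives $|f^*(\nu_0 + t) - f^*(\nu_0) - t| \leq \omega(|t|) \cdot |t|$ for some modulus $\omega(s) \to 0$ as $s \to 0$; applying this pointwise with $t = \epsilon \psi(x)$ and using $\|\psi\|_\infty < \infty$ yields $|r(\epsilon)| \leq \epsilon \|\psi\|_\infty \omega(\epsilon \|\psi\|_\infty)$. Substituting into the variational inequality $D_f^\Gamma(Q_n\|P) \geq E_{Q_n}[c_0 + \epsilon \psi] - \Lambda_f^P[c_0 + \epsilon \psi]$ then gives $\epsilon(E_{Q_n}[\psi] - E_P[\psi]) \leq D_f^\Gamma(Q_n\|P) + |r(\epsilon)|$. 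Dividing by $\epsilon$, sending $n \to \infty$ first and then $\epsilon \to 0$, yields $\limsup_n(E_{Q_n}[\psi] - E_P[\psi]) \leq 0$; repeating the argument with $-\psi$ in place of $\psi$ gives the matching lower bound. The case $D_f^\Gamma(P\|Q_n) \to 0$ is essentially identical, with the same uniform estimate applied to $\mu = Q_n$ in place of $\mu = P$.

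The main obstacle is producing the Taylor-type estimate uniformly in the underlying measure, so that it applies both to the fixed $P$ and to the varying $Q_n$. Bypassing a general envelope-theorem argument by committing to the specific value $\nu = c_0 - \nu_0$ makes this uniformity transparent and reduces the required regularity to precisely assumption 2(c), namely $C^1$ smoothness of $f^*$ near $\nu_0 = f_+'(1)$, together with strict convexity of $f$ at $1$ from assumption 2(b) to ensure the Fenchel-Young relations used are sharp.
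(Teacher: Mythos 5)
Your proposal is correct and follows essentially the same route as the paper's proof (Theorem \ref{thm:conv_app}): Part 1 via the sandwich $0\le D_f^\Gamma\le\min\{D_f,W^\Gamma\}$, Part 3 by writing $D_f^\Gamma$ as a supremum of weakly continuous functionals, and Part 2 by testing with $g=c_0+\epsilon\psi$, shifting by $\nu=c_0-\nu_0$, and Taylor-expanding $f^*$ at $\nu_0$ using $f^*(\nu_0)=\nu_0$ and $(f^*)'(\nu_0)=1$ for both signs of $\epsilon$. The only (cosmetic) difference is that you send $n\to\infty$ and then $\epsilon\to 0$, whereas the paper couples the two scales by choosing $\epsilon=\delta^{1/2}$; both yield the same conclusion.
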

\begin{corollary}\label{cor:weak_conv}
Under the assumptions of Part 2 of Theorem \ref{thm:conv} we have the following: If $\Gamma=\Lip_b^1(S)$ where $S$ is a compact metric space then one can take $\Psi=\Gamma$  and thereby conclude that $D_f^\Gamma(Q_n\|P)\to 0$ iff $D_f^\Gamma(P\|Q_n)\to 0$ iff $Q_n\to P$ in distribution iff $W^\Gamma(Q_n,P)\to 0$. 
\end{corollary}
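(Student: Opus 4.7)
The plan is to verify that $\Psi=\Gamma=\Lip_b^1(S)$ satisfies the hypotheses of Part 2 of Theorem \ref{thm:conv}, and then to close the chain $D_f^\Gamma(Q_n\|P)\to 0 \Rightarrow Q_n\to P$ in distribution $\Rightarrow W^\Gamma(Q_n,P)\to 0 \Rightarrow D_f^\Gamma(Q_n\|P)\to 0$ (together with the $P\|Q_n$ analogue) using Part 1 of Theorem \ref{thm:conv}, compactness of $S$, and symmetry of $\Lip_b^1(S)$ under negation.

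First I would check that $\Psi=\Lip_b^1(S)$ is $\mathcal{P}(S)$-determining and satisfies the required local shift/scaling condition. The latter is immediate: for any $\psi\in\Lip_b^1(S)$ and $|\epsilon|<1$, the function $\epsilon\psi$ has Lipschitz constant strictly less than one and is still bounded, so taking $c_0=0$, $\epsilon_0=1$ works. For the determining property, if $\int\psi\,dQ=\int\psi\,dP$ for every $\psi\in\Lip_b^1(S)$ then by rescaling the equality extends to every bounded Lipschitz function; on compact $S$ every $g\in C_b(S)=C(S)$ is uniformly approximable by Lipschitz functions (e.g., via the inf-convolution $g_L(x)=\inf_y[g(y)+L\,d(x,y)]$) and these approximants are uniformly bounded, so dominated convergence gives $\int g\,dQ=\int g\,dP$ for every $g\in C_b(S)$, hence $Q=P$. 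With the hypotheses in place, Part 2 of Theorem \ref{thm:conv} implies that whenever $D_f^\Gamma(Q_n\|P)\to 0$ or $D_f^\Gamma(P\|Q_n)\to 0$, one has $E_{Q_n}[\psi]\to E_P[\psi]$ for every $\psi\in\Lip_b^1(S)$, and the same Lipschitz-density argument upgrades this to weak convergence $Q_n\to P$. Conversely, weak convergence on compact $S$ forces $W^\Gamma(Q_n,P)\to 0$: the supremum in \eqref{eq:gen_wasserstein} is unchanged under the translation $g\mapsto g-g(x_0)$ for a fixed base point $x_0$, so we may restrict to the set $K\equiv\{g\in\Lip_b^1(S):g(x_0)=0\}$, which is uniformly bounded (by $\mathrm{diam}(S)$) and equicontinuous, hence relatively compact in $C(S)$ by Arzelà–Ascoli; the maps $F_n(g)\equiv E_{Q_n}[g]-E_P[g]$ are uniformly Lipschitz on $C(S)$ and converge pointwise to zero on $K$ by weak convergence, so they converge uniformly on the compact set $K$, giving $W^\Gamma(Q_n,P)=\sup_{g\in K}F_n(g)\to 0$. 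Finally, $W^\Gamma(Q_n,P)\to 0 \Rightarrow D_f^\Gamma(Q_n\|P)\to 0$ is precisely Part 1 of Theorem \ref{thm:conv} (applicable since $0\in\Lip_b^1(S)\cap\mathbb{R}$); because $\Lip_b^1(S)$ is closed under $g\mapsto -g$ we have $W^\Gamma(P,Q_n)=W^\Gamma(Q_n,P)$, so the analogous implication holds for $D_f^\Gamma(P\|Q_n)\to 0$, completing the equivalences.

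The main obstacle is the Arzelà–Ascoli step upgrading weak convergence to uniform convergence of $F_n$ on $K$; this is precisely where compactness of $S$ enters essentially, since on a noncompact space the normalized family $K$ need not be uniformly bounded or even contained in $C_b(S)$. Every other step either invokes Theorem \ref{thm:conv} directly or is a routine density/symmetry argument.
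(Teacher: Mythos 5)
Your proof is correct and takes essentially the same route as the paper: the paper's justification is a one-line remark invoking Theorem \ref{thm:conv} together with the known equivalence of weak convergence and Wasserstein convergence on compact metric spaces (citing Arjovsky et al.), and your argument simply supplies the details behind that — verifying that $\Psi=\Gamma=\Lip_b^1(S)$ satisfies the hypotheses of Part 2, proving the weak-convergence-implies-$W^\Gamma(Q_n,P)\to 0$ step via Arzel\`a--Ascoli instead of by citation, and closing the loop with Part 1 and the symmetry of $\Lip_b^1(S)$ under $g\mapsto -g$. All steps check out.
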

\begin{remark}
Corollary \ref{cor:weak_conv} follows from the equivalence between weak convergence and convergence in the Wasserstein metric on compact spaces; see Theorem 2 in \cite{pmlr-v70-arjovsky17a} for this further relations between convergence in the Wasserstein metric and other notions of convergence.
\end{remark}
}

{ 
Finally, we derive a data processing inequality for $(f,\Gamma)$-divergences (see Theorem \ref{thm:data_proc_app} in Appendix \ref{app:proofs} for the proof). This result applies to general measurable spaces.  We will need the following notation: Let $(N,\mathcal{N})$ be another measurable space and $K$ be a probability kernel from $\Omega$ to $N$. Given $P\in\mathcal{P}(\Omega)$ we denote the composition of $P$ with $K$ by $P\otimes K$ (a probability measure on $\Omega\times N$) and we denote the marginal distribution on $N$ by $K[P]$. Given $g\in \mathcal{M}_b(\Omega\times N)$ we let $K[g]$ denote the bounded measurable function on $\Omega$ given by $x\to\int g(x,y)K_x(dy)$.
\begin{theorem}[Data Processing Inequality]\label{thm:data_proc}
Let $f\in\mathcal{F}_1(a,b)$, $Q,P\in\mathcal{P}(\Omega)$, and $K$ be a probability kernel from $(\Omega,\mathcal{M})$ to $(N,\mathcal{N})$. 
\begin{enumerate}
    \item  Let  $\Gamma\subset \mathcal{M}_b(N)$ be nonempty. Then  
    \begin{align}\label{eq:data_proc1}
        D_f^\Gamma\left(K[Q]\|K[P]\right)\leq D_f^{K[\Gamma]}(Q\|P)\,.
    \end{align}
    \item  Let  $\Gamma\subset \mathcal{M}_b(\Omega\times N)$ be nonempty. Then  
    \begin{align}\label{eq:data_proc2}
        D_f^\Gamma\left(Q\otimes K\|P\otimes K\right)\leq D_f^{K[\Gamma]}(Q\|P)\,.
        \end{align}
\end{enumerate}
\end{theorem}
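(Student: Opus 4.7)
The plan is to prove both parts in parallel, since they reduce to the same pointwise-in-$g$ inequality; the only essential ingredient beyond definitions is Jensen's inequality applied to the convex function $f^*$. Throughout, fix an arbitrary $g\in\Gamma$ and write $K[g]\in\mathcal{M}_b(\Omega)$ for the function $x\mapsto\int g(\cdot)\,dK_x$ (where the dot is $y$ in Part~1 and $(x,y)$ in Part~2). Note $K[g]$ is bounded by $\|g\|_\infty$ and measurable in $x$ by the definition of a probability kernel, so it is a legitimate test function for $D_f^{K[\Gamma]}(Q\|P)$. Setting $(\mu,\nu):=(K[Q],K[P])$ in Part~1 and $(\mu,\nu):=(Q\otimes K,P\otimes K)$ in Part~2, the goal is to show
\begin{align*}
E_{\mu}[g]-\Lambda_f^{\nu}[g]\;\leq\;E_Q[K[g]]-\Lambda_f^{P}[K[g]],
\end{align*}
after which the conclusion follows by taking $\sup_{g\in\Gamma}$ on both sides, since the right-hand side is then bounded above by $\sup_{h\in K[\Gamma]}\{E_Q[h]-\Lambda_f^P[h]\}=D_f^{K[\Gamma]}(Q\|P)$.

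The equality $E_\mu[g]=E_Q[K[g]]$ is immediate: in Part~1 it is just the definition of the marginal $K[Q]$, and in Part~2 it is Fubini's theorem for the kernel integral against $Q\otimes K$. The heart of the argument is the inequality $\Lambda_f^{P}[K[g]]\leq \Lambda_f^{\nu}[g]$. Fix any shift $\nu_0\in\mathbb{R}$. Since constants pass through the kernel, $K[g-\nu_0]=K[g]-\nu_0$ pointwise on $\Omega$. Convexity of $f^*$ and Jensen's inequality applied to the probability measure $K_x(\,\cdot\,)$ give
\begin{align*}
f^*\bigl(K[g](x)-\nu_0\bigr)=f^*\!\left(\int(g-\nu_0)\,dK_x\right)\leq \int f^*\!\bigl(g-\nu_0\bigr)\,dK_x=K\bigl[f^*(g-\nu_0)\bigr](x).
\end{align*}
Integrating against $P$ and using either the definition of $K[P]$ (Part~1) or of $P\otimes K$ (Part~2) yields $E_P[f^*(K[g]-\nu_0)]\leq E_\nu[f^*(g-\nu_0)]$. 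Adding $\nu_0$ to both sides and taking $\inf_{\nu_0\in\mathbb{R}}$ produces exactly the desired $\Lambda_f^{P}[K[g]]\leq \Lambda_f^{\nu}[g]$.

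Combining the two ingredients gives the claimed pointwise inequality, and passing to the supremum over $g\in\Gamma$ completes the proof. I do not anticipate any serious obstacle: since $f^*$ is convex as a Legendre transform of an $\overline{\mathbb{R}}$-valued convex function, Jensen's inequality is available with no further hypothesis on $f\in\mathcal{F}_1(a,b)$, and the measurability/boundedness of $K[g]$ is built into the definition of a probability kernel. The only point that warrants care is making sure the inequality is derived \emph{before} taking infima in $\nu_0$, so that the bound inherits the infimum structure defining $\Lambda_f^P$ rather than producing a weaker estimate.
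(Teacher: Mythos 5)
Your proof is correct and follows essentially the same route as the paper's: the key step in both is Jensen's inequality for the convex function $f^*$ applied against the kernel measure $K_x$, followed by integrating against $P$ and passing the inequality through the optimization over the shift and the supremum over $g\in\Gamma$. The only cosmetic difference is that the paper writes the argument using the joint $\sup_{g,\nu}$ form \eqref{eq:Df_Gamma_def2} while you keep the infimum over shifts packaged inside $\Lambda_f^P$, which is the same computation.
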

\begin{remark}
In \req{eq:data_proc1} we use the obvious embedding of $\mathcal{M}_b(N)\subset\mathcal{M}_b(\Omega\times N)$ to define $K[\Gamma]\equiv\{K[g]:g\in\Gamma\}$.
\end{remark}

}

\section{Mass-Redistribution/Mass-Transport Interpretation of the $(f,\Gamma)$-Divergences}\label{sec:mass_transport}

The bound 
\begin{align}\label{eq:Df_Gamma_W_bound}
D_f^\Gamma(Q\|P)\leq W^\Gamma(Q,P)\,,    
\end{align}
which follows from {  Part 1 of either Theorem \ref{thm:general_ub} or Theorem 
\ref{thm:f_div_inf_convolution},}
makes it clear that $D_f^\Gamma(Q\|P)$ can be finite and informative even if $Q\not\ll P$. For instance, if $\Gamma=\Lip_b^1(S)$ then $W^\Gamma$ is the classical Wasserstein metric, and  this can be finite even for mutually singular $Q$ and $P$.  It is well-known that the Wasserstein metric can be understood in terms of mass transport \cite{villani2008optimal}. Generalizing this idea, the variational formula \eqref{eq:inf_conv_existence} allows us to interpret the $(f,\Gamma)$-divergences  in terms of a two-stage mass-redistribution/mass-transport process:
\begin{enumerate}
    \item First the  `mass' distribution, $P$, is redistributed to form an intermediate measure, $\eta_*$. This has cost $D_f(\eta_*\|P)$, which depends on the relative amount of mass   moved from or added to each point, but is insensitive to the distance that the mass is moved. 
    However, the support of $\eta_*$ cannot be enlarged or shifted outside the support of $P$ during its construction, otherwise the cost would be infinite.
    \item Next, the mass is transported from $\eta_*$ to $Q$ with a cost  $W^\Gamma(Q,\eta_*)$ that depends on the distance the mass must be moved.  In this step, the support of $\eta_*$ could be drastically different from the support of $Q$, if necessary.
\end{enumerate}
The optimizing $\eta_*$ achieves the optimal balance between the cost of redistributing mass in step 1 and the cost of transporting mass in step 2.   
\begin{remark}
When $\Gamma\neq \Lip_b^1(S)$, $D_f^\Gamma$ is still characterized by the above two-stage procedure, with the only difference being that the interpretation of $W^\Gamma$ may differ. 
\end{remark}

In this section we derive a characterization of   the solution to the infimal convolution problem \eqref{eq:inf_conv} (in the case where $f\in\mathcal{F}_1(a,b)$ with $a\geq 0$) and will use this to provide further insight into the mass-redistribution/mass-transport interpretation. A key step will be to first obtain existence and uniqueness results regarding the dual optimization problem \eqref{eq:Df_Gibbs_var_formula} for the classical $f$-divergences. The proof is found in Appendix \ref{app:proofs}, Theorem \ref{thm:Gibbs_optimizer_app}.

\begin{theorem}\label{thm:Gibbs_optimizer}
 Let $P\in\mathcal{P}(\Omega)$,   $ g\in\mathcal{M}_b(\Omega)$, and $f\in\mathcal{F}_1(a,b)$ be admissible with  $a\geq 0$. If $f$ is strictly convex on $(a,b)$ then there exists  $\nu_*\in\mathbb{R}$ such that
\begin{align}
dQ_*\equiv (f^*)^\prime( g-\nu_*) dP
\end{align}
is a probability measure and
\begin{align}
&\sup_{Q\in\mathcal{P}(\Omega)}\{E_Q[ g]-D_f(Q\|P)\}= E_{Q_*}[ g]-D_f(Q_*\|P)=\nu_*+E_P[f^*( g-\nu_*)]=\Lambda_f^P[g]\,.
\end{align}
Moreover, $Q_*$ is the unique solution to the
optimization problem
\begin{align}\label{eq:Gibb_Q_unique}
\sup_{Q\in\mathcal{P}(\Omega)}\{E_Q[ g]-D_f(Q\|P)\}\,.
\end{align}
\end{theorem}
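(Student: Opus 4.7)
The plan is to solve the one-dimensional convex problem $\phi(\nu) := \nu + E_P[f^*(g-\nu)]$, whose infimum is $\Lambda_f^P[g]$ by the definition \eqref{eq:Lambda_f_def}, extract a minimizer $\nu_*$, then define $Q_*$ via the first-order condition and derive the asserted identities through the Fenchel--Young equality. The enabling observations are that strict convexity of $f$ on $(a,b)$ combined with admissibility ($\{f^*<\infty\}=\mathbb{R}$) makes $f^*$ differentiable on all of $\mathbb{R}$, and $a\geq 0$ makes $f^*$ nondecreasing (Lemma~\ref{lemma:f_star_inc}), so $(f^*)'\geq 0$. Since $g$ is bounded and $(f^*)'$ is continuous, all integrands are uniformly bounded for $\nu$ in compact sets, and dominated convergence yields $\phi\in C^1$ with $\phi'(\nu) = 1 - E_P[(f^*)'(g-\nu)]$.

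\textbf{Existence of $\nu_*$ and construction of $Q_*$.} To find a zero of $\phi'$ I would invoke the standard Legendre-duality asymptotics: as $y\to-\infty$, $(f^*)'(y)\to a$, and as $y\to+\infty$, $(f^*)'(y)\to b$, where $0\leq a<1<b\leq\infty$. Boundedness of $g$ makes $g-\nu\to\pm\infty$ uniformly as $\nu\to\mp\infty$, and monotonicity of $(f^*)'$ permits monotone convergence to conclude $E_P[(f^*)'(g-\nu)]\to b>1$ as $\nu\to-\infty$ and $\to a<1$ as $\nu\to+\infty$. Continuity of $\phi'$ and the intermediate value theorem then produce some $\nu_*\in\mathbb{R}$ with $E_P[(f^*)'(g-\nu_*)]=1$. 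Since $(f^*)'\geq 0$, it follows that $dQ_*:=(f^*)'(g-\nu_*)\,dP$ defines a probability measure with $Q_*\ll P$.

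\textbf{Identities and uniqueness.} Differentiability of $f^*$ gives the Fenchel--Young equality $f(x)+f^*(y)=xy$ whenever $x=(f^*)'(y)$. Applying this pointwise with $y=g-\nu_*$ and $x=dQ_*/dP$, then integrating against $P$, yields
\[
D_f(Q_*\|P) + E_P[f^*(g-\nu_*)] = E_P\big[(f^*)'(g-\nu_*)(g-\nu_*)\big] = E_{Q_*}[g-\nu_*].
\]
Using $Q_*(\Omega)=1$ and rearranging gives $E_{Q_*}[g]-D_f(Q_*\|P) = \nu_*+E_P[f^*(g-\nu_*)] = \Lambda_f^P[g]$, the last equality because $\nu_*$ minimizes $\phi$. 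Combining with the Gibbs variational formula \eqref{eq:Df_Gibbs_var_formula}, which applies since $a\geq 0$, identifies $Q_*$ as a maximizer of $Q\mapsto E_Q[g]-D_f(Q\|P)$. Uniqueness then follows from strict convexity of this functional: strict convexity of $f$ makes $Q\mapsto D_f(Q\|P)$ strictly convex on $\{Q\ll P\}$, so if $\widetilde{Q}\neq Q_*$ were another maximizer then $(1-\lambda)Q_*+\lambda\widetilde{Q}$ would strictly exceed $\Lambda_f^P[g]$ for any $\lambda\in(0,1)$, contradicting the supremum.

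\textbf{Main obstacle.} The most delicate step is verifying the asymptotics $(f^*)'(y)\to a,b$ at $\mp\infty$ and rigorously justifying the interchange of these limits with $E_P$; this rests on monotonicity of $(f^*)'$ for the limiting values and on uniform bounds over compact $\nu$-ranges for the $C^1$ behavior of $\phi$. The other subtlety is that strict convexity of $f$ does not force strict convexity of $\phi$ (since $f^*$ may have affine pieces), so $\nu_*$ need not be unique; however, uniqueness of $Q_*$ comes instead from strict convexity of $D_f(\cdot\|P)$, so this causes no trouble.
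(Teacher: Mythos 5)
Your proof is correct and follows essentially the same architecture as the paper's argument (Theorem \ref{thm:Gibbs_optimizer_app}): solve the scalar equation $E_P[(f^*)'(g-\nu)]=1$ by the intermediate value theorem, define $Q_*$ through the first-order condition, obtain the chain of identities from the pointwise Fenchel--Young equality (Lemma \ref{lemma:f_star_formula} in the paper), and deduce uniqueness from strict convexity of $Q\mapsto D_f(Q\|P)$ on its finiteness set (Lemma \ref{lemma:strictly_convex}). The only genuine divergence is in how you bracket the value $1$ for the intermediate value theorem: you appeal to the endpoint asymptotics $(f^*)'(y)\to a<1$ as $y\to-\infty$ and $(f^*)'(y)\to b>1$ as $y\to+\infty$, combined with monotone convergence, whereas the paper anchors at the single point $\nu_0=f_+'(1)$, where Lemma \ref{lemma:nu0} gives $(f^*)'(\nu_0)=1$, and then uses monotonicity of $(f^*)'$ with boundedness of $g$ to get $E_P[(f^*)'(g-\nu)]\le 1$ at $\nu=\|g\|_\infty-\nu_0$ and $\ge 1$ at $\nu=-\|g\|_\infty-\nu_0$. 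The paper's bracketing is slightly more economical --- it sidesteps the limit analysis of $(f^*)'$ that you correctly identify as the delicate step, and it localizes $\nu_*$ to an explicit compact interval --- but your version is also sound, since $a<1<b$ by Definition \ref{def:F_1} and the interchange of limits is justified exactly as you describe. One cosmetic point: Lemma \ref{lemma:strictly_convex} gives strict convexity of $D_f(\cdot\|P)$ on $\{Q:D_f(Q\|P)<\infty\}$ rather than on all of $\{Q\ll P\}$, but since any maximizer necessarily lies in the former set, your uniqueness argument goes through unchanged.
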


Theorem \ref{thm:Gibbs_optimizer} (specifically, the  generalization found in Theorem \ref{thm:Gibbs_optimizer_app}) allows us to derive in Theorem \ref{thm:inf_conv_sol} a  characterization of the solution, $\eta_*$, to the infimal convolution problem \eqref{eq:inf_conv}. First we present a formal calculation; a precise statement of the result can be found in Theorem \ref{thm:inf_conv_sol} and a rigorous proof is given  in Theorem \ref{thm:inf_conv_sol_app} of Appendix \ref{app:proofs}. This result generalizes Theorem 4.12 from \cite{Dupuis:Mao:2019}, which considered the KL case: First assume $(g_*, \nu_*)$ is a maximizer of \eqref{eq:gen_f_def}, and assume $\eta_*$ solves  \eqref{eq:inf_conv}. Then
\begin{align}\label{eq:Df_Phi_formula_informal}
D_f^\Gamma(Q\|P)&=E_Q[  g_*]-(\nu_*+E_P[f^*( g_*-\nu_*)])\\
&=E_Q[  g_*]-E_{\eta_*}[  g_*]+E_{\eta_*}[  g_*]-(\nu_*+E_P[f^*( g_*-\nu_*)])\notag\\
&\le W^\Gamma(Q,\eta_*)+D_f(\eta_*\|P)=
D_f^\Gamma(Q\|P)\, .\notag
\end{align}
Therefore, as the inequalities become equalities, we have
\[
W^\Gamma(Q,\eta_*)=E_Q[  g_*]-E_{\eta_*}[g_*]
\]
and
\begin{align}\label{eq:Df_equality_formal}
    D_f(\eta_*\|P)=E_{\eta_*}[  g_*]-(\nu_*+E_P[f^*( g_*-\nu_*)])\,.
\end{align}
Note that this also implies $E_P[(f^*)^\prime_+( g_*-\nu_*)]=1$. 
\begin{align}
    d\eta_* & = (f^*)^\prime( g_*-\nu_*)dP\, , \\
    g_*& =f^\prime( d\eta_*/dP)+\nu_*\, \,\,\,\,P\text{-a.s.}
\end{align}
In particular, in the KL case \cite[Remark 4.11]{Dupuis:Mao:2019}), one has
\begin{align}
    g_*=\log(d\eta_*/dP)+c_0\, \,\,\,\,P\text{-a.s.}
\end{align}
for some $c_0\in\mathbb{R}$ and if $Q\ll P$ this leads to
\begin{align}\label{eq:KL:Gamma:classical}
    R^\Gamma(Q\|P)=E_Q[\log(d\eta_*/dP)]\,,
\end{align}
which has an obvious similarity to the formula for the classical KL divergence.

\begin{theorem}\label{thm:inf_conv_sol}
Let  $\Gamma\subset C_b(S)$ be admissible and $f\in\mathcal{F}_1(a,b)$ be admissible, where   $a\geq 0$ and $f^*$ is $C^1$.  Fix $Q,P\in\mathcal{P}(S)$ and suppose we have $ g_*\in\Gamma$ and $\nu_*\in\mathbb{R}$ that satisfy the following:
\begin{enumerate}
\item $f((f^*)^\prime( g_*-\nu_*))\in L^1(P)$,
\item  $E_P[(f^*)^\prime( g_*-\nu_*)]=1$,
\item $W^\Gamma(Q,\eta_*)=E_Q[  g_*]-E_{\eta_*}[ g_*]$, where  $d\eta_*\equiv (f^*)^\prime( g_*-\nu_*)dP$.
\end{enumerate}
Then $\eta_*\in\mathcal{P}(S)$ solves the infimal convolution problem \eqref{eq:inf_conv}   and
\begin{align}\label{eq:Df_Phi_formula}
D_f^\Gamma(Q\|P)=E_Q[  g_*]-(\nu_*+E_P[f^*( g_*-\nu_*)])\,.
\end{align}
If $f$ is strictly convex then $\eta_*$ is the unique solution to the infimal convolution problem.
\end{theorem}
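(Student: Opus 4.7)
The plan is to follow the informal computation \eqref{eq:Df_Phi_formula_informal}--\eqref{eq:Df_equality_formal} sketched just before the statement, turning it into a rigorous sandwich argument that uses Theorem \ref{thm:f_div_inf_convolution} as the upper bound and the definition \eqref{eq:gen_f_def} as the lower bound.

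First I would check the basic well-posedness of the candidate $\eta_*$. Because $a\geq 0$, the function $f^*$ is nondecreasing (cf.\ Lemma \ref{lemma:f_star_inc}), so $(f^*)^\prime\geq 0$; combined with condition~2, this shows that $d\eta_*\equiv (f^*)^\prime( g_*-\nu_*)\,dP$ is a genuine probability measure. Next I would identify $\nu_*$ as a minimizer in $\Lambda_f^P[ g_*]$: the map $h(\nu)\equiv\nu+E_P[f^*( g_*-\nu)]$ is convex in $\nu$, and since $ g_*$ is bounded and $(f^*)^\prime$ is continuous (because $f^*$ is $C^1$ and finite on $\mathbb{R}$ by admissibility), dominated convergence legitimizes differentiation under the integral, yielding $h^\prime(\nu_*)=1-E_P[(f^*)^\prime( g_*-\nu_*)]=0$ by condition~2. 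Convexity then gives $\Lambda_f^P[ g_*]=\nu_*+E_P[f^*( g_*-\nu_*)]$.

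The key calculation is to evaluate $D_f(\eta_*\|P)$ via Young's equality. Since $f$ is convex and LSC with $f^*$ as its Legendre transform, and $f^*$ is $C^1$, one has $f((f^*)^\prime(y))=y\,(f^*)^\prime(y)-f^*(y)$ for every $y\in\mathbb{R}$ (this is the equality case of Young's inequality when the subdifferential of $f^*$ is single-valued). Condition~1 makes this integrable with respect to $P$, so
\begin{align*}
D_f(\eta_*\|P)
&=E_P\!\left[f\bigl((f^*)^\prime( g_*-\nu_*)\bigr)\right]\\
&=E_P\!\left[( g_*-\nu_*)(f^*)^\prime( g_*-\nu_*)\right]-E_P[f^*( g_*-\nu_*)]\\
&=E_{\eta_*}[ g_*]-\nu_*-E_P[f^*( g_*-\nu_*)].
\end{align*}
Adding condition~3 then telescopes:
\begin{align*}
D_f(\eta_*\|P)+W^\Gamma(Q,\eta_*)
=E_Q[ g_*]-\bigl(\nu_*+E_P[f^*( g_*-\nu_*)]\bigr)
=E_Q[ g_*]-\Lambda_f^P[ g_*].
\end{align*}

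Finally I would close the sandwich. The right-hand side above is $\leq D_f^\Gamma(Q\|P)$ by the defining supremum \eqref{eq:gen_f_def} applied to $ g_*\in\Gamma$, while Theorem \ref{thm:f_div_inf_convolution}(1) gives $D_f^\Gamma(Q\|P)\leq D_f(\eta_*\|P)+W^\Gamma(Q,\eta_*)$ (the admissibility hypotheses on $f$ and $\Gamma$ required there are in force). Hence all three quantities are equal, which simultaneously proves that $\eta_*$ attains the infimal convolution \eqref{eq:inf_conv} and yields \eqref{eq:Df_Phi_formula}. Uniqueness under strict convexity of $f$ then follows directly from Theorem \ref{thm:f_div_inf_convolution}(2). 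The main obstacle I anticipate is bookkeeping around Young's equality and the interchange of derivative and integral defining $\Lambda_f^P$; both are controlled by the standing admissibility/$C^1$ hypotheses, but writing them carefully (especially handling the case where $(f^*)^\prime$ blows up near the boundary of the essential domain) is the one place where care is needed.
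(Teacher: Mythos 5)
Your proposal is correct and follows essentially the same route as the paper's proof (Theorem \ref{thm:inf_conv_sol_app}): Young's equality $f((f^*)'(y))=y(f^*)'(y)-f^*(y)$ (Lemma \ref{lemma:f_star_formula}) to evaluate $D_f(\eta_*\|P)$, then the sandwich between the infimal-convolution upper bound of Theorem \ref{thm:f_div_inf_convolution} and the variational lower bound from \eqref{eq:gen_f_def} at $g_*$, with uniqueness inherited from Part 2 of Theorem \ref{thm:f_div_inf_convolution}. The only cosmetic difference is that the paper identifies $\nu_*+E_P[f^*(g_*-\nu_*)]$ with $\Lambda_f^P[g_*]$ by routing through Theorem \ref{thm:Gibbs_optimizer} and the Gibbs formula \eqref{eq:Df_Gibbs}, whereas you verify directly that $\nu_*$ is a first-order critical point of the convex map $\nu\mapsto\nu+E_P[f^*(g_*-\nu)]$; both are valid under the standing hypotheses.
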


\begin{remark}
In the context of MMD,  $g_*$   is called the witness function \cite{gretton2012}. In the KL case,  the existence of $g_*$ can be proven under appropriate compactness assumptions \cite[Theorem 4.8]{Dupuis:Mao:2019}. 
\end{remark}
\begin{remark}
\req{eq:inf_conv_existence} from Theorem \ref{thm:f_div_inf_convolution} makes it clear that $D_f^\Gamma(Q\|P)< D_f(Q\|P)$ in `most' cases.  An exception to this occurs when   \req{eq:Df_var_formula} has an optimizer $g_*$ with $ g_*\in\Gamma$.  In such cases we have $D_f^\Gamma(Q\|P)=D_f(Q\|P)$, the supremum \eqref{eq:gen_f_def} will also be achieved at $ g_*$ since \req{eq:Df_Phi_formula} holds with $\nu_*=0$, and the solution to the infimal convolution problem is $\eta_*=Q$.
\end{remark}

{  In general, the task of computing the intermediate measure $\eta_*$ in \eqref{eq:inf_conv_existence} is difficult, though a naive approach could proceed as follows:
\begin{enumerate}
    \item Approximate $\eta\in\mathcal{P}(S)$ by a neural network family $h_\theta(X)$, where $X$ is some random noise source (as in the generator of a GAN; see Section \ref{sec:examples}); in this step we are using Corollary \ref{cor:upper_bound} to construct an upper bound.
    \item Approximate $D_f(\eta\|P)$ and $W^\Gamma(Q,\eta)$ via their variational formulas \eqref{eq:Df_var_formula} or \eqref{eq:Df_var_formula_nu} and \eqref{eq:gen_wasserstein} respectively, with the function spaces being approximated via neural network families (as in the discriminator of a GAN; again, see Section \ref{sec:examples}).
    \item Solve the resulting min-max problem \eqref{eq:inf_conv} via a stochastic-gradient-descent method to approximate $\eta_*$ (and also $g_*$).
\end{enumerate}
We did not explore the effectiveness of this naive method here, as it is tangential to the goals of this paper; we leave the computation of $\eta_*$ for a future work.  Nevertheless, the following subsection presents a simple example that provides useful intuition.}

\subsection{Example: Dirac Masses}

Here we consider a  simple  example involving Dirac masses where the $(f,\Gamma)$-divergence can be explicitly computed  using Theorem \ref{thm:inf_conv_sol}. This example  further illustrates the two-stage mass-redistribution/mass-transport interpretation of the infimal convolution formula \eqref{eq:inf_conv_existence} and demonstrates how the location and distribution of probability mass impacts the result; see Figure \ref{fig:mass_transport}.  Further explicit examples in the KL case can be found in \cite{2020arXiv201108441M}.  

Let $0=x_1<x_2<x_3$ and define the uniform distributions
\begin{align}\label{eq:dirac}
P=\frac{1}{2}\delta_{x_1}+\frac{1}{2}\delta_{x_2}\,,\,\,\,\,\,Q=\frac{1}{3}\delta_{x_1}+\frac{1}{3}\delta_{x_2}+\frac{1}{3}\delta_{x_3}\,.  
\end{align}
Note that $Q\not\ll P$ and so $D_f(Q\|P)=\infty$; we will see that the $(f,\Gamma)$-divergences can be finite.  Specifically, we will compute the $(f_\alpha,\Lip_b^1(\mathbb{R}))$-divergence for $\alpha>1$ via Theorem \ref{thm:inf_conv_sol}. To do this we must find $ g_*\in\Lip_b^1(\mathbb{R})$ and $\nu_*\in\mathbb{R}$ such that
\begin{align}
    &\frac{1}{2}(f_\alpha^*)^\prime( g_*(x_1)-\nu_*)+\frac{1}{2}(f_\alpha^*)^\prime( g_*(x_2)-\nu_*)=1\,,\label{eq:ex1}\\
    & g_*\in \argmax_{ g\in\Lip_b^1(\mathbb{R})}\left\{\frac{1}{3}( g(x_1)+ g(x_2)+ g(x_3))-\frac{1}{2}\left( g(x_1)(f^*_\alpha)^\prime( g_*(x_1)-\nu_*)+ g(x_2)(f^*_\alpha)^\prime( g_*(x_2)-\nu_*)\right)\right\}\,,\label{eq:ex2}
\end{align}
where 
\begin{align}
    (f_\alpha^*)^\prime(y)=(\alpha-1)^{1/(\alpha-1)}y^{1/(\alpha-1)}1_{y> 0}
\end{align}
 \begin{figure}
\begin{minipage}[b]{0.45\linewidth}
  \centering
\includegraphics[scale=.50]{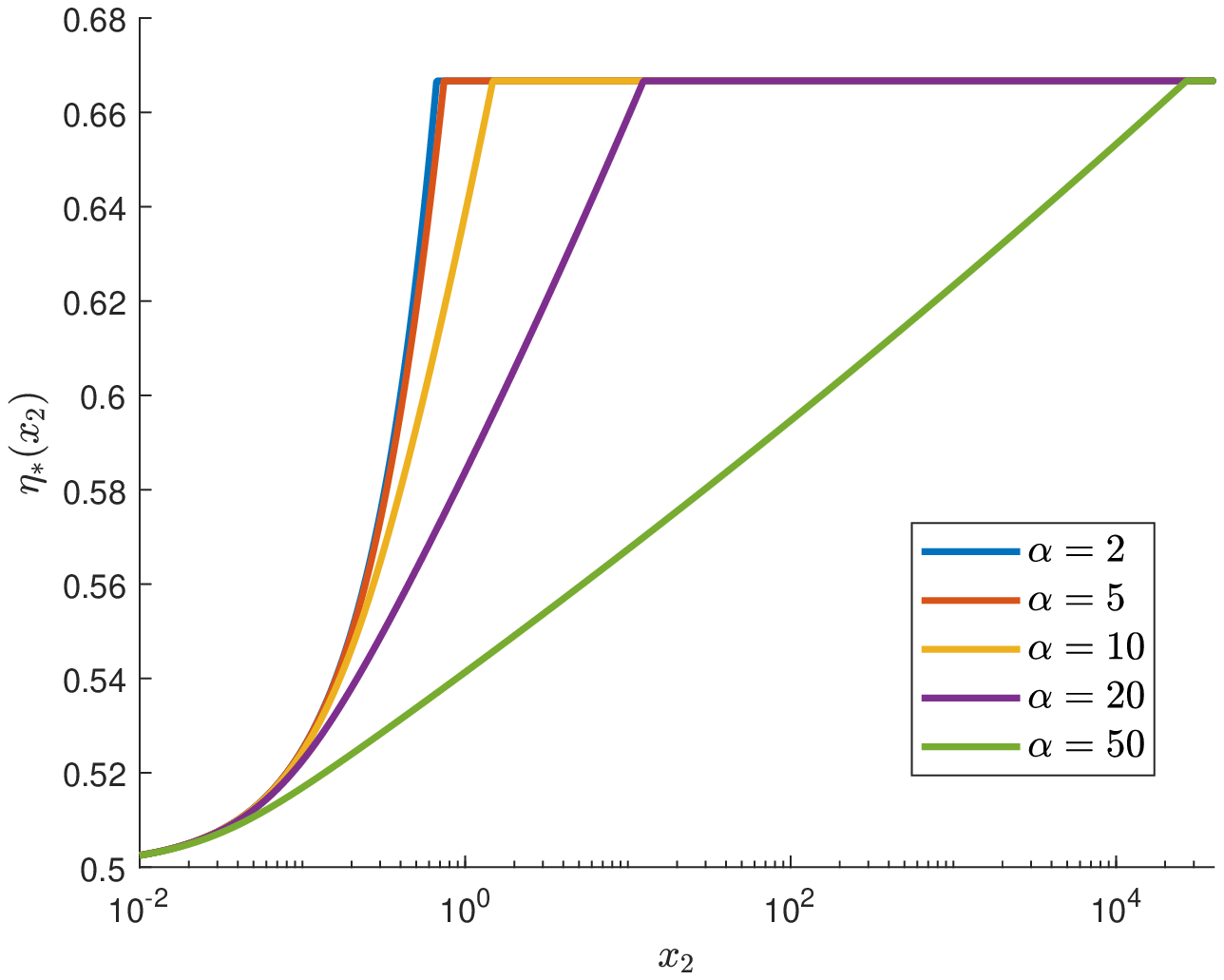} \end{minipage}
\hspace{0.5cm}
\begin{minipage}[b]{0.45\linewidth}
\centering
\includegraphics[scale=.50]{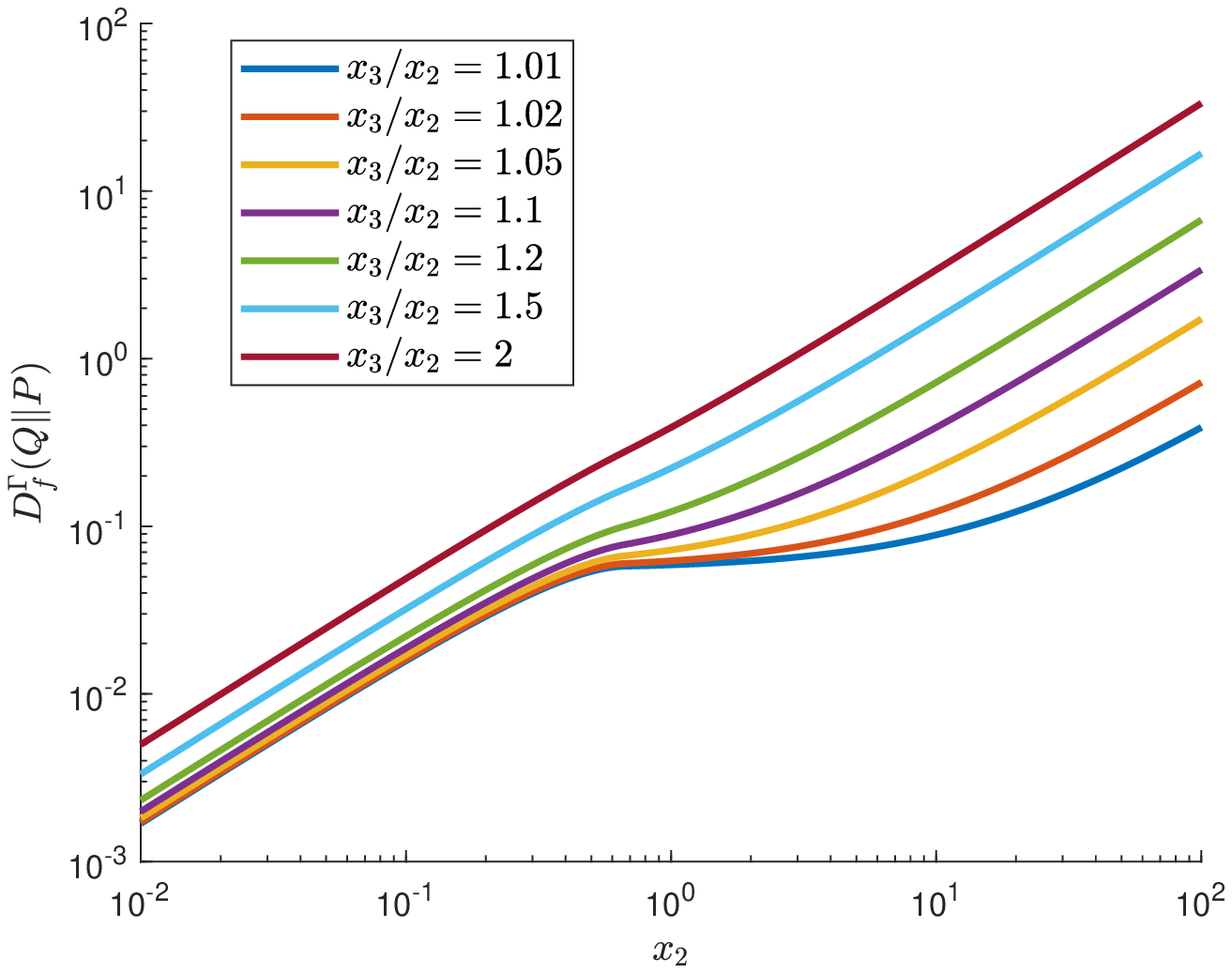} \end{minipage}

   \caption{ Solution of the infimal convolution problem \eqref{eq:inf_conv_existence} for $D_{f_\alpha}^\Gamma(Q\|P)$, where $\Gamma=\Lip_b^1(\mathbb{R})$ and $Q$ and $P$ are given by \req{eq:dirac}. The left panel shows the mass $\eta_*(x_2)$ as a function of $x_2$.  For each value of $\alpha$ there is a transition point where all of the mass required by $Q$ at $x_3$ is first redistributed to $x_2$ when forming $\eta_*$, resulting in $\eta_*(x_2)=2/3$. Note that the amount of mass moved to $x_2$ in the redistribution step does not depend on the distance of $x_3$ from $x_2$, only on the distance of $x_2$ from $x_1=0$. The right panel shows $D_{f_2}^\Gamma(Q\|P)$ as a function of $x_2$ and for several different values of the ratio $x_3/x_2$. 
 }\label{fig:mass_transport}
\end{figure}
(see \req{eq:f_alpha_star}); \req{eq:ex1} is a simplification of Assumption 2 from  Theorem \ref{thm:inf_conv_sol} and \eqref{eq:ex2} corresponds to Assumption 3. The solution to the infimal convolution problem then has the form
\begin{align}\label{eq:gamma_star_ex}
    d\eta_*= \frac{1}{2}(f_\alpha^*)^\prime( g_*(x_1)-\nu_*)\delta_{x_1}+
    \frac{1}{2}(f_\alpha^*)^\prime( g_*(x_2)-\nu_*)\delta_{x_2}\,.
\end{align}
We will now outline how one solves for $\nu_*$ and $g_*$. Without loss of generality we can assume $ g_*(x_1)=0$ (the objective functional for $W^\Gamma$ is invariant under constant shifts and at the same time, shifting $g_*$ in $\eta_*$ can be achieved by redefining $\nu_*$).   The only dependence on $ g(x_3)$ in \req{eq:ex2} is in the $ g(x_3)/3$ term, hence the optimal solution has $ g(x_3)=x_3-x_2+ g(x_2)$. Therefore we need to solve
\begin{align}\label{eq:dirac_example_eqs}
    &\frac{1}{2}(f_\alpha^*)^\prime(-\nu_*)+\frac{1}{2}(f_\alpha^*)^\prime( g_*(x_2)-\nu_*)=1\,,\\
    & g_*(x_2)\in \argmax_{ g(x_2)\in[-x_2,x_2]}\left\{\frac{1}{3}(x_3-x_2)+\left(\frac{2}{3}-\frac{1}{2}(f^*_\alpha)^\prime( g_*(x_2)-\nu_*)\right) g(x_2)\right\}\notag
\end{align}
for $\nu_*$ and $g_*(x_2)$. The solution to this is obtained as follows:
\begin{enumerate}
    \item Let $\nu_*( g_2)$ be the unique solution to $\frac{1}{2}(f_\alpha^*)^\prime(-\nu_*)+\frac{1}{2}(f_\alpha^*)^\prime( g_2-\nu_*)=1$; the two terms on the left hand side will be used to obtain the redistributed weights in $\eta_*$.
    \item Take $ g_{*,2}$ such that $\frac{1}{2}(f_\alpha^*)^\prime( g_{*,2}-\nu_*( g_{*,2}))=2/3$; this is inspired by the second line in \req{eq:dirac_example_eqs}.
    \item If $0<x_2< g_{*,2}$ then the solution to \req{eq:dirac_example_eqs} is obtained at $\nu_*=\nu_*(x_2)$ and
    \begin{align}
 g_*(x)=\begin{cases} 
     0, & x< 0\\
      x, &x\in[0,x_3)\\
      x_3,& x\geq x_3\,.
   \end{cases}
\end{align}
In this case, the optimal solution has $1/3<\eta_*(x_2)<2/3$, i.e., some amount of mass is redistributed from $x_1=0$ to $x_2$ when forming $\eta_*$ and then mass is transported  from both $x_1$ and $x_2$ to $x_3$ to form $Q$.
    \item If $x_2\geq  g_{*,2}$ then the solution to \req{eq:dirac_example_eqs} is obtained at $\nu_*=\nu_*( g_{*,2})$ and 
    \begin{align}
 g_*(x)=\begin{cases} 
     0, & x< 0\\
      \frac{ g_{*,2}}{x_2} x, &x\in[0,x_2)\\
      x-x_2+ g_{*,2},& x\in[x_2,x_3)\\
      x_3-x_2+ g_{*,2},& x\geq x_3\,.
   \end{cases}
\end{align}
In this case, $x_2$ is sufficiently far away from $x_1=0$ that  the optimal solution, $\eta_*$, is obtained by first redistributing  mass from $x_1=0$ to $x_2$ so that $\eta_*(x_1)=1/3$, $\eta_*(x_2)=2/3$. In the second step,  mass is transported solely from $x_2$ to $x_3$ in order to form $Q$.
\end{enumerate}
This completes the construction of $\eta_*$ from \req{eq:gamma_star_ex}. The value of the $(f_\alpha,\Lip_b^1(\mathbb{R}))$-divergence can then be computed via \req{eq:Df_Phi_formula}. The computation of $g_{*,2}$ and $\nu_*(g_{*,2})$ from steps 1 and 2 must be done numerically and so we illustrate the solution graphically in Figure \ref{fig:mass_transport} by plotting $\eta_*(x_2)$ as a function of $x_2$ for a number of $\alpha$'s. This shows how the mass must be redistributed when forming $\eta_*$ from $P$.     The above calculations reveal an interesting transition; when $x_2$ is not close\footnote{With closeness being defined not only relative to the distance between the two points but also depending on $f$.} to $x_1$ then the mass is transferred solely from $x_2$ after it has been redistributed from $x_1$. However, when $x_1$ and $x_2$ are close enough then redistributing all the necessary mass from $x_1$ to $x_2$ is not optimal and it is cheaper to transport probability mass from both $x_1$ and $x_2$ to $x_3$.  The transition between these cases corresponds to the point where $x_2$ crosses above $g_{*,2}$ (which depends on $\alpha$) and hence  $\eta_*(x_2)$ saturates at the value $2/3$.

\section{Soft Constraints and the Divergence Property}\label{sec:soft_constraint}
For  computational purposes, it is  often advantageous to replace the hard (i.e., strict) constraint $ g\in\Gamma$ with a soft constraint in the form of a penalty term, $V$, subtracted from the objective functional; by a penalty term, we mean $V$ `activates' (i.e., is nonzero) when the constraint $ g\in\Gamma$ is violated.  In this way we can construct a new divergence $D_f^{V}$ with $D_f^\Gamma\leq D_f^{V}\leq D_f$ (we let the context distinguish between cases where the superscript denotes a constraint space and cases where it denotes  a penalty term); see Theorem \ref{thm:soft_constraint} for the main result of this section.

Of particular interest is the case  $\Gamma=\Lip^1_b(\mathbb{R}^n)$ (we equip $\mathbb{R}^n$ with the Euclidean metric), where the $1$-Lipschtiz constraint can be relaxed to  a one-sided gradient penalty term, thus defining objects such as
\begin{align}\label{eq:Lip_soft_constraint}
D_f^{\rho}(Q\|P)=\sup_{ g\in\Lip_b(\mathbb{R}^n)}\left\{E_Q[  g]-\Lambda_f^P[g]-\lambda\int \max\{0,\|\nabla  g\|^2-1\}d\rho_{Q,P}\right\}\,,
\end{align}
where $\lambda>0$ is the strength of the penalty term and $\rho_{Q,P}$ is a positive measure (often depending on $Q$ and $P$).  Here we are relying on  Rademacher's theorem (see Theorem 5.8.6 in \cite{evans2010partial}): $L$-Lipschitz functions on $\mathbb{R}^n$  are  differentiable Lebesgue-a.e. and the norm of the gradient is bounded by $L$.  The penalty term in \req{eq:Lip_soft_constraint} will therefore be activated only when $ g$ is not $1$-Lipschitz.

Divergences with soft Lipschitz constraints  were first applied to Wasserstein GAN   \cite{wgan:gp} with great success, but the theoretical properties of such objects have not been explored; specifically, it has not been shown that they satisfy the divergence property.  Here we show in great generality that the relaxation of a hard constraint to a soft constraint preserves the divergence property, and therefore objects such as \eqref{eq:Lip_soft_constraint} still provide a well-defined notion of `distance' between probability measures. The basic requirement is that the penalty term, which we denote by $V$, vanishes on the constraint space $\Gamma$.
\begin{lemma}\label{lemma:soft_constraint}
Let $(\Omega,\mathcal{M})$ be a measurable space, $\Gamma\subset\widetilde{\Gamma}\subset\mathcal{M}(\Omega)$, $H:\widetilde{\Gamma}\times\mathcal{P}(\Omega)\times\mathcal{P}(\Omega)\to\overline{\mathbb{R}}$, and $V:\widetilde{\Gamma}\times\mathcal{P}(\Omega)\times\mathcal{P}(\Omega)\to[0,\infty]$ with $V|_{\Gamma\times\mathcal{P}(\Omega)\times\mathcal{P}(\Omega)}=0$.  Define
\begin{align}
&D^\Gamma(Q\|P)=\sup_{ g\in \Gamma} H[ g;Q,P]\,,\,\,\,\,\,   D^{\widetilde{\Gamma}}(Q\|P)=\sup_{ g\in {\widetilde{\Gamma}}} H[ g;Q,P]\,,\\
&D^{V}(Q\|P)=\sup_{ g\in{\widetilde{\Gamma}}}\{H[ g;Q,P]-V[ g;Q,P]\}\,,\notag
\end{align}
where $\infty-\infty\equiv -\infty$. If $D^\Gamma$ and $D^{\widetilde{\Gamma}}$ both have the divergence property then so does $D^{V}$.

\end{lemma}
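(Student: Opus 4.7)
The strategy is to sandwich $D^V$ between $D^\Gamma$ and $D^{\widetilde{\Gamma}}$, from which the divergence property follows immediately. Concretely, the plan is to establish the two-sided inequality
\begin{align*}
D^\Gamma(Q\|P) \le D^V(Q\|P) \le D^{\widetilde{\Gamma}}(Q\|P)
\end{align*}
for all $Q,P\in\mathcal{P}(\Omega)$, and then read off the divergence property of $D^V$ from the hypothesized divergence properties of the two bounding quantities.

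For the lower bound, I would use that $\Gamma\subset\widetilde{\Gamma}$ and $V[g;Q,P]=0$ whenever $g\in\Gamma$. Then for every $g\in\Gamma$, $H[g;Q,P]-V[g;Q,P]=H[g;Q,P]$, so taking the supremum over $\Gamma\subset\widetilde{\Gamma}$ on the left-hand side of the definition of $D^V$ yields $D^V(Q\|P)\ge D^\Gamma(Q\|P)$. For the upper bound, nonnegativity of $V$ gives $H[g;Q,P]-V[g;Q,P]\le H[g;Q,P]$ pointwise on $\widetilde{\Gamma}$ (using the convention $\infty-\infty=-\infty$ to handle the extended-real case cleanly), hence $D^V(Q\|P)\le D^{\widetilde{\Gamma}}(Q\|P)$.

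With the sandwich in hand, the divergence property is straightforward. First, nonnegativity: $D^V(Q\|P)\ge D^\Gamma(Q\|P)\ge 0$. Second, if $Q=P$ then $D^{\widetilde{\Gamma}}(Q\|P)=0$ by the divergence property of $D^{\widetilde{\Gamma}}$, and the sandwich forces $D^V(Q\|P)=0$. Conversely, if $D^V(Q\|P)=0$ then $D^\Gamma(Q\|P)\le 0$, and combined with nonnegativity we get $D^\Gamma(Q\|P)=0$, so the divergence property of $D^\Gamma$ yields $Q=P$.

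There is no real obstacle here: the only point to watch is the extended-real arithmetic in the upper bound, and the convention $\infty-\infty=-\infty$ stated in the lemma was put there precisely to make the inequality $H-V\le H$ hold unconditionally on $\widetilde{\Gamma}$. I expect the entire argument to take only a few lines.
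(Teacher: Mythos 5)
Your proposal is correct and follows essentially the same route as the paper's proof: establish the sandwich $D^\Gamma \le D^V \le D^{\widetilde{\Gamma}}$ from $\Gamma\subset\widetilde{\Gamma}$, $V\ge 0$, and $V|_\Gamma=0$, then read off nonnegativity, vanishing at $Q=P$, and the converse implication from the two bounding divergences. No gaps.
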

\begin{remark}
The convention $\infty-\infty\equiv-\infty$ is simply a convenient  rigorous shorthand for restricting the supremum to those $ g$'s for which this generally undefined operation does not occur. 
\end{remark}
\begin{remark}\label{remark:generalized_constraint}
More generally, if the supremum $\sup_{ g\in \Gamma} H[ g;Q,P]$ is achieved at $ g_*\in\Gamma$ (depending on $Q,P$) then the requirement  $V|_{\Gamma\times\mathcal{P}(\Omega)\times\mathcal{P}(\Omega)}=0$ can be relaxed to $V[ g_*;Q,P]=0$ for all $Q,P$.
\end{remark}
\begin{proof}
Using $\Gamma\subset{\widetilde{\Gamma}}$, $V\geq 0$, and $V|_\Gamma=0$ we have $D^\Gamma\leq  D^{V}\leq D^{\widetilde{\Gamma}}$. $D^\Gamma$ satisfies the divergence property, hence is non-negative.  Therefore $D^{V}\geq 0$. $D^{\widetilde{\Gamma}}$ has the divergence property, hence if $Q=P$ then $0=D^{\widetilde{\Gamma}}(Q\|P)\geq D^{V}(Q\|P)\geq 0$.  Therefore $D^{V}(Q\|P)=0$.  Finally, if $D^{V}(Q\|P)=0$ then  $D^\Gamma(Q\|P)=0$ and hence the divergence property for $D^\Gamma$ implies $Q=P$.
\end{proof}

Using Theorem \ref{thm:f_div_inf_convolution} and   Corollary \ref{cor:D_f_var_unbounded}, we can apply Lemma \ref{lemma:soft_constraint} to the $(f,\Gamma)$-divergences and thereby conclude the following:
\begin{theorem}
\label{thm:soft_constraint}
Let $f$ and $\Gamma\subset C_b(S)$ be strictly admissible. Let $\Gamma\subset{\widetilde{\Gamma}}\subset\mathcal{M}(S)$   and $V:{\widetilde{\Gamma}}\times\mathcal{P}(S)\times\mathcal{P}(S)\to[0,\infty]$   with $V|_{\Gamma\times\mathcal{P}(S)\times\mathcal{P}(S)}=0$. For $Q,P\in\mathcal{P}(S)$ define
\begin{align}
D^{V}_f(Q\|P)\equiv\sup_{ g\in{\widetilde{\Gamma}}}\left\{\left( E_Q[ g]-\Lambda_f^P[g]\right)-V[ g;Q,P]\right\}\,,
\end{align}
where $\infty-\infty\equiv-\infty$, $-\infty+\infty\equiv-\infty$.  Then $D^{V}_f$ has the divergence property and $D_f^\Gamma\leq D_f^{V}\leq D_f$.
\end{theorem}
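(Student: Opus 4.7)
The plan is to verify that the hypotheses of Lemma \ref{lemma:soft_constraint} are met with the choice of objective functional $H[g;Q,P] \equiv E_Q[g] - \Lambda_f^P[g]$, constraint set $\Gamma$, enlarged set $\widetilde{\Gamma}$, and penalty $V$ provided in the statement. Under this identification, $D^\Gamma$ in the lemma becomes exactly the $(f,\Gamma)$-divergence $D_f^\Gamma$ from Definition \ref{def:f_Gamma_Div}, and $D^V$ becomes the soft-constrained object $D_f^V$. Then the lemma yields the divergence property for $D_f^V$ for free, provided we can check the divergence property for both $D^\Gamma$ and $D^{\widetilde{\Gamma}}$.

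First I would handle $D^\Gamma = D_f^\Gamma$: since $f$ and $\Gamma$ are strictly admissible by assumption, Part 4 of Theorem \ref{thm:f_div_inf_convolution} gives the divergence property directly. Next I would handle $D^{\widetilde{\Gamma}}$, which is where Corollary \ref{cor:D_f_var_unbounded} enters. The statement of that corollary is that the classical variational representation \eqref{eq:Df_var_formula_nu} continues to hold (or is bounded above by $D_f$) when the class of test functions is enlarged from $\mathcal{M}_b(\Omega)$ to an appropriate class of unbounded measurable functions. This gives $D^{\widetilde{\Gamma}}(Q\|P) \leq D_f(Q\|P)$ for all $Q,P\in\mathcal{P}(S)$, and in particular $D^{\widetilde{\Gamma}}(P\|P) = 0$. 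Combined with the nonnegativity inherited from $D^\Gamma \leq D^{\widetilde{\Gamma}}$ (and $D_f^\Gamma \geq 0$), this is precisely what is used in the proof of Lemma \ref{lemma:soft_constraint} to conclude the divergence property for $D^V = D_f^V$.

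Finally, I would establish the sandwich bound $D_f^\Gamma \leq D_f^V \leq D_f$. The left inequality is immediate: for $g\in\Gamma$, $V[g;Q,P] = 0$ by hypothesis, and $\Gamma \subset \widetilde{\Gamma}$, so the supremum defining $D_f^V$ is taken over a larger class with the same integrand on $\Gamma$. The right inequality uses Corollary \ref{cor:D_f_var_unbounded} again together with the nonnegativity of $V$: for any $g\in\widetilde{\Gamma}$,
\begin{equation*}
\bigl(E_Q[g] - \Lambda_f^P[g]\bigr) - V[g;Q,P] \;\leq\; E_Q[g] - \Lambda_f^P[g] \;\leq\; D_f(Q\|P),
\end{equation*}
and taking the supremum over $g\in\widetilde{\Gamma}$ yields $D_f^V \leq D_f$.

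The only subtle point, and the one I expect to require care rather than heavy work, is the extension of the variational representation to the unbounded class $\widetilde{\Gamma}$: one must ensure $\Lambda_f^P[g]$ is well-defined (possibly $+\infty$) on $\widetilde{\Gamma}$ and that the conventions $\infty - \infty \equiv -\infty$ and $-\infty + \infty \equiv -\infty$ used in the statement interact correctly with the bound from Corollary \ref{cor:D_f_var_unbounded} so that we never accidentally inflate the supremum. Once this bookkeeping is checked, the remainder is a direct appeal to Lemma \ref{lemma:soft_constraint}.
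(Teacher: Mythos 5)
Your proposal is correct and follows essentially the same route as the paper: the paper's proof likewise combines Lemma \ref{lemma:soft_constraint} with Part 4 of Theorem \ref{thm:f_div_inf_convolution} (for $D^\Gamma=D_f^\Gamma$) and the unbounded-test-function variational formula (Corollary \ref{cor:D_f_var_unbounded}) to control $D^{\widetilde{\Gamma}}$ and obtain the sandwich $D_f^\Gamma\leq D_f^V\leq D_f$. Your attention to the $\infty-\infty$ conventions and to which direction of the divergence property for $D^{\widetilde{\Gamma}}$ is actually used is exactly the right bookkeeping.
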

\begin{proof}
Combine Lemma \ref{lemma:soft_constraint} with Part 4 of Theorem \ref{thm:f_div_inf_convolution} and Theorem \ref{thm:Df_var_unbounded} below; the latter shows that the variational formula for $D_f$ also  holds when using the test-function space $\mathcal{M}(\Omega)$.
\end{proof}

\subsection{Soft-Lipschitz Constraints on $\mathbb{R}^n$: One-Sided Versus Two-Sided Penalties}
The gradient penalty term in \req{eq:Lip_soft_constraint} is one-sided, meaning that it penalizes $\|\nabla g\|>1$ but not $\|\nabla g\|\leq 1$.  This is consistent with the hard constraint that the Lipschitz constant be less than or equal to $1$. The first use of soft Lipschitz penalties in \cite{wgan:gp}, which considered the Wasserstein metric, also used a two-sided gradient penalty,
\begin{align}\label{W_rho}
    W^\rho(Q,P)=\sup_{ g\in\Lip_b(\mathbb{R}^n)}\left\{E_Q[ g]-E_P[ g]-\lambda\int (\|\nabla g\|-1)^2d\rho_{Q,P}\right\}\,,
\end{align}
which penalizes $\|\nabla g\|\neq 1$. An intuitively reasonable requirement to impose on any soft constraint is that it vanish on the exact optimizer (if one exists) of the original strictly-constrained optimization problem. The justification for a two-sided gradient penalty in the  Wasserstein case  rests on   Proposition 1 in \cite{wgan:gp}, which shows that the exact optimizer of the Kantorovich-Rubinstein variational formula for the classical  Wasserstein metric has gradient with norm $1$ a.e. As the two-sided gradient penalty vanishes on such functions, the object \eqref{W_rho} will still possess the divergence property (see Remark \ref{remark:generalized_constraint}). However, two-sided gradient penalties are not  appropriate constraint-relaxations of the $(f,\Gamma)$-divergences, as the gradient of the exact optimizer generally does not have norm $1$ a.e. We demonstrate this via the following simple counterexample:   Let  $\Gamma=\Lip^1_b(\mathbb{R}^n)$,  $P\in\mathcal{P}(\mathbb{R}^n)$,  and define $Q$ by $dQ/dP=Z^{-1}e^{-\min\{\|x\|,1\}/2}$.  The optimizer of the variational formula defined in \eqref{eq:Df_optimizer} is given for the classical KL divergence  by
\begin{align}
     g_*=\log(dQ/dP)+1=-\min\{\|x\|,1\}/2+1+\log(Z^{-1})\,,
\end{align} 
which is bounded and $1/2$-Lipschitz, and so $ g_*\in\Gamma$.  Therefore it is straightforward to see that $g_*$ is also the optimizer for $R^{\Gamma}(Q\|P)$ and it satisfies $\|\nabla g_*\|\leq 1/2$ a.e. This proves that the 2-sided penalty does not vanish on $ g_*$. Similar counterexamples can be constructed using \req{eq:Df_optimizer} for other choices of $f$.

\section{Extension of the $(f,\Gamma)$-Divergence Variational Formula to Unbounded Functions}
\label{sec:unbounded_extension}
The assumption that all of the test functions $ g\in\Gamma$ are bounded can be very restrictive  in practice. In this section we provide general conditions under which the test-function space can be expanded to include (possibly) unbounded functions without changing the value of  $D_f^\Gamma$. This fact  will be used in the numerical examples in Section \ref{sec:examples} below. The main result in this Section in Theorem \ref{thm:unbounded_Lip}.

The key step in the extension to unbounded $ g$'s is the following lower bound.
\begin{lemma}\label{lemma:phi_unbounded}
Let $f$, $\Gamma$ be admissible and, in addition, suppose $f^*$ is bounded below. Fix $Q,P\in\mathcal{P}(S)$. If $ g\in L^1(Q)$  and there exists $ g_n\in\Gamma$, a measurable set $A$, and $C\in\mathbb{R}$ with $ g_n\to g$ pointwise, $| g_n|\leq| g|$ for all $n$, and   $ g_n\leq  g 1_A+C1_{A^c}$ for all $n$, then
\begin{align}
D_f^\Gamma(Q\|P)\geq E_Q[  g]-\Lambda_f^P[g]\,.
\end{align}
\end{lemma}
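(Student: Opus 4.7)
The plan is to use the pointwise inequality $D_f^\Gamma(Q\|P) \geq E_Q[g_n] - \Lambda_f^P[g_n]$, valid for each $n$ since $g_n \in \Gamma$, and then pass to the limit. Two ingredients are needed: (i) $E_Q[g_n] \to E_Q[g]$, and (ii) $\limsup_n \Lambda_f^P[g_n] \leq \Lambda_f^P[g]$. Part (i) is immediate from dominated convergence on $(\Omega, Q)$ using $|g_n| \leq |g| \in L^1(Q)$ and pointwise convergence.

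For part (ii), I may assume $\Lambda_f^P[g] < \infty$, else the lemma's conclusion is trivial since $D_f^\Gamma \geq 0$ and the right-hand side equals $-\infty$. Fix $\epsilon > 0$ and choose $\nu \in \mathbb{R}$ with $\nu + E_P[f^*(g-\nu)] \leq \Lambda_f^P[g] + \epsilon$. From the infimum definition in \eqref{eq:Lambda_f_def}, $\Lambda_f^P[g_n] \leq \nu + E_P[f^*(g_n - \nu)]$, so it suffices to prove $\lim_n E_P[f^*(g_n - \nu)] = E_P[f^*(g - \nu)]$ and then send $\epsilon \to 0$. Admissibility of $f$ gives $\{f^* < \infty\} = \mathbb{R}$, so $f^*$ is continuous on $\mathbb{R}$ and $f^*(g_n - \nu) \to f^*(g - \nu)$ pointwise; the additional hypothesis that $f^*$ is bounded below supplies an integrable lower bound, so only a pointwise integrable upper bound remains to invoke dominated convergence.

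The key observation — and the main conceptual step — is that $f^*$ is non-decreasing on $\mathbb{R}$, a free consequence of convexity combined with the admissibility condition $\lim_{y \to -\infty} f^*(y) < \infty$: if $f^*(a) > f^*(b)$ for some $a < b$, then the convex chord inequality
\begin{equation*}
f^*(a) \leq \frac{b-a}{b-c}\, f^*(c) + \frac{a-c}{b-c}\, f^*(b)
\end{equation*}
with $c \to -\infty$ forces $f^*(a) \leq f^*(b)$ via the finite limit, a contradiction. Monotonicity then converts the one-sided hypothesis $g_n \leq g \cdot 1_A + C \cdot 1_{A^c}$ into the pointwise bound
\begin{equation*}
f^*(g_n - \nu) \leq f^*(g - \nu)\, 1_A + f^*(C - \nu)\, 1_{A^c},
\end{equation*}
whose right-hand side lies in $L^1(P)$ by the choice of $\nu$ (the $A^c$ term is a bounded constant and $f^*(g-\nu)$ is integrable). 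Dominated convergence yields the desired limit, completing (ii).

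The main obstacle is recognizing that monotonicity of $f^*$ — not stated as an explicit hypothesis, but implicit in admissibility plus boundedness from below — is the cleanest tool for converting the one-sided pointwise bound on $g_n$ into an integrable pointwise bound on $f^*(g_n - \nu)$. An alternative route using only convexity would require the two-sided bound $-|g| \leq g_n \leq g\, 1_A + C\, 1_{A^c}$ together with the endpoint estimate $f^*(g_n-\nu) \leq \max\{f^*(-|g|-\nu), f^*((g-\nu)1_A+(C-\nu)1_{A^c})\}$; this works but is more cumbersome and underutilizes the hypotheses.
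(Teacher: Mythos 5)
Your proof is correct and follows essentially the same route as the paper's: reduce to a fixed (near-optimal) $\nu$, use monotonicity of $f^*$ — which the paper obtains from its Lemma \ref{lemma:f_star_nondec} via the admissibility condition $\lim_{y\to-\infty}f^*(y)<\infty$, exactly as you re-derive it — to get the dominating function $f^*(g-\nu)1_A+f^*(C-\nu)1_{A^c}\in L^1(P)$, and apply dominated convergence to both $E_Q[g_n]$ and $E_P[f^*(g_n-\nu)]$. The only difference is cosmetic: you phrase the conclusion as $\limsup_n\Lambda_f^P[g_n]\leq\Lambda_f^P[g]$ via an $\epsilon$-approximate minimizer, while the paper keeps $\nu$ fixed throughout and takes the infimum over $\nu$ at the very end.
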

\begin{remark}
The additional assumption that $f^*$ is bounded below is satisfied in many cases of interest, e.g., the KL divergence and $\alpha$-divergences for $\alpha>1$.
\end{remark}
\begin{proof}
We need to show that
\begin{align}
D_f^\Gamma(Q\|P)\geq E_Q[ g] -(\nu+E_P[f^*( g-\nu)])
\end{align}
for all $\nu\in\mathbb{R}$.  Note that we have assumed $f^*$ is bounded below by some $D\in\mathbb{R}$, hence $E_P[f^*( g-\nu)]$ exists in $(-\infty,\infty]$.  If $E_P[f^*( g-\nu)]=\infty$ then the claim is trivial, so for the remainder of this proof we suppose $f^*( g-\nu)\in L^1(P)$.

The assumptions on $g$ allow us to use the dominated convergence theorem to conclude $E_Q[ g_n]\to E_Q[ g]$. Continuity of $f^*$  implies $f^*( g_n-\nu)\to f^*( g-\nu)$.  The admissibility assumption implies $\lim_{y\to-\infty}f^*(y)<\infty$. Using this together with  Lemma \ref{lemma:f_star_nondec} we see that  $f^*$ is nondecreasing, hence
\begin{align}
D\leq f^*( g_n-\nu)\leq f^*( g-\nu)1_A+f^*(C-\nu)1_{A^c}\in L^1(P)\,.
\end{align}
Therefore the dominated convergence theorem implies $E_P[f^*( g_n-\nu)]\to E_P[f^*( g-\nu)]$. We have  $ g_n\in \Gamma$, hence \req{eq:gen_f_def} implies
\begin{align}
D_f^\Gamma(Q\|P)\geq& \lim_{n\to\infty}\left(E_Q[ g_n]-(\nu+E_P[f^*( g_n-\nu)])\right)\\
=&E_Q[ g]-(\nu+E_P[f^*( g-\nu)])\,.\notag
\end{align}
This completes the proof.
\end{proof}

Using Lemma \ref{lemma:phi_unbounded}, one can augment $\Gamma$ by including any functions that satisfy the stated assumptions; this will not change the value of the supremum in \eqref{eq:gen_f_def}.  Rather than formulating a general result of this type, we consider one of the most useful special cases, the set of Lipschitz functions. Other cases can be treated similarly. 
\begin{lemma}\label{lemma:Lip_Phi}
Let $c:S\times S\to[0,\infty]$, $L\in(0,\infty)$, and define 
\begin{align}\label{eq:Lip_b_def}
\Lip_b^L(S,c)=\{ g\in C_b(S):| g(x)- g(y)|\leq L c(x,y)\text{ for all }x,y\in S\}\,.
\end{align}
If $c=d$ (the metric on $S$) then we use our earlier notation, $\Lip_b^L(S)$, in place  of $\Lip_b^L(S,d)$.

 The set $\Lip_b^L(S,c)$ is admissible and if $d\leq K c$ for some $K\in(0,\infty)$   then $\Lip_b^L(S,c)$ is strictly admissible. 
\end{lemma}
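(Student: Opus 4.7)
The plan is to verify the three admissibility axioms directly from the definitions, and then handle strict admissibility by exhibiting the $\mathcal{P}(S)$-determining set of bounded $1$-Lipschitz functions as $\Psi$.

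For admissibility, I would first note that $0 \in \Lip_b^L(S,c)$ is immediate and convexity follows by a two-line triangle-inequality argument: if $g_1,g_2 \in \Lip_b^L(S,c)$ and $\lambda \in [0,1]$, then $\lambda g_1 + (1-\lambda)g_2$ is bounded and continuous, and
\[
|\lambda g_1(x)+(1-\lambda)g_2(x)-\lambda g_1(y)-(1-\lambda)g_2(y)| \le \lambda L c(x,y)+(1-\lambda)L c(x,y)=L c(x,y).
\]
The nontrivial point is closedness of $\Lip_b^L(S,c)$ in the weak topology on $C_b(S)$ generated by $\mathcal{T}=\{\tau_\mu:\mu\in M(S)\}$. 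Suppose a net $g_\alpha \in \Lip_b^L(S,c)$ converges to some $g \in C_b(S)$ in this topology. For each $x\in S$, choosing $\mu=\delta_x\in M(S)$ gives $g_\alpha(x)=\tau_{\delta_x}(g_\alpha)\to\tau_{\delta_x}(g)=g(x)$. Hence for any $x,y\in S$ the estimates $|g_\alpha(x)-g_\alpha(y)|\le L c(x,y)$ pass to the pointwise limit, yielding $|g(x)-g(y)|\le L c(x,y)$, so $g\in \Lip_b^L(S,c)$.

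For strict admissibility, assume $d\le K c$ for some $K\in(0,\infty)$. Take $\Psi=\{\psi\in \Lip_b^1(S):\|\psi\|_\infty\le 1\}$, which is $\mathcal{P}(S)$-determining by the Portmanteau theorem (as noted in the list of examples earlier in the paper). For any $\psi\in\Psi$, the comparison $|\psi(x)-\psi(y)|\le d(x,y)\le K c(x,y)$ shows $\psi\in \Lip_b^K(S,c)$, so setting $c_0=0$ and $\epsilon=L/K$ gives $c_0\pm\epsilon\psi=\pm(L/K)\psi\in \Lip_b^L(S,c)$. This verifies the strict admissibility condition.

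There is no real obstacle here; the only subtle step is recognizing that closedness in the $M(S)$-weak topology on $C_b(S)$ is stronger than (or at least implies) pointwise convergence via the Dirac measures $\delta_x \in M(S)$, which is exactly what is needed to preserve the pointwise Lipschitz inequality in the limit.
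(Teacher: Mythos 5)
Your proof is correct and follows essentially the same route as the paper: convexity and membership of $0$ are immediate, closedness is obtained by testing against Dirac masses $\delta_x\in M(S)$ so that the pointwise Lipschitz bound survives the weak limit, and strict admissibility comes from scaling bounded $1$-Lipschitz (with respect to $d$) functions into $\Lip_b^L(S,c)$ via the comparison $d\leq Kc$. The only cosmetic difference is that you take $\Psi$ to be the unit ball of bounded $1$-Lipschitz functions while the paper uses $\Lip_b^{L/K}(S)\subset\Lip_b^L(S,c)$ directly; both rest on the same Portmanteau-based determining-set fact.
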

\begin{proof}
 Convexity is trivial. Weak convergence in $C_b(S)$ implies pointwise convergence (take $\mu=\delta_x$, $x\in S$), hence $\Lip_b^L(S,c)$ is closed. Finally, if $d\leq K c$ then strict admissibility follows from the fact that $\Lip_b^{L/K}(S)$ is $\mathcal{P}(S)$-determining and $\Lip_b^{L/K}(S)\subset \Lip_b^L(S,c)$.
\end{proof}
\begin{remark}
For $L>0$ we have $\Lip^L_b(S,c)=\{L g: g\in\Lip^1_b(S,c)\}$ and so (under appropriate assumptions) Theorem \ref{thm:limit} implies the following limiting formulas:
\begin{align}
&\lim_{L\to\infty}D_f^{\Lip^L_b(S,c)}(Q\|P)=D_f(Q\|P)\,,\\
&\lim_{L\searrow 0}L^{-1}D_f^{\Lip^L_b(S,c)}(Q\|P)=W^{\Lip^1_b(S,c)}(Q,P)\,.\notag
\end{align}
\end{remark}
Using Lemma \ref{lemma:phi_unbounded} we can show that the boundedness constraint can be dropped in the formula for $D_f^{\Gamma}$ when $\Gamma=\Lip_b^L(S,c)$; we exploit this fact in the numerical examples in Section \ref{sec:examples} below.
\begin{theorem}\label{thm:unbounded_Lip}
Let $c:S\times S\to[0,\infty]$, $L\in(0,\infty)$, and define 
\begin{align}
\Lip^L(S,c)=\{ g\in C(S):| g(x)- g(y)|\leq L c(x,y)\text{ for all }x,y\in S\}\,.
\end{align}
 Let  $f$ be admissible such that $f^*$ is bounded below.  Then  for $Q,P\in\mathcal{P}(S)$  we have
\begin{align} \label{eq:unbounded_var_formula}
D_f^{\Lip_b^L(S,c)}(Q\|P)=&\sup_{ g\in\Lip^L(S,c)\cap L^1(Q)}\left\{ E_Q[  g]-\Lambda_f^P[g]\right\}\,.
\end{align}
\end{theorem}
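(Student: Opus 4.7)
The statement is an equality between two suprema, so the plan is to prove both inequalities separately. The direction $D_f^{\Lip_b^L(S,c)}(Q\|P) \leq \sup_{g \in \Lip^L(S,c) \cap L^1(Q)}\{E_Q[g] - \Lambda_f^P[g]\}$ is immediate: every $g \in \Lip_b^L(S,c)$ is continuous, $L$-Lipschitz with respect to $c$, and (being bounded under a probability measure) automatically in $L^1(Q)$, so the feasible set on the left is contained in the feasible set on the right.

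The substantive direction is the reverse. Fix any $g \in \Lip^L(S,c) \cap L^1(Q)$. The idea is to construct a sequence $g_n \in \Lip_b^L(S,c)$ that satisfies the hypotheses of Lemma~\ref{lemma:phi_unbounded}, which then yields $D_f^{\Lip_b^L(S,c)}(Q\|P) \geq E_Q[g] - \Lambda_f^P[g]$; taking supremum over $g$ completes the proof. The natural candidate is the two-sided truncation $g_n(x) = \max(-n, \min(n, g(x)))$. I would verify four things: (i) $g_n \in \Gamma := \Lip_b^L(S,c)$; (ii) $g_n \to g$ pointwise; (iii) $|g_n| \leq |g|$; and (iv) the existence of a measurable set $A$ and constant $C \in \mathbb{R}$, both independent of $n$, with $g_n \leq g\mathbf{1}_A + C\mathbf{1}_{A^c}$.

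Items (i)--(iii) are routine: the map $t \mapsto \max(-n, \min(n, t))$ is $1$-Lipschitz, so composing preserves continuity and the $L$-Lipschitz property with respect to $c$, while bounding $g_n$ by $n$; pointwise convergence and $|g_n| \leq |g|$ follow by splitting on the sign and magnitude of $g(x)$. Item (iv) is the delicate point, because truncation from below can push $g_n$ strictly above $g$ on $\{g < -n\}$, so the naive choice $A = S$ fails. The fix is to take $A = \{g \geq 0\}$ and $C = 0$: on $A$ we have $g_n = \min(n, g) \leq g$, and on $A^c = \{g < 0\}$ we have $g_n = \max(-n, g) \leq 0 = C$; crucially both $A$ and $C$ are independent of $n$.

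Once these four items are established, Lemma~\ref{lemma:phi_unbounded} applies, since its remaining hypotheses are met: $f$ is admissible and $f^*$ is bounded below by assumption, and $\Gamma = \Lip_b^L(S,c)$ is admissible by Lemma~\ref{lemma:Lip_Phi}. The main obstacle is precisely the asymmetry in step (iv): one must recognize that Lemma~\ref{lemma:phi_unbounded} requires an upper bound on $g_n$ that is uniform in $n$ on a fixed set $A^c$, and identify a choice of $A$ that both accommodates the below-truncation behavior and remains $n$-independent.
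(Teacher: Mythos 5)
Your proposal is correct and follows essentially the same route as the paper: the easy inclusion $\Lip_b^L(S,c)\subset\Lip^L(S,c)\cap L^1(Q)$ for one inequality, and for the other the two-sided truncation $g_n=n1_{g>n}+g1_{-n\leq g\leq n}-n1_{g<-n}$ fed into Lemma \ref{lemma:phi_unbounded} with exactly the choice $A=\{g\geq 0\}$, $C=0$ (the paper records this as $g_n\leq g1_{g\geq 0}$), with admissibility of $\Gamma$ supplied by Lemma \ref{lemma:Lip_Phi}. No gaps.
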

\begin{proof}
Lemma \ref{lemma:Lip_Phi} shows that $\Gamma\equiv \Lip_b^L(S,c)$ satisfies the conditions of Theorem \ref{thm:f_div_inf_convolution}. Fix  $ g\in \Lip^L(S,c)\cap L^1(Q)$. For $n\in\mathbb{Z}^+$ define $ g_n=n1_{ g>n}+g1_{-n\leq  g\leq n}-n1_{ g<-n}$. It is easy to see that $ g_n\in \Lip_b^L(S,c)$,  $ g_n\to  g$, $| g_n|\leq| g|$, $ g_n\leq g1_{ g\geq 0}$.  Therefore  Lemma \ref{lemma:phi_unbounded} implies
\begin{align}\label{eq:unbounded_var_formula_lb}
D_f^\Gamma(Q\|P)\geq E_Q[ g ]-\inf_{\nu\in\mathbb{R}}\{\nu+E_P[f^*( g-\nu)]\}\,.
\end{align}
One inequality in \req{eq:unbounded_var_formula} follows from taking the supremum over all $ g\in \Lip^L(S,c)\cap L^1(Q)$ in \req{eq:unbounded_var_formula_lb} and the reverse follows from the fact that $\Lip_b^L(S,c)\subset\Lip^L(S,c)\cap L^1(Q)$.
\end{proof}

\section{$(f,\Gamma)$-GANs}\label{sec:examples}
Generative adversarial networks  constitute a class of methods for `learning' a probability distribution, $Q$, via a two-player game between a discriminator and a generator (both neural networks) \cite{GAN,f_GAN,WGAN,wgan:gp,CumulantGAN:Pantazisetal}. Mathematically, most GANs can be formulated as  divergence minimization problems for a divergence, $D$, that has a variational characterization $D(Q\|P)=\sup_{ g\in\Gamma} H[ g;Q,P]$. The goal is then to solve the following optimization problem:
\begin{align}\label{eq:gen_GAN}
 \inf_{\theta\in\Theta} D(Q\|P_\theta)=\inf_{\theta\in\Theta} \sup_{ g\in\Gamma} H[ g;Q,P_\theta]&\,.
\end{align}
Here, $g$ is called the discriminator and $P_\theta$ is the distribution of $h_\theta(X)$, where $X$ is a random noise source and $h_\theta$, $\theta\in\Theta$ is a neural network family (the generator). The minimax problem \eqref{eq:gen_GAN} can  be interpreted as two-player zero-sum game. GANs based on the Wasserstein-metric have been very successful \cite{WGAN,wgan:gp} and GANs based on the classical $f$-divergences have also been explored \cite{f_GAN}.  Here we show that GANs based on the $D_f^\Gamma$ divergences, which generalize and interpolate between the above two extremes, inherit desirable properties from both IPM-GANs (e.g., Wasserstein GAN) and $f$-GANs. Specifically, we focus on the following:
\begin{enumerate}
    \item $(f,\Gamma)$-GANs can perform well when applied to heavy-tailed distributions. This property is inherited from the classical $f$-divergences.
    \item $(f,\Gamma)$-GANs can perform well even when there is a lack of absolute continuity. This property is inherited from the $\Gamma$-IPMs.
\end{enumerate}
We will specifically focus on the cases where $f=f_\alpha$, $\alpha>1$, (see \req{eq:f_alpha_def}) and $\Gamma=\Lip_b^L(\mathbb{R}^n)$ (see Lemma \ref{lemma:Lip_Phi}). We call the corresponding $(f,\Gamma)$-divergences the Lipschitz $\alpha$-divergences and will denote them by $D_{\alpha}^L$.  As $\Gamma$ is closed under shifts, we can express these divergences in one of two ways (see \req{eq:Df_Gamma_no_shift}):
\begin{align}
    D^L_{\alpha}(Q\|P)=&\sup_{ g\in \Lip_b^L(\mathbb{R}^n)}\{E_Q[ g]-\Lambda_{f_\alpha}^P[g]\}\label{eq:f_alpha_nu}\\
    =&\sup_{ g\in \Lip_b^L(\mathbb{R}^n)}\{E_Q[ g]-E_P[f_\alpha^*( g)]\}\,.\label{eq:Df_alpha_L2}
\end{align}
The formula for $f_\alpha^*$ can be found in \req{eq:f_alpha_star} below.
\begin{remark}
Formally taking the $\alpha\to\infty$ limit of \eqref{eq:Df_alpha_L2} we arrive at what we call the Lipschitz $\infty$-divergence:
\begin{align}
    D^L_{\infty}(Q\|P)=&\sup_{ g\in \Lip_b^L(\mathbb{R}^n)}\{E_Q[ g]-E_P[\max\{ g,0\}]\}\,.
\end{align}
It is  straightforward to show that $D^L_{\infty}(Q\|P)=LW(Q,P)$, where $W$  is classical Wasserstein metric
\begin{align}
    W(Q,P)=\sup_{ g\in \Lip_b^1(\mathbb{R}^n)}\{E_Q[ g]-E_P[ g]\}\,,
\end{align}
 though they are expressed in terms of  different objective functionals (hence their performance can differ in practice).
\end{remark}
In numerical computations it can be  inconvenient to restrict one's attention to  bounded discriminators only. Fortunately, as shown in Theorem \ref{thm:unbounded_Lip} above,  the equality \eqref{eq:gen_f_def} remains true  when $\Gamma$ is expanded to include many unbounded $ g$'s. This fact justifies our use of unbounded discriminators (i.e., unbounded activation functions) in the following computations. 

As our baseline method we take the two-sided gradient-penalized Wasserstein GAN (WGAN-GP) from \cite{wgan:gp}:

{\bf WGAN-GP:}
\begin{align}\label{eq:Wgan}
 \inf_\theta\sup_{ g\in\Lip(\mathbb{R}^n)}\left\{E_{Q}[ g]-E_{P_\theta}[ g]-\lambda\int(\|\nabla g\|/L-1)^2d\rho_\theta\right\}\, ,
\end{align}
where $\lambda>0$ is the strength of the penalty regularization.
 Here, and below, we have  relaxed the  Lipschitz constraint to a gradient penalty (two-sided for WGAN-GP and one-sided otherwise; see Section \ref{sec:soft_constraint} for further discussion). We approximate the supreumum over $ g$ by the supremum over a  neural network family (the discriminator network). Again, the family of measures $P_\theta$ are the distributions of $X_\theta=h_\theta(X)$ where  $h_\theta$ is the generator neural network, parameterized by $\theta\in\Theta$, and we let $X$ be a Gaussian noise source. Finally, we let $\rho_\theta\sim TX_\theta+(1-T)Y$ where $X_\theta$, $Y\sim Q$, $T\sim Unif([0,1])$ are all independent (this choice of $\rho_\theta$ was used in \cite{wgan:gp}).  
 
 We compare WGAN-GP to the  Lipschitz $\alpha$-GANs and Lipschitz KL-GAN, defined based on \eqref{eq:Df_alpha_L2} and \eqref{eq:f_alpha_nu} respectively:
 
{\bf Lipschitz $\alpha$-GAN}
\begin{align}\label{eq:Lip_alpha_GAN}
 \inf_{\theta\in\Theta}\sup_{ g\in\Lip(\mathbb{R}^n)}\left\{E_{Q}[ g]- E_{P_\theta}[f_\alpha^*( g)]-\lambda\int\max\{0,\|\nabla g\|^2/L^2-1\}d\rho_\theta\right\}\,.
\end{align}
When we want to make the values of $\alpha$ and/or $L$ explicit we will  refer to these as the $D_\alpha^L$-GANs. By swapping $Q$ and $P_\theta$ one obtains another family of GANs, which we call the reverse Lipschitz $\alpha$-GANs (when clarity is needed, \eqref{eq:Lip_alpha_GAN} will be called a forward GAN). We note that forward and reverse GANs can have very different properties \cite{2017arXiv170100160G}. 

In the case of the KL-divergence one can  evaluate the optimization over $\nu$ in \eqref{eq:f_alpha_nu} (see \req{eq:R_Gamma}), leading to the following:

{\bf Lipschitz KL-GAN}
\begin{align}\label{eq:Lip_KL_GAN}
 \inf_{\theta\in\Theta}\sup_{ g\in\Lip(\mathbb{R}^n)}\left\{E_{Q}[ g]-\log E_{P_\theta}[e^{ g}]-\lambda\int\max\{0,\|\nabla g\|^2/L^2-1\}d\rho_\theta\right\}\,.
\end{align}
{ 
\begin{remark}
For numerical purposes the GAN \eqref{eq:Lip_KL_GAN}, obtained using the  representation \eqref{eq:Df_var_formula_nu}, performs significantly better than the GAN obtained from \eqref{eq:Df_var_formula}. This is due to the numerical issues inherent in computing $E_P[f^*_{KL}(g)]=E_P[\exp(g-1)]$, as compared to computing the cumulant generating function $\log E_P[\exp(g)]$; see also \cite{MINE_paper}. We also refer to \cite{birrell2020optimizing} for a more general perspective on finding tighter  variational representations of divergences.
\end{remark}
}

{ 
\subsection{Statistical Estimation of $(f,\Gamma)$-Divergences}

In numerical computations, we approximate the $(f,\Gamma)$-divergence by replacing expectations under $Q$ and $P$  in \eqref{eq:gen_f_def} or \eqref{eq:Df_Gamma_def2} with their $m$-sample empirical means using  i.i.d. samples from $Q$ and $P$ respectively, i.e., we estimate
\begin{align}\label{eq:f_Gamma_est}
    E[D_f^\Gamma(Q_m\|P_m)]=E\left[\sup_{g\in\Gamma,\nu\in\mathbb{R}}\{E_{Q_m}[g-\nu]-E_{P_m}[f^*(g-\nu)]\}\right]\,.
\end{align}
Note that, at fixed $g$ and $\nu$, the objective functional on the right-hand-side of \eqref{eq:f_Gamma_est} is an unbiased estimator of the $(f,\Gamma)$-divergence objective functional. Including the optimization over $g$ and $\nu$ we obtain a biased estimator  which is an upper bound on $D_f^\Gamma$, as shown in the  lemma below. 
\begin{lemma}\label{lemma:bias_bound}
Let $f\in\mathcal{F}_1(a,b)$, $\Gamma\subset\mathcal{M}_b(\Omega)$ be nonempty,  $Q,P\in\mathcal{P}(\Omega)$, and $Q_m$, $P_m$  be  empirical distributions constructed from $m$ i.i.d. samples from $Q$ and $P$ respectively.  Then
\begin{align}
    E[D_f^\Gamma(Q_m\|P_m)]\geq D_f^\Gamma(Q\|P)\,.
\end{align}
\end{lemma}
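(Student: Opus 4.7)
The plan is to exploit the general inequality that the expectation of a supremum dominates the supremum of expectations, applied to the variational representation \eqref{eq:Df_Gamma_def2}.

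First, I would fix arbitrary $g \in \Gamma$ and $\nu \in \mathbb{R}$. Since $Q_m$ and $P_m$ are empirical measures constructed from i.i.d.\ samples from $Q$ and $P$ respectively, and since $g$ and $\nu$ are deterministic (not depending on the samples), the random variables $E_{Q_m}[g-\nu]$ and $E_{P_m}[f^*(g-\nu)]$ are sample means of i.i.d.\ random variables with finite expectation (using $g \in \mathcal{M}_b(\Omega)$ and hence $f^*(g-\nu)$ being bounded on the relevant set, or at least integrable). Thus
\begin{align*}
E\bigl[E_{Q_m}[g-\nu] - E_{P_m}[f^*(g-\nu)]\bigr] = E_Q[g-\nu] - E_P[f^*(g-\nu)].
\end{align*}

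Next, I would pointwise (i.e., for each realization of the samples) bound the objective by the supremum:
\begin{align*}
E_{Q_m}[g-\nu] - E_{P_m}[f^*(g-\nu)] \leq \sup_{g' \in \Gamma, \nu' \in \mathbb{R}} \bigl\{ E_{Q_m}[g'-\nu'] - E_{P_m}[f^*(g'-\nu')] \bigr\} = D_f^\Gamma(Q_m \| P_m).
\end{align*}
Taking expectations of both sides and combining with the previous identity yields
\begin{align*}
E_Q[g-\nu] - E_P[f^*(g-\nu)] \leq E[D_f^\Gamma(Q_m \| P_m)].
\end{align*}
Since this holds for every $g \in \Gamma$ and every $\nu \in \mathbb{R}$, taking the supremum on the left-hand side over such $g,\nu$ gives exactly $D_f^\Gamma(Q\|P)$ by \eqref{eq:Df_Gamma_def2}, completing the argument.

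The argument is essentially the standard \emph{sup of expectation is at most expectation of sup} phenomenon that also produces the well-known upward bias of plug-in estimators for variationally defined divergences. There is no real obstacle; the only technical point to check is that $E_P[f^*(g-\nu)]$ is well-defined for each fixed $g \in \Gamma$ (hence bounded) and $\nu \in \mathbb{R}$, so that the expectation identity above is unambiguous. If $E_P[f^*(g-\nu)] = +\infty$ for some pair $(g,\nu)$, the corresponding term contributes $-\infty$ to the supremum defining $D_f^\Gamma(Q\|P)$ and can be harmlessly excluded from the optimization without affecting the bound.
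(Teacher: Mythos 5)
Your proof is correct and is essentially identical to the paper's argument: both rest on the exchange $E[\sup(\cdot)]\geq\sup E[\cdot]$ applied to the representation \eqref{eq:Df_Gamma_def2}, together with the unbiasedness of the empirical objective at each fixed $(g,\nu)$. Your additional remark on the case $E_P[f^*(g-\nu)]=+\infty$ is a harmless (and correct) extra precaution that the paper leaves implicit.
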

\begin{proof}
Using \eqref{eq:Df_Gamma_def2} we can compute
\begin{align}
E[D_f^\Gamma(Q_m\|P_m)]=&E\left[\sup_{g\in\Gamma,\nu\in\mathbb{R}}\{E_{Q_m}[g-\nu]-E_{P_m}[f^*(g-\nu)]\}\right]\\
\geq &\sup_{g\in\Gamma,\nu\in\mathbb{R}}E\left[E_{Q_m}[g-\nu]-E_{P_m}[f^*(g-\nu)]\right]\notag\\
=&\sup_{g\in\Gamma,\nu\in\mathbb{R}}\{E_{Q}[g-\nu]-E_{P}[f^*(g-\nu)]\}=D_f^\Gamma(Q\|P)\,.\notag
\end{align}
\end{proof}
\begin{remark}
As noted above, in the KL case one can evaluate the optimization over $\nu$ in \eqref{eq:f_Gamma_est}. This results in a biased objective functional due to the presence of the logarithm outside the expectation in \eqref{eq:R_Gamma}. This same issue was addressed earlier in \cite{MINE_paper}, e.g., by using sufficiently large minibatch sizes or an exponential moving average. This concern is not present in the objective functional for \eqref{eq:f_Gamma_est} (or \eqref{eq:Lip_alpha_GAN}).
\end{remark}
}

In the following  GAN examples  we work with Lipschitz functions and approximate the optimization over $\Lip(\mathbb{R}^n)$ by the optimization over some neural network family $g_\phi$, $\phi\in\Phi$, and  estimate the expectations using the $m$-sample empirical measures $Q_m$, $P_{m,\theta}$, $\rho_{m,\theta}$, e.g., we approximate the Lipschitz $\alpha$-GAN \eqref{eq:Lip_alpha_GAN} by
\begin{align}\label{eq:Lip_alpha_GAN_emp}
 \inf_{\theta\in\Theta}\sup_{ \phi\in\Phi}\left\{E_{Q_m}[ g_\phi]- E_{P_{m,\theta}}[f_\alpha^*( g_\phi)]-\lambda\int\max\{0,\|\nabla g_\phi\|^2/L^2-1\}d\rho_{m,\theta}\right\}\,.
\end{align}
Various neural network architectures are known to be universal approximators \cite{HORNIK1989359,Cybenko1989,pinkus_1999,10.5555/3295222.3295371,pmlr-v125-kidger20a}. Approximating the supremum over $ g\in \Lip(\mathbb{R}^n)$ by the supremum over a finite-dimensional neural network family essentially results in a lower bound on the original, intended divergence.  In the case of KL and R{\'e}nyi divergences, such an approximation scheme is known to lead to consistent estimators as the sample size and network complexity grows (see \cite{MINE_paper} and \cite{2020arXiv200703814B} respectively). Investigating the analogous consistency result for the $(f,\Gamma)$-divergence estimator is one avenue for future work.

\subsection{$(f,\Gamma)$-GANs  for Non-Absolutely-Continuous Heavy-Tailed   Distributions}\label{sec:submanifold_ex}

We mentioned above that the $f$-divergences are  better suited to heavy-tailed distributions, as compared to the Wasserstein metric. Before demonstrating this in the context of GANs we provide a   simple explicit example. Let $dQ=x^{-2}1_{x\geq 1}dx$ and $dP=(1+\delta)x^{-(2+\delta)}1_{x\geq 1}dx$ for $\delta>0$, i.e., the tail of $P$ decays faster than that of $Q$. For $\alpha>1$ we can use \req{eq:Df_def} to compute
\begin{align}
    D_{f_\alpha}(Q\|P)=\frac{1}{\alpha(\alpha-1)(1+\delta)^{\alpha-1}}\int_1^\infty x^{\delta \alpha-(2+\delta)}dx -\frac{1}{\alpha(\alpha-1)}\,,
\end{align}
and so $D_{f_\alpha}(Q\|P)<\infty$ for all $\delta\in(0,1/(\alpha-1))$. On the other hand, we can use the formula for the Wasserstein metric on $\mathcal{P}(\mathbb{R})$ from \cite{doi:10.1137/1118101} to compute
\begin{align}\label{eq:W_infty_example}
 W(Q,P)=&\int_{-\infty}^\infty|F_Q(t)-F_P(t)|dt=\int_1^\infty t^{-1}-t^{-(1+\delta)}dt=\infty
\end{align}
for all $\delta>0$ ($F_P$ and $F_Q$ denote the cumulative distribution functions). 

This calculation suggests that Lipschtiz $\alpha$-GANs may succeed for  heavy-tailed distributions, even when WGAN-GP fails  to converge.  On the other hand the Wasserstein metric can be finite and informative even when $Q$ and $P$ are non-absolutely continuous, unlike the classical $f$-divergences \eqref{eq:Df_def}. The $(f,\Gamma)$-divergences inherit both of these strengths from the Wasserstein and $f$-divergences (see Part 1 of Theorem \ref{thm:f_div_inf_convolution}), thus allowing for the training of GANs with heavy-tailed data and in the absence of absolute continuity. We demonstrate this via the following example, where  both the WGAN-GP  and classical $f$-GAN (i.e., without gradient penalty) fail to converge but the $(f,\Gamma)$-GANs succeed.

Here the data source, $Q$, is a mixture of four 2-dimensional t-distributions with $0.5$ degrees of freedom, embedded in a plane in 12-dimensional space; note that this is a heavy-tailed distribution, as the mean does not exist; this suggests that WGAN will have difficulty learning this distribution. The generator uses a 10-dimensional noise source and so the generator and data source are generally not absolutely continuous with respect to one another (the former has support equal to the full 12-dimensional space while the latter is supported on a 2-dimensional plane). This suggests one cannot use the classical  $f$-GAN \cite{f_GAN}, i.e., without gradient penalty (we confirmed  that they perform very poorly on this problem). The $(f,\Gamma)$-GANs allow us to address both of the above difficulties; heavy tails can be accommodated by an appropriate choice of $f$ and the lack of absolute continuity is addressed by using a $1$-Lipschitz constraint (as in the Wasserstein metric).
\begin{figure}
  \centering
\begin{subfigure}[b]{1\textwidth}
   \includegraphics[width=1\linewidth]{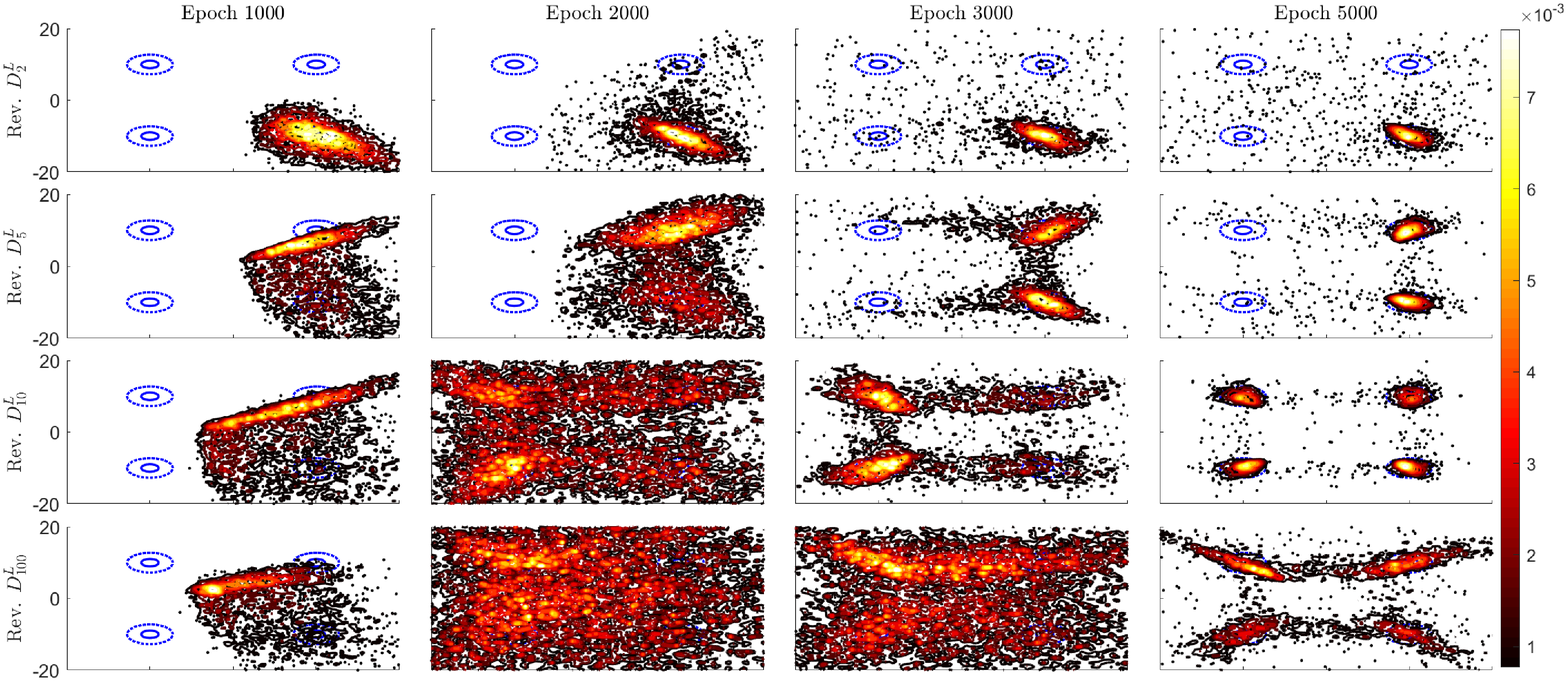}
   \caption{}
\end{subfigure}

\begin{subfigure}[b]{1\textwidth}
   \includegraphics[width=1\linewidth]{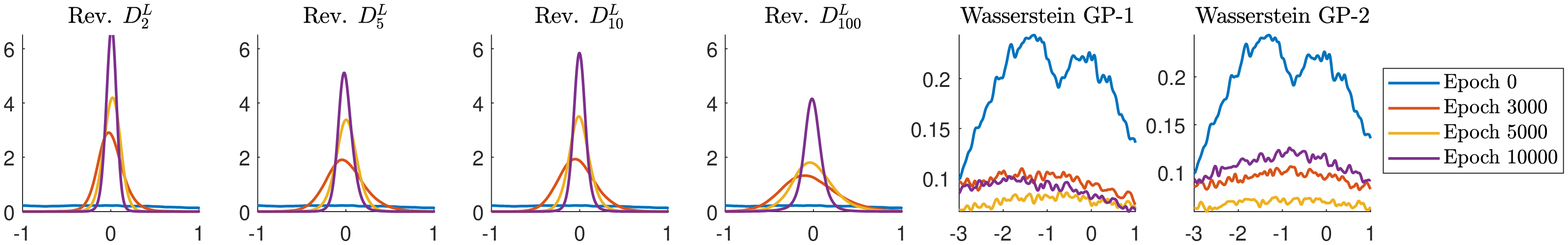}
   \caption{}
\end{subfigure}

\begin{subfigure}[b]{.99\textwidth}
 \begin{minipage}[b]{0.49\linewidth}
  \centering
\includegraphics[scale=.49]{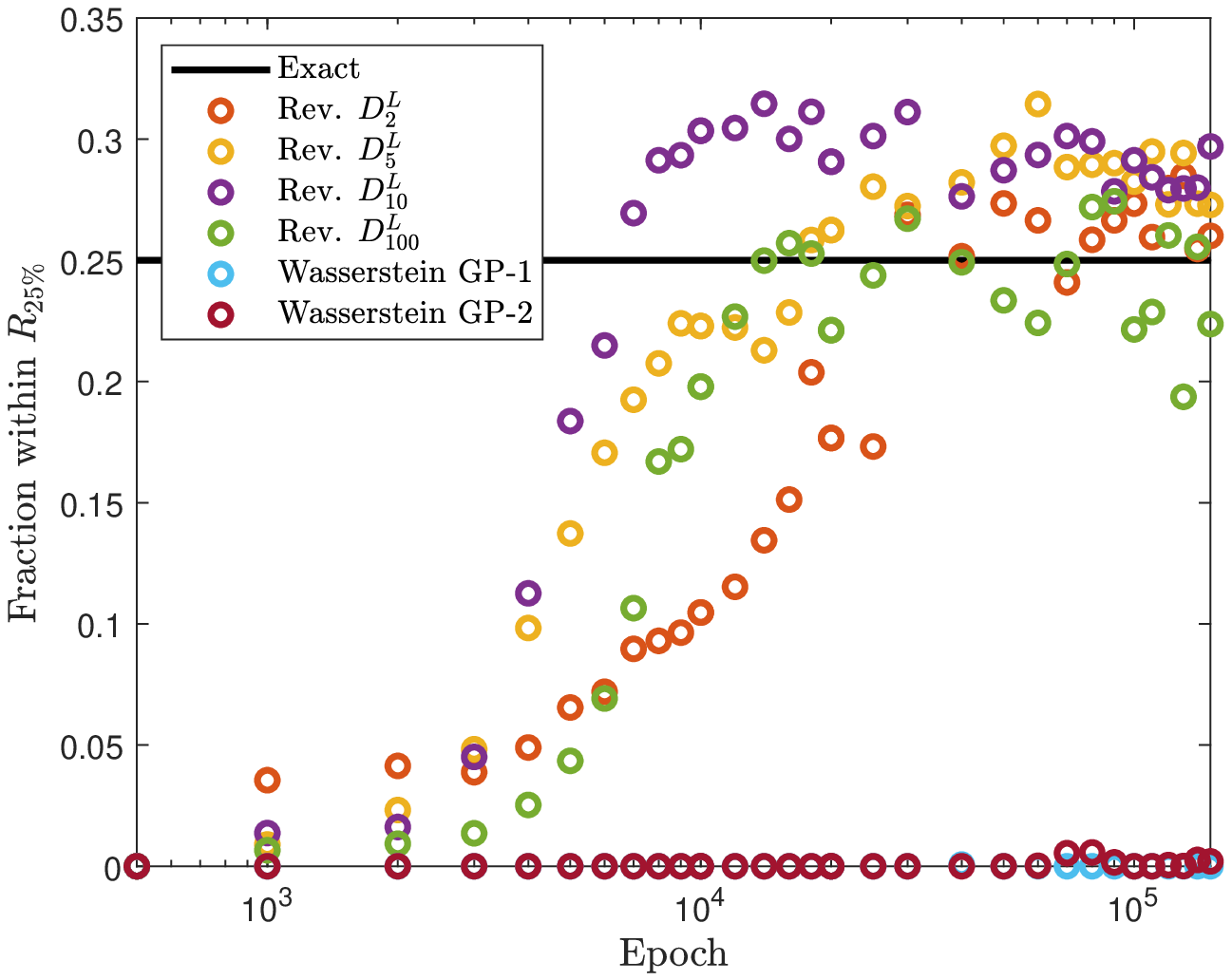}\end{minipage}
\begin{minipage}[b]{0.49\linewidth}
\centering
\includegraphics[scale=.49]{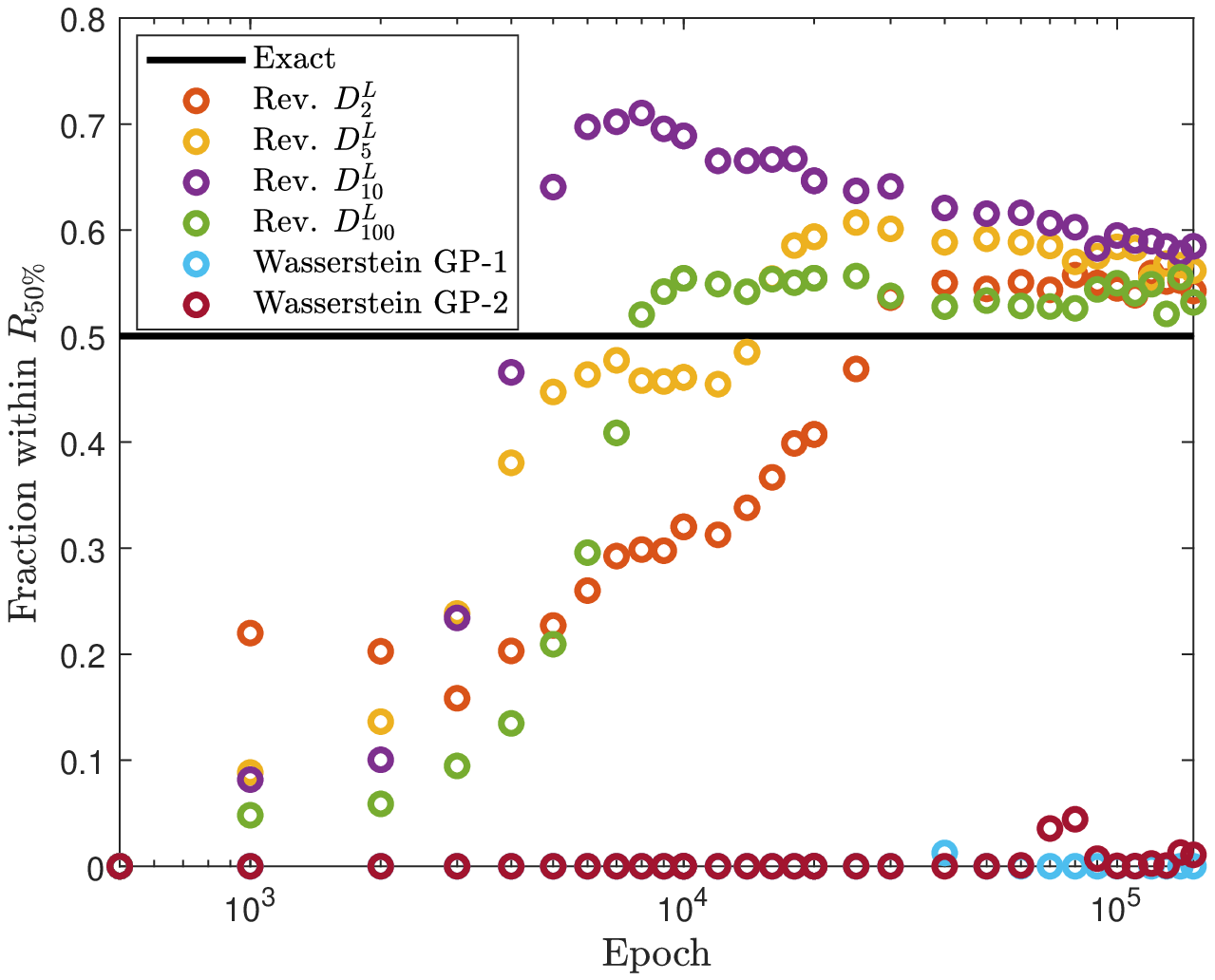} \end{minipage}
   \caption{}
\end{subfigure}
   \caption{We present generator samples and their statistical behavior from Wasserstein and reverse Lipschitz $\alpha$-GAN methods. The dataset used in training consists of 5000 samples from a mixture of four 2-dimensional t-distributions with $0.5$ degrees of freedom that are  embedded  in a plane in 12-dimensional space. Panel (a) shows the projection onto the 2-dimensional support plane (each column shows the result after a given number of training epochs; the solid and dashed blue ovals mark the 25\% and 50\% probability regions, respectively, of the data source while the heat-map shows the generator samples. Panel (b) shows the generator distribution, projected onto components orthogonal to the support plane. Values concentrated around zero indicate convergence to the sub-manifold.   In Panel (c) we show the fraction of generator samples, projected onto the 2D support plane of the measure, that are within the $25\%$ and $50\%$ probability regions.  In this example we used gradient-penalty parameter values   $\lambda=10$, $L=1$;  {  Wasserstein GAN was run with both 1-sided and 2-sided gradient penalties (GP-1 and GP-2 respectively).} In all cases the generator and discriminator have three fully connected hidden layers of 64, 32, and 16 nodes respectively, and with ReLU activation functions. The generator uses a 10-dimensional Gaussian noise source.  Each SGD iteration was performed with a minibatch size of 1000 {  and  5 discriminator iterations were performed for every one generator iteration}.   Computations were done in TensorFlow and we used the RMSProp SGD algorithm with a learning rate of $2\times 10^{-4}$.
     }\label{fig:gen_alpha_GAN}
\end{figure}

In Figure \ref{fig:gen_alpha_GAN} below we show generator samples for Wasserstein GAN,   as in \req{eq:Wgan} and \cite{wgan:gp}, and for various reverse Lipschitz $\alpha$-GANs \eqref{eq:Lip_alpha_GAN}. Specifically,  panel (a)  shows the projection onto the 2-dimensional support plane of $Q$ (the heat-map shows samples from the generator and the data source, $Q$, is illustrated by the blue ovals) and panel  (b) shows the generator distribution, projected onto components orthogonal to the support plane.  Panel (a) does not show  WGAN-GP samples, as WGAN-GP failed to converge in this example; this is demonstrated   in panel (b) wherein we see that the Lipschitz $\alpha$-GAN samples concentrate near the support plane (at $0$) while the WGAN-GP samples spread out away from the support plane. The  classical $f$-GAN without gradient penalty \cite{f_GAN}, which we don't show here, similarly failed to converge; this is unsurprising  due to the lack of absolute continuity.  Again, we can see that WGAN-GP fails to converge, while the Lipschitz $\alpha$-GANs perform well. Some $\alpha$'s perform significantly better than others, making it an important hyperparameter to tune in this case. {  Results from a second set of runs, using a larger sample set, are shown in Figure \ref{fig:gen_alpha_GAN_2} in Appendix \ref{app:extra_figs}; the conclusions are similar.}  Forward Lipschitz GANs and forward Lipschitz KL-GANs all experienced blow-up and so they are not shown here.  This behavior is reasonable when one considers the fact that $Q$ is heavy tailed, while $P_\theta$ is not (it is generated by pushing forward Gaussian noise by Lipschitz functions), and so $D_f(Q\|P_\theta)=\infty$, while $D_f(P_\theta\|Q)<\infty$ (see \req{eq:Df_def}). As we have already demonstrated the inability of the Wasserstein metric to compare heavy-tailed distributions (see \req{eq:W_infty_example}), it is reasonable to conjecture that the finiteness of $D_{f_\alpha}$ is key in determining the success of the $D_\alpha^L$-GAN. {  Interestingly, the Lipschitz constraint also appears to be key to the convergence of the method, something one would not anticipate solely based on finiteness of the corresponding divergences.  We illustrate this with Figure \ref{fig:2Dstudent_GAN} in Appendix \ref{app:extra_figs}, where we apply the same method to the mixture of four 2-dimensional t-distributions, but without the high-dimensional embedding.  In this case, the classical $f$-divergence is finite, however we find that the classical $f$-GAN fails to converge --WGAN also fails-- but the $(f,\Gamma)$-GANs succeed. The theoretical understanding of this behavior is an interesting question, 
but we will not pursue it further here.}

\subsection{ Strict Convexity and Enhanced Stability of $(f,\Gamma)$-GANs}\label{ex:C10}
Even in the absence of heavy tails, we find that the Lipschitz $\alpha$-GANs can outperform  WGAN-GP, as measured both by accuracy on quantities of interest as well as improved stability. The improved stability can be motivated by a simple (formal) calculation of the Hessian of the objective functional in \req{eq:Df_Gamma_no_shift},
\begin{align}\label{eq:loss:concavity:general}
    H_f[g;Q,P]\equiv E_Q[  g]-E_P[f^*( g)]
\end{align}
(see Appendix \ref{app:Taylor} for an analysis of the  objective functional in the non shift-invariant case \eqref{eq:gen_f_def}). Let  $g_0\in \Gamma$ and    perturb in some direction $\psi$, i.e.,  take a line segment $ g_\epsilon= g_0+\epsilon\psi\in\Gamma$. Then
\begin{align}\label{eq:H_f_inv_hessian}
    \frac{d^2}{d\epsilon^2}|_{\epsilon=0} H_f[g_\epsilon;Q,P]= -E_P[(f^*)^{\prime\prime}( g_0)\psi^2]\,.
\end{align}
Convexity of $f^*$ implies $(f^*)^{\prime\prime}\geq 0$. If we have $(f^*)^{\prime\prime}>0$ then \eqref{eq:H_f_inv_hessian} implies the objective functional   is strictly concave at $g_0$ in all directions, $\psi$, are nonzero on a set of positive $P$-probability.  {  This strict concavity implies that  the maximization problem \eqref{eq:gen_f_def} is a strictly convex optimization problem and suggests that numerical computation of $D_f^\Gamma(Q\|P)$ via   \eqref{eq:gen_f_def} may generally be more stable than  computation of the $\Gamma$-IPM \eqref{eq:gen_wasserstein}, as the latter uses a linear objective functional.
Indeed, in  \cite{daskalakis2018training} the authors demonstrated  that gradient descent/ascent  dynamics (used for training of GANs) oscillate without converging to the optimum for the Wasserstein-GAN loss function \eqref{eq:gen_wasserstein} in the special case where $\Gamma$ consists of a parametric family of linear functions. In this case,  more sophisticated algorithms  such as training with optimism \cite{daskalakis2018training} or two-step extra-gradient approaches \cite{abs-1901-08511} were required to guarantee convergence. 
Here our $(f, \Gamma)$ interpolation replaces   the optimization of a linear objective functional  
in the case of the $\Gamma$-IPM \eqref{eq:gen_wasserstein}
with  the   strictly concave problem \eqref{eq:gen_f_def}. In the case of a linear discriminator space  $\Gamma$, we obtain a complete theoretical justification based on  the concavity calculation \eqref{eq:H_f_inv_hessian}. In particular,  
we consider \eqref{eq:loss:concavity:general} where
\begin{align}\label{eq:loss:concavity:linear}
    H_f[g_\phi;Q,P] = E_Q[g_\phi]-E_P[f^*(g_\phi)]\, ,
\end{align}
and where we assume that  $\Gamma=\{g=g_\phi(x): \phi=(\phi_1, \phi_2,...,\phi_k)\in D\}$
is a  parametric linear family ($D$ is a closed, convex subset of $\mathbb{R}^k$), i.e., for any constants $a_0, a_1$ and any parameter values $\phi_0$, $\phi_1$ we have
\begin{equation}\label{eq:linear:Gamma}
    g_{a_0\phi_0+a_1\phi_1}(x)=a_0g_{\phi_0}(x)+a_1g_{\phi_1}(x)\, .
\end{equation}
Using \eqref{eq:linear:Gamma} and considering  \eqref{eq:H_f_inv_hessian} for any $g=g_{\phi_0}$ and  $\psi(x)=g_{\phi_1}$, $g_\epsilon=g_{\phi_0}+\epsilon g_{\phi_1}$,
 we readily have 
\begin{align}\label{eq:H_f_inv_hessian_linear}
     \phi_1^\intercal\nabla_\phi^2H_f[g_{\phi_0};Q,P]\phi_1 = \frac{d^2}{d\epsilon^2}|_{\epsilon=0} H_f[g_\epsilon;Q,P]
    = -E_P[(f^*)^{\prime\prime}( g_{\phi_0}) g_{\phi_1}^2]\, , 
\end{align}
provided all expected values are finite. 
As in \eqref{eq:H_f_inv_hessian}, this analysis implies the  strict concavity of \eqref{eq:loss:concavity:linear} with respect to the linear parametrization $\phi$. Thus our analysis covers linear spaces $\Gamma$ such as linear combinations of splines or reproducing kernel Hilbert spaces (RKHS). However, when  $\Gamma$ is a family of neural networks then the $g_\phi$'s are not linear in $\phi$ and  the above analysis does not apply. We will not pursue the theoretical analysis of this important case here but instead we will carry out an empirical study that explores the improved stability that (local) strict concavity would imply.
 }

\begin{figure}[ht]

\begin{minipage}[b]{0.45\linewidth}
  \centering
\includegraphics[scale=.50]{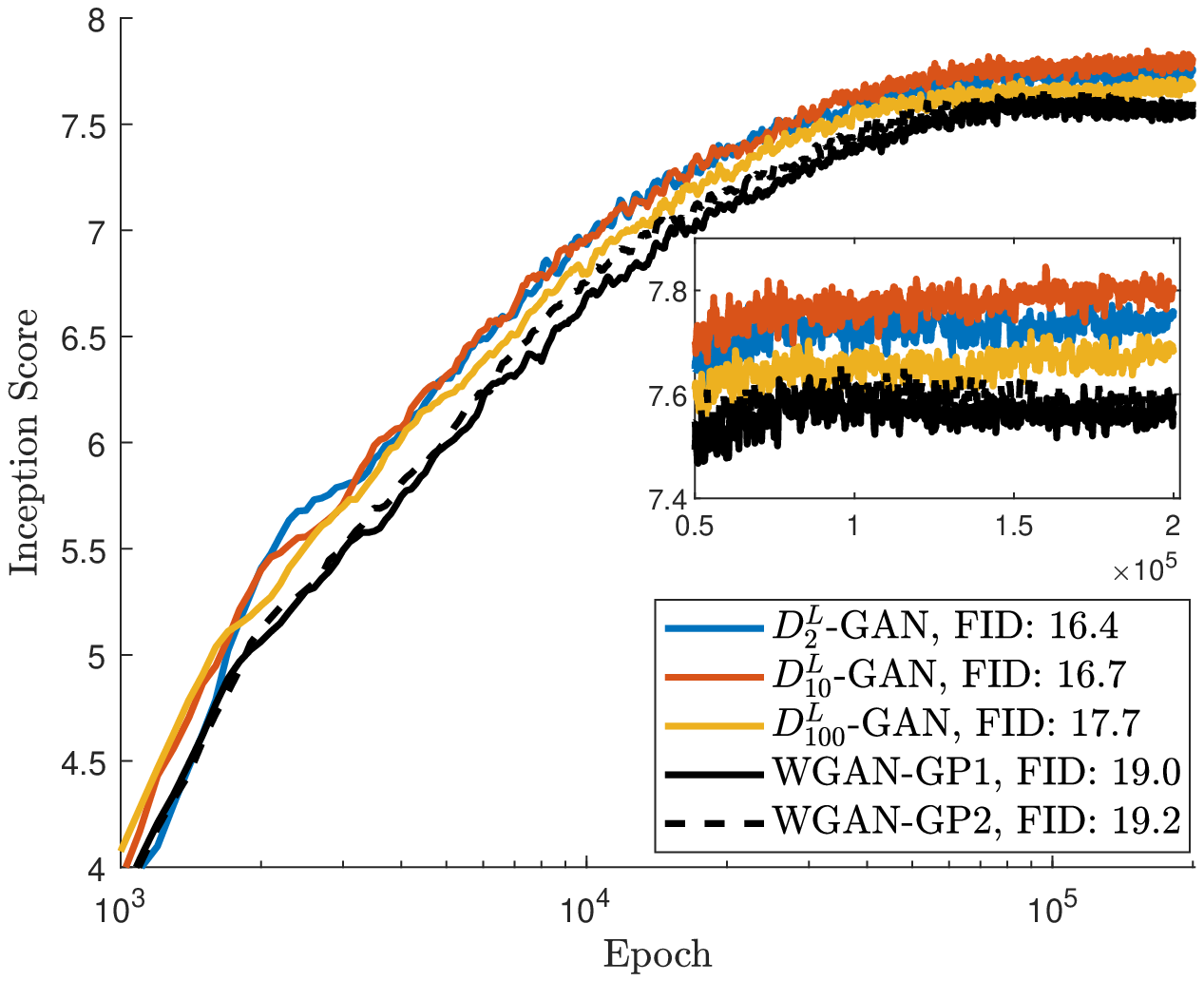} \end{minipage}
\hspace{0.5cm}
\begin{minipage}[b]{0.45\linewidth}
\centering
\includegraphics[scale=.50]{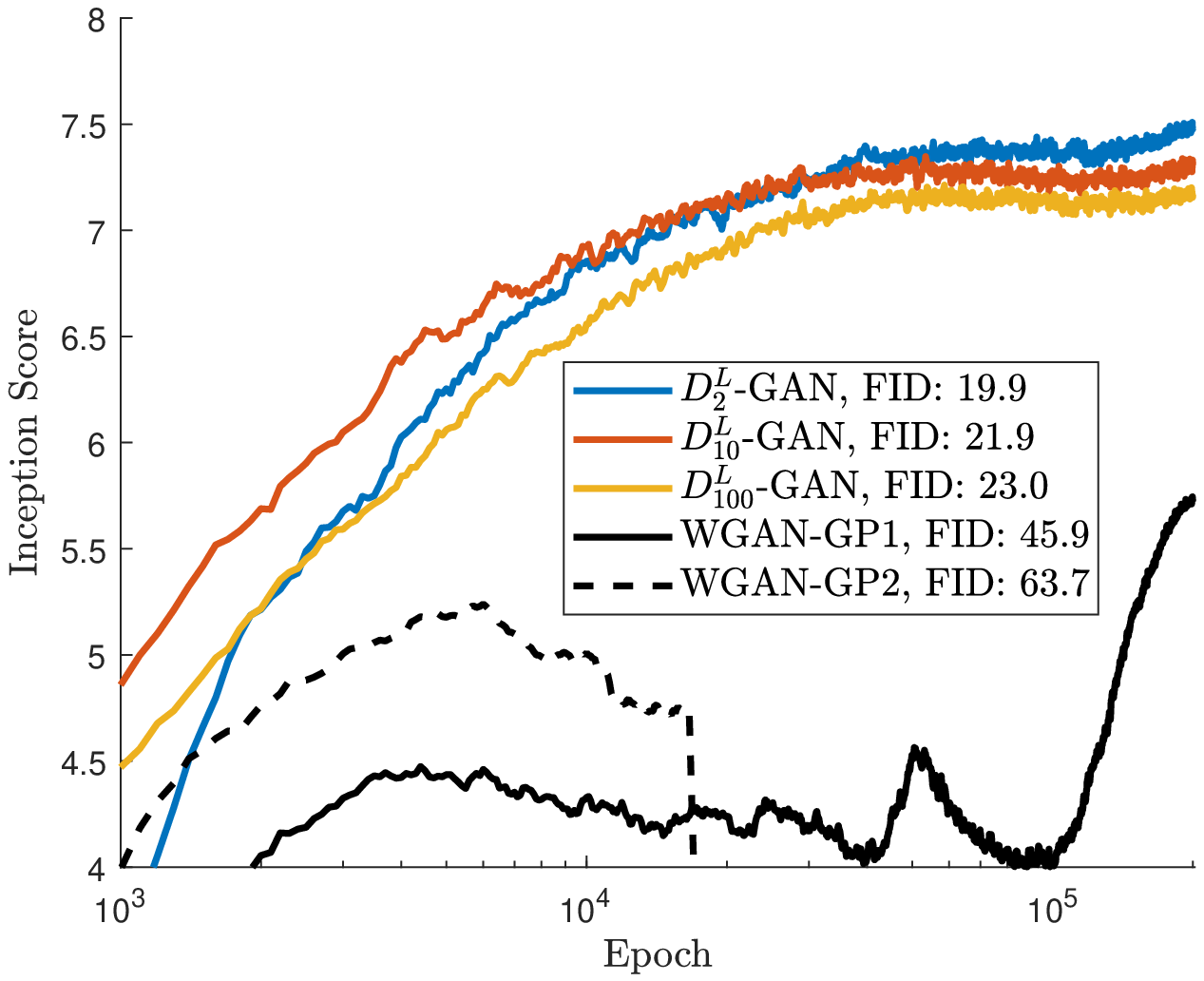} \end{minipage}

\caption{ {  Comparison between Lipschitz $\alpha$-GAN and WGAN-GP (both 1 and 2-sided) on the CIFAR-10 dataset.  Here we plot the inception score as a function of the number of training epochs (moving average over the last 5 data points, with results  averaged over 5 runs). We also show the averaged final FID score in the legend, computed using 50000 samples  from both $Q$ and $P$. The neural network architecture is as in Appendix F of \cite{wgan:gp}; in particular, it employs residual blocks. The left panel used an initial learning rate of $0.0002$ (the same as in \cite{wgan:gp}) while in the right panel we used an initial learning rate of $0.001$.  Here, and in other similar tests, we find the Lipschitz $\alpha$-GANs to be significantly more stable and require less tuning of the learning rate; in particular, none of the WGAN-GP2 runs shown in the right panel were able to complete successfully.  } }\label{fig:C10_GAN}
\end{figure}

In Figure \ref{fig:C10_GAN} we demonstrate both improved performance and improved stability of the Lipschitz $\alpha$-GANs, as compared to WGAN-GP, on the CIFAR-10 dataset \cite{krizhevsky2009learning}, which consists of 32x32 RGB images from 10 classes.  We use the same ResNet neural network architecture  as in \cite[Appendix F]{wgan:gp} and focus on  evaluating the benefits of simply modifying the objective functional. We  employ the adaptive learning rate Adam Optimizer method \cite{kingma2014adam} {  using the hyperparameter values shown in Algorithm 1 of \cite{wgan:gp} (note that in \cite{wgan:gp}, $\alpha$ denotes the learning rate parameter and should not be confused with our use for $\alpha$-divergences)}. We show the inception score as a function of the number of training epochs; the inception score \cite{salimans2016improved} is a commonly used performance measure for  evaluating the diversity of images produced by a GAN. It uses a pre-trained classifier to estimate the number of distinct classes produced by the generator and so, when applied to CIFAR-10, values closer to 10 are considered better. {   In the legends we also show the final FID score  achieved by each method. FID score is a performance measure that computes a distance between feature vectors of a classification model when applied to the original data, as compared to the generated samples  \cite{10.5555/3295222.3295408}; a lower FID score is better.} In the left panel of Figure \ref{fig:C10_GAN} we show the results using an initial learning rate of $0.0002$; {  we find a small improvement in inception score and substantial improvement in FID score when using the Lipschitz $\alpha$-GANs, as compared to WGAN-GP (either $1$ or $2$-sided).} 
{  In this example we find the performance to be relatively insensitive to the value of $\alpha$.  }

In addition to the performance improvement, we find the Lipschitz $\alpha$-GANs to be far less sensitive to the choice of learning rate. In the right panel of Figure \ref{fig:C10_GAN} we show  results using an initial learning rate of $0.001$; here we observe significant degradation of the performance of WGAN-GP, but only a slight impact on the Lipschitz $\alpha$-GANs. We conjecture that this increased stability is due to the strong concavity of the $(f,\Gamma)$-divergence objective functionals. { 
Regarding increased stability, these numerical findings,  the analysis for  a general (non-parameterized) function space $\Gamma$ in \eqref{eq:H_f_inv_hessian}, as well as for  the  linear parametric case \eqref{eq:H_f_inv_hessian_linear}
provide only preliminary indications for the conjecture;   a dedicated analysis for general parameterized $\Gamma$'s that will include nonconvex parametric families such as neural networks is clearly necessary but we will not pursue it further here.

\subsubsection{Enhanced Stability and Spectral Normalization}\label{sec:SN}
In \cite{miyato2018spectral} the authors showed that spectral normalization, which directly controls the Lipschitz constant of each layer of a neural network by setting the largest singular value of its weight matrix to $1$, provides enhanced stability as compared to WGAN-GP and at a lower computational cost (see Figures 1 and 2 in \cite{miyato2018spectral}). Their method, which uses  the Jensen-Shannon divergence, is equivalent to  \req{eq:Df_Gamma_no_shift} (i.e., they do not include an optimization over shifts as in \eqref{eq:Df_Gamma_def2}) with a change of variables $g=\log(D)$ and using a function space $\Gamma$ that consists of a neural network family with spectral normalization. In this example we use a spectral normalization function space in our method \eqref{eq:Df_Gamma_def2}; this falls under the purview of Theorem \ref{thm:general_ub}  (see Table \ref{tab:related_work}). We provide empirical evidence that the improved stability they observed is at least partially due to the strict concavity of the objective functional. Specifically, we find that WGAN with spectral normalization fails to inherit this improved stability and even fails to outperform WGAN-GP.  Our results demonstrate that combining spectral normalization with other (strictly convex) objective functionals can   enhance stability, similar both to what was observed in \cite{miyato2018spectral} and also to what we found in Figure \ref{fig:C10_GAN}. Here we again study the case $f=f_\alpha$, denoting these methods by $D_\alpha^{SN}$; results are shown in Figure \ref{fig:C10_sn_GAN}. }
\begin{figure}
  \centering
\includegraphics[scale=.50]{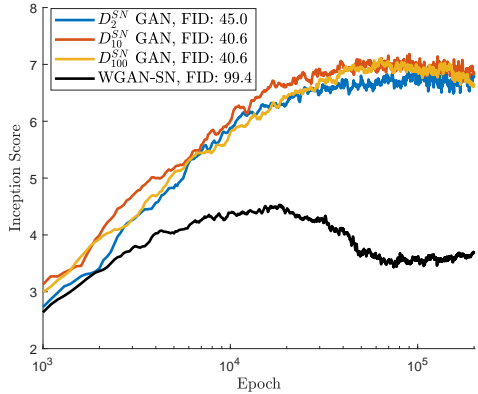}
\caption{ {  A comparison between Lipschitz $\alpha$-GAN and WGAN, both using spectral normalization (SN) to enforce Lipschitz constraints.   We used an initial learning rate of $0.0001$ and otherwise employed same ResNet architecture and hyperparameters as in Figure \ref{fig:C10_GAN}. In particular, we did not attempt to further optimize the architecture when changing from a gradient penalty to SN. None of the methods performed as well as their gradient penalty counterparts from Figure \ref{fig:C10_GAN}, but note the especially poor performance of WGAN-SN.  This suggests a further robustness of our methods to the use of sub-optimal architecture and hyperparameters.   }}\label{fig:C10_sn_GAN}
\end{figure}

\section{Conclusion}\label{sec:concl}
We have provided a systematic and rigorous exploration of the properties of the $(f,\Gamma)$-divergences, as defined in \req{eq:Df_Gamma_intro}. This work was motivated by the need for a flexible collection of novel divergences that combine key properties from $f$-divergences and Wasserstein metrics, such as the ability to work with heavy tails and with not-absolutely continuous distributions. {  A large list of proposed GANs fall under the presented mathematical framework (see Table \ref{tab:related_work}), unifying to a considerable extent the loss formulation of GANs.} We have illustrated the utility of the $(f,\Gamma)$-divergences in the training of GANs, showing both an increased domain of applicability and improved convergence stability. {  The theoretical results  allow for a wide range of choices on $f$ and $\Gamma$. We have shown that there are families of distributions that are better suited for $(f,\Gamma)$-divergence over either $f$-divergence or $\Gamma$-IPM. A more systematic exploration on the selection of proper $f$ and $\Gamma$ will add practical value from a practitioner's perspective, but further and more elaborate experimentation is required, along with a need for new theoretical insights. In the future we  intend to further study the stability, the related statistical estimation theory, and  explore these new divergences in additional  challenging settings } such as high-dimensional time-series generation, extreme events prediction, mutual information estimation, and uncertainty quantification for heavy-tailed distributions and in the absence of absolute continuity.

\appendix

\section{Properties of the Legendre Transform of $f\in\mathcal{F}_1(a,b)$}
Here we collect a number of important properties regarding the LT of function in $ \mathcal{F}_1(a,b)$. Recall that we use the same notation for the convex LSC extension $f:\mathbb{R}\to(-\infty,\infty]$ (see Definition \ref{def:F_1}). First we state an important continuity result \cite[Theorem 10.1]{rockafellar1970convex}.
\begin{lemma}\label{lemma:f_star_cont}
Let $f\in\mathcal{F}_1(a,b)$. Then $f^*(y)=\sup_{x\in(a,b)}\{yx-f(x)\}$ and $f^*$ is continuous on $\overline{\{f^*<\infty\}}$.
\end{lemma}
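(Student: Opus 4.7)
My plan is to handle the two assertions separately. The first is essentially a bookkeeping remark about the extension of $f$ off of $(a,b)$, while the second is a standard application of convex analysis on $\mathbb{R}$, applied to the LSC convex function $f^*$.

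For the identity $f^*(y)=\sup_{x\in(a,b)}\{yx-f(x)\}$, I would argue as follows. By Definition \ref{def:F_1}, $f(x)=\infty$ for $x\notin[a,b]$, so the suprema over $\mathbb{R}$ and over $[a,b]$ coincide trivially (if $a=-\infty$ or $b=\infty$, only the corresponding finite endpoints need be considered). To drop the endpoints when they are finite, I would use that the extension is LSC with $f(a)=\lim_{x\searrow a}f(x)$ and $f(b)=\lim_{x\nearrow b}f(x)$. If $f(a)=\infty$ the contribution at $x=a$ is $-\infty$ and is harmless; if $f(a)\in\mathbb{R}$, then continuity of $x\mapsto yx$ and the limit defining $f(a)$ give $ya-f(a)=\lim_{x\searrow a}(yx-f(x))\le \sup_{x\in(a,b)}\{yx-f(x)\}$. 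The same reasoning handles $b$.

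For the continuity claim, I would first record that $f^*$ is convex (as a pointwise supremum of the affine functions $y\mapsto yx-f(x)$) and LSC (as a pointwise supremum of continuous functions). The effective domain $\{f^*<\infty\}$ is therefore a (possibly unbounded, possibly degenerate) interval $I\subset\mathbb{R}$. Continuity on the interior $I^o$ is the standard one-dimensional fact that a finite convex function on an open interval is continuous. It remains to handle the finitely many endpoints of $I$ that lie in $\overline{I}$. I would split into two cases at such a boundary point $y_0\in\overline{I}$: (i) if $f^*(y_0)=\infty$, LSC gives $\liminf_{y\to y_0}f^*(y)\ge f^*(y_0)=\infty$, hence $f^*(y)\to\infty$; (ii) if $f^*(y_0)\in\mathbb{R}$, a one-sided secant inequality from a point $y_1\in I^o$ gives $f^*(y)\le \tfrac{y_0-y}{y_0-y_1}f^*(y_1)+\tfrac{y-y_1}{y_0-y_1}f^*(y_0)$ for $y$ between $y_1$ and $y_0$, yielding $\limsup_{y\to y_0}f^*(y)\le f^*(y_0)$, which together with LSC gives continuity at $y_0$. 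This is exactly what is needed to conclude continuity on $\overline{\{f^*<\infty\}}$; alternatively, I would just cite Rockafellar, Theorem 10.1, which packages all of this.

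I do not expect any serious obstacle here: the only subtlety is being careful about the boundary of the effective domain, in particular treating the case $f^*(y_0)=\infty$ via LSC and the case $f^*(y_0)<\infty$ via the secant bound. Both arguments live purely on $\mathbb{R}$, so no dimension-dependent complications arise, and the cited result in Rockafellar can be invoked to shorten the write-up considerably.
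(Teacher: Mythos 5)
Your proof is correct; the paper itself gives no argument for this lemma, simply citing Rockafellar's Theorem 10.1, which is exactly the route you mention as an alternative. Your self-contained version — reducing the supremum to $(a,b)$ via the limit definitions of $f(a)$ and $f(b)$, then combining lower semicontinuity with the one-sided secant bound at boundary points of the effective domain — fills in precisely what that citation packages, with no gaps.
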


\begin{lemma}\label{eq:f_star_lb}
Let $f\in\mathcal{F}_1(a,b)$. Then $f^*(y)\geq y$ for all $y\in\mathbb{R}$.
\end{lemma}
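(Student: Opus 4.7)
The plan is to apply the definition of the Legendre transform at the single point $x=1$. By Definition \ref{def:F_1}, any $f\in\mathcal{F}_1(a,b)$ satisfies $f(1)=0$ and, since $1\in(a,b)$, the point $x=1$ lies in the domain where $f$ is finite. By Lemma \ref{lemma:f_star_cont} (or directly from the definition of the LT), we have
\begin{align}
f^*(y)=\sup_{x\in\mathbb{R}}\{yx-f(x)\}\geq y\cdot 1 - f(1) = y
\end{align}
for every $y\in\mathbb{R}$. This is essentially the entire argument; no further machinery is needed.

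There is no real obstacle here: the lemma is a one-line consequence of the normalization $f(1)=0$ built into the definition of $\mathcal{F}_1(a,b)$ and the supremum in the Legendre transform. The only thing worth noting is that one must ensure $x=1$ is an admissible test point in the sup, which follows from $a<1<b$ in Definition \ref{def:F_1}.
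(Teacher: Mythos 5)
Your proof is correct and matches the paper's argument exactly: both evaluate the supremum defining $f^*(y)$ at the test point $x=1$ and use $f(1)=0$. Nothing further is needed.
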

\begin{proof}
$ f^*(y)=\sup_{x\in\mathbb{R}}\{yx-f(x)\}\geq 1\cdot y-f(1)=y$.
\end{proof}
{ 
\begin{lemma}\label{lemma:dom=R}
If $f\in\mathcal{F}_1(a,b)$ is superlinear, i.e.,  $\lim_{x\to\pm\infty}f(x)/|x|=\infty$, then $\{f^*<\infty\}=\mathbb{R}$.
\end{lemma}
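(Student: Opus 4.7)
Fix $y\in\mathbb{R}$; the goal is to show $f^*(y)=\sup_{x\in\mathbb{R}}\{yx-f(x)\}<\infty$. Recall that we extended $f$ by $f(x)=\infty$ for $x\notin[a,b]$, so only $x\in[a,b]$ contributes to the supremum. Since $f\in\mathcal{F}_1(a,b)$ is convex with $f(1)=0<\infty$, $f$ is proper and $f>-\infty$ everywhere; as the extension is LSC by Definition \ref{def:F_1}, the function $x\mapsto yx-f(x)$ is upper semicontinuous with values in $\mathbb{R}\cup\{-\infty\}$, and it takes the finite value $y$ at $x=1$.

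The key step is to use superlinearity to reduce the sup to a compact set. By the assumption $\lim_{|x|\to\infty}f(x)/|x|=\infty$, there exists $M\geq 1$ such that $f(x)\geq (|y|+1)|x|$ whenever $|x|\geq M$. For such $x$,
\begin{align*}
yx-f(x)\;\leq\;|y|\,|x|-(|y|+1)|x|\;=\;-|x|\;\leq\;-M.
\end{align*}
Hence the supremum of $yx-f(x)$ over $\{|x|\geq M\}$ is at most $-M$, which is finite (in fact nonpositive).

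It remains to bound the sup over the compact interval $K=[-M,M]$. Since $x\mapsto yx-f(x)$ is USC on $K$ with values in $\mathbb{R}\cup\{-\infty\}$, and is not identically $-\infty$ on $K$ (it equals $y$ at $x=1\in K$), a standard extraction argument shows that it attains its supremum at some $x_*\in K$, and the value $yx_*-f(x_*)\in\mathbb{R}$ is finite. Combining the two bounds,
\begin{align*}
f^*(y)\;=\;\max\!\Bigl\{\sup_{|x|\leq M}\{yx-f(x)\},\;\sup_{|x|>M}\{yx-f(x)\}\Bigr\}\;<\;\infty.
\end{align*}
Since $y\in\mathbb{R}$ was arbitrary, $\{f^*<\infty\}=\mathbb{R}$. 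I do not anticipate a real obstacle here; the only minor care needed is to handle the possibility that $a$ or $b$ may be finite (in which case superlinearity at the corresponding infinity is automatic from the $\infty$-extension and plays no role) and to invoke the LSC extension to justify USC of $yx-f(x)$ on the boundary of $[a,b]$.
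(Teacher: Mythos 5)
Your proof is correct, but it is structured differently from the paper's. You argue directly: superlinearity gives $M$ with $f(x)\geq(|y|+1)|x|$ for $|x|\geq M$, so $yx-f(x)\leq -|x|$ on the tail, and on the compact set $[-M,M]$ the upper semicontinuous function $x\mapsto yx-f(x)$ (USC because the extended $f$ is LSC with values in $(-\infty,\infty]$) attains a finite maximum. The paper instead argues by contradiction: assuming $f^*(y)=\infty$, it extracts a sequence $x_n\in(a,b)$ with $yx_n-f(x_n)\to\infty$, passes to a subsequence converging in $[a,b]$ (in the extended sense), and derives a contradiction in each case — continuity of $f$ on $\overline{(a,b)}$ would force $f(x)=-\infty$ at a finite limit point, while writing $yx_{n_j}-f(x_{n_j})=|x_{n_j}|(\sgn(x_{n_j})y-f(x_{n_j})/|x_{n_j}|)$ contradicts superlinearity at an infinite limit point. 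The two arguments use the same two ingredients (superlinearity to control the tail, semicontinuity/continuity to control the compact part), but your domain-splitting version is constructive and arguably a bit cleaner, since it avoids the subsequence extraction and the case analysis on the limit; the paper's version avoids having to invoke the attainment-of-maximum property for USC functions. Both are complete; your remark that $M\geq 1$ ensures $x=1$ lies in the compact piece is a nice touch, though finiteness of the supremum there already follows from attainment alone.
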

\begin{proof}
Suppose $y\in\mathbb{R}$ with $f^*(y)=\infty$, i.e., $\sup_{x\in (a,b)}\{yx-f(x)\}=\infty$.  Then there exists $x_n\in(a,b)$ with $yx_n-f(x_n)\to\infty$.  We can take a subsequence $x_{n_j}$ with $x_{n_j}\to x\in[a,b]$. If $x$ is finite then continuity of $f$ on $\overline{(a,b)}$ allows us to compute $f(x)=-\infty$, a contradiction.  If $x$ is infinite then we write
\begin{align}
    |x_{n_j}|(\sgn(x_{n_j})y-f(x_{n_j})/|x_{n_j}|)\to \infty\,,
\end{align}
which contradicts $f(x_{n_j})/|x_{n_j}|\to\infty$. This completes the proof.
\end{proof}
}
\begin{lemma}\label{lemma:f_star_nondec}
Let $f\in\mathcal{F}_1(a,b)$ and suppose there exists $x_n\in\mathbb{R}$ with $x_n\to-\infty$ and $f^*(x_n)$ uniformly bounded above.  Then $f^*$ is nondecreasing.
\end{lemma}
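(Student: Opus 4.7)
The plan is to argue by contradiction, exploiting the convexity of $f^*$ (which holds automatically since $f^*$ is a supremum of affine functions). Suppose $f^*$ is not nondecreasing, so there exist $y_1<y_2$ with $f^*(y_1)>f^*(y_2)$. Set
\begin{align}
m\equiv \frac{f^*(y_2)-f^*(y_1)}{y_2-y_1}<0\,.
\end{align}

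The key step is the slope inequality for convex functions: for any $y<y_1<y_2$,
\begin{align}
\frac{f^*(y_1)-f^*(y)}{y_1-y}\leq \frac{f^*(y_2)-f^*(y_1)}{y_2-y_1}=m\,.
\end{align}
Rearranging gives $f^*(y)\geq f^*(y_1)+m(y-y_1)$ for all $y<y_1$. Since $m<0$, the right-hand side tends to $+\infty$ as $y\to-\infty$, so $\lim_{y\to-\infty}f^*(y)=+\infty$.

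This contradicts the hypothesis that $f^*(x_n)$ is uniformly bounded above along some sequence $x_n\to-\infty$. Hence no such pair $y_1<y_2$ can exist, and $f^*$ is nondecreasing.

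I don't expect any real obstacle here; the only subtlety is remembering that the slope inequality  (a standard consequence of the definition of convexity via the three-chord characterization, see e.g.\ \cite[Chapter 1]{roberts1974convex}) applies without further regularity assumptions on $f^*$, since $f^*$ is automatically convex on $\mathbb{R}$ with values in $(-\infty,\infty]$. No use is made of the structure of $f\in\mathcal{F}_1(a,b)$ beyond the fact that $f^*$ is its Legendre transform and hence convex.
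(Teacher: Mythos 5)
Your proof is correct and is essentially the paper's argument in different packaging: the paper writes $y_1$ as a convex combination of $y_2$ and the hypothesis points $x_n$ and lets $n\to\infty$, which is exactly your slope inequality specialized to $y=x_n$, and both arguments derive the contradiction from the forced blow-up of $f^*$ at $-\infty$. The only point worth noting is that your definition of $m$ tacitly assumes $f^*(y_1)<\infty$ (the paper deduces this from the convexity inequality), but that case is harmless: $f^*(y_1)=\infty$ together with $f^*(y_2)<\infty$ would force $f^*\equiv\infty$ on $(-\infty,y_1]$, contradicting the hypothesis immediately.
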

\begin{proof}
Suppose not.  Then there exists $y_1<y_2$ with $f^*(y_1)>f^*(y_2)$ (in particular, $f^*(y_2)<\infty$). Take $N$ such that for $n\geq N$ we have $x_n<y_1$.  For $n\geq N$ let $t_n=(y_1-x_n)/(y_2-x_n)$.  Then $t_n\in(0,1)$, $1-t_n=(y_2-y_1)/(y_2-x_n)\to 0$ as $n\to\infty$ and $t_n y_2+(1-t_n)x_n=y_1$.  Hence
\begin{align}
f^*(y_1)\leq t_n f^*(y_2)+(1-t_n)f^*(x_n)\,.
\end{align}
Note that this implies $f^*(y_1)<\infty$. We therefore have
\begin{align}
f^*(x_n)\geq \frac{f^*(y_1)-f^*(y_2)}{1-t_n}+f^*(y_2)\to\infty
\end{align}
as $n\to\infty$.  This is a contradiction and so we are done.
\end{proof}

\begin{lemma}\label{lemma:f_star_bounded_or_inc}
Let   $f\in\mathcal{F}_1(a,b)$.  Then one of the following holds:
\begin{enumerate}
\item $f^*$ is bounded below.
\item The set $I=\{y:f^*(y)<\infty\}$ is of the form $I=(-\infty,d)$ or $I=(-\infty,d]$ for some $d\in(-\infty,\infty]$ and $f^*$ is nondecreasing.

\end{enumerate}
In addition, if $f^*$ is not bounded below then there exists $b\leq 0$ such that   $f^*|_{(-\infty,b]}\leq 0$ and $f^*|_{(b,\infty)}\geq 0$.
\end{lemma}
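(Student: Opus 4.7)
The plan is to handle the contrapositive: assume $f^*$ is \emph{not} bounded below, and deduce both the structural form of $I=\{f^*<\infty\}$ and the existence of the required threshold $b\leq 0$. If $f^*$ is not bounded below, then by Lemma~\ref{eq:f_star_lb} we have $f^*(y)\geq y$ so any sequence with $f^*(y_n)\to-\infty$ must have $y_n\to-\infty$. In particular, such a sequence is uniformly bounded above (by any finite constant), so Lemma~\ref{lemma:f_star_nondec} immediately yields that $f^*$ is nondecreasing.

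Next I would determine the structure of $I$. Since $f^*$ is nondecreasing, the sublevel set structure forces $I$ to be a downward-closed interval: if $x\in I$ and $y\leq x$ then $f^*(y)\leq f^*(x)<\infty$, so $y\in I$. Combined with the existence of arbitrarily negative points in $I$ (coming from the sequence $y_n$), this gives $I=(-\infty,d)$ or $(-\infty,d]$ for some $d\in(-\infty,\infty]$, proving the first part of the dichotomy.

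For the second statement, I would define
\begin{equation*}
b\equiv\sup\{y\in\mathbb{R}:f^*(y)\leq 0\}.
\end{equation*}
The set is nonempty since $f^*(y_n)\to-\infty$ eventually makes $f^*(y_n)\leq 0$, and it is bounded above by $0$ because $f^*(y)\geq y$ forces any element of $\{f^*\leq 0\}$ to satisfy $y\leq 0$; hence $b\leq 0$. For $y<b$, the definition of supremum gives some $y'\in(y,b]$ with $f^*(y')\leq 0$, and monotonicity then yields $f^*(y)\leq f^*(y')\leq 0$. For $y>b$, the definition of $b$ directly gives $f^*(y)>0$ (treating $f^*(y)=\infty$ as trivially $\geq 0$).

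The only delicate point, which I expect to be the main (though mild) obstacle, is showing that equality holds at $y=b$, i.e.\ that $f^*(b)\leq 0$ so that $f^*|_{(-\infty,b]}\leq 0$ rather than only $f^*|_{(-\infty,b)}\leq 0$. This follows from lower semicontinuity of $f^*$ (as the Legendre transform of any function): taking $y_n\nearrow b$ inside $\{f^*\leq 0\}$, we get $f^*(b)\leq\liminf_n f^*(y_n)\leq 0$, so $b$ itself lies in the sublevel set. Assembling these pieces gives the desired partition of $\mathbb{R}$ into $(-\infty,b]$ where $f^*\leq 0$ and $(b,\infty)$ where $f^*\geq 0$, completing the proof.
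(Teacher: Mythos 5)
Your proposal is correct and follows essentially the same route as the paper: extract a sequence $y_n$ with $f^*(y_n)\to-\infty$ (hence $y_n\to-\infty$ by $f^*(y)\geq y$), deduce monotonicity, read off the interval structure of $I$, and define $b$ as the supremum of the sublevel set $\{f^*\leq 0\}$. The only cosmetic differences are that you invoke Lemma~\ref{lemma:f_star_nondec} where the paper reproduces the convexity argument inline, and you settle the boundary value $f^*(b)\leq 0$ via lower semicontinuity of the Legendre transform where the paper cites continuity of $f^*$ on $\overline{I}$; both are valid.
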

\begin{proof}
Suppose $f^*$ is not bounded below.  Take $y_n\in I$ with $f^*(y_n)\to -\infty$.  We know $f^*(y)\geq y$, hence $y_n\to-\infty$.  Let $d=\sup I$. The set $I$ is convex, therefore   $(y_n,d)\subset  I$ for all $n$. Hence $(-\infty,d)\subset I\subset (-\infty,d]$.

Now suppose we have $x_1<x_2$ with  $f^*(x_1)>f^*(x_2)$.  With $y_n$ as above, take  $n$ such that $y_n<x_1$ and $f^*(y_n)<f^*(x_2)$ and let $t=(x_2-x_1)/(x_2-y_n)\in(0,1)$.   $f^*$ is convex, hence
\begin{align}
f^*(x_1)=&f^*(ty_n+(1-t)x_2)\leq tf^*(y_n)+(1-t)f^*(x_2)\\
\leq&  tf^*(x_2)+(1-t)f^*(x_2)=f^*(x_2)<f^*(x_1)\,.\notag
\end{align}
This is a  contradiction, therefore $f^*$ is nondecreasing.   

If $f^*$ is not bounded below then let $b=\sup\{x:x\leq 0,f^*(x)\leq 0\}$.  The  properties $f^*|_{(-\infty,b]}\leq 0$ and $f^*|_{(b,\infty)}\geq 0$ follow from the fact that $f^*(0)\geq 0$ and $f^*$ is non-decreasing and is continuous on $\overline{I}$ (see Lemma \ref{lemma:f_star_cont}).
\end{proof}

\begin{lemma}\label{lemma:f_star_inc}
Let $f\in\mathcal{F}_1(a,b)$ with $a\geq 0$.  Then $f^*$ is nondecreasing and $\{f^*<\infty\}=(-\infty,d)$ for $d\in(-\infty,\infty]$ or $\{f^*<\infty\}=(-\infty,d]$ for $d\in\mathbb{R}$.
\end{lemma}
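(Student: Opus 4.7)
The plan is to derive monotonicity from the restriction $a \geq 0$ and then read off the structure of $\{f^* < \infty\}$ as a consequence. I expect the proof to be short; the main (minor) subtlety is properness of $f^*$, which justifies excluding the degenerate case $d = -\infty$.

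First I would prove monotonicity. By Lemma~\ref{lemma:f_star_cont}, $f^*(y) = \sup_{x \in (a,b)}\{yx - f(x)\}$. Since $a \geq 0$, every $x \in (a,b)$ satisfies $x \geq 0$, so the affine function $y \mapsto yx - f(x)$ is nondecreasing in $y$ for each fixed $x \in (a,b)$. A supremum of nondecreasing functions is nondecreasing, hence $f^*$ is nondecreasing on $\mathbb{R}$.

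Next I would deduce the structure of $I \equiv \{f^* < \infty\}$. Because $f^*$ is nondecreasing, $I$ is ``downward closed'': if $y_2 \in I$ and $y_1 \leq y_2$ then $f^*(y_1) \leq f^*(y_2) < \infty$, so $y_1 \in I$. Set $d = \sup I \in [-\infty, \infty]$. Since $f$ is a proper convex LSC function on $\mathbb{R}$ (Definition~\ref{def:F_1}) with $f(1)=0$, its conjugate $f^*$ is also proper, so $I \neq \emptyset$ and $d > -\infty$. If $d = \infty$, then for every $y \in \mathbb{R}$ we can pick $y' > y$ with $y' \in I$, and monotonicity yields $f^*(y) \leq f^*(y') < \infty$, so $I = \mathbb{R} = (-\infty, d)$. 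If $d$ is finite, the same argument shows $(-\infty, d) \subset I \subset (-\infty, d]$, and the inclusion of the endpoint $d$ depends solely on whether $f^*(d) < \infty$, giving either $I = (-\infty, d)$ or $I = (-\infty, d]$ with $d \in \mathbb{R}$.

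Combining the two cases gives exactly the dichotomy in the statement: $\{f^* < \infty\} = (-\infty, d)$ for some $d \in (-\infty, \infty]$, or $\{f^* < \infty\} = (-\infty, d]$ for some $d \in \mathbb{R}$. No analytical obstacle arises; the whole argument rests on the sign constraint $a \geq 0$ that makes each affine slice in the defining supremum monotone in $y$.
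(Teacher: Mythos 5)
Your proof is correct and the monotonicity argument is essentially the same as the paper's (each affine slice $y\mapsto yx-f(x)$ with $x\geq 0$ is nondecreasing, hence so is the supremum). The paper's proof actually stops there and leaves the structure of $\{f^*<\infty\}$ implicit, so your explicit derivation of the interval form from monotonicity and properness of $f^*$ is a welcome, and correct, completion.
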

\begin{proof}
 Let $y_1<y_2$.  Then $xy_1-f(x)\leq xy_2-f(x)$ for all $x\geq 0$.  $f|_{(-\infty,0)}=\infty$, hence
    \begin{align}
        f^*(y_1)=\sup_{x\in\mathbb{R}}\{y_1x-f(x)\}=\sup_{x\geq 0}\{y_1x-f(x)\}\leq\sup_{x\geq 0}\{y_2x-f(x)\}=f^*(y_2)\,.
    \end{align}
\end{proof}

Next we give several results pertaining to the derivative of a convex function and its LT. A key tool will be the following decomposition of a convex function into an affine part and a remainder which can be found in \cite{LieseVajda}:
\begin{lemma}\label{lemma:convex_taylor}
Let $f\in\mathcal{F}_1(a,b)$. For $x,y\in(a,b)$ we have
\begin{align}\label{eq:convex_taylor}
f(y)=f(x)+f_+^\prime(x)(y-x)+R_{f}(x,y)\,,
\end{align}
where $R_f\geq 0$, $R_f(x,x)=0$, and if $z$ is between $x$ and $y$ then $R_f(x,z)\leq R_f(x,y)$.
\end{lemma}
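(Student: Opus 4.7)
The natural plan is to take \eqref{eq:convex_taylor} as the \emph{definition} of $R_f$, and then verify the three claimed properties directly from the convexity of $f$. Concretely, I would set
\[
R_f(x,y) \;\equiv\; f(y) - f(x) - f_+^\prime(x)(y-x),
\]
so that \eqref{eq:convex_taylor} holds tautologically on $(a,b)$, and $R_f(x,x)=0$ is immediate. The two real contents are non-negativity and monotonicity in $y$ on either side of $x$.

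For non-negativity, I would use the standard supporting-hyperplane property of convex functions: since $f$ is convex and finite on $(a,b)$, its right derivative $f_+^\prime(x)$ exists at every $x\in(a,b)$ (cited from \cite[Chapter 1]{roberts1974convex} in the notation section), and for any $y\in(a,b)$ one has
\[
f(y) \;\geq\; f(x) + f_+^\prime(x)(y-x).
\]
For $y>x$ this follows from taking $t\searrow 0$ in the difference quotient inequality $[f(x+t)-f(x)]/t \leq [f(y)-f(x)]/(y-x)$, which is just the monotonicity of chord slopes for convex functions; for $y<x$ one argues symmetrically using that slopes of chords from $x$ to points on the left are bounded above by $f_+^\prime(x)$. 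This gives $R_f(x,y)\geq 0$ for $y\in(a,b)$, and the extension to the (possibly infinite) endpoints follows from the definition $f(a)\equiv\lim_{x\searrow a} f(x)$, $f(b)\equiv\lim_{x\nearrow b} f(x)$ in Definition \ref{def:F_1} together with monotone passage to the limit.

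For the monotonicity statement, note that for fixed $x$ the map $y\mapsto R_f(x,y)$ is the sum of the convex function $f$ and an affine function, hence is itself convex. It vanishes at $y=x$, is non-negative everywhere, and is therefore minimized at $y=x$. A convex function on an interval that attains its minimum at an interior point is non-increasing to the left of that point and non-decreasing to the right; this gives exactly the claim that if $z$ lies between $x$ and $y$ then $R_f(x,z)\leq R_f(x,y)$.

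I do not expect any real obstacle here; the statement is essentially a restatement of the supporting-line inequality plus the observation that the ``Bregman-type'' remainder of a convex function is itself convex. The only small care needed is handling the boundary values $a,b$ when they are finite, which is done by the lower-semicontinuous extension built into Definition \ref{def:F_1}. Since this lemma is attributed to \cite{LieseVajda}, one could alternatively just cite that source and verify the three bulleted properties in a few lines.
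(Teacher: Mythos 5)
Your proof is correct and complete. The paper itself gives no argument for this lemma---it simply cites \cite{LieseVajda}---so there is nothing to compare against; your direct verification (define $R_f$ by the displayed identity, get nonnegativity from the supporting-line inequality $f(y)\geq f(x)+f_+^\prime(x)(y-x)$ via monotonicity of chord slopes, and get the monotonicity of $R_f(x,\cdot)$ from its convexity together with the fact that it vanishes and is minimized at $y=x$) is exactly the standard argument and is sound. One minor remark: the lemma is stated only for $x,y\in(a,b)$, so your discussion of extending to the endpoints $a,b$ is superfluous here (the paper handles the endpoint limits separately where it needs them, e.g.\ in the proof of Lemma \ref{lemma:nu0}).
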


Using this we can derive an explicit formula for $f^*$ and prove several useful identities.
\begin{lemma}\label{lemma:f_star_formula}
Let $f\in\mathcal{F}_1(a,b)$ and $y\in\{f^*<\infty\}^o$.  Then 
\begin{align}
f^*(y)=y (f^*)^\prime_+(y)-f((f^*)^\prime_+(y))\,.
\end{align}
\end{lemma}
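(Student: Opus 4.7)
The plan is to apply Fenchel--Young equality. Since $f\in\mathcal{F}_1(a,b)$ extends to a convex, LSC function $f:\mathbb{R}\to(-\infty,\infty]$, we have $f^{**}=f$ (as noted in the Notation subsection via Bot, Grad, Wanka). The general Fenchel--Young inequality reads $f(x)+f^*(y)\geq xy$ for all $x,y\in\mathbb{R}$, with equality if and only if $x\in\partial f^*(y)$ (equivalently $y\in\partial f(x)$). So the entire lemma reduces to verifying that the particular choice $x_0\equiv (f^*)_+^\prime(y)$ lies in $\partial f^*(y)$.

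First, I would observe that $f^*$ is convex (as a supremum of affine functions) and, by Lemma~\ref{lemma:f_star_cont}, continuous on $\overline{\{f^*<\infty\}}$. For $y$ in the interior of its effective domain, the standard result from one-dimensional convex analysis (\cite[Chapter 1]{roberts1974convex}) gives both finiteness of $(f^*)_-^\prime(y)$ and $(f^*)_+^\prime(y)$, and moreover $\partial f^*(y) = [(f^*)_-^\prime(y),\,(f^*)_+^\prime(y)]$. In particular, $x_0=(f^*)_+^\prime(y)\in\partial f^*(y)$.

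Next, applying the Fenchel--Young equality with $x=x_0$, $y=y$, and using $f^{**}=f$, gives
\begin{align*}
f^*(y)+f(x_0)=y\,x_0\,.
\end{align*}
Rearranging yields $f^*(y)=y(f^*)_+^\prime(y)-f((f^*)_+^\prime(y))$, which is the claimed identity. Note that finiteness of the right-hand side is automatic: $f^*(y)<\infty$ by hypothesis and $y x_0<\infty$, so $f(x_0)<\infty$, meaning $x_0\in[a,b]$ and $f(x_0)$ is the ordinary (finite) value.

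The only mildly delicate point is justifying that $(f^*)_+^\prime(y)\in\partial f^*(y)$; an alternative, elementary route (if one wishes to avoid invoking the subdifferential characterization) is to directly verify the supporting-line inequality $f^*(y')\geq f^*(y)+(y'-y)(f^*)_+^\prime(y)$ for all $y'\in\mathbb{R}$ using monotonicity of difference quotients for the convex function $f^*$, and then to convert this into the Young equality $f^*(y)=yx_0-f(x_0)$ by taking the supremum over $y'$ in $yx_0-f(x_0)=\sup_{y'}\{y'x_0-f^*(y')\}+(\cdots)$ via $f^{**}(x_0)=f(x_0)$. Either route is essentially immediate once the $f^{**}=f$ fact is available, so no serious obstacle is anticipated.
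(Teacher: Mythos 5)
Your proof is correct and is essentially the same argument as the paper's: both rest on $f^{**}=f$ together with the fact that $(f^*)'_+(y)$ is a subgradient of $f^*$ at $y$, which yields the Fenchel--Young equality at that point. The only difference is presentational: the paper verifies the subgradient property by hand, applying its decomposition Lemma \ref{lemma:convex_taylor} to $f^*$ (so that $f((f^*)'_+(y))=\sup_{z}\{z(f^*)'_+(y)-f^*(z)\}$ collapses because $\inf_z R_{f^*}(y,z)=0$), which is precisely the ``alternative elementary route'' you sketch in your final paragraph.
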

\begin{proof}
By assumption, $I\equiv \{f^*<\infty\}$ has nonempty interior and so
\begin{align}
f(x)=\sup_{z\in I^o}\{zx-f^*(z)\}
\end{align}
for all $x$. Applying Lemma \ref{lemma:convex_taylor} to the convex function $f^*$ on the interval $I^o$ gives 
\begin{align}
f^*(z)=f^*(y)+(f^*)^\prime_+(y)(z-y)+R_{f^*}(y,z)
\end{align}
for all  $z\in I^o$.  The assumption $y\in I$ implies $(f^*)^\prime_+(y)$ exists and is finite. Hence
\begin{align}
f((f^*)^\prime_+(y))=&\sup_{z\in I^o}\{z(f^*)^\prime_+(y)-f^*(z)\}\\
=&\sup_{z\in I^o}\{z(f^*)^\prime_+(y)-f^*(y)-(f^*)^\prime_+(y)(z-y)-R_{f^*}(y,z)\}\notag\\
=&(f^*)^\prime_+(y)y-f^*(y)-\inf_{z\in I^o}R_{f^*}(y,z)=(f^*)^\prime_+(y)y-f^*(y)\,.\notag
\end{align}
\end{proof}

\begin{lemma}\label{lemma:nu0}
Let $f\in\mathcal{F}_1(a,b)$ and define $\nu_0=f_+^\prime(1)$.  Then:
\begin{enumerate}
    \item $f^*(\nu_0)=\nu_0$.
    \item If $\nu_0\in\{f^*<\infty\}^o$ and $f$ is strictly convex on a neighborhood of $1$ then $(f^*)^\prime_+(\nu_0)=1$.
\end{enumerate}
\end{lemma}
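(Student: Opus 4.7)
For Part 1, the plan is to combine the supporting-hyperplane inequality for $f$ at $x=1$ with the explicit formula for $f^*$. By definition of the right derivative and convexity of $f$, one has $f(x) \ge f(1) + f'_+(1)(x-1) = \nu_0(x-1)$ for every $x \in (a,b)$, so $\nu_0 x - f(x) \le \nu_0$. Taking the supremum over $x$ (and using that $f(x)=\infty$ outside $[a,b]$, which is harmless) gives $f^*(\nu_0) \le \nu_0$. For the reverse inequality, simply evaluate at $x=1$ to obtain $\nu_0 \cdot 1 - f(1) = \nu_0$, so $f^*(\nu_0) \ge \nu_0$. Thus $f^*(\nu_0) = \nu_0$.

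For Part 2, set $x_0 \equiv (f^*)'_+(\nu_0)$; the goal is to show $x_0 = 1$. Since $\nu_0 \in \{f^*<\infty\}^o$, Lemma~\ref{lemma:f_star_formula} gives
\[
f^*(\nu_0) = \nu_0 x_0 - f(x_0),
\]
and combining with Part 1 yields $f(x_0) = \nu_0(x_0 - 1) = f(1) + f'_+(1)(x_0 - 1)$. Applying the Taylor-type decomposition of Lemma~\ref{lemma:convex_taylor} at the point $1$, this last identity says exactly that the remainder vanishes: $R_f(1, x_0) = 0$.

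The key step is then to use strict convexity of $f$ on some neighborhood $(1-\delta, 1+\delta)$ to conclude $x_0 = 1$. Suppose for contradiction that $x_0 \ne 1$; say $x_0 > 1$ (the case $x_0 < 1$ is symmetric). Pick any $z$ with $1 < z < \min\{x_0, 1+\delta\}$. By the monotonicity clause of Lemma~\ref{lemma:convex_taylor} (together with $R_f \ge 0$), $0 \le R_f(1, z) \le R_f(1, x_0) = 0$, so $R_f(1, z) = 0$, i.e., $f(z) = f(1) + f'_+(1)(z-1)$. Running the same monotonicity argument for every $w \in (1, z)$ shows $R_f(1, w) = 0$ as well, so $f$ agrees with its affine tangent on the whole interval $[1, z] \subset (1-\delta, 1+\delta)$. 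This contradicts strict convexity of $f$ on that neighborhood. Hence $x_0 = 1$, which is exactly the claim $(f^*)'_+(\nu_0) = 1$.

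The main obstacle is the last paragraph: one must argue cleanly that $R_f(1, x_0) = 0$ forces $f$ to be affine on a full segment inside the strict-convexity neighborhood, even when $x_0$ itself may lie outside that neighborhood. The monotonicity of $R_f$ in its second argument, provided by Lemma~\ref{lemma:convex_taylor}, is precisely what bridges this gap and lets the argument go through without any further regularity assumption on $f$.
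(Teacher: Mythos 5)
Your proposal is correct and follows essentially the same route as the paper: Part 1 via the supporting-hyperplane inequality at $x=1$ (equivalently, $R_f\geq 0$ in Lemma \ref{lemma:convex_taylor}), and Part 2 via Lemma \ref{lemma:f_star_formula} combined with Part 1 to force $R_f(1,(f^*)'_+(\nu_0))=0$, then the monotonicity of $R_f$ in its second argument to propagate this to a segment inside the strict-convexity neighborhood and reach a contradiction. The only omission is the paper's parenthetical remark that when $(f^*)'_+(\nu_0)$ lands on the boundary of $(a,b)$ one applies the decomposition by taking limits via the continuous extension of $R_f(1,\cdot)$, a detail your argument needs but does not affect its substance.
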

\begin{proof}
\begin{enumerate}
    \item Using Lemma \ref{lemma:convex_taylor} we can compute
\begin{align}
f^*(\nu_0)=&\sup_{x\in(a,b)}\{\nu_0 x-f(x)\}=\sup_{x\in(a,b)}\{\nu_0 x-(\nu_0(x-1)+R_f(1,x))\}\\
=&\nu_0-\inf_{x\in(a,b)}R_f(1,x)=\nu_0\,.\notag
\end{align}
\item Using Lemma \ref{lemma:f_star_formula} along with Part 1 of this lemma we can write
\begin{align}\label{eq:f_star_prime_nu0_1}
f((f^*)^\prime_+(\nu_0))=\nu_0 (f^*)^\prime_+(\nu_0)-f^*(\nu_0)=\nu_0 ((f^*)^\prime_+(\nu_0)-1)\,.
\end{align}
In particular, we see that $(f^*)^\prime_+(\nu_0)\in\{f<\infty\}\subset\overline{(a,b)}$. Using Lemma \ref{lemma:convex_taylor}  we can write 
\begin{align}\label{eq:f_taylor_1}
f(x)=f_+^\prime(1)(x-1)+R_f(1,x)
\end{align}
for $x\in (a,b)$.  Therefore
\begin{align}
f((f^*)^\prime_+(\nu_0))=\nu_0((f^*)^\prime_+(\nu_0)-1)+R_f(1,(f^*)^\prime_+(\nu_0))
\end{align}
(this holds even if $(f^*)^\prime_+(\nu_0)$ equals either $a$ or $b$, as can be seen by taking limits in \eqref{eq:f_taylor_1} and using the continuous extension of $R_f(1,\cdot)$). Combining this with \req{eq:f_star_prime_nu0_1} we find 
\begin{align}
R_f(1,(f^*)^\prime_+(\nu_0))=0\,.
\end{align}
Using Lemma \ref{lemma:convex_taylor} (and taking limits if necessary) we obtain $0\leq R_f(1,z)\leq R_f(1,(f^*)^\prime_+(\nu_0))$ for all $z$ between $1$ and $(f^*)^\prime_+(\nu_0)$, and hence $R_f(1,z)=0$ for all such $z$.   If $(f^*)^\prime_+(\nu_0)\neq 1$ then this, combined with \req{eq:convex_taylor}, implies that $f$ is affine on the (non-trivial) line segment from $1$ to $(f^*)^\prime_+(\nu_0)$, which would contradict the assumption that $f$ is strictly convex on a neighborhood of $1$.  Therefore we conclude that $(f^*)^\prime_+(\nu_0)= 1$.
\end{enumerate}
\end{proof}
Finally, we will need  formulas for $f_{KL}^*$ and $f_\alpha^*$ from \req{eq:f_alpha_def}:
\begin{align}
&f_{KL}^*(y)=\exp(y-1),\label{eq:f_KL_star}\\
&f_\alpha^*(y)=\begin{cases} 
\alpha^{-1}(\alpha-1)^{\alpha/(\alpha-1)}y^{\alpha/(\alpha-1)}1_{y>0}+\frac{1}{\alpha(\alpha-1)},&\alpha>1\\
\infty 1_{y\geq 0}+\left(\alpha^{-1}(1-\alpha)^{-\alpha/(1-\alpha)}|y|^{-\alpha/(1-\alpha)}-\frac{1}{\alpha(1-\alpha)}\right)1_{y<0},&\alpha\in(0,1)\,.
   \end{cases}\label{eq:f_alpha_star}
\end{align}
Note that $f_{KL}$ and  $f_\alpha$ for $\alpha>1$ are all strictly admissible but $f_\alpha$ is not admissible  if $\alpha\in(0,1)$ (see Definition \ref{def:admissible}).  Theorem \ref{thm:f_div_inf_convolution} applies to $f_{KL}$ and to $f_\alpha$, $\alpha>1$ while  Theorem \ref{thm:general_ub} applies to the case $\alpha\in(0,1)$.

\section{Properties of the Classical $f$-Divergences}\label{app:f_div}
Here we collect a number of important properties of the classical $f$-divergences; see Definition \ref{def:f_div}. Perhaps most important is the following variational characterization; versions of this were proven in  \cite{Broniatowski,Nguyen_Full_2010}.
\begin{proposition}\label{prop:Df_var_formula}
Let   $f\in\mathcal{F}_1(a,b)$ and  $P$, $Q$ be probability measures on $(\Omega,\mathcal{M})$.  Then
\begin{align}\label{eq:Df_var_app}
D_f(Q\|P)=&\sup_{ g\in \mathcal{M}_b(\Omega)}\{ E_Q[ g]-E_P[f^*( g)]\}\\
=&\sup_{ g\in \mathcal{M}_b(\Omega)}\{ E_Q[ g]-\Lambda_f^P[g]\} \notag.
\end{align}
\end{proposition}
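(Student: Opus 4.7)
The plan is to prove both equalities in \eqref{eq:Df_var_app}. The second equality is essentially free from the first: since $\mathcal{M}_b(\Omega)$ is invariant under the shift $g\mapsto g-\nu$ for $\nu\in\mathbb{R}$, we have
\begin{align*}
\sup_{g\in \mathcal{M}_b(\Omega)}\{E_Q[g]-E_P[f^*(g)]\}=\sup_{g\in \mathcal{M}_b(\Omega),\nu\in\mathbb{R}}\{E_Q[g-\nu]-E_P[f^*(g-\nu)]\}.
\end{align*}
Using $E_Q[g-\nu]=E_Q[g]-\nu$ and pulling the supremum over $\nu$ inside as an infimum of $\{\nu+E_P[f^*(g-\nu)]\}$ recovers the second form via the definition of $\Lambda_f^P[g]$. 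So the real work is establishing the first equality, which I would do by proving matching upper and lower bounds.

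For the upper bound, I would apply the Fenchel–Young inequality $xy\leq f(x)+f^*(y)$ (valid for all $x,y\in\mathbb{R}$ since $f^*=\sup_x\{xy-f(x)\}$). When $Q\ll P$ with density $\rho=dQ/dP$, apply this pointwise with $x=\rho(\omega)$, $y=g(\omega)$ and integrate against $P$; noting that $g\in\mathcal{M}_b(\Omega)$ implies the required integrability on the left-hand side, this yields $E_Q[g]-E_P[f^*(g)]\leq E_P[f(\rho)]=D_f(Q\|P)$. When $Q\not\ll P$ the right-hand side is $\infty$ and nothing is needed.

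For the lower bound I would split into cases. If $Q\not\ll P$, the Lebesgue decomposition produces a measurable set $A$ with $P(A)=0$ and $Q(A)>0$; taking $g_n=n1_A\in\mathcal{M}_b(\Omega)$ gives $E_Q[g_n]=nQ(A)$ while $E_P[f^*(g_n)]=f^*(n)P(A)+f^*(0)P(A^c)=f^*(0)$ (finite), and the sup is $+\infty=D_f(Q\|P)$ as required. If $Q\ll P$ with $\rho=dQ/dP$, the natural candidate optimizer is $g_*=f'_+(\rho)$, which by Lemma~\ref{lemma:f_star_formula} (when $\rho\in\{f<\infty\}^o$) attains the Fenchel–Young equality $g_*\rho-f^*(g_*)=f(\rho)$ almost surely $P$; on the boundary $\rho\in\{a,b\}$ I would handle things by taking appropriate one-sided limits using Lemma \ref{lemma:convex_taylor}. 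Then set $g_n=(-n)\vee(g_*\wedge n)\in\mathcal{M}_b(\Omega)$, so that $g_n\to g_*$ pointwise and $g_n\rho-f^*(g_n)\to f(\rho)$ pointwise. By Fenchel–Young, $g_n\rho-f^*(g_n)\leq f(\rho)$ for every $n$, providing a uniform upper bound; combined with a uniform lower bound obtained from convexity of $f$ (the tangent bound $f(x)\geq f'_+(1)(x-1)$ makes $f(\rho)^-\in L^1(P)$ since $\int\rho\,dP=1$), I would invoke either dominated convergence (when $D_f(Q\|P)<\infty$) or monotone convergence on the positive part (when $D_f(Q\|P)=\infty$, in which case the limit is also $+\infty$) to conclude $E_Q[g_n]-E_P[f^*(g_n)]\to D_f(Q\|P)$.

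The main obstacle will be the convergence step in the $Q\ll P$ case: controlling the integrability of the truncated sequence when $g_*$ or $f^*(g_*)$ may be unbounded and handling the corner cases where $\rho$ accumulates at the boundary of $(a,b)$ so that $f'_+(\rho)$ is infinite. The cleanest way to navigate this is probably to decompose $\Omega$ into the regions $\{\rho\leq 1\}$ and $\{\rho>1\}$: on each region $g_n$ converges to $g_*$ monotonically (in opposite directions, by monotonicity of $f'_+$) and $f(\rho)$ has a definite sign via Lemma~\ref{lemma:convex_taylor}, which reduces the limit passage to two clean applications of monotone convergence and avoids any delicate integrability hypothesis on $g_*$ itself.
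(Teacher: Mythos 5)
Your overall architecture is sound and the second equality (shift invariance) and the Fenchel--Young upper bound are exactly what the paper does. However, there are two concrete soft spots in the lower bound. First, in the $Q\not\ll P$ case your choice $g_n=n1_A$ silently assumes $f^*(0)<\infty$, which fails for legitimate members of $\mathcal{F}_1(a,b)$: for the reverse KL generator $f(x)=-\log x$ one has $f^*(y)=\infty$ for all $y\geq 0$, so $E_P[f^*(g_n)]=\infty$ and your objective evaluates to $-\infty$ rather than $+\infty$. The paper's version of this step fixes a $y_0$ with $f^*(y_0)\in\mathbb{R}$ and uses $g=R1_A+y_01_{A^c}$; you need the same patch. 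Second, in the $Q\ll P$ case your candidate $g_*=f'_+(\rho)$ is only defined where $\rho\in(a,b)$. You mention the boundary $\rho\in\{a,b\}$, but not the set $\{\rho\notin[a,b]\}$ (e.g.\ $\rho>b$ when $b<\infty$, or $\rho<a$ when $a>0$), where $f(\rho)=\infty$ by the extension in Definition~\ref{def:F_1} and hence $D_f(Q\|P)=\infty$; your construction must be supplemented there (e.g.\ by sending $g\to+\infty$ on $\{\rho>b\}$ and checking $n\rho-f^*(n)\to\infty$ via the tangent bound $f(x)\geq f'_+(1)(x-1)$). A related issue is that the truncated values $\pm n$ need not lie in $\{f^*<\infty\}$, in which case $f^*(g_n)=\infty$ and the corresponding term is useless; your monotone-convergence argument on $\{\rho\leq 1\}$ and $\{\rho>1\}$ does work, but only after verifying that the truncation levels stay inside $\{f^*<\infty\}$ (which holds on the relevant side because $f^*(f'_+(1))=f'_+(1)<\infty$ and $\{f^*<\infty\}$ is an interval, but this needs to be said).

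For comparison, the paper's proof of Proposition~\ref{prop:Df_var_formula} avoids the explicit optimizer entirely: it exhausts $I=\{f^*<\infty\}$ by compact intervals $I_n$, chooses measurable $g_n$ with values in $I_n$ that achieve $\sup_{y\in I_n}\{y\rho-f^*(y)\}$ up to $1/n$, and passes to the limit with Fatou's lemma using the integrable minorant $y_0\rho-f^*(y_0)$. This automatically keeps every test function inside $\{f^*<\infty\}$, needs no case analysis on where $\rho$ lies relative to $[a,b]$, and requires no differentiability or subdifferential facts about $f$. Your route via $f'_+(\rho)$ is genuinely different and can be made to work --- and it has the pedagogical advantage of exhibiting the (near-)optimizer $g_*=f'(\rho)$ of \req{eq:Df_optimizer} --- but as written it leaves the cases above unresolved, and the reverse-KL counterexample shows at least one of them is not merely cosmetic.
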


\begin{proof}
Lemma \ref{eq:f_star_lb} implies that $f^*(y)\geq y$ and so for $ g\in\mathcal{M}_b(\Omega)$ we  have $E_P[f^*( g)]\geq E_P[ g]>-\infty$. Therefore the objective functional in \req{eq:Df_var_app} is well defined. Fix $y_0\in\mathbb{R}$ with $f^*(y_0)\in\mathbb{R}$ and first consider the case $Q\not\ll P$.   Then there exists a measurable set $A$ with $P(A)=0$ and $Q(A)>0$. If we define $ g=R1_A+y_01_{A^c}$ then $g$ is bounded, measurable, and 
\begin{align}
&E_Q[ g]-E_{P}[f^*( g)]=RQ(A)+y_0Q(A^c)-f^*(y_0)P(A^c)
\end{align}
for all $R$.  Hence
\begin{align}
&\sup_{ g\in\mathcal{M}_b(\Omega)}\{E_Q[ g]-E_{P}[f^*( g)]\}\geq \lim_{R\to\infty}(RQ(A)+y_0Q(A^c)-f^*(y_0)P(A^c))=\infty\,.
\end{align}
Therefore $ \sup_{ g\in\mathcal{M}_b(\Omega)}\{E_Q[ g]-E_{P}[f^*( g)]\}=\infty=D_f(Q\|P)$.

Now suppose $Q\ll P$: $f$ is convex and LSC on $\mathbb{R}$, hence we can use convex duality to compute
\begin{align}
E_Q[ g]-E_P[f^*( g)]=&E_P[ g dQ/dP-f^*( g)]\leq E_P[\sup_{y\in\mathbb{R}}\{ydQ/dP-f^*(y)\}]\\
=&E_P[(f^*)^*(dQ/dP)]=E_P[f(dQ/dP)]=D_f(Q\|P)\notag
\end{align}
for all $ g\in\mathcal{M}_b(\Omega)$.
Therefore it suffices to show $\sup_{ g\in\mathcal{M}_{b}(\Omega)}\{E_Q[ g]-E_P[f^*( g)]\}\geq D_f(Q\|P)$.  Let $I\equiv\{f^*<\infty\}$. This is a nonempty interval in $\mathbb{R}$, hence we can find compact intervals $I_n\subset I_{n+1}\subset I$ with $\cup_n I_n=I$. $f^*$ is convex and LSC on $\mathbb{R}$, hence it is continuous on ${\overline{I}}$. In particular, $y\to yx-f^*(y)$ is continuous on the compact set $I_n$.  Therefore there exists measurable $ g_{n}:\Omega\to I_n$ such that $| g_n dQ/dP-f^*( g_{n})-\sup_{y\in I_n}\{ydQ/dP-f^*(y)\}|<1/n$. The functions  $ g_{n}$ are also bounded since $\range(g_n)\subset I_n$, a compact subset of $\mathbb{R}$, hence
\begin{align}
&\sup_{ g\in\mathcal{M}_{b}(\Omega)}\{E_Q[ g]-E_P[f^*( g)]\}\geq E_Q[ g_{n}]-E_P[f^*( g_{n})]=E_P[ dQ/dP  g_n-f^*( g_n)]\\
\geq &E_P[\sup_{y\in I_n}\{y dQ/dP-f^*(y)\}]-1/n\notag
\end{align}
  for all $n$. Therefore
\begin{align}
&\sup_{ g\in\mathcal{M}_{b}(\Omega)}\{E_Q[ g]-E_P[f^*( g)]\}\geq  \liminf_{n\to\infty}E_P[ \sup_{y\in I_n}\{y dQ/dP-f^*(y)\}]\,.
\end{align}
Fix a large enough $N$ such that $y_0\in I_N$. Then for $n\geq N$ we have $\sup_{y\in I_n}\{y dQ/dP-f^*(y)\}\geq y_0 dQ/dP-f^*(y_0)\in L^1(P)$  (recall that  $f^*(y_0)$ is finite). Therefore we can use Fatou's Lemma to compute
\begin{align}
\liminf_{n\to\infty} E_P[ \sup_{y\in I_n}\{y dQ/dP-f^*(y)\} ]\geq& E_P[\liminf_{n\to\infty}  \sup_{y\in I_n}\{y dQ/dP-f^*(y)\} ]=E_P[\sup_{y\in I}\{ydQ/dP-f^*(y)\}]\\
=&E_P[(f^*)^*(dQ/dP)]=D_f(Q\|P)\,.\notag
\end{align}
This completes the proof of the first equality.  To prove the second we compute
\begin{align}
  \sup_{ g\in \mathcal{M}_b(\Omega)}\{ E_Q[ g]-\Lambda_f^P[g]\}
  =&\sup_{ g\in \mathcal{M}_b(\Omega)}\{ E_Q[ g]-\inf_{\nu\in\mathbb{R}}\{\nu+E_P[f^*(g-\nu)]\}\}\\
    =&\sup_{ g\in \mathcal{M}_b(\Omega),\nu\in\mathbb{R}}\{E_Q[ g-\nu]-E_P[f^*(g-\nu)]\}\notag\\
        =&\sup_{ g\in \mathcal{M}_b(\Omega)}\{E_Q[ g]-E_P[f^*(g)]\}\,,
\end{align}
where in the last line we used   the fact that the map $\mathbb{R}\times\mathcal{M}_b(\Omega)\to\mathcal{M}_b(\Omega)$, $(\nu,g)\mapsto g-\nu$ is surjective.
\end{proof}
On a metric space, and assuming $f^*$ is everywhere finite, one can restrict the optimization in \eqref{eq:Df_var_app} to the set of bounded continuous functions.
\begin{corollary}\label{cor:Df_LSC}
Let   $f\in\mathcal{F}_1(a,b)$, $S$ be a metric space, and $Q,P\in\mathcal{P}(S)$. If $\{f^*<\infty\}=\mathbb{R}$  then
\begin{align}\label{eq:Df_var_Cb}
D_f(Q\|P)=\sup_{ g\in C_b(S)}\{ E_Q[ g]-E_P[f^*( g)]\}\,.
\end{align} 
In particular, $(Q,P)\mapsto D_f(Q\|P)$ is  lower semicontinuous.
\end{corollary}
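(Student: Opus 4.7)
The plan is to upgrade the variational formula in Proposition \ref{prop:Df_var_formula} from $\mathcal{M}_b(S)$-test functions to $C_b(S)$-test functions via a Lusin-type approximation, then derive lower semicontinuity immediately from the form of the resulting expression. The inclusion $C_b(S)\subset\mathcal{M}_b(S)$ gives the trivial direction, so the work is to show
\[
\sup_{g\in C_b(S)}\{E_Q[g]-E_P[f^*(g)]\}\;\geq\;\sup_{g\in\mathcal{M}_b(S)}\{E_Q[g]-E_P[f^*(g)]\}.
\]

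First I would fix an arbitrary $g\in\mathcal{M}_b(S)$ with $\|g\|_\infty\le M$ and apply Lusin's theorem to the finite Borel measure $\mu\equiv Q+P$: for each $n$ there exists a closed set $F_n\subset S$ with $\mu(S\setminus F_n)<1/n$ such that $g|_{F_n}$ is continuous. By Tietze's extension theorem (truncating the extension at height $M$ if necessary) I obtain $g_n\in C_b(S)$ with $\|g_n\|_\infty\le M$ and $g_n=g$ on $F_n$. By the Borel--Cantelli argument (or by extracting a subsequence) I may assume $g_n\to g$ both $Q$-a.s.\ and $P$-a.s.

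Next I pass to the limit in $E_Q[g_n]-E_P[f^*(g_n)]$. The hypothesis $\{f^*<\infty\}=\mathbb{R}$ together with the convexity and lower semicontinuity of $f^*$ (Lemma \ref{lemma:f_star_cont}) implies that $f^*$ is continuous on all of $\mathbb{R}$, hence bounded on $[-M,M]$; set $C_M\equiv\sup_{|y|\le M}|f^*(y)|<\infty$. Since $|g_n|\le M$ and $|f^*(g_n)|\le C_M$ uniformly in $n$, the dominated convergence theorem yields $E_Q[g_n]\to E_Q[g]$ and $E_P[f^*(g_n)]\to E_P[f^*(g)]$. Therefore
\[
\sup_{h\in C_b(S)}\{E_Q[h]-E_P[f^*(h)]\}\;\geq\;E_Q[g]-E_P[f^*(g)],
\]
and taking the supremum over $g\in\mathcal{M}_b(S)$ and combining with Proposition \ref{prop:Df_var_formula} gives the equality \eqref{eq:Df_var_Cb}.

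Finally, the lower semicontinuity assertion falls out for free. For each fixed $g\in C_b(S)$, both $g$ and $f^*(g)$ lie in $C_b(S)$ (again using continuity of $f^*$ on $\mathbb{R}$), so the maps $Q\mapsto E_Q[g]$ and $P\mapsto E_P[f^*(g)]$ are continuous in the weak topology on $\mathcal{P}(S)$; hence $(Q,P)\mapsto E_Q[g]-E_P[f^*(g)]$ is continuous on $\mathcal{P}(S)\times\mathcal{P}(S)$. The supremum of a family of continuous functions is lower semicontinuous, so $(Q,P)\mapsto D_f(Q\|P)$ is LSC. The only delicate step is the Lusin/Tietze approximation, where one must be careful to preserve the uniform bound $\|g_n\|_\infty\le M$; this is essential because it is precisely this bound that produces the integrable dominating constant $C_M$ needed to handle $f^*(g_n)$, and in turn this is where the hypothesis $\{f^*<\infty\}=\mathbb{R}$ enters in an essential way.
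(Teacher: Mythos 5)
Your proof is correct and follows essentially the same route as the paper: starting from Proposition \ref{prop:Df_var_formula}, approximating bounded measurable test functions by bounded continuous ones via Lusin's theorem (the metric-space version), using continuity of $f^*$ on $\mathbb{R}$ from the hypothesis $\{f^*<\infty\}=\mathbb{R}$ to pass to the limit by dominated convergence, and then reading off lower semicontinuity as a supremum of continuous functionals. You merely spell out the Tietze/Borel--Cantelli details that the paper leaves implicit.
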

\begin{proof}
To prove \req{eq:Df_var_Cb} we start with \req{eq:Df_var_app} and use the extension of Lusin's theorem found in Appendix D of \cite{dudley2014uniform} (which  applies to an arbitrary metric space) to approximate  bounded measurable functions with bounded continuous functions. The assumption  $\{f^*<\infty\}$ implies $f^*( g)\in C_b(S)$ for all $ g\in C_b(S)$ and so $(Q,P)\mapsto E_Q[ g]-E_P[f^*( g)]$ is continuous.  The supremum over $ g$ is therefore lower semicontinuous.
\end{proof}
One can further restrict the optimization to Lipschitz functions.
\begin{corollary}
Let   $f\in\mathcal{F}_1(a,b)$, $S$ be a metric space, $Q,P\in\mathcal{P}(S)$. If $\{f^*<\infty\}=\mathbb{R}$   then
\begin{align}\label{eq:Df_Lip_b}
D_f(Q\|P)=\sup_{ g\in \Lip_b(S)}\{ E_Q[ g]-E_P[f^*( g)]\}\,,
\end{align} 
where $\Lip_b(S)$ denotes the set of real-valued bounded Lipschitz functions on $S$.

\end{corollary}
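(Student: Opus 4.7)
The inclusion $\Lip_b(S)\subset C_b(S)$ immediately gives one inequality: the supremum over $\Lip_b(S)$ is bounded above by the supremum over $C_b(S)$, which equals $D_f(Q\|P)$ by Corollary \ref{cor:Df_LSC}. The task is therefore to prove the reverse inequality, namely that for every $g\in C_b(S)$ the value $E_Q[g]-E_P[f^*(g)]$ can be approached by functions in $\Lip_b(S)$.

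To achieve this, my plan is to introduce the Moreau–Yosida (inf-convolution) regularization
\begin{align*}
g_n(x)\equiv \inf_{y\in S}\{g(y)+n\,d(x,y)\}\,,\qquad n\in\mathbb{Z}_+\,.
\end{align*}
Standard arguments show that $g_n$ is $n$-Lipschitz, that the uniform bounds of $g$ are preserved ($\inf g\le g_n\le\sup g$, the upper bound obtained by taking $y=x$), and that $g_n\nearrow g$ pointwise as $n\to\infty$ (use continuity of $g$ to localize the infimum to a small ball around $x$ once $n$ is large enough, and combine with $g_n\le g$). In particular $g_n\in \Lip_b(S)$.

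It then remains to pass to the limit in the objective functional. Because $\{g_n\}$ is uniformly bounded and converges pointwise to $g$, dominated convergence yields $E_Q[g_n]\to E_Q[g]$. For the second term, the hypothesis $\{f^*<\infty\}=\mathbb{R}$ together with convexity makes $f^*$ continuous on all of $\mathbb{R}$, and in particular uniformly bounded on the compact interval containing the ranges of all $g_n$. Hence $f^*(g_n)\to f^*(g)$ pointwise and is dominated by a constant, so $E_P[f^*(g_n)]\to E_P[f^*(g)]$. Combining these limits,
\begin{align*}
\sup_{h\in\Lip_b(S)}\{E_Q[h]-E_P[f^*(h)]\}\ge \lim_{n\to\infty}\{E_Q[g_n]-E_P[f^*(g_n)]\}=E_Q[g]-E_P[f^*(g)]\,.
\end{align*}
Taking the supremum over $g\in C_b(S)$ and invoking Corollary \ref{cor:Df_LSC} closes the argument.

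The only subtlety worth noting is the need for the two dominated-convergence applications to be genuinely available, which is why the hypothesis $\{f^*<\infty\}=\mathbb{R}$ is essential; without it, $f^*(g_n)$ could take the value $+\infty$ and the Moreau envelope argument would not directly transfer the approximation from $C_b(S)$ to $\Lip_b(S)$. No other step poses a real obstacle, as the Moreau envelope construction works on any metric space and does not require completeness or separability of $S$.
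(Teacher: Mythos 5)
Your proposal is correct and follows essentially the same route as the paper: reduce to $C_b(S)$ via Corollary \ref{cor:Df_LSC}, approximate each $g\in C_b(S)$ pointwise by bounded Lipschitz functions with $\|g_n\|_\infty\le\|g\|_\infty$ (the paper cites this approximation fact from \cite{santambrogio2015optimal}, which is precisely the Moreau--Yosida construction you use), and pass to the limit. Your write-up merely makes explicit the dominated-convergence step that the paper leaves implicit.
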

\begin{proof}
The result follows from Corollary \ref{cor:Df_LSC}, together with the fact that every $ g\in C_b(S)$ is the pointwise limit of Lipschitz functions, $ g_n$, with $\| g_n\|_\infty\leq \| g\|_\infty$ (see Box 1.5 on page 6 of \cite{santambrogio2015optimal}).
\end{proof}
\begin{remark}\label{remark:Df_nu_shift}
Due to the invariance of the spaces $\mathcal{M}_b(S)$, $C_b(S)$, and $\Lip_b(S)$ under shifting by a constant, one can replace $E_P[f^*( g)]$ in any of \req{eq:Df_var_app}, \req{eq:Df_var_Cb}, or \req{eq:Df_Lip_b} by $\inf_{\nu\in\mathbb{R}}\{\nu+E_P[f^*( g-\nu)]\}$ without changing the left-hand-side.
\end{remark}

\begin{lemma}\label{lemma:Df_compact_sublevel}
Let $f\in\mathcal{F}_1(a,b)$, $S$ be a Polish space, and $P\in \mathcal{P}(S)$.  If $\{f^*<\infty\}=\mathbb{R}$ then  the map $\mathcal{P}(S)\to[0,\infty]$, $Q\mapsto D_f(Q\|P)$ has compact sublevel sets.
\end{lemma}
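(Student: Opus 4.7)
The plan is to combine the lower semicontinuity of $Q\mapsto D_f(Q\|P)$ with a tightness estimate, and then invoke Prokhorov's theorem to conclude compactness of the sublevel sets in $\mathcal{P}(S)$ (which carries the Prohorov/weak topology).

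First, fix $c\ge 0$ and let $\Sigma_c\equiv\{Q\in\mathcal{P}(S):D_f(Q\|P)\le c\}$. Since $\{f^*<\infty\}=\mathbb{R}$, Corollary \ref{cor:Df_LSC} gives that $Q\mapsto D_f(Q\|P)$ is lower semicontinuous on $\mathcal{P}(S)$, so $\Sigma_c$ is closed. By Prokhorov's theorem, it then suffices to prove that $\Sigma_c$ is tight.

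For the tightness step I would exploit the variational formula \eqref{eq:Df_var_app} with the bounded measurable test function $g=t\,\mathbf{1}_{K^c}$, where $t>0$ and $K\subset S$ is compact. A direct computation yields
\begin{align*}
D_f(Q\|P)\ \ge\ E_Q[g]-E_P[f^*(g)]\ =\ t\,Q(K^c)-f^*(t)P(K^c)-f^*(0)P(K),
\end{align*}
so that for every $Q\in\Sigma_c$,
\begin{align*}
Q(K^c)\ \le\ \frac{c+|f^*(0)|}{t}\ +\ \frac{f^*(t)}{t}\,P(K^c).
\end{align*}
Given $\varepsilon>0$, I would first pick $t$ so large that $(c+|f^*(0)|)/t<\varepsilon/2$; this $t$ depends only on $c$, $f$ and $\varepsilon$. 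Since $P$ is a Borel probability measure on a Polish space, Ulam's theorem gives a compact $K_\varepsilon\subset S$ with $P(K_\varepsilon^c)$ smaller than $\varepsilon t/(2\max\{1,f^*(t)\})$, producing $Q(K_\varepsilon^c)<\varepsilon$ uniformly over $Q\in\Sigma_c$. This is exactly the tightness of $\Sigma_c$.

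The main obstacle is the tightness estimate: one has to simultaneously control $P(K^c)$ (via tightness of $P$) and the factor $f^*(t)/t$ coming from the dual side. The key observation that unlocks the argument is that $t$ can be chosen first, depending only on $c$ and $\varepsilon$, so that $f^*(t)$ is then a fixed finite number (this is where $\{f^*<\infty\}=\mathbb{R}$ enters crucially); after that one shrinks $P(K^c)$ freely using tightness of $P$. A minor technical point is that $g=t\,\mathbf{1}_{K^c}$ is only bounded measurable rather than continuous, but the variational formula \eqref{eq:Df_var_app} is stated precisely over $\mathcal{M}_b(\Omega)$, so this is admissible and no Lusin-type regularization is needed. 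Combining closedness and tightness via Prokhorov's theorem then yields compactness of $\Sigma_c$, completing the proof.
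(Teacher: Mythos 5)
Your proposal is correct and follows essentially the same route as the paper's proof: closedness of the sublevel set via lower semicontinuity (Corollary \ref{cor:Df_LSC}), tightness via the variational formula \eqref{eq:Df_var_app} with the test function $d\,\mathbf{1}_{K^c}$ (choosing $d$ first using $\{f^*<\infty\}=\mathbb{R}$, then shrinking $P(K^c)$ by tightness of $P$), and Prokhorov's theorem. The only differences are cosmetic bookkeeping of the $f^*(0)$ term.
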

\begin{proof}
Let $c \in\mathbb{R}$ and consider the sublevel set $L_c =\{Q:D_f(Q\|P)\leq c \}$. If $c <0$ then $L_c =\emptyset$   and the claim is trivial, hence let $c \geq 0$. Corollary \ref{cor:Df_LSC} implies that  the map $\mathcal{P}(S)\times\mathcal{P}(S)\to [0,\infty]$, $(Q,P)\mapsto D_f(Q\|P)$ is LSC, therefore $L_c $ is closed. By Prohorov's theorem, if $L_c $ is tight then $L_c $ is precompact which will complete the proof: $S$ is Polish, hence $P$ is tight. Therefore for every $\delta>0$ there exists a compact set $K$ such that $P(K^c)\leq \delta$. Given $\epsilon>0$ choose $d>0$ large enough that $(c +f^*(0))/d\leq \epsilon/2$ and choose  $\delta>0$ such that $\frac{1}{d}(f^*(d)-f^*(0))\delta\leq\epsilon/2$.  Then for any $Q\in L_c $, letting $ g=d1_{K^c}$ (a bounded measurable function) and using the variational formula \eqref{eq:Df_var_app} we find
\begin{align}
c \geq D_f(Q\|P)\geq E_Q[ g]-E_P[f^*( g)]= d Q(K^c)-f^*(0) -(f^*(d)  -f^*(0))P(K^c)\,.
\end{align}
Hence
\begin{align}
Q(K^c)\leq (f^*(0) +c )/d+d^{-1}(f^*(d)  -f^*(0))P(K^c)\leq \epsilon\,.
\end{align}
Therefore we conclude that $L_c $ is tight. This completes the proof.
\end{proof}

In light of Corollary \ref{cor:Df_LSC} and Lemma \ref{lemma:Df_compact_sublevel}, it is useful to have  simple conditions that   ensure $\{f^*<\infty\}=\mathbb{R}$; see Lemma \ref{lemma:dom=R} above for such a result.

Next we show that  $D_f(Q\|P)$ is strictly convex in $Q$ when $f$ is strictly convex.
\begin{lemma}\label{lemma:strictly_convex}
Let $f\in\mathcal{F}_1(a,b)$ be strictly convex and $P\in\mathcal{P}(\Omega)$. Then $Q\mapsto D_f(Q\|P)$ is strictly convex on $\{Q:D_f(Q\|P)<\infty\}$.
\end{lemma}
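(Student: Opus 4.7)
The plan is to work directly with the primal definition \eqref{eq:Df_def}. Fix $Q_0,Q_1\in\mathcal{P}(\Omega)$ with $Q_0\neq Q_1$ and $D_f(Q_i\|P)<\infty$ for $i=0,1$; finiteness forces $Q_i\ll P$, and I set $r_i=dQ_i/dP$. For $t\in(0,1)$ let $Q_t=tQ_1+(1-t)Q_0$. Then $Q_t\ll P$ with $r_t:=dQ_t/dP=tr_1+(1-t)r_0$. Pointwise convexity of $f$ gives
\begin{equation*}
f(r_t)\leq tf(r_1)+(1-t)f(r_0),
\end{equation*}
which integrates (after justifying integrability, see below) to give the ordinary (non-strict) convexity of $Q\mapsto D_f(Q\|P)$; in particular $D_f(Q_t\|P)<\infty$.

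Since $Q_0\neq Q_1$, the measurable set $A:=\{r_0\neq r_1\}$ satisfies $P(A)>0$ (otherwise $r_0=r_1$ $P$-a.s., forcing $Q_0=Q_1$). On $A$, strict convexity of $f$ yields the strict pointwise inequality $f(r_t)<tf(r_1)+(1-t)f(r_0)$. Writing $h_t:=tf(r_1)+(1-t)f(r_0)-f(r_t)\geq 0$, I have $h_t>0$ on $A$, so $\int h_t\,dP>0$, which rearranges to
\begin{equation*}
D_f(Q_t\|P)<tD_f(Q_1\|P)+(1-t)D_f(Q_0\|P).
\end{equation*}

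The two technical points to dispatch are integrability and the effective domain. For integrability, Lemma \ref{lemma:convex_taylor} gives $f(r_i)=f'_+(1)(r_i-1)+R_f(1,r_i)$ with $R_f(1,\cdot)\geq 0$; since $E_P[r_i]=1$, one has $E_P[R_f(1,r_i)]=D_f(Q_i\|P)<\infty$, so $f(r_i)\in L^1(P)$ for $i=0,1$, and the same argument applied to $r_t$ gives $f(r_t)\in L^1(P)$. Thus $h_t\in L^1(P)$ and integration of the pointwise inequality is legitimate. For the strict convexity step, the finiteness of $D_f(Q_i\|P)$ forces $r_i\in[a,b]$ $P$-a.s. and $f(r_i)<\infty$ $P$-a.s.; on the subset of $A$ where all of $r_0,r_1,r_t$ lie in $(a,b)$, strict convexity of $f$ on $(a,b)$ applies directly, while on the $P$-null complement the inequality is trivial. (If an endpoint $a$ or $b$ is finite and included in the effective domain with finite $f$-value, the slope-monotonicity characterization of strict convexity extends strict convexity to that endpoint by a routine argument, so no essential difficulty arises.)

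I expect the main obstacle to be only the careful bookkeeping of integrability and endpoint behavior; the conceptual content is simply transferring pointwise strict convexity of $f$ through the integral, which is straightforward once the three densities are known to produce $L^1(P)$ values of $f$.
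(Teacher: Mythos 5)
Your proof is correct and follows essentially the same route as the paper's: both arguments reduce to the pointwise strict inequality $f(tr_1+(1-t)r_0)<tf(r_1)+(1-t)f(r_0)$ on the positive-$P$-measure set where $r_0\neq r_1$, and then integrate. The only cosmetic difference is that the paper obtains finiteness of $D_f(Q_t\|P)$ from convexity via the variational formula \eqref{eq:Df_var_app} while you derive it (and the needed $L^1$ facts) directly from the affine-plus-remainder decomposition of Lemma \ref{lemma:convex_taylor}, and your endpoint remark matches the paper's opening observation that strict convexity on $(a,b)$ passes to the LSC extension on $\{f<\infty\}$.
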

\begin{proof}
First note that strict convexity of $f$ on $(a,b)$ implies strict convexity of the convex LSC extension (also denoted by $f$) on  $\{f<\infty\}$.  Fix distinct $Q_0,Q_1\in \{D_f(Q\|P)<\infty\}$ and $t\in(0,1)$, and define  $Q_t=tQ_1+(1-t)Q_0$.   Convexity of     $Q\mapsto D_f(Q\|P)$ (which follows from \req{eq:Df_var_app}) implies $D_f(Q_t\|P)<\infty$.  

Define $F=\{f(dQ_1/dP)<\infty\text{ and }f(dQ_0/dP)<\infty\}$ and $G=\{dQ_1/dP\neq dQ_0/dP\}$. Finiteness of $D_f(Q_i\|P)$ implies $P(F)=1$ and   $Q_0\neq Q_1$ implies $P(F\cap G)>0$.    We can write
\begin{align}
&tD_f(Q_1\|P)+(1-t)D_f(Q_0\|P)-D_f(Q_t\|P)\\
=&E_P[1_Ftf(dQ_1/dP)+1_F(1-t)f(dQ_0/dP)-1_F f(dQ_t/dP)]\,,\notag
\end{align}
where convexity of $f$ implies the integrand is non-negative and strict convexity of $f$ implies the integrand is positive on $F\cap G$. Therefore we can bound it below by integrating only over $F\cap G$, a set of positive measure.  Hence the expectation is positive and we have proven the claim.
\end{proof}

The following lemma is the key step in the proof of the Gibbs variational principle for $f$-divergences in Proposition \ref{prop:Gibbs_M1} below.
\begin{lemma}\label{lemma:Gibbs}
Let $f\in\mathcal{F}_1(a,b)$, $P$ be a probability measure on $(\Omega,\mathcal{M})$, and $ g\in \mathcal{M}_b(\Omega)$. Then
\begin{align}\label{eq:E_f_star_var}
E_P[f^*( g)]=&\sup_{h\in \mathcal{M}_b(\Omega):E_P[f(h)]<\infty}\{E_P[ g h]-E_P[f(h)]\}\\
=&\sup_{h\in \mathcal{M}_{cr}(\Omega,(a,b))}\{E_P[ g h]-E_P[f(h)]\}\,,\notag
\end{align}
where $\mathcal{M}_{cr}(\Omega,(a,b))$ denotes the set of measurable functions on $\Omega$ whose range is contained in a compact subset of $(a,b)$.

\end{lemma}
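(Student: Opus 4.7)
The proof has two directions: an upper bound (easy) and a lower bound (construction of a near-maximizing sequence). Note first that $\mathcal{M}_{cr}(\Omega,(a,b))$ is contained in $\{h\in\mathcal{M}_b(\Omega):E_P[f(h)]<\infty\}$, because any $h$ with range in a compact $K\subset(a,b)$ is bounded, and $f$ is continuous on $K$ (as a finite convex function on the open interval $(a,b)$ in which $K$ sits), so $f\circ h$ is bounded and hence integrable. Consequently the second supremum is bounded above by the first, and it suffices to show that the first supremum is $\leq E_P[f^*(g)]$ and that the second supremum is $\geq E_P[f^*(g)]$.

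\textbf{Upper bound.} For any admissible $h$ in the first family, the pointwise Fenchel inequality gives $g(\omega)h(\omega)-f(h(\omega))\leq f^*(g(\omega))$. Since $g$ and $h$ are bounded, $E_P[gh]\in\mathbb{R}$; by assumption $E_P[f(h)]<\infty$, while $f(h)$ is bounded below by an affine function of $h$ (hence by a constant), so the difference $E_P[gh]-E_P[f(h)]$ is well defined and dominated by $E_P[f^*(g)]\in(-\infty,\infty]$ (well defined since $f^*(g)\geq g$ by Lemma \ref{eq:f_star_lb}). This yields the upper bound for both suprema.

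\textbf{Lower bound via approximation on compact subsets.} Choose compact intervals $I_n\subset(a,b)$ with $I_n\subset I_{n+1}$ and $\bigcup_n I_n=(a,b)$, and fix $x_0\in I_1$. Define the pointwise partial sup
\begin{align*}
\phi_n^{\sup}(\omega)\equiv\sup_{x\in I_n}\{g(\omega)x-f(x)\}.
\end{align*}
Since $f=\infty$ outside $[a,b]$ and $I_n\nearrow(a,b)$, we have $\phi_n^{\sup}(\omega)\nearrow f^*(g(\omega))$ for every $\omega$, and $\phi_n^{\sup}(\omega)\geq g(\omega)x_0-f(x_0)$, which lies in $L^1(P)$ because $g$ is bounded. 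Monotone convergence then gives $E_P[\phi_n^{\sup}]\nearrow E_P[f^*(g)]$ in $(-\infty,\infty]$.

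\textbf{Measurable selection and conclusion.} For each $n$, take a finite grid $\{x_{n,1},\dots,x_{n,K_n}\}\subset I_n$ whose spacing is fine enough (relative to $\|g\|_\infty$ and the modulus of continuity of $f$ on $I_n$, which is finite since $f$ is continuous on the compact $I_n\subset(a,b)$) that
\begin{align*}
\sup_{x\in I_n}\{g(\omega)x-f(x)\}\;-\;\max_{1\leq i\leq K_n}\{g(\omega)x_{n,i}-f(x_{n,i})\}\;\leq\;1/n
\end{align*}
for every $\omega$, and define $h_n(\omega):=x_{n,i^*(\omega)}$ with $i^*(\omega)$ the smallest index attaining the grid-maximum. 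The indicator sets $\{i^*=i\}$ are measurable because $g$ is measurable and the grid is finite, so $h_n\in\mathcal{M}_{cr}(\Omega,(a,b))$. By construction
\begin{align*}
g(\omega)h_n(\omega)-f(h_n(\omega))\;\geq\;\phi_n^{\sup}(\omega)-1/n,
\end{align*}
and integrating yields $E_P[gh_n]-E_P[f(h_n)]\geq E_P[\phi_n^{\sup}]-1/n$, which tends to $E_P[f^*(g)]$. Taking the supremum over $n$ shows the second (hence also the first) supremum is $\geq E_P[f^*(g)]$, closing the circle.

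\textbf{Main obstacle.} The only nontrivial point is ensuring the near-maximizer $h_n$ is measurable while simultaneously taking values in a compact subset of $(a,b)$; avoiding an abstract measurable-selection theorem by using the finite-grid/continuity argument is the cleanest route. A secondary concern is guaranteeing a uniform $L^1$-lower bound so that monotone convergence applies to $\phi_n^{\sup}$, which is handled by the pointwise affine lower bound $g\cdot x_0-f(x_0)$.
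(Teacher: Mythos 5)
Your proof is correct and follows essentially the same route as the paper's: Fenchel's inequality for the upper bound, and for the lower bound a restriction to compact subintervals $I_n\nearrow(a,b)$, a measurable $1/n$-near-maximizer valued in $I_n$, and a monotone/Fatou passage to the limit using the integrable lower bound obtained by evaluating at a fixed point of $(a,b)$. The only difference is cosmetic: you make the measurable selection explicit via a finite grid and uniform continuity of $f$ on $I_n$, where the paper simply asserts the existence of such an $h_n$, and you invoke monotone convergence where the paper uses Fatou.
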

\begin{proof}
 $f$ is convex and $f(1)=0$, hence $f(x)\geq  f_+^\prime(1)(x-1)$. This implies $E_{P}[f(h)]$ exists in $(-\infty,\infty]$. Therefore the right-hand-sides of \eqref{eq:E_f_star_var} are well defined and the terms inside the suprema are finite for all $h$'s satisfying the indicated conditions. The left-hand-side is well defined in $(-\infty,\infty]$ since $f^*( g)\geq  g$ (see Lemma \ref{eq:f_star_lb}). For $h\in \mathcal{M}_b(\Omega)$ with $E_P[f(h)]<\infty$ we have
\begin{align}
E_P[ g h]-E_P[f(h)]=E_P[ g h-f(h)]\leq E_P[\sup_{x\in\mathbb{R}}\{ g x-f(x)\}]= E_P[f^*( g)]
\end{align}
and hence
\begin{align}\label{eq:EP_fstar_lb}
\sup_{h\in \mathcal{M}_b(\Omega):E_P[f(h)]<\infty}\{E_P[ g h]-E_P[f(h)]\}\leq E_P[f^*( g)]\,.
\end{align}

For the other direction, let $a<a_n<1<b_n<b$ with $a_n\searrow a$, $b_n\nearrow b$. Then
\begin{align}
f^*( g)=\sup_n\sup_{x\in[a_n,b_n]}\{ g x-f(x)\}=\lim_n \sup_{x\in[a_n,b_n]}\{ g x-f(x)\}\,.
\end{align}
 By letting $x=1$ we see that $\sup_{x\in[a_n,b_n]}\{ g x-f(x)\}\geq  g\in L^1(P)$, and therefore Fatou's Lemma implies 
\begin{align}
E_P[f^*( g)]=E_P[\lim_n \sup_{x\in[a_n,b_n]}\{ g x-f(x)\}]\leq\liminf_n E_P[\sup_{x\in[a_n,b_n]}\{ g x-f(x)\}]\,.
\end{align}
Using continuity of $ g x-f(x)$ on $x\in(a,b)$ and finiteness of $\sup_{x\in[a_n,b_n]}\{ g x-f(x)\}$  we see that there exists measurable $h_n:\Omega\to [a_n,b_n]$ such that
\begin{align}
\sup_{x\in[a_n,b_n]}\{ g x-f(x)\}\leq \frac{1}{n}+ g h_n-f(h_n)\,.
\end{align}
 $h_n\in\mathcal{M}_{cr}(\Omega,(a,b))$, hence
\begin{align}
E_P[\sup_{x\in[a_n,b_n]}\{ g x-f(x)\}]\leq \frac{1}{n}+\sup_{h\in\mathcal{M}_{cr}(\Omega,(a,b))}\{E_P[ g h]-E_P[f(h)]\}\,.
\end{align}
Therefore
\begin{align}
E_P[f^*( g)]\leq\liminf_n E_P[\sup_{x\in[a_n,b_n]}\{ g x-f(x)\}]\leq \sup_{h\in\mathcal{M}_{cr}(\Omega,(a,b))}\{E_P[ g h]-E_P[f(h)]\}\,.
\end{align}
We have
$\mathcal{M}_{cr}(\Omega,(a,b))\subset\{h\in\mathcal{M}_b(\Omega):E_P[f(h)]<\infty\}$ and so
\begin{align}
E_P[f^*( g)]\leq \sup_{h\in\mathcal{M}_{cr}(\Omega,(a,b))}\{E_P[ g h]-E_P[f(h)]\}\leq \sup_{h\in\mathcal{M}_b(\Omega):E_P[f(h)]<\infty}\{E_P[ g h]-E_P[f(h)]\}\,.
\end{align}
When combined with \req{eq:EP_fstar_lb}, this completes the proof.
\end{proof}

We can now prove the Gibbs variational formula for $f$-divergences in full generality; note that Corollary \ref{corr:Gibbs_a_pos}, which covers the case where $a\geq 0$, as  proven in \cite{BenTal2007}, but to the best of our knowledge the case \eqref{eq:Df_Gibbs_M1}, which covers $a< 0$, is new:
\begin{proposition}\label{prop:Gibbs_M1}
Let $f\in\mathcal{F}_1(a,b)$, $P\in\mathcal{P}(\Omega)$, and  $ g\in \mathcal{M}_b(\Omega)$. Then
\begin{align}\label{eq:Df_Gibbs_M1}
\sup_{h\in \mathcal{M}_b(\Omega):E_P[h]=1, E_P[f(h)]<\infty}\{E_P[ g h]-E_P[f(h)]\}=\inf_{\nu\in\mathbb{R}}\{\nu+E_P[f^*( g-\nu)]\}\,.
\end{align}
\end{proposition}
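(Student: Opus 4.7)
The plan is to establish the two inequalities separately. Denote the right-hand side by $R$ and set $\Psi(\nu)=\nu+E_P[f^*(g-\nu)]$.

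\textbf{Weak duality ($\text{LHS}\leq R$).} For any admissible $h$ and any $\nu\in\mathbb{R}$, Young's inequality $(g-\nu)h-f(h)\leq f^*(g-\nu)$ (valid pointwise on $\{f<\infty\}$, where $h$ lives $P$-a.s.\ since $E_P[f(h)]<\infty$) integrates to
$$E_P[gh]-E_P[f(h)] \;=\; \nu E_P[h] + E_P\bigl[(g-\nu)h - f(h)\bigr] \;\leq\; \nu + E_P[f^*(g-\nu)],$$
where we used the constraint $E_P[h]=1$. Taking the supremum over $h$ and then the infimum over $\nu$ gives $\text{LHS}\leq R$.

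\textbf{Strong duality ($\text{LHS}\geq R$).} I plan to avoid working directly with a saddle point (which need not exist) and instead run a concave biconjugate argument for the value function
$$V(t) \;=\; \sup\bigl\{E_P[gh]-E_P[f(h)] : h\in\mathcal{M}_b(\Omega),\ E_P[h]=1+t,\ E_P[f(h)]<\infty\bigr\},$$
so that the left-hand side of \eqref{eq:Df_Gibbs_M1} equals $V(0)$. Mixing near-optimal $h_1,h_2$ convexly yields a feasible candidate for the intermediate constraint, and convexity of $f$ delivers $V(\lambda t_1+(1-\lambda)t_2)\geq\lambda V(t_1)+(1-\lambda)V(t_2)$, so $V$ is concave. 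For $|t|$ small enough that $1+t\in(a,b)$, the constant function $h\equiv 1+t$ is feasible and certifies $V(t)\geq(1+t)E_P[g]-f(1+t)>-\infty$; hence $0\in\operatorname{int}(\operatorname{dom}(V))$, and since a concave function on $\mathbb{R}$ is continuous on the interior of its effective domain, $V$ is continuous (in particular upper semicontinuous) at $0$.

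Next, compute the concave conjugate $V^*(\nu)=\inf_t\{\nu t - V(t)\}$. Every $h\in\mathcal M_b(\Omega)$ with $E_P[f(h)]<\infty$ is feasible for the defining supremum of $V(E_P[h]-1)$, so the double infimum in $t$ and $h$ collapses into a single infimum over $h$, and Lemma~\ref{lemma:Gibbs} identifies
$$V^*(\nu) \;=\; -\nu - \sup_{h\in\mathcal M_b(\Omega):\,E_P[f(h)]<\infty}\bigl\{E_P[(g-\nu)h]-E_P[f(h)]\bigr\} \;=\; -\bigl(\nu+E_P[f^*(g-\nu)]\bigr).$$
Consequently $V^{**}(0)=\inf_\nu\{-V^*(\nu)\}=\inf_\nu\Psi(\nu)=R$. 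The concave biconjugate theorem says that $V^{**}$ agrees with $V$ at every point of upper semicontinuity of the concave proper function $V$, so the continuity established above gives $V(0)=V^{**}(0)=R$, finishing the proof.

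\textbf{Main obstacle.} The nontrivial ingredient is strong duality at $t=0$, which can fail when $\Psi$ has no minimizer; this is subtle precisely when $a<0$ (the novel case relative to \cite{BenTal2007}), since then feasible $h$'s need not be probability densities and the correspondence between the left-hand supremum and a supremum over $\mathcal P(\Omega)$ breaks down. The value-function detour circumvents the need for a saddle point by reducing everything to the far weaker requirement $0\in\operatorname{int}(\operatorname{dom}(V))$, which is automatic from the feasibility of $h\equiv 1+t$ for small $|t|$, itself guaranteed by the defining property $a<1<b$ of $\mathcal F_1(a,b)$.
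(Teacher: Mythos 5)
Your proof is correct, and it rests on the same two pillars as the paper's argument: Lagrangian duality for the single affine constraint $E_P[h]=1$, and Lemma \ref{lemma:Gibbs} to identify the resulting dual function as $\nu+E_P[f^*(g-\nu)]$. The only real difference is how strong duality is established. The paper sets up $F[h]=E_P[f(h)]-E_P[gh]$ on $X=\{h\in\mathcal M_b(\Omega):E_P[f(h)]<\infty\}$ with constraint map $H[h]=1-E_P[h]$ and cites Luenberger's Lagrange duality theorem under the Slater condition, which holds precisely because the constants $1+t\in(a,b)$ belong to $X$, so $0$ is interior to $H[X]$. You instead prove strong duality by hand through the perturbed value function $V(t)$: concavity, finiteness near $t=0$ via the same constant-function feasibility, continuity at $0$, and the scalar Fenchel--Moreau theorem. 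The two constraint qualifications are literally the same fact; what your route buys is self-containedness (only one-dimensional convex analysis is needed) at the cost of a longer write-up. Two small points to tidy: first, dispose of the case $V(0)=+\infty$ at the outset (weak duality then forces $R=+\infty$, so equality is immediate), after which concavity together with finiteness of $V$ on a neighborhood of $0$ rules out the value $+\infty$ anywhere, so $V$ is proper and the biconjugate theorem applies; second, the collapse of the double infimum in your computation of $V^*$ implicitly uses that $V(t)=-\infty$ for $t$ outside the image of $h\mapsto E_P[h]-1$, which is harmless but deserves a word.
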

\begin{corollary}\label{corr:Gibbs_a_pos}
If $a\geq 0$ then \eqref{eq:Df_Gibbs_M1} can be written as
\begin{align}\label{eq:Df_Gibbs}
\sup_{Q\in\mathcal{P}(\Omega): D_f(Q\|P)<\infty}\{E_Q[ g]-D_f(Q\|P)\}=\inf_{\nu\in\mathbb{R}}\{\nu+E_P[f^*( g-\nu)]\}\,.
\end{align}
\end{corollary}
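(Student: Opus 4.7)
The proposition is a strong-duality identity between an entropy-regularized expectation and the infimum of a shifted cumulant-type functional. I would prove the two inequalities separately: a direct Fenchel--Young calculation for ``$\leq$'', and a concave biconjugation argument, built on Lemma~\ref{lemma:Gibbs}, for ``$\geq$''.

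For the easy direction, fix any admissible $h$ on the left-hand side of \eqref{eq:Df_Gibbs_M1} and any $\nu\in\mathbb{R}$. The pointwise Fenchel--Young inequality $(g-\nu)h\leq f(h)+f^*(g-\nu)$, integrated against $P$ and combined with $E_P[h]=1$, yields $E_P[gh]-E_P[f(h)]\leq \nu+E_P[f^*(g-\nu)]$. Taking $\sup_h$ then $\inf_\nu$ gives ``$\leq$''.

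For ``$\geq$'', introduce the value function
\begin{align*}
\Phi(t)\equiv\sup\bigl\{E_P[gh]-E_P[f(h)]:h\in\mathcal{M}_b(\Omega),\,E_P[h]=t,\,E_P[f(h)]<\infty\bigr\},
\end{align*}
so the left-hand side of \eqref{eq:Df_Gibbs_M1} is $\Phi(1)$. A mixing argument, combining two admissible functions as $\lambda h_1+(1-\lambda)h_2$ and using convexity of $f$, shows that $\Phi$ is concave on $\mathbb{R}$. The constant test function $h\equiv t$ is admissible for any $t\in(a,b)$ and yields $\Phi(t)\geq t E_P[g]-f(t)>-\infty$, so $1$ lies in the interior of $\dom\Phi$; hence $\Phi$ is continuous (in particular upper semicontinuous) at $1$. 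Swapping the infima in $\Phi^*(\nu)\equiv\inf_t\{t\nu-\Phi(t)\}$ (the outer $\inf$ over $t$ and the inner $\inf$ coming from $-\sup_h$) and then applying Lemma~\ref{lemma:Gibbs} to $g-\nu$ produces
\begin{align*}
\Phi^*(\nu)=-\sup_{h\in\mathcal{M}_b,\,E_P[f(h)]<\infty}\bigl\{E_P[(g-\nu)h]-E_P[f(h)]\bigr\}=-E_P[f^*(g-\nu)].
\end{align*}
The concave Fenchel--Moreau theorem, combined with upper semicontinuity at $1$, then gives $\Phi(1)=\Phi^{**}(1)=\inf_\nu\{\nu+E_P[f^*(g-\nu)]\}$, which is the ``$\geq$'' direction.

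The main obstacle is precisely in closing the duality gap: weak duality only yields ``$\leq$'', and without u.s.c.\ of $\Phi$ at $t=1$, biconjugation would produce a strict overshoot. Upper semicontinuity is secured by the interior-of-domain argument, which is where the structural condition $1\in(a,b)$ plays its role. For Corollary~\ref{corr:Gibbs_a_pos}, when $a\geq 0$ the extension in Definition~\ref{def:F_1} forces $f=\infty$ on $(-\infty,0)$, so any admissible $h$ on the LHS of \eqref{eq:Df_Gibbs_M1} satisfies $h\geq 0$ $P$-a.s.; then $Q\equiv hP\in\mathcal{P}(\Omega)$ has $dQ/dP=h$ and $D_f(Q\|P)=E_P[f(h)]$, giving LHS of \eqref{eq:Df_Gibbs_M1} $\leq$ LHS of \eqref{eq:Df_Gibbs}. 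Conversely, repeating the Fenchel--Young estimate of the easy direction directly on $h=dQ/dP$ (which lies in $L^1(P)$ but need not be bounded) shows LHS of \eqref{eq:Df_Gibbs} $\leq$ RHS of \eqref{eq:Df_Gibbs_M1}; combined with the proposition itself, all three quantities coincide.
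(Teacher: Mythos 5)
Your proposal is correct. For the corollary itself your argument coincides with the paper's: when $a\geq 0$ the extension $f|_{(-\infty,0)}=\infty$ forces any admissible $h$ in \eqref{eq:Df_Gibbs_M1} to be nonnegative $P$-a.s., so $dQ=h\,dP$ identifies the left-hand sides; you are in fact more careful than the paper, which simply asserts the identification, whereas you explicitly close the remaining direction (probability measures $Q$ with possibly unbounded density $dQ/dP\in L^1(P)$) by integrating the Fenchel--Young inequality $(g-\nu)\,dQ/dP\leq f(dQ/dP)+f^*(g-\nu)$ and sandwiching against the right-hand side. Where you genuinely diverge is in your proof of the underlying identity \eqref{eq:Df_Gibbs_M1}: the paper obtains strong duality by setting up an infinite-dimensional Lagrangian for the constraint $E_P[h]=1$ and invoking the Slater condition (citing Luenberger), then evaluating the inner supremum via Lemma~\ref{lemma:Gibbs}; you instead introduce the one-dimensional value function $\Phi(t)$ of the perturbed constraint $E_P[h]=t$, establish concavity and that $1\in(\operatorname{dom}\Phi)^o$ (using the constant test functions $h\equiv t$, $t\in(a,b)$ --- this is exactly the structural fact that plays the role of Slater's condition in the paper), compute $\Phi^*(\nu)=-E_P[f^*(g-\nu)]$ via the same Lemma~\ref{lemma:Gibbs}, and conclude by one-dimensional concave biconjugation. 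Your route is more elementary and self-contained (scalar convex analysis rather than an infinite-dimensional duality theorem), at the cost of having to argue upper semicontinuity of $\Phi$ at $t=1$ and to dispose of the degenerate case $\Phi\equiv+\infty$ (where both sides are $+\infty$ and the claim is trivial), which you should note explicitly; the paper's route outsources exactly these points to the cited duality theorem.
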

\begin{remark}
\req{eq:Df_Gibbs_M1} is  an optimization over  signed measures, $d\mu=hdP$, of net `charge' $1$.
\end{remark}
\begin{remark}
The most commonly used case where $a<0$ is the $\chi^2$-divergence, which corresponds to the choice $f(x)=x^2-1$, $a=-\infty$, $b=\infty$.
\end{remark}
\begin{proof}
Define the convex set $X=\{h\in \mathcal{M}_b(\Omega):E_P[f(h)]<\infty\}$ (note that convexity of $f$ implies $E_P[f(h)]>-\infty$ for all $h\in\mathcal{M}_b(\Omega)$), define the convex function $F:X\to \mathbb{R}$ by $F[h]=E_P[f(h)]-E_P[ g h]$, and define the affine function $H:\mathcal{M}_b(\Omega)\to\mathbb{R}$ by $H[h]=1-E_P[h]$.  These satisfy the Slater conditions (see Theorem 8.3.1 and Problem 8.7  in \cite{luenberger1997optimization})   and so we have strong duality:
\begin{align}\label{eq:Gibb_general_f}
&\sup_{h\in \mathcal{M}_b(\Omega):E_P[h]=1, E_P[f(h)]<\infty}\{E_P[ g h]-E_P[f(h)]\}\\
=&\inf_{\nu\in\mathbb{R}}\{\nu+\sup_{h\in\mathcal{M}_b(\Omega):E_P[f(h)]<\infty}\{E_P[( g-\nu)h]-E_P[f(h)]\}\notag\\
=&\inf_{\nu\in\mathbb{R}}\{\nu+E_P[f^*( g-\nu)]\}\,,\notag
\end{align}
where we used Lemma \ref{lemma:Gibbs} to obtain the last line.  If $a\geq 0$ then $E_P[f(h)]=\infty$ unless $h\geq 0$ $P$-a.s. and so the supremum in \eqref{eq:Gibb_general_f} can be restricted to non-negative $h$.  Defining $dQ=hdP$ we can then rewrite \eqref{eq:Gibb_general_f} as the supremum over $Q\in\mathcal{P}(\Omega)$ with $D_f(Q\|P)<\infty$.
\end{proof}

\subsection{Variational Characterization  over Unbounded $ g$'s}
In many cases it is useful to extend the variational formula \eqref{eq:Df_var_app} to unbounded $ g$'s.  In this section we give several such results. First we prove a pair of lemmas that ensure certain expectations are finite.  The first of these can also be found in Lemma 2 of \cite{birrell2020optimizing}.
\begin{lemma}\label{lemma:EP_finite}
Let   $f\in\mathcal{F}_1(a,b)$ and  $Q,P\in\mathcal{P}(\Omega)$ with $D_f(Q\|P)<\infty$.  If $ g\in L^1(Q)$ then $E_P[f^*( g)^-]<\infty$.
\end{lemma}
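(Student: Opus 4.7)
The plan is to use Young's inequality (Fenchel--Young) for the convex conjugate, namely $f^*(y) + f(x) \geq xy$ for all $x,y \in \mathbb{R}$, with $x$ chosen to be the Radon--Nikodym derivative $h = dQ/dP$. Since $D_f(Q\|P) < \infty$, we have $Q \ll P$, $h \in L^1(P)$ with $h \geq 0$ $P$-a.s., and $f(h) \in L^1(P)$ (in particular $f(h) < \infty$ $P$-a.s.). Young's inequality then gives $f^*(g) \geq gh - f(h)$ $P$-a.s., so
\begin{align*}
f^*(g)^- \leq (f(h) - gh)^+ \leq f(h)^+ + (gh)^- = f(h)^+ + g^- h,
\end{align*}
where the last equality uses $h \geq 0$.

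Next I would show both terms on the right are $P$-integrable. For the second one, $E_P[g^- h] = E_Q[g^-] \leq E_Q[|g|] < \infty$ since $g \in L^1(Q)$ by hypothesis. For the first, the key observation (which I would record as a short preliminary) is that convexity of $f$ together with $f(1) = 0$ yields the supporting-line bound $f(h) \geq f_+^\prime(1)(h-1)$, and hence
\begin{align*}
f(h)^- \leq |f_+^\prime(1)| \cdot |h - 1|.
\end{align*}
Since $h \in L^1(P)$, this gives $E_P[f(h)^-] < \infty$, and combining with $E_P[f(h)] = D_f(Q\|P) < \infty$ yields $E_P[f(h)^+] = E_P[f(h)] + E_P[f(h)^-] < \infty$.

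Putting these pieces together, $E_P[f^*(g)^-] \leq E_P[f(h)^+] + E_Q[g^-] < \infty$, which is the desired conclusion. The argument is essentially mechanical once the Young inequality step is in place; the only mild subtlety is justifying that the pointwise inequality $f^*(g) \geq gh - f(h)$ can be manipulated without worrying about $\infty - \infty$, which is handled by first discarding the $P$-null set where $f(h) = \infty$ (which has $P$-measure zero by finiteness of $D_f(Q\|P)$) and the $P$-null set where $h$ is negative. No part of the proof requires $a \geq 0$ or any admissibility of $f$, so the result holds in the stated generality.
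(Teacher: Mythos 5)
Your proof is correct, and it takes a genuinely different and more elementary route than the paper's. The paper proves this lemma through the variational formula of Proposition \ref{prop:Df_var_formula}: it first disposes of the case where $f^*$ is bounded below, then invokes the structural Lemma \ref{lemma:f_star_bounded_or_inc} (in the remaining case $f^*$ is nondecreasing with a sign change at some point $b$), truncates $g$ above at $b$ and below at $-n$, tests the bounded truncations against $D_f(Q\|P)$ in the variational formula, and passes to the limit via dominated convergence on the $Q$-side and monotone convergence on the $P$-side. You instead apply the Fenchel--Young inequality pointwise with the specific choice $x=dQ/dP$, and control the two resulting terms by $E_Q[g^-]<\infty$ and by the affine minorant $f(h)\geq f_+^\prime(1)(h-1)$ (the same bound the paper records in the proof of Lemma \ref{lemma:Gibbs}), which gives $E_P[f(h)^+]\leq D_f(Q\|P)+|f_+^\prime(1)|\,E_P[|h-1|]<\infty$. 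Your handling of the measure-zero sets where $f(h)=\infty$ or $h<0$ is the right way to avoid the $\infty-\infty$ issue, and the inequality $(A-B)^+\leq A^++B^-$ is sound. What your approach buys: it bypasses both the variational formula and the case analysis on $f^*$, it requires no limiting arguments, and it yields the explicit quantitative bound $E_P[f^*(g)^-]\leq D_f(Q\|P)+E_P[f(h)^-]+E_Q[g^-]$; what the paper's approach buys is consistency with the machinery it develops anyway (the truncation-plus-convergence technique reappears in Proposition \ref{prop:Df_var_L1} and Theorem \ref{thm:Df_var_unbounded}). Both arguments hold in the full stated generality, with no requirement that $a\geq 0$ or that $f$ be admissible.
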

\begin{remark}
Recall that we use $g^\pm$ to denote the positive and negative parts of a function $g$ (so that $g^\pm\geq0$ and $g=g^+-g^-$).
\end{remark}

\begin{proof}
Fix $ g\in L^1(Q)$.  The result it trivial if $f^*$ is bounded below, so suppose not.  Lemma \ref{lemma:f_star_bounded_or_inc} therefore  implies that  $I=\{y:f^*(y)<\infty\}$ is of the form $I=(-\infty,d)$ or $I=(-\infty,d]$ for some $d\in(-\infty,\infty]$,  $f^*$ is nondecreasing, and there exists $b\in\mathbb{R}$ such that  $f^*\leq 0$ on $(-\infty,b]$ and $f^*\geq 0$ on $(b,\infty)$.   Define $ g_b= g1_{ g\leq b}+ b1_{ g> b}$, so that $ g_b\leq b$  and $ g_b\in L^1(Q)$. We have
\begin{align}
&E_P[f^*( g)^-]=E_P[1_{ g\leq b}f^*( g)^-]=E_P[1_{ g\leq b}f^*( g_b)^-]\\
\leq& E_P[f^*( g_b)^-]=E_P[-f^*( g_b)]\,,\notag
\end{align}
hence
\begin{align}
& E_P[f^*( g_b)]\leq -E_P[f^*( g)^-]\,.
\end{align}

Let $ g_{b,n}=-n1_{ g_b<-n}+ g_b1_{ g_b\geq -n}$. The $ g_{b,n}$ are bounded, therefore we can use \eqref{eq:Df_var_app} to obtain
\begin{align}
E_Q[ g_{b,n}]-E_P[f^*( g_{b,n})]\leq D_f(Q\|P)\,,
\end{align}
where $E_P[f^*( g_{b,n})]\in (-\infty,\infty]$.  This implies
\begin{align}
E_Q[ g_{b,n}]\leq D_f(Q\|P)+E_P[f^*( g_{b,n})]\,.
\end{align}

$ g_{b,n}\to g_b$ pointwise, $| g_{b,n}|\leq | g_b|$, and $ g_b\in L^1(Q)$, so we use the dominated convergence theorem to compute
\begin{align}\label{eq:EQ_phi_b_ub}
E_Q[ g_{b}]\leq \liminf_n( D_f(Q\|P)+E_P[f^*( g_{b,n})])=D_f(Q\|P)+\liminf_n E_P[f^*( g_{b,n})]\,.
\end{align}
(The assumption that $D_f(Q\|P)<\infty$ implies the right-hand-side of \eqref{eq:EQ_phi_b_ub} is well-defined). We have $ g_{b,{n+1}}\leq   g_{b,n}$, hence $f^*(  g_{b,{n+1}})\leq f^*(  g_{b,n})$.  The function $f^*$ is continuous on $(-\infty,b]$ and for $N$ large enough we have $  g_{b,n}\leq b$ for all $n\geq N$, hence
\begin{align}
 0\leq-f^*(  g_{b,n})\nearrow -  f^*(  g_{b})
\end{align}
as $n\to\infty$. Therefore the monotone convergence theorem gives $\lim_n E_P[f^*( g_{b,n})]= E_P[f^*( g_b)]$ and we find
\begin{align}
-\infty<E_Q[ g_{b}]\leq D_f(Q\|P)+E_P[f^*( g_b)]\leq D_f(Q\|P) -E_P[f^*( g)^-]\,.
\end{align}
This proves $E_P[f^*( g)^-]< \infty$, as claimed.
\end{proof}

\begin{lemma}\label{lemma:EQ_phi_plus}
Let   $f\in\mathcal{F}_1(a,b)$,  $P\in\mathcal{P}(\Omega)$, and $ g\in\mathcal{M}(\Omega)$. Suppose    $E_P[f^*( c g-\nu)^+]<\infty$ for some $\nu\in\mathbb{R}$ and $c>0$. Then for all $Q\in\mathcal{P}(\Omega)$ with  $D_f(Q\|P)<\infty$ we have $E_Q[ g^+]<\infty$.
\end{lemma}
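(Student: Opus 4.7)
The plan is to deduce the conclusion from the variational characterization \eqref{eq:Df_var_formula} applied to bounded truncations of $g^+$, followed by a monotone-convergence limit. I would set $g_n := g^+ \wedge n$, so that $g_n \in \mathcal{M}_b(\Omega)$ and $g_n \nearrow g^+$ pointwise, and feed the bounded test function $h_n := c g_n - \nu$ (with range in $[-\nu, cn - \nu]$) into the variational formula of Proposition~\ref{prop:Df_var_formula} to obtain
\begin{align*}
c\, E_Q[g_n] - \nu \;=\; E_Q[h_n] \;\leq\; D_f(Q\|P) + E_P[f^*(h_n)].
\end{align*}
The task then reduces to bounding $E_P[f^*(h_n)]$ by a constant independent of $n$: the monotone convergence theorem applied to $g_n \nearrow g^+$ would yield $c\, E_Q[g^+] \leq \nu + D_f(Q\|P) + \sup_n E_P[f^*(h_n)] < \infty$, completing the proof.

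The uniform upper bound on $E_P[f^*(h_n)]$ would be produced by a pointwise convexity estimate. On $\{g < 0\}$ we have $g_n = 0$, hence $h_n = -\nu$ identically; on $\{g \geq 0\}$ we have $0 \leq g_n \leq g$, so $h_n$ lies in $[-\nu, cg - \nu]$, and convexity of $f^*$ gives
\begin{align*}
f^*(h_n) \;\leq\; \max\{f^*(-\nu),\, f^*(cg - \nu)\} \;\leq\; f^*(-\nu)^+ + f^*(cg - \nu)^+
\end{align*}
on that region. For the negative part, Lemma~\ref{eq:f_star_lb} gives $f^*(h_n) \geq h_n \geq -\nu$, so $f^*(h_n)^- \leq |\nu|$. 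Integrating against $P$ and invoking the hypothesis $E_P[f^*(cg - \nu)^+] < \infty$ then yields $\sup_n E_P[f^*(h_n)] \leq f^*(-\nu)^+ + E_P[f^*(cg - \nu)^+] + |\nu| < \infty$, provided $f^*(-\nu) < \infty$.

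The main obstacle is the corner case $f^*(-\nu) = \infty$, which I would dispatch separately at the start of the proof. Since $\{f^* < \infty\}$ is an interval (Lemma~\ref{lemma:f_star_cont}), $-\nu$ must lie on one of its two sides. If $-\nu$ lies above the interval, then the hypothesis $E_P[f^*(cg - \nu)^+] < \infty$ forces $cg - \nu < -\nu$ $P$-a.s., hence $g < 0$ $P$-a.s.; since $D_f(Q\|P) < \infty$ implies $Q \ll P$, one gets $g^+ = 0$ $Q$-a.s., and $E_Q[g^+] = 0$ trivially. If $-\nu$ lies below the interval, then the hypothesis provides a $P$-a.s.\ (and hence $Q$-a.s.) lower bound on $g$, so after modifying $g$ on a $P$-null set one can rerun the main argument with the truncation $g \wedge n$ in place of $g^+ \wedge n$: the resulting $h_n = c(g \wedge n) - \nu$ now takes values in $[\inf\{f^* < \infty\}, cn - \nu]$, and the same convexity bound on $f^*(h_n)$, together with eventual monotonicity of $f^*$ (guaranteed by Lemma~\ref{lemma:f_star_bounded_or_inc} in this subcase) and monotone convergence, yield $E_Q[g] < \infty$ and therefore $E_Q[g^+] < \infty$.
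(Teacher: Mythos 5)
Your argument is correct and rests on the same engine as the paper's proof: truncate $g$ to a bounded test function, feed it into the variational formula of Proposition~\ref{prop:Df_var_formula}, obtain a bound on $E_P[f^*(\cdot)]$ that is uniform in the truncation level, and pass to the limit. The only real difference is the choice of truncation, and it matters. The paper sets $g_n = g\,1_{g\in[0,n)} + \frac{d+\nu}{c}\,1_{g\notin[0,n)}$ for a \emph{fixed} $d$ with $f^*(d)<\infty$, so that $f^*(cg_n-\nu)$ equals the finite constant $f^*(d)$ wherever $g$ is cut off; this yields $f^*(cg_n-\nu)\le f^*(cg-\nu)^+ + |f^*(d)|$ in one line, with no hypothesis on $f^*(-\nu)$, and the conclusion follows from Fatou. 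Your truncation $g^+\wedge n$ sends the cut-off region to the value $-\nu$, which is what forces your case split on whether $f^*(-\nu)<\infty$. The main case and your ``$-\nu$ above the domain'' case are fine. In the remaining case two small repairs are needed: (i) Lemma~\ref{lemma:f_star_bounded_or_inc} gives there that $f^*$ is \emph{bounded below} (the second alternative of the dichotomy is excluded because $\inf\{f^*<\infty\}>-\infty$), not that it is eventually monotone — the latter is true, but as a separate elementary property of convex functions on a right-unbounded domain; and (ii) the convexity bound should be anchored at a fixed \emph{interior} point $y_1$ of $\{f^*<\infty\}$ rather than at $\inf\{f^*<\infty\}$, which need not lie in the domain. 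With these adjustments the case closes (on $\{g>n\}$ one has $cn-\nu\in[y_1,\,cg-\nu]$ for large $n$, hence $f^*(cn-\nu)\le f^*(y_1)^+ + f^*(cg-\nu)^+$), and one should also note that if $\sup\{f^*<\infty\}<\infty$ then $g$ is essentially bounded above and there is nothing to prove. Alternatively, adopting the paper's off-value $(d+\nu)/c$ eliminates the case analysis altogether.
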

\begin{proof}
Fix $d$ for which $f^*(d)$ is finite and define   $ g_n= g1_{ g\in[0,n)}+(d+\nu)/c 1_{ g\not\in[0,n)}\in\mathcal{M}_b(\Omega)$.  Hence $c g_n-\nu \in L^1(Q)$ and the variational formula \eqref{eq:Df_var_app} gives
\begin{align}
D_f(Q\|P)\geq E_Q[c g_n-\nu]-E_P[f^*(c g_n-\nu)]\,,
\end{align}
where  $E_P[f^*(c g_n-\nu)]$ is defined in $(-\infty,\infty]$.  Hence
\begin{align}
 E_Q[c g_n]-D_f(Q\|P)\leq \nu+E_P[f^*(c g_n-\nu)]\,.
\end{align}
We can bound
\begin{align}
f^*(c g_n-\nu)=f^*(c g-\nu)1_{ g\in[0,n)}+f^*(d)1_{ g\not\in[0,n)}\leq f^*(c g-\nu)^++|f^*(d)|\,,
\end{align}
and so
\begin{align}
E_P[f^*(c g_n-\nu)]\leq E_P[f^*(c g-\nu)^+]+|f^*(d)|\,.
\end{align}
Therefore
\begin{align}
 E_Q[c g_n]-D_f(Q\|P)\leq \nu+ E_P[f^*(c g-\nu)^+]+|f^*(d)|<\infty
\end{align}
for all $n$. Taking $n\to\infty$ we obtain
\begin{align}
 \liminf_nE_Q[ g_n]\leq c^{-1}(\nu+ E_P[f^*(c g-\nu)^+]+|f^*(d)|+D_f(Q\|P))<\infty\,.
\end{align}
The functions $ g_n$ are uniformly bounded below, therefore Fatou's Lemma implies
\begin{align}
E_Q[\liminf_n  g_n]\leq \liminf_n E_Q[ g_n]\leq c^{-1}(\nu+ E_P[f^*(c g-\nu)^+]+|f^*(d)|+D_f(Q\|P))<\infty\,.
\end{align}
We have $ g_n\to  g 1_{ g\geq 0}+c^{-1}(d+\nu)1_{ g<0}$ pointwise, hence
\begin{align}
E_Q[ g^+]+c^{-1}(d+\nu)Q( g<0)=E_Q[ g^++c^{-1}(d+\nu)1_{ g<0}]<\infty\,.
\end{align}
This implies $E_Q[ g^+]<\infty$, as claimed.
\end{proof}

We can now prove variational characterizations of $D_f$ that allow for $ g$ to be unbounded. The following is found in Theorem 2 of \cite{birrell2020optimizing}.
\begin{proposition}\label{prop:Df_var_L1}
Let $f\in\mathcal{F}_1(a,b)$ and $P,Q$ be probability measures on $(\Omega,\mathcal{M})$. If $f^*$ is bounded below or $D_f(Q\|P)<\infty$ then
\begin{align}\label{eq:Df_var_L1}
D_f(Q\|P)=&\sup_{ g\in L^1(Q)}\{E_Q[ g]-E_P[f^*( g)]\}\,,
\end{align}
where $E_P[f^*( g)]$ exists in $(-\infty,\infty]$.
\end{proposition}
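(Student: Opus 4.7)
The plan is to observe that Proposition \ref{prop:Df_var_formula} already gives the variational formula over $\mathcal{M}_b(\Omega)$, and $\mathcal{M}_b(\Omega) \subset L^1(Q)$ because $Q$ is a probability measure. Thus enlarging the test-function space from $\mathcal{M}_b(\Omega)$ to $L^1(Q)$ trivially preserves the inequality $D_f(Q\|P) \le \sup_{g\in L^1(Q)}\{E_Q[g]-E_P[f^*(g)]\}$. The real content is therefore twofold: (i) to justify that the right-hand side is well defined, i.e.\ $E_P[f^*(g)]$ exists in $(-\infty,\infty]$ for every $g\in L^1(Q)$, and (ii) to prove the reverse inequality $E_Q[g]-E_P[f^*(g)] \le D_f(Q\|P)$ for each such $g$.

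For (i), I would split into the two hypotheses. If $f^*$ is bounded below then $E_P[f^*(g)]$ is automatically defined in $(-\infty,\infty]$ regardless of $g$. If instead $D_f(Q\|P)<\infty$, then Lemma \ref{lemma:EP_finite} applied to any $g\in L^1(Q)$ yields $E_P[f^*(g)^-]<\infty$, which is exactly what is needed. (Note that the two hypotheses in the statement are the two scenarios in which Lemma \ref{lemma:EP_finite} or triviality suffice.)

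For (ii), I would dispose of the trivial case $D_f(Q\|P)=\infty$ immediately and focus on $D_f(Q\|P)<\infty$, which forces $Q \ll P$. The main tool is the pointwise Fenchel--Young inequality, which for the convex LSC extension $f$ reads
\begin{equation*}
g(\omega)\,\tfrac{dQ}{dP}(\omega) \;\le\; f^*(g(\omega)) + f\!\left(\tfrac{dQ}{dP}(\omega)\right) \qquad P\text{-a.s.}
\end{equation*}
Rearranging as $g\,dQ/dP + f^*(g)^- \le f(dQ/dP) + f^*(g)^+$ and integrating against $P$ gives
\begin{equation*}
E_Q[g] + E_P[f^*(g)^-] \;\le\; D_f(Q\|P) + E_P[f^*(g)^+].
\end{equation*}
Here $E_Q[g]=E_P[g\,dQ/dP]$ is finite because $g\in L^1(Q)$, $E_P[f^*(g)^-]$ is finite by Lemma \ref{lemma:EP_finite}, and $D_f(Q\|P)$ is finite by assumption, so the rearrangement is legitimate. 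If $E_P[f^*(g)^+]<\infty$ one simply subtracts to obtain $E_Q[g]-E_P[f^*(g)]\le D_f(Q\|P)$; if $E_P[f^*(g)^+]=\infty$ then $E_P[f^*(g)]=\infty$ and $E_Q[g]-E_P[f^*(g)]=-\infty$, so the inequality is trivial.

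The only mildly delicate point is the bookkeeping in the last paragraph, where one must ensure that no $\infty-\infty$ ever appears in the rearrangement of the Fenchel--Young inequality; this is precisely what Lemma \ref{lemma:EP_finite} was designed to guarantee. Everything else is a direct application of earlier results.
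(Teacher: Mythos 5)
Your proposal is correct. Both halves check out: the easy inequality follows from Proposition \ref{prop:Df_var_formula} since $\mathcal{M}_b(\Omega)\subset L^1(Q)$, well-definedness of $E_P[f^*(g)]$ is exactly what Lemma \ref{lemma:EP_finite} (or boundedness below of $f^*$) provides, and your Fenchel--Young rearrangement $g\,dQ/dP + f^*(g)^- \le f(dQ/dP) + f^*(g)^+$ is valid pointwise in every case (including where $f^*(g)=\infty$ or $f(dQ/dP)=\infty$), with all the terms you subtract being finite. The route for the key upper bound is, however, genuinely different from the paper's. The paper reduces to the bounded case: it first disposes of $E_P[f^*(g)]=\infty$, redefines $g$ on a null set so that $\range(g)\subset\{f^*<\infty\}$, truncates via $g_n = y_0 1_{g<-n}+g1_{|g|\le n}+y_0 1_{g>n}$ for a fixed $y_0$ with $f^*(y_0)$ finite, applies Proposition \ref{prop:Df_var_formula} to each $g_n$, and passes to the limit by dominated convergence (dominating $|f^*(g_n)|$ by $|f^*(y_0)|+|f^*(g)|\in L^1(P)$). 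Your argument instead applies the pointwise duality inequality directly to the unbounded $g$ and handles the integrability issues by splitting $f^*(g)$ into positive and negative parts, with Lemma \ref{lemma:EP_finite} ruling out $\infty-\infty$. Your version is shorter and more self-contained -- it is essentially the same convex-duality computation the paper already uses for bounded $g$ inside the proof of Proposition \ref{prop:Df_var_formula}, executed once with careful bookkeeping -- whereas the paper's version is more modular in that it leans entirely on the already-established bounded-case formula and standard convergence theorems. Either is acceptable.
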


\begin{proof}
 If $f^*$ is bounded below  then $E_P[f^*( g)^-]<\infty$ for all $ g\in L^1(Q)$.  If $D_f(Q\|P)<\infty$ then Lemma \ref{lemma:EP_finite} also  implies $E_P[f^*( g)^-]<\infty$.  So in either case we find that $E_P[f^*( g)]$ is defined in $(-\infty,\infty]$. In particular, the objective functional in \eqref{eq:Df_var_L1} is well defined. Due to the  variational characterization \eqref{eq:Df_var_app}, to prove  \eqref{eq:Df_var_L1} it suffices to prove $E_Q[ g]-E_P[f^*( g)]\leq D_f(Q\|P)$ for all $ g\in L^1(Q)$.

Fix $ g\in L^1(Q)$.  If $D_f(Q\|P)=\infty$ or $E_P[f^*( g)]=\infty$ then the required bound is trivial, so suppose not. In this case we have  $f^*( g)<\infty$ $P$-a.s., i.e., $ g$ maps into $I\equiv\{f^*<\infty\}$ $P$-a.s.  We are in the case where $D_f(Q\|P)<\infty$ so $Q\ll P$, hence $ g$ maps into $I$ $Q$-a.s. as well. Therefore, by redefining $ g$ on a measure zero set (under both $Q$ and $P$) we can assume that $\range( g)\subset I$. In summary, we have now reduced the problem to showing that $E_Q[ g]-E_P[f^*( g)]\leq D_f(Q\|P)$ in the case where $ g\in L^1(Q)$,  $f^*( g)\in L^1(P)$, $D_f(Q\|P)<\infty$, $\range( g)\subset I$: Fix $y_0\in I$ and define  $ g_n=y_01_{ g<-n}+ g1_{-n\leq g\leq n}+y_01_{ g>n}\in\mathcal{M}_b(\Omega)$.   \req{eq:Df_var_app} implies
\begin{align}
D_f(Q\|P)\geq  E_Q[ g_n]-E_P[f^*( g_n)]\,.
\end{align}

$ g_n\to g$ pointwise, and $| g_n|\leq | g|+|y_0|\in L^1(Q)$, hence the dominated convergence theorem gives $E_Q[ g_n]\to E_Q[ g]$.  $\range( g_n),\range( g)\subset I$ and $f^*$ is continuous on $I$, hence $f^*( g_n)\to f^*( g)$ pointwise.  We have
\begin{align}
|f^*( g_n)|=&|f^*( g_n)|1_{ g<-n}+|f^*( g_n)|1_{-n\leq g\leq n}+|f^*( g_n)|1_{ g>n}\\
\leq& |f^*(y_0)|+|f^*( g)|\in L^1(P)\,.\notag
\end{align}
Therefore the dominated convergence theorem implies $E_P[f^*( g_n)]\to E_P[f^*( g)]$.  Hence
\begin{align}
D_f(Q\|P)\geq  \lim_{n\to\infty}(E_Q[ g_n]-E_P[f^*( g_n)])=E_Q[ g]- E_P[f^*( g)]\,.
\end{align}
This completes the proof.
\end{proof}

In some cases it is convenient to define conventions regarding infinities that result in a variational formula involving the supremum over all measurable $ g$:
\begin{theorem}\label{thm:Df_var_unbounded}
Let $f\in\mathcal{F}_1(a,b)$, $Q,P\in\mathcal{P}(\Omega)$, and $\mathcal{M}_b(\Omega)\subset\Gamma\subset \mathcal{M}(\Omega)$. Then
\begin{align}\label{eq:Df_unbounded}
D_f(Q\|P)=&\sup_{ g\in \Gamma}\{E_Q[ g]-E_P[f^*( g)]\}\,,
\end{align}
where we define $\infty-\infty\equiv-\infty$, $-\infty+\infty\equiv-\infty$.
\end{theorem}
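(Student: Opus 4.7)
The plan is to prove the two inequalities separately. The direction $\sup_{g\in\Gamma}\{E_Q[g]-E_P[f^*(g)]\}\geq D_f(Q\|P)$ is immediate: since $\mathcal{M}_b(\Omega)\subset\Gamma$, restricting the supremum to bounded test functions recovers the variational formula in Proposition \ref{prop:Df_var_formula}.

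For the reverse inequality, I would fix an arbitrary $g\in\mathcal{M}(\Omega)$ and aim to show $E_Q[g]-E_P[f^*(g)]\leq D_f(Q\|P)$ under the stated conventions. The case $D_f(Q\|P)=\infty$ is trivial, so assume $D_f(Q\|P)<\infty$. If $g\in L^1(Q)$, then Lemma \ref{lemma:EP_finite} guarantees $E_P[f^*(g)^-]<\infty$, so $E_P[f^*(g)]$ is unambiguously defined in $(-\infty,\infty]$ and the desired bound is exactly the content of Proposition \ref{prop:Df_var_L1}.

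The main work is the case $g\notin L^1(Q)$, where the plan is to show the conventions force the expression to collapse to $-\infty$, making the bound automatic. If $E_Q[g^-]=\infty$ this is immediate: $E_Q[g]=-\infty$ either directly (when $E_Q[g^+]<\infty$) or by the convention $\infty-\infty\equiv-\infty$ (when $E_Q[g^+]=\infty$ as well), and the expression evaluates to $-\infty$ regardless of $E_P[f^*(g)]$. The delicate subcase is $E_Q[g^+]=\infty$ with $E_Q[g^-]<\infty$, so $E_Q[g]=+\infty$; here one must establish $E_P[f^*(g)]=+\infty$, so that $\infty-\infty\equiv-\infty$ activates. The plan is to combine two ingredients. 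First, the contrapositive of Lemma \ref{lemma:EQ_phi_plus} with $c=1$, $\nu=0$ gives $E_P[f^*(g)^+]=\infty$. Second, the truncation $g_n\equiv \min(g,n)$ satisfies $g_n\in L^1(Q)$ since $g_n^+\leq n$ and $g_n^-=g^-\in L^1(Q)$; Lemma \ref{lemma:EP_finite} then yields $E_P[f^*(g_n)^-]<\infty$. Invoking Lemma \ref{lemma:f_star_bounded_or_inc}, $f^*$ is either bounded below (so $E_P[f^*(g)^-]<\infty$ trivially) or nondecreasing, and in the latter case $g_n\leq g$ implies $f^*(g_n)^-\geq f^*(g)^-$ pointwise, so $E_P[f^*(g)^-]\leq E_P[f^*(g_n)^-]<\infty$. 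Combining both ingredients gives $E_P[f^*(g)]=+\infty$ and the expression is $-\infty$ by convention.

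The main obstacle is precisely this last subcase: without the truncation-plus-monotonicity bootstrap, one cannot exclude the pathological combination $E_Q[g]=+\infty$ together with $E_P[f^*(g)]=-\infty$ (which would be forced by the convention if both $E_P[f^*(g)^\pm]=\infty$), producing $+\infty$ rather than $-\infty$. Careful bookkeeping of the sign conventions, guided by the structural dichotomy in Lemma \ref{lemma:f_star_bounded_or_inc}, is what makes the argument go through; combining both inequalities then yields the claimed identity.
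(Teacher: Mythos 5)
Your proof is correct and follows essentially the same route as the paper's: the lower bound by restricting to $\mathcal{M}_b(\Omega)$, the $L^1(Q)$ case via Proposition \ref{prop:Df_var_L1}, the trivial cases by the sign conventions, and the delicate case $E_Q[g^+]=\infty$, $E_Q[g^-]<\infty$ handled by combining the contrapositive of Lemma \ref{lemma:EQ_phi_plus} (giving $E_P[f^*(g)^+]=\infty$) with a truncation argument using Lemmas \ref{lemma:EP_finite} and \ref{lemma:f_star_bounded_or_inc} (giving $E_P[f^*(g)^-]<\infty$). The only cosmetic difference is your truncation $\min(g,n)$ in place of the paper's $g1_{g\leq n}+b1_{g>n}$; both work.
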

\begin{remark}
Our convention regarding infinities ensures that $\int  g d\eta\equiv\int g^+d\eta-\int g^-d\eta$ is defined in $\overline{\mathbb{R}}$ for all $\eta\in\mathcal{P}(\Omega)$, $ g\in\mathcal{M}(\Omega)$.
\end{remark}
\begin{remark}
The above  conventions regarding infinities can be viewed as a convenient but rigorous shorthand for restricting the optimization in \eqref{eq:Df_unbounded} to those $ g\in\Gamma$ for which such infinitities do not occur.  However, there is more content to Theorem \ref{thm:Df_var_unbounded}  than this simple convention. For instance, the equality \req{eq:Df_unbounded} implies that if $D_f(Q\|P)<\infty$ and $ g\in\Gamma$ with $E_Q[ g]=\infty$ then it must also be the case that $E_P[f^*( g)]=\infty$.
\end{remark}
\begin{proof}
From \eqref{eq:Df_var_app} we have
\begin{align}\label{eq:Df_M_ub}
D_f(Q\|P)=&\sup_{ g\in\mathcal{M}_b(\Omega)}\{E_Q[ g]-E_P[f^*( g)]\}\\
\leq&\sup_{ g\in\Gamma}\{E_Q[ g]-E_P[f^*( g)]\}\notag\\
\leq& \sup_{ g\in\mathcal{M}(\Omega)}\{E_Q[ g]-E_P[f^*( g)]\}\,.\notag
\end{align}
If $D_f(Q\|P)=\infty$ then the above inequalities are all equalities and we are done. In the case where $D_f(Q\|P)<\infty$,   Proposition \ref{prop:Df_var_L1} implies
\begin{align}\label{eq:Df_lb_unbounded}
    D_f(Q\|P)\geq E_Q[ g]-E_P[f^*( g)]
\end{align}
for all $ g\in L^1(Q)$. In light of \req{eq:Df_M_ub}, if we can show \eqref{eq:Df_lb_unbounded} holds for  all $ g\in\mathcal{M}(\Omega)$ then we are done. If $ g^-\not\in L^1(Q)$  then \eqref{eq:Df_lb_unbounded} is a trivial consequence of our conventions regarding infinities. This leaves only the case where  $ g\in\mathcal{M}(\Omega)$ with $ g^+\not\in L^1(Q)$ and $ g^-\in L^1(Q)$.

First we show that $E_P[f^*( g)^-]<\infty$ in this case:  If $f^*$ is bounded below this is trivial so suppose not. Therefore Lemma \ref{lemma:f_star_bounded_or_inc} implies $f^*$ is nondecreasing and there exists $b\leq 0$ such that  $f^*|_{(-\infty,b]}\leq 0$ and $f^*|_{(b,\infty)}\geq 0$.  Define $ g_n= g1_{ g\leq n}+b1_{ g>n}$ for $n\in\mathbb{Z}^+$. $ g_n\in L^1(Q)$ and  $f^*( g)^-1_{ g\geq n}=0$, hence $E_P[f^*( g_n)]\in(-\infty,\infty]$ and
\begin{align}
\infty>E_P[f^*( g_n)^-]=E_P[f^*( g)^-1_{ g\leq n}+f^*(b)^-1_{ g> n}]=E_P[f^*( g)^-]+f^*(b)^-E_P[1_{ g> n}]\,.
\end{align}
Therefore $E_P[f^*( g)^-]<\infty$ as claimed.   Lemma \ref{lemma:EQ_phi_plus} together with $D_f(Q\|P)<\infty$ and $ g^+\not\in L^1(Q)$ implies $E_P[f^*( g)^+]=\infty$. Therefore we conclude that $E_Q[ g]-E_P[f^*( g)]=\infty-\infty=-\infty<D_f(Q\|P)$. This completes the proof.
\end{proof}
The following alternative form is also  useful:
\begin{corollary}\label{cor:D_f_var_unbounded}
Let $f\in\mathcal{F}_1(a,b)$ and $Q,P\in\mathcal{P}(\Omega)$.  Then
\begin{align}
D_f(Q\|P)=&\sup_{ g\in \mathcal{M}(\Omega)}\{E_Q[ g]-\inf_{\nu\in\mathbb{R}}\{\nu+E_P[f^*( g-\nu)]\}\}\,,
\end{align}
where we define $\infty-\infty\equiv-\infty$, $-\infty+\infty\equiv-\infty$.
\end{corollary}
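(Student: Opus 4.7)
The plan is to reduce the corollary directly to Theorem \ref{thm:Df_var_unbounded} applied with $\Gamma = \mathcal{M}(\Omega)$, exploiting the fact that $\mathcal{M}(\Omega)$ is invariant under shifts by constants. The heart of the argument is that adding an optimization over $\nu \in \mathbb{R}$ outside the supremum does not change its value when the test-function space already contains all constant shifts of its members.

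First I would verify the pointwise identity
\[
E_Q[g] - \bigl(\nu + E_P[f^*(g-\nu)]\bigr) = E_Q[g-\nu] - E_P[f^*(g-\nu)]
\]
for every $g \in \mathcal{M}(\Omega)$ and $\nu \in \mathbb{R}$, interpreting all arithmetic under the stated conventions $\infty - \infty \equiv -\infty$ and $-\infty + \infty \equiv -\infty$. The bounds $(g-\nu)^\pm \leq g^\pm + |\nu|$ and $g^\pm \leq (g-\nu)^\pm + |\nu|$ show that $E_Q[g]$ and $E_Q[g-\nu]$ are simultaneously finite or infinite with the same sign, and differ by $\nu$ when finite, so the identity reduces to a short case analysis on the value of $E_Q[g]$ and of $E_P[f^*(g-\nu)]$.

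Next, using the standard identity $-\inf_\nu X(\nu) = \sup_\nu(-X(\nu))$, whose validity under the conventions I would verify by a brief case check on whether $E_Q[g]$ is finite, $+\infty$, or $-\infty$, I would rewrite
\[
E_Q[g] - \inf_{\nu \in \mathbb{R}}\{\nu + E_P[f^*(g-\nu)]\} = \sup_{\nu \in \mathbb{R}}\bigl\{E_Q[g-\nu] - E_P[f^*(g-\nu)]\bigr\}.
\]
Taking the supremum over $g \in \mathcal{M}(\Omega)$ on both sides and performing the change of variables $\tilde{g} = g - \nu$, which gives a surjection $\mathcal{M}(\Omega) \times \mathbb{R} \to \mathcal{M}(\Omega)$, yields
\[
\sup_{g \in \mathcal{M}(\Omega)}\Bigl\{E_Q[g] - \inf_{\nu \in \mathbb{R}}\{\nu + E_P[f^*(g-\nu)]\}\Bigr\} = \sup_{\tilde{g} \in \mathcal{M}(\Omega)}\bigl\{E_Q[\tilde{g}] - E_P[f^*(\tilde{g})]\bigr\}.
\]
The right-hand side equals $D_f(Q\|P)$ by Theorem \ref{thm:Df_var_unbounded} with $\Gamma = \mathcal{M}(\Omega)$, finishing the proof.

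The main (and essentially only) obstacle is bookkeeping for the $\pm\infty$ conventions to ensure that the shift identity and the $-\inf/\sup$ duality both remain valid without the usual assumption of finiteness; once the basic shift identity is checked, everything else is a formal consequence of Theorem \ref{thm:Df_var_unbounded}.
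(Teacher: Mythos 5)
Your proof is correct and follows essentially the same route as the paper's: both reduce the claim to Theorem \ref{thm:Df_var_unbounded} via the shift-invariance of $\mathcal{M}(\Omega)$, with the only real work being the case-by-case bookkeeping of the $\pm\infty$ conventions. The paper organizes that bookkeeping as two separate inequalities (handling the case $E_Q[g]=\infty$ with $\inf_\nu\{\nu+E_P[f^*(g-\nu)]\}\neq\infty$ by showing directly that $D_f(Q\|P)=\infty$), whereas you package it into a single pointwise identity plus the $-\inf/\sup$ duality, but the content is the same.
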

\begin{proof}
The bound 
\begin{align}
\sup_{ g\in \mathcal{M}(\Omega)}\{E_Q[ g]-\inf_{\nu\in\mathbb{R}}\{\nu+E_P[f^*( g-\nu)]\}\}\geq D_f(Q\|P)
\end{align}
is an obvious consequence of Theorem \ref{thm:Df_var_unbounded}.  To conclude the reverse inequality, we need to show that 
\begin{align}\label{eq:inf_nu_bound}
E_Q[ g]-\inf_{\nu\in\mathbb{R}}\{\nu+E_P[f^*( g-\nu)]\}\leq D_f(Q\|P)
\end{align}
for all $ g\in\mathcal{M}(\Omega)$. If $E_Q[ g]=\infty$ and $\inf_{\nu\in\mathbb{R}}\{\nu+E_P[f^*( g-\nu)]\}\neq \infty$ then $E_Q[ g^-]<\infty$ and there exists $\nu_0$ with $E_P[f^*( g-\nu_0)]<\infty$.  Hence Theorem \ref{thm:Df_var_unbounded} implies
\begin{align}
    D_f(Q\|P)\geq E_Q[ g-\nu_0]-E_P[f^*( g-\nu_0)]=\infty\,.
\end{align}
Therefore the claim holds in this case.  Otherwise, we are in the case where $\inf_{\nu\in\mathbb{R}}\{\nu+E_P[f^*( g-\nu)]\}=\infty$ or $E_Q[ g]=-\infty$ or  $E_Q[ g]\in\mathbb{R}$. The first two of these immediately imply \eqref{eq:inf_nu_bound}, due to our conventions regarding infinities.  Finally, if  $E_Q[ g]\in\mathbb{R}$ then
\begin{align}
E_Q[ g]-\inf_{\nu\in\mathbb{R}}\{\nu+E_P[f^*( g-\nu)]\}=&\sup_{\nu\in\mathbb{R}}\{E_Q[ g-\nu]-E_P[f^*( g-\nu)]\}\\
\leq&\sup_{g\in\mathcal{M}(\Omega)}\{E_Q[g]-E_P[f^*(g)]\}=D_f(Q\|P)\,.\notag
\end{align}
This completes the proof.
\end{proof}

\section{Proofs of Properties of  $(f,\Gamma)$-Divergences}\label{app:proofs}
In this appendix we prove the $(f,\Gamma)$-divergence properties from Section \ref{sec:gen_f_div}; some results will be proven in greater generality than were stated earlier. Our method of proof will require us to work with finite signed measures (at least during the intermediate steps), and not just with probability measures.  For that reason we provide a more general definition of the $(f,\Gamma)$-divergences here:
\begin{definition}\label{def:Df_Gamma_app}
Let $f\in\mathcal{F}_1(a,b)$ and  $\Gamma\subset \mathcal{M}_b(\Omega)$ be nonempty. For $P\in\mathcal{P}(\Omega)$ and $\mu\in M(\Omega)$ we define the $(f,\Gamma)$-divergence by
\begin{align}\label{eq:gen_f_def_app}
D_f^\Gamma(\mu\|P)=\sup_{ g\in\Gamma}\left\{\int  g d\mu-\inf_{\nu\in\mathbb{R}}\{\nu+E_P[f^*( g-\nu)]\}\right\}
\end{align}
and for $\mu,\kappa\in M(\Omega)$ we define the $\Gamma$-IPM by
\begin{align}\label{eq:gen_wasserstein_app}
W^\Gamma(\mu,\kappa)=\sup_{ g\in\Gamma}\left\{\int g d\mu-\int g d\kappa\right\}\,.
\end{align}
\end{definition}
We start with the proof of the dual variational formula from Theorem \ref{thm:gen_fdiv_dual_var}, which we restate below. In addition, we treat the case $a< 0$, which requires the more general definition \eqref{eq:gen_f_def_app}.
\begin{theorem}\label{thm:gen_fdiv_dual_var_app}
Let $f\in\mathcal{F}_1(a,b)$, $P\in\mathcal{P}(\Omega)$, and $\Gamma\subset \mathcal{M}_b(\Omega)$ be nonempty.  For $ g\in\Gamma$ we have
\begin{align}\label{eq:Gibbs_VF_Phi_M1_app}
(D_f^\Gamma)^*( g;P)\equiv\sup_{\mu\in M_1(\Omega)}\left\{\int g d\mu-D_f^\Gamma(\mu\|P)\right\}=\inf_{\nu\in\mathbb{R}}\{\nu+E_P[f^*( g-\nu)]\}\,,
\end{align}
where $M_1(\Omega)\equiv\{\mu\in M(\Omega):\mu(\Omega)=1\}$.  If $a\geq 0$ then 
\begin{align}\label{eq:Gibbs_VF_Phi_P_app}
(D_f^\Gamma)^*( g;P)\equiv \sup_{Q\in \mathcal{P}(\Omega)}\{E_Q[ g] -D_f^\Gamma(Q\|P)\}=\inf_{\nu\in\mathbb{R}}\{\nu+E_P[f^*( g-\nu)]\}\,.
\end{align}
\end{theorem}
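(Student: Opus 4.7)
The strategy is a two-sided bound: an essentially tautological upper bound on the convex conjugate, and a lower bound obtained by restricting attention to signed measures of the form $d\mu = h\,dP$ and invoking the Gibbs variational formula (Proposition \ref{prop:Gibbs_M1}).

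\textbf{Upper bound.} Fix $g\in\Gamma$. For any $\mu\in M_1(\Omega)$, the very definition \eqref{eq:gen_f_def_app} gives
\begin{align*}
D_f^\Gamma(\mu\|P)\;\geq\;\int g\,d\mu - \Lambda_f^P[g],
\end{align*}
so rearranging and taking the supremum over $\mu\in M_1(\Omega)$ yields $(D_f^\Gamma)^*(g;P)\leq \Lambda_f^P[g]$. (For the case $a\geq 0$, the same holds when the supremum is taken over $\mathcal{P}(\Omega)\subset M_1(\Omega)$.)

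\textbf{Lower bound.} The plan is to show that on the subclass of $\mu\in M_1(\Omega)$ of the form $d\mu=h\,dP$ with $h\in\mathcal{M}_b(\Omega)$ satisfying $E_P[h]=1$ and $E_P[f(h)]<\infty$, one has
\begin{align}\label{eq:plan_key_ineq}
D_f^\Gamma(\mu\|P)\;\leq\;E_P[f(h)].
\end{align}
Once \eqref{eq:plan_key_ineq} is in hand, restricting the supremum defining $(D_f^\Gamma)^*(g;P)$ to such $\mu$ gives
\begin{align*}
(D_f^\Gamma)^*(g;P)\;\geq\;\sup_{h}\bigl\{E_P[gh]-E_P[f(h)]\bigr\} \;=\; \Lambda_f^P[g],
\end{align*}
where the last equality is precisely Proposition \ref{prop:Gibbs_M1}. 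Combined with the upper bound, this yields \eqref{eq:Gibbs_VF_Phi_M1_app}. For $a\geq 0$, the constraint $E_P[f(h)]<\infty$ forces $h\geq 0$ $P$-a.s. (since $f(x)=\infty$ for $x<a\leq 0$ after extension via Definition \ref{def:F_1}), so $\mu$ is a probability measure and we invoke Corollary \ref{corr:Gibbs_a_pos} instead, establishing \eqref{eq:Gibbs_VF_Phi_P_app}.

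\textbf{The main step: proving \eqref{eq:plan_key_ineq}.} This is where the shift parameter $\nu$ in $\Lambda_f^P$ earns its keep. Fix $g\in\Gamma$ and $\nu\in\mathbb{R}$. Young's inequality $(g-\nu)h-f(h)\leq f^*(g-\nu)$ applied pointwise, together with $E_P[h]=1$, yields
\begin{align*}
\int g\,d\mu - E_P[f(h)] \;=\; E_P[(g-\nu)h] + \nu - E_P[f(h)] \;\leq\; \nu + E_P[f^*(g-\nu)].
\end{align*}
Taking the infimum over $\nu\in\mathbb{R}$ on the right and then the supremum over $g\in\Gamma$ on the left gives exactly \eqref{eq:plan_key_ineq}. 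The hardest conceptual step is recognizing that it is the mean-one condition $E_P[h]=1$ (which is why the argument runs on $M_1(\Omega)$ rather than general signed measures) that allows absorbing the shift $\nu$ into a harmless additive constant, thereby turning the naive Young bound involving $f^*(g)$ into the tighter bound involving $\Lambda_f^P[g]$.
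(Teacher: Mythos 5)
Your proposal is correct and follows essentially the same route as the paper's proof: the trivial upper bound from the definition, plus the lower bound obtained by restricting to $d\mu=h\,dP$ with $E_P[h]=1$, showing $D_f^\Gamma(P_h\|P)\leq E_P[f(h)]$ via Fenchel--Young (the paper writes this as $(f^*)^*=f$), and invoking Proposition \ref{prop:Gibbs_M1}. The only (harmless) deviation is in the $a\geq 0$ case, where the paper instead sandwiches using $D_f^\Gamma\leq D_f$ together with Corollary \ref{corr:Gibbs_a_pos}, whereas you observe directly that the witnesses $P_h$ are already probability measures.
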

\begin{proof}
 Let $ g\in\Gamma\subset\mathcal{M}_b(\Omega)$.  Using the definition of $D_f^\Gamma$ along with Proposition \ref{prop:Gibbs_M1} we have 
\begin{align}
&\sup_{\mu\in M_1(\Omega)}\left\{\int  g d\mu-D_f^\Gamma(\mu\|P)\right\}\\
=&\sup_{\mu\in M_1(\Omega)}\left\{\int g d\mu-\sup_{\widetilde g\in\Gamma}\left\{\int \widetilde g d\mu-\inf_{\nu\in\mathbb{R}}\left\{\nu+E_P[f^*(\widetilde g-\nu)]\right\}\right\}\right\}\notag\\
\leq&\sup_{\mu\in M_1(\Omega)}\left\{\int g d\mu-\left(\int  g d\mu-\inf_{\nu\in\mathbb{R}}\left\{\nu+E_P[f^*( g-\nu)]\right\}\right)\right\}\notag\\
=&\inf_{\nu\in\mathbb{R}}\{\nu+E_P[f^*( g-\nu)]\}\notag\\
=&\sup_{h\in \mathcal{M}_b(\Omega):E_P[h]=1,E_P[f(h)]<\infty}\left\{\int  g dP_h-E_P[f(h)]\right\}\,,\notag
\end{align}
where $dP_h=hdP$.  Noting that $P_h\in M_1(\Omega)$ and using $(f^*)^*=f$ we obtain the bound
\begin{align}
D_f^\Gamma(P_h\|P)=&\sup_{ g\in\Gamma}\sup_{\nu\in\mathbb{R}}E_P[( g-\nu)h-f^*( g-\nu)]\leq E_P[f(h)]\,.
\end{align}
Hence
\begin{align}
&\sup_{h\in \mathcal{M}_b(\Omega):E_P[h]=1,E_P[f(h)]<\infty}\left\{\int  g dP_h-E_P[f(h)]\right\}\\
\leq&\sup_{h\in \mathcal{M}_b(\Omega):E_P[h]=1,E_P[f(h)]<\infty}\left\{\int  g dP_h-D_f^\Gamma(P_h\|P)\right\}\notag\\
\leq &\sup_{\mu\in M_1(\Omega)}\left\{\int  g d\mu- D_f^\Gamma(\mu\|P)\right\}\,.\notag
\end{align}
Combining these we arrive at \req{eq:Gibbs_VF_Phi_M1_app}. If $a\geq 0$ then we can use \req{eq:Gibbs_VF_Phi_M1_app}, the bound $D_f^\Gamma(Q\|P)\leq D_f(Q\|P)$, and then \req{eq:Df_Gibbs} to obtain
\begin{align}
\inf_{\nu\in\mathbb{R}}\{\nu+E_P[f^*( g-\nu)]\}\geq&\sup_{Q\in\mathcal{P}(\Omega)}\{E_Q[ g]-D_f^\Gamma(Q\|P)\}\\
\geq & \sup_{Q\in\mathcal{P}(\Omega)}\{E_Q[ g]-D_f(Q\|P)\}\notag\\
=&\inf_{\nu\in\mathbb{R}}\{\nu+E_P[f^*( g-\nu)]\}\,,\notag
\end{align}
which proves \eqref{eq:Gibbs_VF_Phi_P_app}.
\end{proof}

{ 
Next we prove that the $(f,\Gamma)$-divergences are bounded above by the classical $f$-divergence and $\Gamma$-IPM and also derive the convexity and divergence   properties from Theorem \ref{thm:general_ub}.
\begin{theorem}\label{thm:general_ub_app}
Let $f\in\mathcal{F}_1(a,b)$, $\Gamma\subset\mathcal{M}_b(\Omega)$ be nonempty, and $Q,P\in\mathcal{P}(\Omega)$. 
\begin{enumerate}
    \item 
\begin{align}\label{eq:inf_conv_ineq_app}
    D_f^\Gamma(Q\|P)\leq \inf_{\eta\in\mathcal{P}(\Omega)}\{D_f(\eta\|P)+W^\Gamma(Q,\eta)\}\,.
\end{align}
In particular, $D_f^\Gamma(Q\|P)\leq \min\{D_f(Q\|P),W^\Gamma(Q,P)\}$.
\item The map $(Q,P)\in\mathcal{P}(S)\times\mathcal{P}(S)\mapsto D_f^\Gamma(Q\|P)$ is convex. 
\item If there exists $c_0\in \Gamma\cap\mathbb{R}$ then $D_f^\Gamma(Q\|P)\geq 0$.
\item Suppose $f$ and $\Gamma$  satisfy the following:
\begin{enumerate}
\item There exist a nonempty set $\Psi\subset\Gamma$ with the following properties:
\begin{enumerate}
\item $\Psi$ is $\mathcal{P}(\Omega)$-determining.
\item For all $\psi\in\Psi$  there exists $c_0\in\mathbb{R}$, $\epsilon_0>0$ such that $c_0+\epsilon \psi\in\Gamma$ for all $|\epsilon|<\epsilon_0$.
\end{enumerate}
\item $f$ is strictly convex on a neighborhood of $1$.
\item $f^*$ is finite and $C^1$ on a neighborhood of $\nu_0\equiv f_+^\prime(1)$.
\end{enumerate}
Then:
\begin{enumerate}[label=(\roman*)]
\item $D_f^{\Gamma}$ has the divergence property.
\item $W^{\Gamma}$ has the divergence property.
\end{enumerate}
\end{enumerate}
\end{theorem}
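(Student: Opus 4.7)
My plan proceeds part by part. For Part 1, I would start from the formulation \eqref{eq:Df_Gamma_def2} and, for any $\eta \in \mathcal{P}(\Omega)$, $g \in \Gamma$, and $\nu \in \mathbb{R}$, insert $\pm E_\eta[g-\nu]$ to decompose
\[
E_Q[g-\nu] - E_P[f^*(g-\nu)] = (E_Q - E_\eta)[g] + \bigl(E_\eta[g-\nu] - E_P[f^*(g-\nu)]\bigr).
\]
The first summand is bounded by $W^\Gamma(Q,\eta)$ since $g \in \Gamma$, while the second is bounded by $D_f(\eta\|P)$ via the classical variational formula \eqref{eq:Df_var_formula} applied to $h = g - \nu \in \mathcal{M}_b(\Omega)$. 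Taking suprema in $g$ and $\nu$ on the left and then an infimum over $\eta$ gives \eqref{eq:inf_conv_ineq_app}. Part 2 follows immediately: for each fixed $g$, the map $P \mapsto \Lambda_f^P[g]$ is concave as an infimum of affine functionals of $P$, so $(Q,P) \mapsto E_Q[g] - \Lambda_f^P[g]$ is jointly convex, and the supremum over $g \in \Gamma$ preserves convexity.

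For Part 3, substituting the constant $g = c_0$ and then $\mu = c_0 - \nu$ in the inner infimum of $\Lambda_f^P[c_0]$ gives
\[
E_Q[c_0] - \Lambda_f^P[c_0] = \sup_{\mu \in \mathbb{R}} \{\mu - f^*(\mu)\} = (f^*)^*(1) = f(1) = 0,
\]
where we used that $f$ is convex and LSC. Hence $D_f^\Gamma(Q\|P) \geq 0$. Combining this with Part 1 taken at $\eta = P$ gives the forward direction of Part 4(i): $Q = P$ implies $D_f^\Gamma(Q\|P) = 0$.

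The reverse direction of Part 4(i) is the main obstacle. Fix $\psi \in \Psi$ with associated $c_0, \epsilon_0$ from Assumption 4(a)(ii). The translation $\tilde\nu = \nu - c_0$ in \eqref{eq:Lambda_f_def} yields $\Lambda_f^P[c_0 + \epsilon\psi] = c_0 + \Lambda_f^P[\epsilon\psi]$, so for $|\epsilon| < \epsilon_0$,
\[
0 = D_f^\Gamma(Q\|P) \geq F(\epsilon) \equiv \epsilon E_Q[\psi] - \Lambda_f^P[\epsilon\psi],
\]
with $F(0) = -\Lambda_f^P[0] = 0$ by the same reduction to $(f^*)^*(1)$ as in Part 3. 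Thus $F$ attains a maximum at $\epsilon = 0$, and it suffices to prove $F$ is differentiable at $0$ with $F'(0) = E_Q[\psi] - E_P[\psi]$; this forces $E_Q[\psi] = E_P[\psi]$ for each $\psi \in \Psi$, whereupon the $\mathcal{P}(\Omega)$-determining property concludes $Q = P$. The hard part is the derivative computation. Using Lemma \ref{lemma:nu0} together with Assumptions 4(b,c), $\nu_0 = f'_+(1)$ satisfies $f^*(\nu_0) = \nu_0$ and $(f^*)'(\nu_0) = 1$, and $f^*$ is $C^1$ on a neighborhood of $\nu_0$, so $\nu_* = -\nu_0$ is the optimizer in $\Lambda_f^P[0]$. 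Reparametrizing $\nu = -\nu_0 + \delta$ we write
\[
\Lambda_f^P[\epsilon\psi] = -\nu_0 + \inf_{\delta} \bigl\{\delta + E_P[f^*(\nu_0 + \epsilon\psi - \delta)]\bigr\},
\]
and I would Taylor-expand $f^*(\nu_0 + u) = \nu_0 + u + r(u)$ with $r(u)/u \to 0$ as $u \to 0$. Since $\psi$ is bounded, $\epsilon\psi - \delta$ can be kept inside the $C^1$ neighborhood uniformly for small $|\epsilon|, |\delta|$; a dominated-convergence argument controls the remainder and yields $\Lambda_f^P[\epsilon\psi] = \epsilon E_P[\psi] + o(\epsilon)$, which gives the desired one-sided derivatives of $F$ from both sides.

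Finally, Part 4(ii) follows by a purely linear version of the same argument applied to $W^\Gamma$: for $|\epsilon| < \epsilon_0$ and any $\psi \in \Psi$, testing with $c_0 + \epsilon\psi \in \Gamma$ gives $W^\Gamma(Q,P) \geq \epsilon(E_Q[\psi] - E_P[\psi])$, which upon taking both signs of $\epsilon$ yields both $W^\Gamma(Q,P) \geq 0$ and, if $W^\Gamma(Q,P) = 0$, the equality $E_Q[\psi] = E_P[\psi]$ for every $\psi \in \Psi$; the $\mathcal{P}(\Omega)$-determining property of $\Psi$ then gives $Q = P$.
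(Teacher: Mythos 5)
Your proposal is correct, and the heart of the argument -- Part 4 -- is essentially the paper's proof: perturb around a constant in $\Gamma$, use $f^*(\nu_0)=\nu_0$ and $(f^*)'(\nu_0)=1$ from Lemma \ref{lemma:nu0}, and extract $E_Q[\psi]=E_P[\psi]$ from first-order optimality at $\epsilon=0$. Where you genuinely diverge is Part 1: the paper proves \eqref{eq:inf_conv_ineq_app} by a min--max inequality combined with the Gibbs dual formula \eqref{eq:Df_Gibbs_var_formula}, which only holds for $a\geq 0$ and therefore forces a preliminary restriction of $f$ to $(\max\{a,0\},b)$; your direct decomposition $E_Q[g-\nu]-E_P[f^*(g-\nu)]=(E_Q-E_\eta)[g]+\bigl(E_\eta[g-\nu]-E_P[f^*(g-\nu)]\bigr)$ uses only the LT representation \eqref{eq:Df_var_formula}, which is valid for all $f\in\mathcal{F}_1(a,b)$, so it is cleaner and avoids that detour entirely. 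One small remark on Part 4(i): you do not actually need full differentiability of $F$ at $0$. The inequality $F(\epsilon)\leq 0$ only requires a lower bound on $F$, i.e., an upper bound on $\Lambda_f^P[\epsilon\psi]$, which you get by evaluating the infimum at $\delta=0$; this is precisely what the paper does by plugging in the specific shift $\nu=c_0-\nu_0$ and differentiating the resulting explicit function $h(\epsilon)$. If you do want the two-sided expansion $\Lambda_f^P[\epsilon\psi]=\epsilon E_P[\psi]+o(\epsilon)$, the matching lower bound is immediate from $f^*(y)\geq y$ (Lemma \ref{eq:f_star_lb}), which gives $\Lambda_f^P[g]\geq E_P[g]$ exactly, so your stronger claim is also attainable with one extra line. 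Your Part 4(ii) is a direct linear argument rather than the paper's deduction from the bound $D_f^\Gamma\leq W^\Gamma$; both are valid and equally short.
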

}
\begin{proof}
\begin{enumerate}
    \item { 
Let $f_0$ be the restriction of $f$ to the interval $(\max\{a,0\},b)$, so that $f_0\in \mathcal{F}_{1}(\max\{a,0\},b)$.  Note that $f$ and $f_0$ agree on $[0,\infty)$ and so $D_f=D_{f_0}$ (see \eqref{eq:Df_def}). The definition of the Legendre transform implies  $f_0^*\leq f^*$ and so $\Lambda_{f_0}^P\leq \Lambda_f^P$ (see \eqref{eq:Lambda_f_def}). We can now compute
\begin{align}
    \inf_{\eta\in\mathcal{P}(\Omega)}\{D_f(\eta\|P)+W^\Gamma(Q,\eta)\}=&
    \inf_{\eta\in\mathcal{P}(\Omega)}\{D_{f_0}(\eta\|P)+W^\Gamma(Q,\eta)\}\\        =&\inf_{\eta\in\mathcal{P}(\Omega)}\{\sup_{g\in\Gamma}\{D_{f_0}(\eta\|P)+E_Q[g]-E_\eta[g]\}\}\notag\\
        \geq &\sup_{g\in\Gamma}\{\inf_{\eta\in\mathcal{P}(\Omega)}\{D_{f_0}(\eta\|P)+E_Q[g]-E_\eta[g]\}\}\notag\\
             =&\sup_{g\in\Gamma}\{E_Q[g]-\Lambda_{f_0}^P[g]\}\notag\\
          \geq&\sup_{g\in\Gamma}\{E_Q[g]-\Lambda_{f}^P[g]\}=D_f^\Gamma(Q\|P)\notag\,,
\end{align}
where we used \req{eq:Df_Gibbs_var_formula} to  obtain  the second-to-last  line. 
\begin{remark}
We emphasize that $\Lambda_f^P$ naturally appears when working with the infimal convolution of an $f$-divergence and a $\Gamma$-IPM, due to the identity \eqref{eq:Df_Gibbs_var_formula}.
\end{remark}}
\item {  Convexity of $D_f^\Gamma$ on $\mathcal{P}(S)\times \mathcal{P}(S)$ follows from \eqref{eq:Df_Gamma_def2}, which shows that $D_f^\Gamma(Q\|P)$ is the supremum of functions that are  affine in $(Q,P)$.
}
\item  { 
Lemma \ref{lemma:nu0} implies $f^*(\nu_0)=\nu_0$, where $\nu_0\equiv f_+^\prime(1)$.  By assumption, $c_0\in \Gamma\cap\mathbb{R}$, hence bounding \eqref{eq:Df_Gamma_def2} below by its value at $g=c_0$, $\nu=c_0-\nu_0$ we find
\begin{align}
    D_f^\Gamma(Q\|P)\geq& E_Q[c_0-(c_0-\nu_0)]-E_P[f^*(c_0-(c_0-\nu_0))]\\
    =&\nu_0-f^*(\nu_0)=0\,.\notag
\end{align}
\begin{remark}
 Note the importance of the infimum over $\nu$ in $\Lambda_f^P$, which allowed us to obtain a lower bound at an appropriate value of $\nu$. This same technique will be used several times below and highlights the importance of employing $\Lambda_f^P$ in the definition \eqref{eq:gen_f_def} of $D_f^\Gamma$. See also Remark \ref{remark:Lambda_f}.
\end{remark}
}
\item Now suppose $f$ and $\Gamma$ satisfy 2.a - 2.c Lemma \ref{lemma:nu0} implies that
\begin{align}\label{eq:nu0_identities}
f^*(\nu_0)=\nu_0\,,\,\,\,\, (f^*)^\prime(\nu_0)=1\,.
\end{align}
Assumption 2.a.ii implies there exists $c_0\in\Gamma\cap\mathbb{R}$ hence Part 3 of this theorem implies $D_f^\Gamma(Q\|P)\geq 0$. From \req{eq:inf_conv_ineq_app} we have $D_f^{\Gamma}(Q\|P)\leq D_f(Q\|P)$.  Combining this with the non-negativity of $D_f^{\Gamma}$ and  the fact that $D_f(P\|P)=0$   we see that if $Q=P$ then $D_f^{\Gamma}(Q\|P)=0$. 
 
Next assume  $D_f^{\Gamma}(Q\|P)=0$: From assumption 2.a.ii, given $\psi\in\Psi$ there exists $c_0\in\mathbb{R}$, $\epsilon_0>0$ such that $ g_\epsilon\equiv c_0+\epsilon\psi\in\Gamma\cap\mathcal{M}_b(\Omega)$ for all $|\epsilon|<\epsilon_0$. Therefore
\begin{align}
0=&D_f^{\Gamma}(Q\|P)\geq  E_Q[ g_\epsilon]-\inf_{\nu\in\mathbb{R}}\{\nu+E_P[f^*( g_\epsilon-\nu)]\}\\
\geq&E_Q[ g_\epsilon]-(c_0-\nu_0+E_P[f^*(\nu_0+\epsilon\psi)])\notag\\
=&(\nu_0+\epsilon E_Q[\psi])-E_P[f^*(\nu_0+\epsilon\psi)])\equiv h(\epsilon)\,.\notag
\end{align}
 By assumption 2.c, there exists $\delta>0$ such that $f^*$ is finite and $C^1$ on the  ball of radius $\delta$ centered at $\nu_0$, denoted by $B_\delta(\nu_0)$.   $\psi$ is bounded, therefore we can find $C>0$ with $|\psi|\leq C$.  For $|\epsilon|<\min\{\epsilon_0,\delta/(2C)\}$ we have $\range(\nu_0+\epsilon\psi)\subset B_{\delta/2}(\nu_0)$. On $B_{\delta/2}(\nu_0)$,  $f^*$ is $C^1$ and $f^*$, $(f^*)^\prime$  are both bounded. Hence the dominated convergence theorem  implies  $h$ is $C^1$ on $|\epsilon|<\min\{\epsilon_0,\delta/C\}$ and $h^\prime(\epsilon)=E_Q[\psi]-E_P[(f^*)^\prime(\nu_0+\epsilon\psi)\psi]$.  Evaluating this at $\epsilon=0$ and using \req{eq:nu0_identities} we find
\begin{align}
h^\prime(0)=E_Q[\psi]-E_P[(f^*)^\prime(\nu_0)\psi]=E_Q[\psi]-E_P[\psi]\,.
\end{align}
Again using \eqref{eq:nu0_identities} we can also compute  $h(0)= \nu_0-E_P[f^*(\nu_0)]=0$. Combining these facts with the bound $h(\epsilon)\leq 0$ we can conclude that $h^\prime(0)=0$ and hence $E_Q[\psi] =E_P[\psi]$ for all $\psi\in\Psi$. By assumption 2.a.i, $\Psi$ is $\mathcal{P}(\Omega)$-determining and so $Q=P$. This completes the proof of the divergence property for $D_f^\Gamma$. 

{ 
The divergence property for $W^\Gamma$ then follows from the divergence property for $D^\Gamma_f$ together with the bound $D^\Gamma_f(Q\|P)\leq W^\Gamma(Q,P)$ and the definition \eqref{eq:gen_wasserstein}.
}
\end{enumerate}
\end{proof}

Next we prove the infimal convolution formula, \eqref{eq:inf_conv}, as well as the other properties from  Theorem \ref{thm:f_div_inf_convolution}, again in somewhat greater generality.
\begin{theorem}\label{thm:f_div_inf_convolution_app}
 Suppose   $f$ and $\Gamma$ are admissible. For $P\in\mathcal{P}(S)$, $\mu\in M(S)$ let $D^\Gamma_f(\mu\|P)$ be defined by \eqref{eq:gen_f_def_app} 
and for $\mu,\kappa\in M(S)$ let $W^\Gamma(\mu,\kappa)$ be defined as in \eqref{eq:gen_wasserstein_app}. These have the following properties:
\begin{enumerate}
\item Infimal Convolution Formula: 
\begin{align}\label{eq:inf_conv_app}
D_f^\Gamma(\mu\|P)=\inf_{\eta\in \mathcal{P}(S)}\{D_f(\eta\|P)+W^\Gamma(\mu,\eta)\}\,.
\end{align}
In particular, $D_f^\Gamma(\mu\|P)\leq W^\Gamma(\mu,P)$ and if $Q\in\mathcal{P}(S)$ then $D_f^\Gamma(Q\|P)\leq D_f(Q\|P)$.
\item If $D_f^\Gamma(\mu\|P)<\infty$ then there exists $\eta_*\in\mathcal{P}(S)$ such that
\begin{align}\label{eq:inf_conv_existence_app}
D_f^\Gamma(\mu\|P)=D_f(\eta_*\|P)+W^\Gamma(\mu,\eta_*)\,.
\end{align}
If $f$ is strictly convex then there is a unique such $\eta_*$.
\item Divergence Property for $W^\Gamma$: $W^\Gamma\geq 0$ and $W^\Gamma(\mu,\mu)=0$ for all $\mu\in M(S)$. If $\Gamma$ is strictly admissible  then for all $Q,P\in\mathcal{P}(S)$ we have $W^\Gamma(	Q,P)=0$ if and only if $Q=P$.
\item Divergence Property for $D^\Gamma_f$: For $Q,P\in\mathcal{P}(S)$ we have $D_f^\Gamma(Q\|P)\geq 0$ and $D_f^{\Gamma}(P\|P)=0$.  If $f$ and $\Gamma$ are both strictly admissible then $D_f^\Gamma(Q\|P)=0$ if and only if $Q=P$.
\end{enumerate}

\end{theorem}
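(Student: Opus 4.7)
The four parts will be addressed in order, with Parts 3 and 4 leaning on the machinery developed for Parts 1 and 2. The driving observation is that the definition \eqref{eq:gen_f_def_app} presents $D_f^\Gamma(\mu\|P)$ as the convex conjugate, on the dual pairing $(C_b(S),M(S))$, of the sum $\Lambda_f^P + \iota_\Gamma$, where $\iota_\Gamma$ denotes the $\{0,+\infty\}$-valued convex indicator of $\Gamma$. Under the admissibility hypotheses this sum is a sum of proper convex lower semicontinuous functionals on $C_b(S)$: because $\{f^*<\infty\}=\mathbb{R}$, the functional $\Lambda_f^P$ is finite on $C_b(S)$ and is LSC as the supremum of the weakly continuous affine functionals $g\mapsto E_Q[g]-D_f(Q\|P)$ appearing in \eqref{eq:Df_Gibbs_var_formula} and \eqref{eq:Df_Gibbs_M1}; while $\iota_\Gamma$ is convex LSC since $\Gamma$ is convex and closed in the weak topology for which $C_b(S)^*=M(S)$.

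For Part 1, the ``easy'' direction is immediate: for any $g\in\Gamma$ and any $\eta\in\mathcal{P}(S)$, adding and subtracting $\int g\,d\eta$ and using \eqref{eq:Df_Gibbs_var_formula} gives $\int g\,d\mu - \Lambda_f^P[g] \leq W^\Gamma(\mu,\eta) + D_f(\eta\|P)$, and one then takes the supremum over $g$ and the infimum over $\eta$; this reproduces the bound already obtained in Theorem \ref{thm:general_ub_app}. The reverse inequality is strong duality. My plan is to apply the Fenchel conjugate-of-a-sum formula, obtaining that $(\Lambda_f^P+\iota_\Gamma)^*$ equals the infimal convolution of $(\Lambda_f^P)^*$ and $\iota_\Gamma^*$. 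A Fenchel--Moreau argument built on \eqref{eq:Df_Gibbs_var_formula} (and, when $a<0$, on Proposition \ref{prop:Gibbs_M1}) identifies $(\Lambda_f^P)^*(\eta)=D_f(\eta\|P)$ for $\eta\in\mathcal{P}(S)$, with the shift identity $\Lambda_f^P[g+c]=\Lambda_f^P[g]+c$ ruling out contributions from signed $\eta$ with $\eta(S)\neq 1$; while $\iota_\Gamma^*(\mu-\eta)=W^\Gamma(\mu,\eta)$ by definition. The hard part is the qualification condition for the Fenchel identity. Since $0\in\Gamma\cap\dom(\Lambda_f^P)$ and $\Lambda_f^P$ is $1$-Lipschitz in the supremum norm (as a supremum of expectations under probability measures), standard duality results apply; as an alternative route I would invoke Sion's minimax theorem on $\Gamma\times\{\eta:D_f(\eta\|P)\leq C\}$ for each large $C$, using the compactness of the sublevel sets from Lemma \ref{lemma:Df_compact_sublevel} to justify the interchange and then letting $C\to\infty$. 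This step is the main obstacle.

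Part 2 is a compactness-plus-LSC argument. The functional $J(\eta)\equiv D_f(\eta\|P)+W^\Gamma(\mu,\eta)$ is LSC on $\mathcal{P}(S)$ by Corollary \ref{cor:Df_LSC} and the fact that $W^\Gamma(\mu,\cdot)$ is the supremum of weakly continuous functionals (since $\Gamma\subset C_b(S)$); because $W^\Gamma\geq 0$ (it contains $g=0$), the sublevel sets of $J$ lie inside those of $D_f(\cdot\|P)$ and are therefore compact by Lemma \ref{lemma:Df_compact_sublevel}. Hence $J$ attains its infimum whenever it is finite, i.e.\ whenever $D_f^\Gamma(\mu\|P)<\infty$ by Part 1. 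Uniqueness under strict convexity of $f$ follows from Lemma \ref{lemma:strictly_convex} (strict convexity of $D_f(\cdot\|P)$) together with convexity of $W^\Gamma(\mu,\cdot)$.

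For Part 3, $W^\Gamma(\mu,\mu)=0$ by substitution and $W^\Gamma\geq 0$ because $0\in\Gamma$; the converse uses strict admissibility of $\Gamma$: if $W^\Gamma(Q,P)=0$ with $Q,P\in\mathcal{P}(S)$, then for each $\psi\in\Psi$ the shift property $c\pm\epsilon\psi\in\Gamma$ combined with $\int c\,d(Q-P)=0$ gives $\pm\epsilon\int\psi\,d(Q-P)\leq 0$, forcing agreement of the $\psi$-integrals under $Q$ and $P$ and hence $Q=P$ since $\Psi$ is $\mathcal{P}(S)$-determining. For Part 4, non-negativity and $D_f^\Gamma(P\|P)=0$ are already in Theorem \ref{thm:general_ub_app}; the converse combines Parts 1--3: if $D_f^\Gamma(Q\|P)=0$, the optimizer $\eta_*$ from Part 2 satisfies $D_f(\eta_*\|P)=W^\Gamma(Q,\eta_*)=0$, so the classical divergence property (using strict convexity of $f$ at $1$ from strict admissibility) forces $\eta_*=P$, and Part 3 then forces $Q=P$.
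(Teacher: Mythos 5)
Your overall architecture is the same as the paper's: write $D_f^\Gamma(\mu\|P)$ as the conjugate of $\Lambda_f^P+\iota_\Gamma$ on the pairing $(C_b(S),M(S))$, identify $(\Lambda_f^P)^*$ with $D_f(\cdot\|P)$ and $\iota_\Gamma^*$ with the IPM, then get Part 2 by inf-compactness plus lower semicontinuity and Parts 3--4 by the shift/perturbation argument and by combining Parts 1--3. Parts 2, 3 and 4 of your plan are correct and essentially identical to the paper's proof. The problem is the strong-duality step in Part 1, which is the substantive content of the theorem, and neither of the two routes you offer closes it. Your primary route invokes a ``standard'' constraint qualification from $0\in\Gamma\cap\operatorname{dom}\Lambda_f^P$ and sup-norm $1$-Lipschitzness of $\Lambda_f^P$. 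But the topology in which the conjugates are taken is the weak topology on $C_b(S)$ generated by $M(S)$ (this is exactly what forces $C_b(S)^*=M(S)$ rather than the space of finitely additive measures), and norm-Lipschitzness does not give continuity in that topology: weak neighborhoods of $0$ contain functions of arbitrarily large sup norm, so $\Lambda_f^P$ is not weakly continuous anywhere, and the ``continuous at a point of the other domain'' qualification fails. If you instead work in the norm topology the qualification holds, but then the dual pairing is with $rba(S)$ and the infimal convolution ranges over finitely additive measures, leaving you a nontrivial reduction to countably additive ones. The paper resolves this by using the alternative sufficient condition of Theorem 2.3.10 in \cite{bot2009duality}: it verifies \emph{directly} that $H_1^*\Box H_2^*(\tau_\mu)=\inf_{\eta\in\mathcal P(S)}\{D_f(\eta\|P)+W^\Gamma(\mu,\eta)\}$ is weak-* lower semicontinuous in $\tau_\mu$, which is done with a net argument using the compactness of the sublevel sets of $D_f(\cdot\|P)$ (Lemma \ref{lemma:Df_compact_sublevel}) and the LSC of $w^\Gamma$. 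This verification is the heart of the proof and is absent from your plan.

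Your fallback route via Sion's theorem on $\Gamma\times K_C$ with $K_C=\{\eta:D_f(\eta\|P)\le C\}$ has an order-of-limits gap. Sion gives $\sup_{g\in\Gamma}\inf_{\eta\in K_C}F=\inf_{\eta\in K_C}\sup_{g\in\Gamma}F$, and letting $C\to\infty$ the right side decreases to $\inf_{\eta\in\mathcal P(S)}\{D_f+W^\Gamma\}$; but on the left side you need $\inf_C\sup_g\inf_{K_C}F\le\sup_g\inf_C\inf_{K_C}F=D_f^\Gamma(\mu\|P)$, and $\inf_C\sup_g\ge\sup_g\inf_C$ is the wrong inequality. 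Since the inner infimum defining $\Lambda_f^P[g]$ is effectively confined to $\{D_f(\eta\|P)\le 2\|g\|_\infty\}$, the interchange would close if the near-optimal $g$'s produced by Sion had uniformly bounded sup norm, but admissible $\Gamma$ (e.g.\ $\Gamma=C_b(S)$) need not be bounded. A correct repair along these lines would be a minimax theorem with an inf-compactness hypothesis (note $F(0,\cdot)=D_f(\cdot\|P)$ has compact sublevel sets), applied once on all of $\mathcal P(S)$ rather than on truncations. Two smaller omissions: you rule out contributions to the infimal convolution from $\eta$ with $\eta(S)\ne1$ via the shift identity, but you must also show $(\Lambda_f^P)^*(\tau_\eta)=\infty$ for signed $\eta$ with a nontrivial negative part; this is where the admissibility hypothesis $\lim_{y\to-\infty}f^*(y)<\infty$ is actually used (via a Lusin-type approximation of $-n\,1_F$ by continuous functions), and without it the reduction of the infimum to $\mathcal P(S)$ is unjustified.
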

\begin{proof}
\begin{enumerate}
\item Define $H_1,H_2:C_b(S)\to(-\infty,\infty]$ by
\begin{align}\label{eq:H1_def}
H_1( g)=\inf_{\nu\in\mathbb{R}}\{\nu+E_P[f^*( g-\nu)]\}
\end{align}
 and $H_2( g)=\infty 1_{\Gamma^c}( g)$ (note that $H_1>-\infty$ follows from the bound $f^*(y)\geq y$; see Lemma \ref{eq:f_star_lb}). We first show that $H_1$ and $H_2$ are convex and LSC. To see that $H_1$ is convex, note that convexity of $f^*$ implies that the map $( g,\nu)\mapsto \nu+E_P[f^*( g-\nu)]$ is convex on $C_b(S)\times\mathbb{R}$. Therefore, taking the infimum over $\nu$ results in a convex function of $g$; see Theorem 2.2.6 in \cite{bot2009duality}.   To show lower semicontinuity of $H_1$, first recall the variational formula \eqref{eq:Df_Gibbs_M1}
\begin{align}\label{eq:Lambda_f_var2}
\inf_{\nu\in\mathbb{R}}\{\nu+E_P[f^*( g-\nu)]\}=&\sup_{h\in \mathcal{M}_b(S):E_P[h]=1, E_P[f(h)]<\infty}\{E_P[ g h]-E_P[f(h)]\}\,.
\end{align}
 We can write $E_P[ gh]=\int g dP_h$ where $dP_h\equiv h dP\in M(S)$.  Recalling that $C_b(S)^*=\{\tau_\eta:\eta\in M(S)\}$, $\tau_\eta( g)\equiv \int g d\eta$ we see that $g\mapsto E_P[ g h]$ is continuous on $C_b(S)$. Therefore \req{eq:Lambda_f_var2} expresses $H_1$ as the supremum of a family of continuous functions, thus proving $H_1$  is LSC.  $H_2$ is LSC and convex since $\Gamma$ is closed and convex. We can write $D_f^\Gamma(\mu\|P)$ as the infinite-dimensional convex conjugate of $H_1+H_2$:
\begin{align}
D_f^\Gamma(\mu\|P)=&\sup_{ g\in C_b(S)}\{\tau_\mu( g)-(H_1( g)+H_2( g))\}=(H_1+H_2)^*(\tau_\mu)\,.
\end{align}
   We will now use the theory of infimal convolutions to compute the convex conjugate of $H_1+H_2$.  Under appropriate assumptions, this theory allows one to show 
\begin{align}\label{eq:H_star_goal}
    (H_1+H_2)^*(\tau)=\inf\{H_1^*(\tau_1)+H_2^*(\tau_2):\tau_1+\tau_2=\tau\}\equiv (H_1^*\Box H_2^*)(\tau)\,, \,\,\,\tau\in C_b(S)^*\,,
\end{align}
where $H_1^*\Box H_2^*$ is called the infimal convolution; see, e.g., Chapter 2 in \cite{bot2009duality} for further information on infimal convolutions. Specifically, using    Theorem 2.3.10 in \cite{bot2009duality}, if $\dom H_1\cap \dom H_2\neq \emptyset$ and $H_1^*\Box H_2^*$ is LSC in the weak-* topology on $C_b(S)^*$  then $(H_1+H_2)^*=H_1^*\Box H_2^*$. To show the first condition, note that  $f^*$ is not identically equal to $\infty$ and $0\in\Gamma$; using these facts it is straightforward to show that $0\in \dom H_1\cap \dom H_2$. Therefore if we can prove lower semicontinuity of $H_1^*\Box H_2^*$ then we can conclude \eqref{eq:H_star_goal}. To accomplish this, first   rewrite
\begin{align}\label{eq:Box_H}
H_1^*\Box H_2^*(\tau_\mu)=\inf_{\eta\in M(S)}\{H_1^*(\tau_\eta)+W^\Gamma(\mu,\eta)\}\,.
\end{align} 
Next we show that the infimum in \eqref{eq:Box_H} can be restricted to $\mathcal{P}(S)$. We do this in two steps:
\begin{enumerate}
    \item  $H_1^*(\tau_\eta)=\infty$ when $\eta$ is not positive: To show this, first note that
\begin{align}
H_1^*(\tau_\eta)\geq \sup_{ g\in C_b(S)}\left\{\int  g d\eta-E_P[f^*( g)]\right\}\,.
\end{align}
If there exists a measurable set $F\subset S$ with $\eta(F)<0$ then by the extension of Lusin's theorem found in Appendix D of \cite{dudley2014uniform}, for any $\epsilon>0$ there exists a closed set $E_\epsilon\subset S$ such that $|\eta|(E_\epsilon^c)<\epsilon$ and a $g_\epsilon\in C_b(S)$ such that $0\leq g_\epsilon\leq 1$ and $g_\epsilon=1_F$ on $E_\epsilon$.  For $n\in\mathbb{Z}^+$ define $g_{n,\epsilon}=-ng_\epsilon\in C_b(S)$. The assumption that $f$ is admissible implies $\lim_{y\to-\infty}f^*(y)<\infty$. Therefore $f^*(g_{n,\epsilon})$  is bounded above independent of $n,\epsilon$, and so there exists $D\in\mathbb{R}$ with
\begin{align}
&H_1^*(\tau_\eta)\geq -n\int g_\epsilon d\eta -D=n|\eta(F)|+n\int 1_{E_\epsilon^c}1_Fd\eta-n\int _{E_\epsilon^c}g_\epsilon d\eta-D\\
\geq &n|\eta(F)|-2\epsilon n-D\,.\notag
\end{align}
Letting $\epsilon<|\eta(F)|/2$ and sending $n\to\infty$ proves the claim.

\item  $H_1^*(\tau_\eta)=\infty$ if $\eta(S)\neq 1$:  For $c\in\mathbb{R}$ we can use the fact that $(f^*)^*=f$ to compute
\begin{align}\label{eq:gamma_mass_1}
H_1^*(\tau_\eta)\geq& c\eta(S)-\inf_{\nu\in\mathbb{R}}\{\nu+f^*(c-\nu)\}=c(\eta(S)-1)+\sup_{\nu\in\mathbb{R}}\{c-\nu-f^*(c-\nu)\}\\
=& c(\eta(S)-1)+f(1)=c(\eta(S)-1)\,.\notag
\end{align}
Taking $c\to \pm\infty$ proves the claim.
\begin{remark}\label{remark:Lambda_f}
In light of the variational formula \eqref{eq:Df_var_formula}, one might be motivated to define $D_f^\Gamma$ using  $E_P[f^*( g)]$ in place of $H_1( g)$. However, the property proven in \req{eq:gamma_mass_1} would fail in that case and we would be unable to proceed with our method of proof.  If $\Gamma$ is closed under the shift transformations $ g\mapsto g-\nu$, $\nu\in\mathbb{R}$, then the choice of $H_1( g)$ versus $E_P[f^*( g)]$  does not impact the value of $D^\Gamma_f(\mu\|P)$ when $\mu=Q\in\mathcal{P}(S)$, but it can change the value if $\mu\in M(S)\setminus\mathcal{P}(S)$ and the choice can also impact the performance of numerical computations (see  \cite{Ruderman,birrell2020optimizing} for discussion of this issue in the context of classical $f$-divergences).  
\end{remark}
\end{enumerate}
Having proven the above two properties we can now conclude that  $H_1^*(\tau_\eta)=\infty$ if $\eta\not\in\mathcal{P}(S)$, hence 
\begin{align}\label{eq:boxH_P(S)}
H_1^*\Box H_2^*(\tau_\mu)=\inf_{\eta\in\mathcal{P}(S)}\{D_f(\eta\|P)+W^\Gamma(\mu,\eta)\}\,,
\end{align}
where we used Corollary \ref{cor:Df_LSC} and Remark \ref{remark:Df_nu_shift} to evaluate $H_1^*(\tau_\eta)$. To prove lower semicontinuity of $H_1^*\Box H_2^*$, let $a\in\mathbb{R}$ and take a net $\{\tau_{\mu_\alpha}\}_{\alpha\in A}$ in $\{H_1^*\Box H_2^*\leq a\}$ with $\tau_{\mu_\alpha}\to\tau_\mu$.    For any $\epsilon>0$, \req{eq:boxH_P(S)} implies there exists $\eta_{\alpha,\epsilon}\in\mathcal{P}(S)$ with
\begin{align}
\epsilon+a>D_f(\eta_{\alpha,\epsilon}\|P)+W^\Gamma(\mu_\alpha,\eta_{\alpha,\epsilon})\geq D_f(\eta_{\alpha,\epsilon}\|P)\,,
\end{align}
where in the second inequality we used the fact that $W^\Gamma\geq 0$ (to see this, bound it below by taking $ g=0$ in \eqref{eq:gen_wasserstein_app}).
$D_f(\cdot\|P)$ is lower semicontinuous (see Corollary \ref{cor:Df_LSC}) and has compact sublevel sets in the Prohorov-metric topology (see  Lemma \ref{lemma:Df_compact_sublevel}), so there exists a subnet $\eta_{\alpha_\beta,\epsilon}$, $\beta\in B$ (where $B$ is some directed set) with $\eta_{\alpha_\beta,\epsilon}\to \eta_\epsilon$ in the Prohorov metric (i.e., weakly) and
\begin{align}
D_f(\eta_\epsilon\|P)\leq \liminf_\beta D_f(\eta_{\alpha_\beta,\epsilon}\|P)\equiv \sup_{\tilde \beta}\inf_{\beta\geq\tilde\beta}D_f(\eta_{\alpha_\beta,\epsilon}\|P)\,.
\end{align}

The weak-* topology on $C_b(S)^*$ is generated by $\{\pi_ g: g\in C_b(S)\}$, $\pi_ g(\tau)\equiv \tau( g)$ and we have 
\begin{align}\label{eq:W_Psi}
W^\Gamma(\mu,\eta)=w^\Gamma(\tau_\mu,\tau_\eta)\,,
\end{align}
where $w^\Gamma:C_b(S)^*\times C_b(S)^*\to [0,\infty]$ is given by
\begin{align}\label{eq:w_Gamma_def}
w^\Gamma =\sup_{ g\in\Gamma}\{\pi_ g\circ\pi_1-\pi_ g\circ\pi_2\}
\end{align}
and is therefore LSC in the product topology ($\pi_1$, $\pi_2$ denote the projection maps onto the first and second components).  By assumption, $\tau_{\mu_\alpha}\to\tau_\mu$ in the weak-* topology.  The fact that $\eta_{\alpha_\beta,\epsilon}\to\eta_\epsilon$ weakly implies that $\tau_{\eta_{\alpha_\beta,\epsilon}}\to\tau_{\eta_\epsilon}$ in the weak-* topology as well. Therefore $(\tau_{\mu_{\alpha_\beta}},\tau_{\eta_{{\alpha_\beta},\epsilon}})\to (\tau_\mu,\tau_{\eta_\epsilon})$ in the product topology on $C_b(S)^*\times C_b(S)^*$.  Lower semicontinuity of $w^\Gamma$ and the equality \eqref{eq:W_Psi} then imply
\begin{align}
W^\Gamma(\mu,\eta_\epsilon)\leq\liminf_\beta W^\Gamma(\mu_{\alpha_\beta},\eta_{\alpha_\beta,\epsilon})\,.
\end{align}

Next we need the following simple to prove lemma about nets in $(-\infty,\infty]$: 
Let $x_\beta,y_\beta$, $\beta\in B$ be nets in $(-\infty,\infty]$.  If $x_\beta$ and $y_\beta$ are nondecreasing then 
\begin{align}
\sup_\beta\{x_\beta+y_\beta\}=\sup_\beta x_\beta+\sup_\beta y_\beta\,.
\end{align}
Using this  we have
\begin{align}
\epsilon+a\geq&\sup_{\tilde\beta\in B}\inf_{\beta\geq \tilde \beta}\left\{D_f(\eta_{\alpha_\beta,\epsilon}\|P)+W^\Gamma(\mu_{\alpha_\beta},\eta_{\alpha_\beta,\epsilon})\right\}\\
\geq &\sup_{\tilde\beta\in B}\left\{\inf_{\beta\geq \tilde \beta}D_f(\eta_{\alpha_\beta,\epsilon}\|P)+\inf_{\beta\geq \tilde \beta}W^\Gamma(\mu_{\alpha_\beta},\eta_{\alpha_\beta,\epsilon})\right\}\notag\\
= &\sup_{\tilde\beta\in B}\inf_{\beta\geq \tilde \beta}D_f(\eta_{\alpha_\beta,\epsilon}\|P)+\sup_{\tilde\beta\in B}\inf_{\beta\geq \tilde \beta}W^\Gamma(\mu_{\alpha_\beta},\eta_{\alpha_\beta,\epsilon})\notag\\
=&\liminf_\beta D_f(\eta_{\alpha_\beta,\epsilon}\|P)+\liminf_\beta W^\Gamma(\mu_{\alpha_\beta},\eta_{\alpha_\beta,\epsilon})\notag\\
\geq&D_f(\eta_\epsilon\|P)+W^\Gamma(\mu,\eta_\epsilon)\notag\\
\geq &\inf_{\eta\in \mathcal{P}(S)}\{D_f(\eta\|P)+W^\Gamma(\mu,\eta)\}\notag\\
=&H_1^*\Box H_2^*(\tau_\mu)\,.\notag
\end{align}
This holds for all $\epsilon>0$ and so $\tau_\mu\in\{H_1^*\Box H_2^*\leq a\}$.  This proves that $\{H_1^*\Box H_2^*\leq a\}$ is closed for all $a\in\mathbb{R}$ and hence we have proven lower semicontinuity of $H_1^*\Box H_2^*$.  Therefore we can conclude that
\begin{align}
D_f^\Gamma(\mu\|P)=&(H_1+H_2)^*(\tau_\mu)=H_1^*\Box H_2^*(\tau_\mu)\\
=&\inf_{\eta\in\mathcal{P}(S)}\{D_f(\eta\|P)+W^\Gamma(\mu,\eta)\}\notag
\end{align}
as claimed.

\item  If $D_f^\Gamma(\mu\|P)<\infty$  then there exists $\eta_n\in\mathcal{P}(S)$ such that
\begin{align}\label{eq:D_f_Phi_min_limit}
D_f^\Gamma(\mu\|P)=\lim_n(D_f(\eta_n\|P)+W^\Gamma(\mu,\eta_n))\,,
\end{align}
with $D_f(\eta_n\|P)+W^\Gamma(\mu,\eta_n)$  finite for all $n$. $W^\Gamma\geq 0$   and so \eqref{eq:D_f_Phi_min_limit}   implies $D_f(\eta_n\|P)$ is a bounded sequence, i.e., there exists $M\in\mathbb{R}$ with $\eta_n\in\{Q:D_f(Q\|P)\leq M\}$. Lemma \ref{lemma:Df_compact_sublevel} implies $Q\mapsto D_f(Q\|P)$ has compact sublevel sets, hence there exists a convergence subsequence $\eta_{n_j}\to\eta_*\in\mathcal{P}(S)$. By Corollary \ref{cor:Df_LSC}, $(Q,P)\mapsto D_f(Q\|P)$ is LSC and so $D_f(\eta_*\|P)\leq \liminf_j D_f(\eta_{n_j}\|P)$. $\eta \mapsto W^\Gamma(\mu,\eta)$ is the supremum of a collection of continuous functions on $\mathcal{P}(S)$, and so is also LSC. Therefore $W^\Gamma(\mu,\eta_*)\leq\liminf_j W^\Gamma(\mu,\eta_{n_j})$ and we have
\begin{align}
&D_f^\Gamma(\mu\|P)\leq D_f(\eta_*\|P)+W^\Gamma(\mu,\eta_*)\leq  \liminf_j D_f(\eta_{n_j}\|P)+\liminf_j W^\Gamma(\mu,\eta_{n_j})\\
\leq& \liminf_j( D_f(\eta_{n_j}\|P)+W^\Gamma(\mu,\eta_{n_j}))=D_f^\Gamma(\mu\|P)\,.\notag
\end{align}
This completes the proof of \eqref{eq:inf_conv_existence_app}. If $f$ is strictly convex then  the map $\eta\mapsto D_f(\eta\|P)$ is strictly convex on the set $\{\eta:D_f(\eta\|P)<\infty\}$ (see Lemma \ref{lemma:strictly_convex}).  It is straightforward to see that $\eta\mapsto W^\Gamma(\mu,\eta)$ is convex.  Therefore the objective functional in \eqref{eq:inf_conv_app} is strictly convex and hence has at most one minimizer. 
\item We have already noted that $W^\Gamma\geq 0$.  The property $W^\Gamma(\mu,\mu)=0$ is trivial from the definition. If  $\Gamma$ is strictly admissible and $Q,P\in\mathcal{P}(S)$ with $W^\Gamma(Q,P)=0$ then for $\psi\in\Psi$ we can find $c\in\mathbb{R}$, $\epsilon>0$ such that $c\pm\epsilon\psi\in\Gamma$. Therefore
\begin{align}
    0=W^\Gamma(Q,P)\geq E_Q[c\pm\epsilon\psi]-E_P[c\pm\epsilon\psi]=\pm\epsilon(E_Q[\psi]-E_P[\psi])\,.
\end{align}
From this we can conclude that $E_Q[\psi]=E_P[\psi]$ for all $\psi\in\Psi$ and hence $Q=P$.
\item Both $D_f$ and $W^\Gamma$ are non-negative, hence the infimal convolution formula \eqref{eq:inf_conv_app} implies $D_f^\Gamma\geq 0$.  By taking $\eta=P$ in \eqref{eq:inf_conv_app} it is easy to see that $D_f^\Gamma(P\|P)=0$.  Now suppose $f$ and $\Gamma$ are strictly admissible.  Strict convexity of $f$ at $1$ implies that $D_f$ has the divergence property \cite{LieseVajda}.   If $Q,P\in\mathcal{P}(S)$ with $D_f^\Gamma(Q\|P)=0$ then \req{eq:inf_conv_existence_app} implies there exists $\eta_*\in\mathcal{P}(S)$ with
\begin{align}
0=D_f^\Gamma(Q\|P)=D_f(\eta_*\|P)+W^\Gamma(Q,\eta_*)\,.
\end{align}
Therefore $D_f(\eta_*\|P)=0=W^\Gamma(Q,\eta_*)$.  The first equality implies $\eta_*=P$ and so we have $W^\Gamma(Q,P)=0$.  We have assumed $\Gamma$ is strictly admissible, hence Part 3 implies $Q=P$.
\end{enumerate}
\end{proof}
Under slightly stronger assumptions we find that $D_f^\Gamma(\mu\|P)$  is infinite when $\mu\not\in M_1(S)\equiv\{\mu\in M(S):\mu(S)=1\}$.
\begin{corollary}
Suppose $f$ and $\Gamma$ are admissible and $\Gamma$ contains the constant functions. Then for $P\in\mathcal{P}(S)$, $\mu\in M(S)\setminus M_1(S)$ we have $D_f^\Gamma(\mu\|P)=\infty$.
\end{corollary}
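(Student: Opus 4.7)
The plan is to evaluate the variational formula \eqref{eq:gen_f_def_app} on constant test functions, which by hypothesis lie in $\Gamma$, and show that the resulting lower bound blows up as soon as $\mu(S)\ne 1$. This is essentially the computation already carried out in the proof of Theorem \ref{thm:f_div_inf_convolution_app} at \eqref{eq:gamma_mass_1}; the hypothesis that $\Gamma$ contains the constants is precisely what allows us to access it directly.

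First I would fix an arbitrary $c\in\mathbb{R}$ and compute the shift-optimized term $\Lambda_f^P[c]=\inf_{\nu\in\mathbb{R}}\{\nu+E_P[f^*(c-\nu)]\}=\inf_{\nu\in\mathbb{R}}\{\nu+f^*(c-\nu)\}$. A change of variables $\nu'=c-\nu$ rewrites this as $c-\sup_{\nu'\in\mathbb{R}}\{\nu'-f^*(\nu')\}=c-(f^*)^*(1)=c-f(1)=c$, where the admissibility of $f$ ensures $f^*$ is everywhere finite and the biconjugate identity $(f^*)^*=f$ applies since $f$ is convex and LSC.

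Since the constant function $g\equiv c$ belongs to $\Gamma$, substituting it into the supremum defining $D_f^\Gamma(\mu\|P)$ yields
\begin{align*}
D_f^\Gamma(\mu\|P)\;\geq\; \int c\,d\mu - \Lambda_f^P[c] \;=\; c\mu(S)-c \;=\; c(\mu(S)-1)\,.
\end{align*}
If $\mu(S)\ne 1$, letting $c\to+\infty$ when $\mu(S)>1$ or $c\to-\infty$ when $\mu(S)<1$ forces the right-hand side to $+\infty$, hence $D_f^\Gamma(\mu\|P)=\infty$.

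There is no real obstacle here; the only subtlety is making sure $f^*(c-\nu)$ is finite for all $\nu$ so that the computation of $\Lambda_f^P[c]$ is rigorous, but this is guaranteed by the admissibility assumption $\{f^*<\infty\}=\mathbb{R}$ in Definition \ref{def:admissible}.
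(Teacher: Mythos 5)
Your proof is correct, but it takes a genuinely different route from the paper's. The paper deduces the corollary from the infimal convolution formula \eqref{eq:inf_conv_app}: since constants lie in $\Gamma$, one has $W^\Gamma(\mu,\eta)\geq c(\mu(S)-\eta(S))=c(\mu(S)-1)$ for every $\eta\in\mathcal{P}(S)$ and every $c\in\mathbb{R}$, so $W^\Gamma(\mu,\eta)=\infty$ for all $\eta$, and the infimum in \eqref{eq:inf_conv_app} is $\infty$. You instead bypass the infimal convolution entirely and test the defining supremum \eqref{eq:gen_f_def_app} directly on constants, after computing $\Lambda_f^P[c]=c-(f^*)^*(1)=c-f(1)=c$; this is precisely the computation the paper performs in \eqref{eq:gamma_mass_1}, only there it is applied to $H_1^*(\tau_\eta)$ rather than to $D_f^\Gamma$ itself, and you correctly identified that parallel. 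Your argument is the more elementary of the two: it does not invoke Theorem \ref{thm:f_div_inf_convolution_app} at all, and in fact it uses less than the stated hypotheses — the identity $(f^*)^*=f$ holds for any $f\in\mathcal{F}_1(a,b)$, and your closing worry about needing $\{f^*<\infty\}=\mathbb{R}$ is unnecessary, since the change of variables $\nu'=c-\nu$ and the supremum $\sup_{\nu'}\{\nu'-f^*(\nu')\}=(f^*)^*(1)$ are valid even when $f^*$ takes the value $+\infty$ (such $\nu'$ simply do not contribute). The paper's route, by contrast, makes the statement a one-line corollary of the machinery it has already built, which is presumably why it was organized that way.
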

\begin{proof}
We have assumed that $\Gamma$   contains the constant functions. Therefore, for any $\eta\in\mathcal{P}(S)$ we have
\begin{align}
    W^\Gamma(\mu,\eta)\geq c(\mu(S)-\eta(S))=c(\mu(S)-1)
\end{align}
for all $c\in\mathbb{R}$.  If $\mu(S)\neq 1$ then taking $c\to\pm\infty$ implies $ W^\Gamma(\mu,\eta)=\infty$.  This holds for all $\eta\in\mathcal{P}(S)$ and so \req{eq:inf_conv_app} implies $D_f^\Gamma(\mu\|P)=\infty$.
\end{proof}

{ 
 Here we prove that, under appropriate assumptions, the unit ball in a RKHS is an  admissible set and hence falls under the purview of Theorem \ref{thm:f_div_inf_convolution}. See Chapter 4 in \cite{steinwart2008support} for a detailed treatment of the properties of an RKHS, several of which are used below.
\begin{lemma}\label{lemma:RKHS}
Let $X\subset C_b(S)$ be a separable RKHS with  reproducing-kernel $k:S\times S\to\mathbb{R}$. Let $\Gamma=\{g\in X:\|g\|_X\leq 1\}$ be the unit ball in $X$.  Then $\Gamma$ is admissible.  
\end{lemma}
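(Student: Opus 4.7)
The plan proceeds in two stages. Convexity and the property $0\in\Gamma$ are immediate: $\|0\|_X=0\leq 1$, and for $g_1,g_2\in\Gamma$ and $t\in[0,1]$, the triangle inequality in the Hilbert norm gives $\|tg_1+(1-t)g_2\|_X\leq t\|g_1\|_X+(1-t)\|g_2\|_X\leq 1$. So all that must be established is that $\Gamma$ is closed in the weak topology on $C_b(S)$ generated by $\mathcal{T}=\{\tau_\mu:\mu\in M(S)\}$.

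To prove closedness, I would take an arbitrary net $\{g_\alpha\}_{\alpha\in A}\subset\Gamma$ that converges in the $\mathcal{T}$-topology to some $g\in C_b(S)$ and argue that $g\in\Gamma$. The key tool is that the Hilbert space $X$ is reflexive, so by Banach--Alaoglu its closed unit ball $\Gamma$ is weakly compact in $X$. Consequently, the net $\{g_\alpha\}$ has a subnet $\{g_{\alpha_\beta}\}$ that converges weakly in $X$ to some $h\in\Gamma$, meaning $\langle g_{\alpha_\beta},\varphi\rangle_X\to\langle h,\varphi\rangle_X$ for every $\varphi\in X$.

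The bridge between the two topologies is the reproducing property. Taking $\varphi=k(\cdot,x)$ and using $u(x)=\langle u,k(\cdot,x)\rangle_X$ for every $u\in X$, weak convergence in $X$ along the subnet yields the pointwise statement $g_{\alpha_\beta}(x)\to h(x)$ for all $x\in S$. On the other hand, applying the $\mathcal{T}$-convergence hypothesis to the Dirac measure $\mu=\delta_x\in M(S)$ gives $g_\alpha(x)\to g(x)$ pointwise, hence also along the subnet. Uniqueness of limits in $\mathbb{R}$ forces $g(x)=h(x)$ for every $x\in S$, so $g=h\in\Gamma$, which completes the proof.

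I do not anticipate a serious obstacle. The only subtle points are invoking Banach--Alaoglu with nets rather than sequences (which does not require separability, although the lemma assumes it for other downstream purposes) and linking weak convergence in the RKHS to pointwise convergence via the reproducing kernel. Both are standard Hilbert-space/RKHS facts once the setup is arranged as above.
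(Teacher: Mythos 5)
Your proof is correct. Both you and the paper rest on the same two pillars — weak compactness of the unit ball of $X$ (Alaoglu plus reflexivity) and the reproducing property — but the mechanics differ in a way worth noting. The paper shows that \emph{every} $\tau_\nu$, $\nu\in M(S)$, restricts to a bounded linear functional on $X$ via the kernel mean embedding $\mu_X(\nu)=\int k(\cdot,x)\,\nu(dx)$ (which is where boundedness, separate continuity, and joint measurability of $k$, and the separability of $X$ needed for Bochner integrability, actually get used); it then concludes that the $M(S)$-topology and the $X$-weak topology coincide on $\Gamma$ by a compact-versus-Hausdorff comparison, so $\Gamma$ is compact, hence closed, in $C_b(S)$. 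You instead prove only closedness directly: extract a weakly convergent subnet with limit $h\in\Gamma$, and identify $h$ with the $\mathcal{T}$-limit $g$ pointwise, using just the Dirac measures $\delta_x\in M(S)$ and the evaluation functionals $k(\cdot,x)$. Your route is more elementary and avoids the kernel mean embedding entirely (so it does not need separability or boundedness of $k$ beyond what $X\subset C_b(S)$ already gives); what it buys less of is the stronger structural fact the paper records along the way, namely that $\Gamma$ is actually \emph{compact} in the $\mathcal{T}$-topology and that the two weak topologies agree on it. For the lemma as stated, your argument is complete.
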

\begin{proof}
We clearly have $0\in\Gamma$ and $\Gamma$ is convex. Therefore we just need to show that $\Gamma$ is a closed subset of $C_b(S)$ under the weak topology generated by $M(S)$: First note that  $\Gamma$ is compact in the weak topology induced by $X^*$; this follows from  Alaoglu's theorem (see, e.g., Theorem 5.18 in \cite{folland2013real}) together with the fact that $X$ is reflexive. 

Next we show that the topology on $\Gamma$ induced by $M(S)$ is the same as the topology induced by $X^*$. To do this, first recall that the assumption  $X\subset C_b(S)$ implies $k$ is bounded, separately continuous, and jointly measurable. This allows us to define the  linear map $\mu_X:M(S)\to X$ by $\mu_X(\nu)=\int k(\cdot,x)\nu(dx)$ that satisfies
\begin{align}
    \tau_\nu(g)\equiv\int g d\nu=\langle g,\mu_X(\nu)\rangle_X 
\end{align}
for all $g\in X$.  This shows that $\tau_\nu\in X^*$ for all $\nu\in M(S)$.  Therefore the topology on $\Gamma$ induced by $M(S)$ is weaker than the topology induced by $X^*$.  The former is Hausdorff (since $M(S)$ separates points) and, as shown above, the latter is compact.  Therefore the two topologies are in fact equal (see, e.g., Proposition 4.28 in \cite{folland2013real}). 

Combining the above two properties we conclude that the weak topology on $\Gamma$ induced by $M(S)$ is compact.  The topology induced by $M(S)$ on $C_b(S)$ is Hausdorff (again because $M(S)$ separates points) and $\Gamma$ is a compact subset of this space, hence we have proven that $\Gamma$ is closed in $C_b(S)$.
\end{proof}

\begin{remark}\label{remark:RKHS_strictly_adm}
By imposing various additional conditions on the kernel (e.g., if it is characteristic or universal) one can ensure that the unit ball in $X$ is measure determining and hence is strictly admissible; see \cite{JMLR:v12:sriperumbudur11a} and references therein.
\end{remark}
}

Now we prove the limiting properties from Theorem \ref{thm:limit}, which are repeated below.
\begin{theorem}\label{thm:limit_app}
Let $Q,P\in\mathcal{P}(S)$ and $\Gamma$, $f$ both be admissible.  Then for all $c>0$ the set $\Gamma_c\equiv\{c g: g\in\Gamma\}$ is admissible and we have the following two limiting formulas.
\begin{enumerate}
\item If $\Gamma$ is strictly admissible then the sets $\Gamma_L$ are strictly admissible for all $L>0$ and
\begin{align}
\lim_{L\to\infty} D^{\Gamma_L}_f(Q\|P)=D_f(Q\|P)\,.
\end{align}
\item If $f$ is strictly admissible then 
\begin{align}
\lim_{\delta\searrow 0}\frac{1}{\delta} D_f^{\Gamma_\delta}(Q\|P)=W^\Gamma(Q,P)\,.
\end{align}
\end{enumerate}
\end{theorem}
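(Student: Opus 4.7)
The admissibility of $\Gamma_c$ for $c>0$ follows because the scaling map $g\mapsto cg$ is a homeomorphism of $C_b(S)$ in the weak topology generated by $M(S)$ (it is linear and its action on $\tau_\mu$ is simply multiplication by $c$); hence $\Gamma_c = c\Gamma$ inherits $0\in\Gamma_c$, convexity, and closedness from $\Gamma$. For strict admissibility, if $\Psi\subset C_b(S)$ witnesses the strict admissibility of $\Gamma$ with constants $(c_\psi,\epsilon_\psi)$, then the same set $\Psi$ works for $\Gamma_L$ with constants $(Lc_\psi,L\epsilon_\psi)$, since $Lc_\psi\pm L\epsilon_\psi\psi = L(c_\psi\pm\epsilon_\psi\psi)\in\Gamma_L$.

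The two limit statements both reduce to analyzing the infimal convolution representation from Theorem~\ref{thm:f_div_inf_convolution_app}:
\begin{align*}
D_f^{\Gamma_L}(Q\|P) &= \inf_{\eta\in\mathcal{P}(S)}\{D_f(\eta\|P)+L\,W^\Gamma(Q,\eta)\}, \\
\tfrac{1}{\delta}D_f^{\Gamma_\delta}(Q\|P) &= \inf_{\eta\in\mathcal{P}(S)}\{\tfrac{1}{\delta}D_f(\eta\|P)+W^\Gamma(Q,\eta)\},
\end{align*}
where we have used the identity $W^{\Gamma_c}(Q,\eta)=c\,W^\Gamma(Q,\eta)$. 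In both cases, monotonicity in the parameter and the upper bounds $D_f^{\Gamma_L}(Q\|P)\le D_f(Q\|P)$ (take $\eta=Q$) and $\tfrac{1}{\delta}D_f^{\Gamma_\delta}(Q\|P)\le W^\Gamma(Q,P)$ (take $\eta=P$) show that the limits exist in $[0,\infty]$; it then suffices to prove the reverse inequalities.

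For Part~1, let $c=\lim_{L\to\infty}D_f^{\Gamma_L}(Q\|P)\in[0,\infty]$. Choose near-optimizers $\eta_L\in\mathcal{P}(S)$ with $D_f(\eta_L\|P)+L\,W^\Gamma(Q,\eta_L)\le D_f^{\Gamma_L}(Q\|P)+1/L$; for any finite $M>c$ this yields $D_f(\eta_L\|P)\le M$ and $W^\Gamma(Q,\eta_L)\le (M+1)/L$ for all large $L$. By Lemma~\ref{lemma:Df_compact_sublevel}, extract a weakly convergent subsequence $\eta_{L_k}\to\eta_*\in\mathcal{P}(S)$. Since $W^\Gamma(Q,\cdot)$ is LSC (as a supremum of continuous functions), $W^\Gamma(Q,\eta_*)=0$, and strict admissibility of $\Gamma$ (Part~3 of Theorem~\ref{thm:f_div_inf_convolution_app}) forces $\eta_*=Q$. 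Lower semicontinuity of $D_f(\cdot\|P)$ (Corollary~\ref{cor:Df_LSC}) then gives $D_f(Q\|P)\le\liminf_k D_f(\eta_{L_k}\|P)\le c$. The case $D_f(Q\|P)=\infty$ is handled by the same argument applied for contradiction to any finite upper bound on $D_f^{\Gamma_L}(Q\|P)$.

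For Part~2, let $c=\lim_{\delta\searrow 0}\tfrac{1}{\delta}D_f^{\Gamma_\delta}(Q\|P)\in[0,\infty]$, and pick $\eta_\delta$ with $\tfrac{1}{\delta}D_f(\eta_\delta\|P)+W^\Gamma(Q,\eta_\delta)\le\tfrac{1}{\delta}D_f^{\Gamma_\delta}(Q\|P)+\delta$. For any finite $M>c$ and small $\delta$, this gives $D_f(\eta_\delta\|P)\le \delta(M+\delta)\to 0$, so in particular the $\eta_\delta$ eventually lie in a single sublevel set of $D_f(\cdot\|P)$, which is compact. Extracting a convergent subsequence $\eta_{\delta_k}\to\eta_*$, LSC of $D_f(\cdot\|P)$ gives $D_f(\eta_*\|P)=0$; strict admissibility of $f$ (strict convexity at $1$) yields the classical $f$-divergence property, so $\eta_*=P$. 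Lower semicontinuity of $W^\Gamma(Q,\cdot)$ then gives $W^\Gamma(Q,P)\le\liminf_k W^\Gamma(Q,\eta_{\delta_k})\le c$, with the case $W^\Gamma(Q,P)=\infty$ handled as in Part~1.

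The only subtle step is confirming that the near-optimizing sequences $\eta_L$ and $\eta_\delta$ stay in a tight set and that weak limits can be combined with LSC of both $D_f$ and $W^\Gamma$; all of this is available from the results of Appendix~B and from Theorem~\ref{thm:f_div_inf_convolution_app}. The strict admissibility hypotheses enter precisely where they are needed: in Part~1 to force the weak limit to equal $Q$, and in Part~2 to force it to equal $P$.
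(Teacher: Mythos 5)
Your proof is correct and follows essentially the same route as the paper's: rewrite via the infimal convolution formula with $W^{\Gamma_c}=cW^\Gamma$, use compactness of the $D_f(\cdot\|P)$ sublevel sets to extract a weak limit of (near-)minimizers, and combine lower semicontinuity of $D_f$ and $W^\Gamma$ with the divergence property of $W^\Gamma$ (Part 1) or of $D_f$ (Part 2) to identify the limit as $Q$ or $P$. The only cosmetic difference is that you use $1/L$- and $\delta$-near-minimizers where the paper invokes the exact minimizers guaranteed by Part 2 of Theorem \ref{thm:f_div_inf_convolution_app}; both work.
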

\begin{proof}
\begin{enumerate}
    \item From the definition, it is straightforward to see that  $\Gamma_c$ is strictly admissible for all $c>0$ and $W^{\Gamma_c}(\mu,\kappa)=cW^\Gamma(\mu,\kappa)$. To prove that $\lim_{L\to\infty}D_f^{\Gamma_L}(Q\|P)=D_f(Q\|P)$, first suppose that $D_f(Q\|P)<\infty$:  From \req{eq:inf_conv_app} we see that $D^{\Gamma_L}_f(Q\|P)\leq D_f(Q\|P)<\infty$ for all $L>0$ and \eqref{eq:inf_conv_existence_app} implies that there exists $\eta_{*,L}\in\mathcal{P}(S)$ such that  $D_f^{\Gamma_L}(Q\|P)=D_f(\eta_{*,L}\|P)+W^{\Gamma_L}(Q,\eta_{*,L})$.  Take a sequence $L_n\nearrow\infty$. We have
\begin{align}
 D_f(\eta_{*,L_n}\|P)\leq D_f^{\Gamma_L}(Q\|P) \leq D_f(Q\|P)\equiv M<\infty\,,
\end{align}
and so for all $n$ we have $\eta_{*,L_n}\in\{D_f(\cdot\|P)\leq M\}$, a compact set (see Lemma \ref{lemma:Df_compact_sublevel}).  Hence there exists a weakly convergence subsequence $\eta_{*,L_{n_j}}\to\eta_*$.  We can compute
\begin{align}
W^\Gamma(Q,\eta_*)\leq& \liminf_j W^\Gamma(Q,\eta_{*,L_{n_j}})=  \liminf_j  \frac{1}{L_{n_j}}W^{\Gamma_ {L_{n_j} } }(Q,\eta_{*,L_{n_j}})\\
\leq&  \liminf_j  \frac{1}{L_{n_j}}D_f^{\Gamma_{L_{n_j}}}(Q\|P)\leq \liminf_j  \frac{1}{L_{n_j}}D_f(Q\|P)=0\,.\notag
\end{align}
Therefore $W^\Gamma(Q,\eta_*)=0$. $\Gamma$ is strictly admissible, hence $W^\Gamma$ has the divergence property (see Part 3 of Theorem \ref{thm:f_div_inf_convolution}) and so $\eta_*=Q$. Therefore we can use lower semicontinuity of $D_f$ to compute
\begin{align}
\liminf_jD_f^{\Gamma_{L_{n_j}}}(Q\|P)=&\liminf_j(D_f(\eta_{*,L_{n_j}}\|P)+W^{\Gamma_L}(Q,\eta_{*,L_{n_j}}))\\
\geq& \liminf_jD_f(\eta_{*,L_{n_j}}\|P)\geq D_f(Q\|P)\,.\notag
\end{align}
Combining this with the fact that $D_f^{\Gamma_{L_n}}(Q\|P)\leq D_f(Q\|P)$ we see that $\lim_j D_f^{\Gamma_{L_{n_j}}}(Q\|P)=D_f(Q\|P)$.  Therefore we have shown that every sequence $L_n\nearrow \infty$ has a subsequence with $D_f^{\Gamma_{L_{n_j}}}(Q\|P)\to D_f(Q\|P)$.  This implies $D_f^{\Gamma_{L_n}}(Q\|P)\to D_f(Q\|P)$ and so we have proven the result in the  case  where  $D_f(Q\|P)<\infty$.

Now  suppose $D_f(Q\|P)=\infty$: If $\lim_{L\to\infty}D_f^{\Gamma_L}(Q\|P)\neq \infty$ then there exists $R\in\mathbb{R}$ and $L_n\to\infty$ with $D_f^{\Gamma_{L_n}}(Q\|P)\leq R$ for all $n$.  \req{eq:inf_conv_existence_app} implies there exists $\eta_{*,n}\in\mathcal{P}(S)$ such that 
\begin{align}\label{eq:R_Df_bound}
R\geq D_f^{\Gamma_{L_n}}(Q\|P)=D_f(\eta_{*,n}\|P)+W^{\Gamma_{L_n}}(Q,\eta_{*,n})\geq D_f(\eta_{*,n}\|P)\,.
\end{align}
Using compactness of sublevel sets   we again see that there is a convergent subsequence $\eta_{*,{n_j}}\to \eta_*$. Similarly to \eqref{eq:R_Df_bound}, we can compute
\begin{align}
R\geq& D_f^{\Gamma_{L_n}}(Q\|P)=D_f(\eta_{*,n}\|P)+W^{\Gamma_{L_n}}(Q,\eta_{*,n})\geq W^{\Gamma_{L_n}}(Q,\eta_{*,n})=L_nW^{\Gamma}(Q,\eta_{*,n})\,.
\end{align}
This implies
\begin{align}
R/ L_n\geq W^{\Gamma}(Q,\eta_{*,n})\geq 0\,,
\end{align}
and so $W^{\Gamma}(Q,\eta_{*,n})\to 0$. $\Gamma$ is strictly admissible and $\eta_{*,{n_j}}\to \eta_*$ weakly, hence a similar argument to that of the previous case  implies $Q=\eta_*$ and
\begin{align}
D_f(Q\|P)\leq& \liminf_j D_f(\eta_{*,{n_j}}\|P)\leq \liminf_j (D_f(\eta_{*,{n_j}}\|P)+W^{\Gamma_{L_{n_j}}}(Q,\eta_{*,n_j}))\notag\\
=& \liminf_jD_f^{\Gamma_{L_{n_j}}}(Q\|P)\leq R\,.
\end{align}
Therefore $D_f(Q\|P)\leq R<\infty$, a contradiction.  This completes the proof.
\item It is again straightforward to see that $\Gamma_\delta$ is admissible and $W^{\Gamma_\delta}=\delta W^\Gamma$. Fix $Q,P\in\mathcal{P}(S)$ and take a sequence $\delta_n\searrow 0$.  Using the infimal convolution formula \eqref{eq:inf_conv_app} we see that
\begin{align}
\frac{1}{\delta_n}D_f^{\Gamma_{\delta_n}}(Q\|P)=\inf_{\eta\in\mathcal{P}(S)}\{\delta_n^{-1}D_f(\eta\|P)+W^\Gamma(Q,\eta)\}\leq W^\Gamma(Q,P)
\end{align}
and the left-hand-side is nondecreasing in $n$. Therefore
\begin{align}
\lim_{n\to\infty}\frac{1}{\delta_n}D_f^{\Gamma_{\delta_n}}(Q\|P)=\sup_n \frac{1}{\delta_n}D_f^{\Gamma_{\delta_n}}(Q\|P)\leq W^\Gamma(Q,P)\,.
\end{align}
Suppose $\sup_n \frac{1}{\delta_n}D_f^{\Gamma_{\delta_n}}(Q\|P)< W^\Gamma(Q,P)$:  This implies $D_f^{\Gamma_{\delta_n}}(Q\|P)<\infty$ for all $n$, hence \req{eq:inf_conv_existence_app} implies there exists $\eta_{*,n}\in\mathcal{P}(S)$ such that $D_f^{\Gamma_{\delta_n}}(Q\|P)=D_f(\eta_{*,n}\|P)+W^{\Gamma_{\delta_n}}(Q,\eta_{*,n})$. Therefore
\begin{align}
\infty>\sup_n \frac{1}{\delta_n}D_f^{\Gamma_{\delta_n}}(Q\|P)\geq\sup_n\frac{1}{\delta_n} D_f(\eta_{*,n}\|P)\,.
\end{align}
$\delta_n\searrow 0$ and so this implies  $D_f(\eta_{*,n}\|P)$ is uniformly bounded. $\eta\mapsto D_f(\eta\|P)$ has compact sublevel sets (see Lemma \ref{lemma:Df_compact_sublevel}), hence there exists a weakly convergent subsequence $\eta_{*,{n_j}}\to \eta_*$.  The fact that $\sup_n\frac{1}{\delta_n} D_f(\eta_{*,n}\|P)<\infty$ together with $\delta_n\searrow 0$ implies $D_f(\eta_{*,n}\|P)\to 0$.  $D_f$ is LSC and $\eta_{*,{n_j}}\to \eta_*$ so this implies 
\begin{align}
0\leq D_f(\eta_*\|P)\leq \liminf_j D_f(\eta_{*,{n_j}}\|P)=0\,,
\end{align}
i.e., $D_f(\eta_*\|P)=0$. $f$ is strictly convex at $1$, hence $D_f$ has the divergence property and so $\eta_*=P$.  Therefore $\eta_{*,{n_j}}\to P$ weakly and we can  compute
\begin{align}
W^\Gamma(Q,P)>&\sup_n\frac{1}{\delta_{n}}D_f^{\Gamma_{\delta_{n}}}(Q\|P) \geq \liminf_j\frac{1}{\delta_{n_j}}D_f^{\Gamma_{\delta_{n_j}}}(Q\|P)\\
\geq& \liminf_j\frac{1}{\delta_{n_j}}W^{\Gamma_{\delta_{n_j}}}(Q,\eta_{*,{n_j}})=\liminf_jW^\Gamma(Q,\eta_{*,{n_j}})\geq W^\Gamma(Q,P)\,.\notag
\end{align}
This is a contradiction, hence we can conclude that
\begin{align}
\lim_{n\to\infty}\frac{1}{\delta_n}D_f^{\Gamma_{\delta_n}}(Q\|P)=\sup_n \frac{1}{\delta_n}D_f^{\Gamma_{\delta_n}}(Q\|P)= W^\Gamma(Q,P)\,.
\end{align}
$\delta_n\searrow 0$ was arbitrary, therefore
\begin{align}
\lim_{\delta\searrow 0}\frac{1}{\delta}D_f^{\Gamma_{\delta}}(Q\|P) = W^\Gamma(Q,P)\,.
\end{align}
\end{enumerate}
\end{proof}

{ 
Next we prove the convergence and continuity results from Theorem \ref{thm:conv}.
\begin{theorem}\label{thm:conv_app}
Let $f\in\mathcal{F}_1(a,b)$ and $\Gamma\subset\mathcal{M}_b(\Omega)$. Then:
\begin{enumerate}
    \item If there exists $c_0\in \Gamma\cap\mathbb{R}$ then $W^\Gamma(Q_n,P)\to 0 \implies D_f^\Gamma(Q_n\|P) \to 0$ and $D_f(Q_n\|P)\to 0 \implies D_f^\Gamma(Q_n\|P) \to 0$, and similarly if one exchanges $Q_n$ and $P$.
    
    \item Suppose $f$ and $\Gamma$ also satisfy the following:
\begin{enumerate}
\item There exist a nonempty set $\Psi\subset\Gamma$ with the following properties:
\begin{enumerate}
\item $\Psi$ is $\mathcal{P}(\Omega)$-determining.
\item For all $\psi\in\Psi$  there exists $c_0\in\mathbb{R}$, $\epsilon_0>0$ such that $c_0+\epsilon \psi\in\Gamma$ for all $|\epsilon|<\epsilon_0$.
\end{enumerate}
\item $f$ is strictly convex on a neighborhood of $1$.
\item $f^*$ is finite and $C^1$ on a neighborhood of $\nu_0\equiv f_+^\prime(1)$.
\end{enumerate}
Let $P,Q_n\in\mathcal{P}(\Omega)$, $n\in\mathbb{Z}_+$. If $D_f^\Gamma(Q_n\|P)\to 0$ or  $D_f^\Gamma(P\|Q_n)\to 0$ then $E_{Q_n}[\psi]\to E_P[\psi]$ for all $\psi\in\Psi$. 
\item On a metric space $S$, if $f$ is admissible then the map $(Q,P)\in\mathcal{P}(S)\times\mathcal{P}(S)\mapsto D_f^\Gamma(Q\|P)$ is  lower semicontinuous.
\end{enumerate}
\end{theorem}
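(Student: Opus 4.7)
I would handle this directly from Theorem~\ref{thm:general_ub}: the hypothesis $c_0\in\Gamma\cap\mathbb{R}$ yields $D_f^\Gamma(Q_n\|P)\geq 0$ (Part~3 of that theorem), while Part~1 supplies the upper bound $D_f^\Gamma(Q_n\|P)\leq\min\{D_f(Q_n\|P),W^\Gamma(Q_n,P)\}$. Squeezing delivers both implications at once, and the arguments with $Q_n$ and $P$ exchanged go through identically.

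\textbf{Part 2} is the technical core. My plan is to adapt the perturbation step from the proof of the divergence property (Part~4 of Theorem~\ref{thm:general_ub_app}), but now tracking it quantitatively instead of only in the limit $D_f^\Gamma=0$. Given $\psi\in\Psi$, I would choose $c_0\in\mathbb{R}$, $\epsilon_0>0$ as in assumption 2(a)(ii). Evaluating the supremum in \eqref{eq:gen_f_def} at the test function $g=c_0+\epsilon\psi\in\Gamma$ and selecting the specific shift $\nu=c_0-\nu_0$ in the infimum defining $\Lambda_f^P$ would give the explicit lower bound
\begin{align}
D_f^\Gamma(Q_n\|P)\geq h_n(\epsilon)\equiv(\nu_0+\epsilon E_{Q_n}[\psi])-E_P[f^*(\nu_0+\epsilon\psi)]. \notag
\end{align}
By Lemma~\ref{lemma:nu0}, assumptions 2(b)--(c) yield $f^*(\nu_0)=\nu_0$ and $(f^*)^\prime(\nu_0)=1$, so a first-order Taylor expansion of $f^*$ about $\nu_0$---valid for $|\epsilon|$ small because $(f^*)^\prime$ is continuous on a compact neighborhood of $\nu_0$ and $\psi$ is bounded---would rewrite
\begin{align}
h_n(\epsilon)=\epsilon\bigl(E_{Q_n}[\psi]-E_P[\psi]+R(\epsilon)\bigr),\quad R(\epsilon)=E_P\!\left[\psi\Bigl(1-\int_0^1(f^*)^\prime(\nu_0+t\epsilon\psi)\,dt\Bigr)\right]. \notag
\end{align}
The key observation would be that $R(\epsilon)$ does not depend on $n$, and uniform continuity of $(f^*)^\prime$ on the relevant compact neighborhood forces $R(\epsilon)\to 0$ as $\epsilon\to 0$. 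I would then divide $h_n(\epsilon)\leq D_f^\Gamma(Q_n\|P)$ by $\epsilon>0$, take $\limsup_n$ using $D_f^\Gamma(Q_n\|P)\to 0$, and finally let $\epsilon\to 0^+$ to obtain $\limsup_n(E_{Q_n}[\psi]-E_P[\psi])\leq 0$; the symmetric treatment with $\epsilon<0$ would produce the matching liminf, yielding $E_{Q_n}[\psi]\to E_P[\psi]$. For the reversed case $D_f^\Gamma(P\|Q_n)\to 0$, the same test function and shift give
\begin{align}
(\nu_0+\epsilon E_P[\psi])-E_{Q_n}[f^*(\nu_0+\epsilon\psi)]\leq D_f^\Gamma(P\|Q_n), \notag
\end{align}
and the analogous Taylor expansion would produce the linear form $\epsilon(E_P[\psi]-E_{Q_n}[\psi]-r_n(\epsilon))$, where $r_n(\epsilon)$ is now $n$-dependent but satisfies $|r_n(\epsilon)|\leq\|\psi\|_\infty\omega(\epsilon)$ uniformly in $n$, with $\omega(\epsilon)\to 0$; the iterated-limit argument carries over verbatim.

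\textbf{Part 3} I would handle along standard lines: admissibility of $f$ gives $\{f^*<\infty\}=\mathbb{R}$, and Lemma~\ref{lemma:f_star_cont} then makes $f^*$ continuous on $\mathbb{R}$. Working from the two-variable form \eqref{eq:Df_Gamma_def2} with $g\in\Gamma\subset C_b(S)$ (the natural interpretation on a metric space) and $\nu\in\mathbb{R}$, both $g-\nu$ and $f^*(g-\nu)$ belong to $C_b(S)$, so $(Q,P)\mapsto E_Q[g-\nu]-E_P[f^*(g-\nu)]$ is weakly continuous; taking the supremum over $(g,\nu)$ preserves lower semicontinuity. The hard part will be the Taylor bookkeeping in Part~2: the inequality $h_n(\epsilon)\leq D_f^\Gamma(Q_n\|P)$ is second-order tight in $\epsilon$ at the diagonal $Q_n=P$, so the first-order information one actually needs---that $E_{Q_n}[\psi]\to E_P[\psi]$---is hidden in the linear coefficient in $\epsilon$. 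Securing an $n$-uniform control on the Taylor remainder, especially in the reversed case where it depends on $Q_n$, is the quantitative step most easily mishandled.
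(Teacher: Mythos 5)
Your proposal is correct and follows essentially the same route as the paper: Part 1 by squeezing between the lower bound of Part 3 and the upper bound of Part 1 of Theorem~\ref{thm:general_ub}, Part 2 by testing against $c_0+\epsilon\psi$ with shift $\nu=c_0-\nu_0$ and Taylor-expanding $f^*$ at $\nu_0$ using $f^*(\nu_0)=\nu_0$, $(f^*)'(\nu_0)=1$, and Part 3 from the continuity of the objective in \eqref{eq:Df_Gamma_def2}. The only difference is cosmetic: the paper couples the two limits via the diagonal choice $\epsilon=\delta^{1/2}$, whereas you take $n\to\infty$ first and then $\epsilon\to 0$, which is legitimate precisely because of the $n$-uniform remainder control you supply.
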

\begin{proof}
Part 1 follows from the upper bound \eqref{eq:inf_conv_ineq} and the lower bound from Part 3 of Theorem \ref{thm:general_ub}. Now work under the assumptions of Part 2 and suppose $D_f^\Gamma(Q_n\|P)\to 0$. Fix $\delta>0$ and take $N_\delta$ such that for all $n\geq N_\delta$ we have $D_f^\Gamma(Q_n\|P)\leq \delta$. Fix $\psi\in\Psi$ and, per Assumption 2.a.ii, take $c_0\in\mathbb{R}$ and $\epsilon_0>0$ such that $c_0+\epsilon\psi\in\Gamma$ for all $|\epsilon|<\epsilon_0$.  Using \eqref{eq:Df_Gamma_def2} we obtain
\begin{align}
    E_{Q_n}[\nu_0+\epsilon\psi]-E_P[f^*(\nu_0+\epsilon\psi)]\leq D_f^\Gamma(Q_n\|P)\leq \delta
\end{align}
for all $n\geq N_\delta$, $|\epsilon|<\epsilon_0$, where $\nu_0$ is as in \eqref{eq:nu0_identities}. Taylor expanding $f^*$ then gives
\begin{align}
    E_{Q_n}[\nu_0+\epsilon\psi]-E_P[f^*(\nu_0)+(f^*)^\prime(\nu_0)\epsilon\psi+R(\epsilon\psi)\epsilon\psi]\leq \delta
\end{align}
for all $n\geq N_\delta$, $|\epsilon|<\epsilon_0$ (using a possibly smaller $\epsilon_0$),
where the remainder, $R$, is continuous, bounded, and  satisfies $R(0)=0$. The identities  \eqref{eq:nu0_identities} then imply
\begin{align}
        \epsilon(E_{Q_n}[\psi]-E_P[\psi])\leq \delta+\epsilon E_P[R(\epsilon\psi)\psi]
\end{align}
for all $n\geq N_\delta$,  $|\epsilon|<\epsilon_0$. By appropriately choosing the sign of $\epsilon$, we therefore find
\begin{align}
    \sup_{n\geq N_\delta}|E_{Q_n}[\psi]-E_{P}[\psi]|\leq \delta/\epsilon+\|\psi\|_\infty \sup_{[-\epsilon\|\psi\|_\infty,\epsilon\|\psi\|_\infty]}|R|
\end{align}
for all $0<\epsilon<\epsilon_0$. For $\delta$ sufficiently small we can let $\epsilon=\delta^{1/2}$ and therefore find
\begin{align}
    \sup_{n\geq N_\delta}|E_{Q_n}[\psi]-E_{P}[\psi]|\leq \delta^{1/2}+\|\psi\|_\infty \sup_{[-\delta^{1/2}\|\psi\|_\infty,\delta^{1/2}\|\psi\|_\infty]}|R|\to 0
\end{align}
as $\delta\to 0$. Hence $E_{Q_n}[\psi]\to E_P[\psi]$ as claimed.  The proof in the case where $D_f^\Gamma(P\|Q_n)\to 0$ is similar. Finally, Part 3 follows  from \eqref{eq:Df_Gamma_def2}, which show that $D_f^\Gamma(Q\|P)$ is the supremum of functions that  are continuous in $(Q,P)$ (recall that Lemma \ref{lemma:f_star_cont} implies $f^*$ is finite and continuous on $\mathbb{R}$ and hence $f^*( g-\nu)\in C_b(S)$), where the topology on $\mathcal{P}(S)$ is induced by the Prohorov metric.
\end{proof}
}

{ 
Now we derive the data processing inequality from Theorem \ref{thm:data_proc}.
\begin{theorem}[Data Processing Inequality]\label{thm:data_proc_app}
Let $f\in\mathcal{F}_1(a,b)$, $Q,P\in\mathcal{P}(\Omega)$, and $K$ be a probability kernel from $(\Omega,\mathcal{M})$ to $(N,\mathcal{N})$. 
\begin{enumerate}
    \item  Let  $\Gamma\subset \mathcal{M}_b(N)$ be nonempty. Then  
    \begin{align}\label{eq:data_proc1_app}
        D_f^\Gamma\left(K[Q]\|K[P]\right)\leq D_f^{K[\Gamma]}(Q\|P)\,.
    \end{align}
    \item  Let  $\Gamma\subset \mathcal{M}_b(\Omega\times N)$ be nonempty. Then  
    \begin{align}\label{eq:data_proc2_app}
        D_f^\Gamma\left(Q\otimes K\|P\otimes K\right)\leq D_f^{K[\Gamma]}(Q\|P)\,.
        \end{align}
\end{enumerate}
\end{theorem}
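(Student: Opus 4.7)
The core of both inequalities is a single pointwise application of Jensen's inequality to the convex function $f^*$, coupled with the defining variational formula \eqref{eq:gen_f_def} and the identity $E_{\mu \otimes K}[g] = E_\mu[K[g]]$. I would therefore prove Part 2 first and then obtain Part 1 by an obvious embedding argument.

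For Part 2, I would  write both sides via \eqref{eq:gen_f_def}:
\begin{align*}
D_f^\Gamma(Q \otimes K \,\|\, P \otimes K) &= \sup_{g\in\Gamma}\bigl\{E_{Q\otimes K}[g] - \Lambda_f^{P\otimes K}[g]\bigr\}, \\
D_f^{K[\Gamma]}(Q\|P) &= \sup_{g\in\Gamma}\bigl\{E_Q[K[g]] - \Lambda_f^P[K[g]]\bigr\}.
\end{align*}
The linear terms are equal by Fubini: $E_{Q\otimes K}[g] = \int\!\!\int g(x,y)\,K_x(dy)\,Q(dx) = E_Q[K[g]]$. It therefore suffices to establish the inequality $\Lambda_f^P[K[g]] \leq \Lambda_f^{P\otimes K}[g]$. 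Fix $\nu\in\mathbb{R}$. Since $K_x$ is a probability measure and $f^*$ is convex (Lemma \ref{lemma:f_star_cont}), Jensen's inequality applied pointwise in $x$ gives
\begin{equation*}
f^*\bigl(K[g](x) - \nu\bigr) = f^*\!\left(\int (g(x,y)-\nu)\,K_x(dy)\right) \leq \int f^*\bigl(g(x,y)-\nu\bigr)\,K_x(dy).
\end{equation*}
Integrating against $P$ and using Fubini yields $E_P[f^*(K[g]-\nu)] \leq E_{P\otimes K}[f^*(g-\nu)]$, so $\nu + E_P[f^*(K[g]-\nu)] \leq \nu + E_{P\otimes K}[f^*(g-\nu)]$. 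Taking the infimum over $\nu$ gives $\Lambda_f^P[K[g]] \leq \Lambda_f^{P\otimes K}[g]$, and hence \eqref{eq:data_proc2_app} follows termwise in the supremum.

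For Part 1, I would use the embedding $\iota:\mathcal{M}_b(N)\hookrightarrow\mathcal{M}_b(\Omega\times N)$ defined by $\iota(g)(x,y)=g(y)$. A direct computation shows $E_{K[Q]}[g] = E_{Q\otimes K}[\iota(g)]$ and $E_{K[P]}[f^*(g-\nu)] = E_{P\otimes K}[f^*(\iota(g)-\nu)]$, so taking the infimum over $\nu$ gives $\Lambda_f^{K[P]}[g] = \Lambda_f^{P\otimes K}[\iota(g)]$. Writing $\iota(\Gamma) = \{\iota(g) : g\in\Gamma\}\subset\mathcal{M}_b(\Omega\times N)$, these identities imply
\begin{equation*}
D_f^\Gamma(K[Q]\,\|\,K[P]) = D_f^{\iota(\Gamma)}(Q\otimes K \,\|\, P\otimes K).
\end{equation*}
Applying Part 2 to $\iota(\Gamma)$ then gives $D_f^{\iota(\Gamma)}(Q\otimes K \,\|\, P\otimes K) \leq D_f^{K[\iota(\Gamma)]}(Q\|P)$, and the proof concludes by the tautological identity $K[\iota(g)](x) = \int g(y)\,K_x(dy) = K[g](x)$, so $K[\iota(\Gamma)] = K[\Gamma]$.

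No step is genuinely hard; the only point that requires care is keeping track of which measurable space each test function lives in when justifying the embedding in Part 1 and ensuring the Fubini-type interchanges are legitimate (which they are since all integrands are bounded or, in the $f^*$ case, bounded below by an affine function of $g-\nu$, so the integrals are well-defined in $(-\infty,\infty]$ by Lemma \ref{eq:f_star_lb}).
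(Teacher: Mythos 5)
Your proof is correct and rests on exactly the same idea as the paper's: a pointwise application of Jensen's inequality to the convex function $f^*$ against the probability measure $K_x$, combined with a Fubini interchange for the (bounded, or bounded-below) integrands. The only difference is organizational — the paper proves Part 1 directly from \eqref{eq:Df_Gamma_def2} and dismisses Part 2 as "very similar," whereas you prove Part 2 in detail and deduce Part 1 via the embedding $\iota(g)(x,y)=g(y)$, which is the same "obvious embedding" the paper invokes in its remark to define $K[\Gamma]$ in that case.
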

\begin{proof}
From \req{eq:Df_Gamma_def2} we have
\begin{align}
    D_f^\Gamma(K[Q]\|K[P])=\sup_{ g\in\Gamma,\nu\in\mathbb{R}}\left\{\int \int   (g(y)-\nu) K_x(dy) Q(dx)-\int\int f^*( g(y)-\nu) K_x(dy) P(dx)\right\}\,.
\end{align}
Using convexity of $f^*$ we can apply Jensen's inequality to find
\begin{align}
   \int f^*(g(y)-\nu)K_x(dy)\geq f^*\left(\int (g(y)-\nu) K_x(dy)\right) 
\end{align}
for all $x\in\Omega$. Hence
\begin{align}
    D_f^\Gamma(K[Q]\|K[P])    \leq & \sup_{ g\in\Gamma,\nu\in\mathbb{R}}\left\{E_Q[ K[g]-\nu]-E_P[ f^*(K[g]-\nu)]\right\}=D_f^{K[\Gamma]}(Q\|P)\,.
\end{align}
This proves \req{eq:data_proc1_app}. The proof of \req{eq:data_proc2_app} is very similar and so we omit it.
\end{proof}
}

Next we prove (a generalization of) Theorem \ref{thm:Gibbs_optimizer}, which gives existence and uniqueness results regarding the dual optimization problem \eqref{eq:Df_Gibbs_var_formula} for the classical $f$-divergences.
\begin{theorem}\label{thm:Gibbs_optimizer_app}
Let $f\in\mathcal{F}_1(a,b)$,  $a\geq 0$, $P\in\mathcal{P}(\Omega)$,  and $ g\in\mathcal{M}_b(\Omega)$.
\begin{enumerate}
\item If $f$ is strictly convex then the optimization problem
\begin{align}\label{eq:Gibb_Q_unique_app}
\sup_{Q\in\mathcal{P}(\Omega)}\{E_Q[ g]-D_f(Q\|P)\}
\end{align} 
has at most one optimizer. 
\item  Suppose there exists $\nu_*\in\mathbb{R}$ such that $\range( g-\nu_*)\subset \{f^*<\infty\}^o$  and
\begin{align}\label{eq:E_f_prime_assump_app}
E_P[(f^*)^\prime_+( g-\nu_*)]=1\,.
\end{align}
Then 
\begin{align}\label{eq:D_f_Q_star_app}
dQ_*\equiv  (f^*)^\prime_+( g-\nu_*)dP
\end{align}
is a probability measure  and 
\begin{align}
&\sup_{Q\in\mathcal{P}(\Omega)}\{E_Q[ g]-D_f(Q\|P)\}= E_{Q_*}[ g]-D_f(Q_*\|P)\\
=&\nu_*+E_P[f^*( g-\nu_*)]=\inf_{\nu\in\mathbb{R}}\{\nu+E_P[f^*( g-\nu)]\}\,.\notag
\end{align}

\item If $f$ is strictly convex on $(a,b)$ and $\{f^*<\infty\}=\mathbb{R}$   then there exists $\nu_*\in\mathbb{R}$ such that
\begin{align}
E_P[(f^*)^\prime( g-\nu_*)]=1\,.
\end{align}
\end{enumerate}
\end{theorem}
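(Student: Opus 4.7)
My plan is to tackle the three parts in order, with most of the work falling on Part 3.

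\textbf{Part 1} follows immediately from Lemma \ref{lemma:strictly_convex}, which says that strict convexity of $f$ makes $Q \mapsto D_f(Q\|P)$ strictly convex on $\{Q : D_f(Q\|P) < \infty\}$. Since $g$ is bounded, the supremum in \eqref{eq:Gibb_Q_unique_app} is at least $E_P[g] - 0 > -\infty$, so any optimizer must satisfy $D_f(Q\|P) < \infty$. On this feasible set the objective $Q \mapsto E_Q[g] - D_f(Q\|P)$ is affine minus strictly convex in $Q$, hence strictly concave, which rules out two distinct maximizers.

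For \textbf{Part 2}, the candidate $dQ_* \equiv (f^*)^\prime_+(g - \nu_*)\,dP$ is a valid probability measure because $(f^*)^\prime_+ \geq 0$ (by Lemma \ref{lemma:f_star_inc}, since $a \geq 0$ makes $f^*$ nondecreasing) and the total mass equals $1$ by hypothesis \eqref{eq:E_f_prime_assump_app}. The central identity comes from Lemma \ref{lemma:f_star_formula}, applied pointwise at $y = g(x) - \nu_* \in \{f^* < \infty\}^o$:
\begin{align*}
f\bigl((f^*)^\prime_+(g - \nu_*)\bigr) = (g - \nu_*)\,(f^*)^\prime_+(g - \nu_*) - f^*(g - \nu_*).
\end{align*}
Integrating against $P$ and using $dQ_* = (f^*)^\prime_+(g - \nu_*)\,dP$ yields $D_f(Q_*\|P) = E_{Q_*}[g - \nu_*] - E_P[f^*(g - \nu_*)]$, which rearranges to $E_{Q_*}[g] - D_f(Q_*\|P) = \nu_* + E_P[f^*(g - \nu_*)]$. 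The Gibbs variational formula (Corollary \ref{corr:Gibbs_a_pos}) then supplies the chain
\begin{align*}
\inf_{\nu}\{\nu + E_P[f^*(g - \nu)]\} \;\leq\; \nu_* + E_P[f^*(g - \nu_*)] \;=\; E_{Q_*}[g] - D_f(Q_*\|P) \;\leq\; \sup_Q\{E_Q[g] - D_f(Q\|P)\},
\end{align*}
whose outer terms coincide, forcing equality throughout and identifying $Q_*$ as an optimizer (and $\nu_*$ as an optimal shift).

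For \textbf{Part 3}, I would define $\phi(\nu) \equiv E_P[(f^*)^\prime(g - \nu)]$ and produce $\nu_*$ via the intermediate value theorem. Strict convexity of $f$ combined with $\{f^* < \infty\} = \mathbb{R}$ ensures $f^*$ is $C^1$ on $\mathbb{R}$, with $(f^*)^\prime$ continuous, nondecreasing, and taking values in $[a, b]$. Boundedness of $g$ lets me apply the dominated convergence theorem on each compact range of $\nu$ to deduce continuity of $\phi$, while the monotone convergence theorem applied to the monotone families $(f^*)^\prime(g - \nu)$ as $\nu \to \mp \infty$ yields $\lim_{\nu \to \infty}\phi(\nu) = a$ and $\lim_{\nu \to -\infty}\phi(\nu) = b$. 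Since $a < 1 < b$ by the definition of $\mathcal{F}_1(a, b)$ and $\phi$ is real-valued at each fixed $\nu$ (boundedness of $g$ keeps $(f^*)^\prime(g - \nu)$ bounded there), IVT produces the desired $\nu_*$.

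The main obstacle is justifying the endpoint limits $(f^*)^\prime(y) \to a$ as $y \to -\infty$ and $(f^*)^\prime(y) \to b$ as $y \to \infty$ used in Part 3. Under our hypotheses, Lemma \ref{lemma:f_star_formula} identifies $(f^*)^\prime(y)$ as the unique point in $\overline{(a,b)}$ where the supremum defining $f^*(y)$ is attained; the combination of strict convexity of $f$ (giving strict monotonicity of $f^\prime$) and $\{f^* < \infty\} = \mathbb{R}$ (which rules out affine pieces of $f$ at infinity and forces $f^\prime$ to exhaust the real line as $x$ sweeps through $(a,b)$) lets this maximizer sweep out all of $(a, b)$, with the claimed asymptotics then following by monotonicity. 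This is standard convex-analytic folklore but will merit a short careful verification that fits within the framework of the Legendre-transform lemmas already developed in the paper.
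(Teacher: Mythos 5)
Your Parts 1 and 2 coincide with the paper's proof: Part 1 is exactly the strict-concavity argument via Lemma \ref{lemma:strictly_convex}, and Part 2 is the same chain built from Lemma \ref{lemma:f_star_inc} (nonnegativity of $(f^*)^\prime_+$), the pointwise identity of Lemma \ref{lemma:f_star_formula}, and Corollary \ref{corr:Gibbs_a_pos} closing the loop between the supremum over $Q$ and the infimum over $\nu$.

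Part 3 is where you diverge. Both you and the paper run an intermediate value argument on $\nu\mapsto E_P[(f^*)^\prime(g-\nu)]$, but the paper gets the crossing of the value $1$ from a \emph{finite} anchor: Lemma \ref{lemma:nu0} gives $(f^*)^\prime(\nu_0)=1$ at $\nu_0=f_+^\prime(1)$, and monotonicity of $(f^*)^\prime$ then yields $h(\|g\|_\infty-\nu_0)\le 1\le h(-\|g\|_\infty-\nu_0)$ with no asymptotics needed. You instead invoke the endpoint limits $(f^*)^\prime(y)\to a$ as $y\to-\infty$ and $(f^*)^\prime(y)\to b$ as $y\to+\infty$ together with $a<1<b$. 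Those limits are in fact true, but the justification you sketch is flawed: $\{f^*<\infty\}=\mathbb{R}$ does \emph{not} force $f^\prime$ to exhaust the real line. For example $f(x)=(x-1)^2$ on $(0,2)$ (extended by its finite boundary values and by $+\infty$ outside $[0,2]$) has $f^*$ finite everywhere while $f^\prime$ has range $(-2,2)$; the maximizer of $xy-f(x)$ simply sits at the boundary point $b=2$ for all large $y$. The limits you need should instead be proved directly: if $\lim_{y\to+\infty}(f^*)^\prime(y)=L<b$, pick $x\in(L,b)$ and compare $f^*(y)\ge xy-f(x)$ with $f^*(y)\le f^*(0)+Ly$ to get $(x-L)y\le f^*(0)+f(x)$ for all $y>0$, a contradiction; the limit at $-\infty$ is symmetric (and monotone convergence then passes the limits through $E_P$, using $(f^*)^\prime\ge a\ge 0$). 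With that repair your route closes, but the paper's use of Lemma \ref{lemma:nu0} is shorter and reuses machinery already established, so you may prefer to adopt it.
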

\begin{proof}
\begin{enumerate}
\item  We obviously have
\begin{align}\label{eq:gibbs_equality}
\sup_{Q\in\mathcal{P}(\Omega)}\{E_Q[ g]-D_f(Q\|P)\}=\sup_{Q\in\mathcal{P}(\Omega):D_f(Q\|P)<\infty}\{E_Q[ g]-D_f(Q\|P)\}\,,
\end{align} 
and optimizers (if they exist) must be in $\{Q:D_f(Q\|P)<\infty\}$. If $f$ is strictly convex then Lemma \ref{lemma:strictly_convex} implies that  the map $Q\mapsto D_f(Q\|P)$ is strictly convex on the set $\{Q:D_f(Q\|P)<\infty\}$.  The objective functional $Q\mapsto E_Q[ g]-D_f(Q\|P)$ is therefore strictly concave and hence has at most one maximizer. 
\item Lemma \ref{lemma:f_star_inc} implies $f^*$ is nondecreasing, and so $(f^*)^\prime_+\geq 0$. Together with the assumption \eqref{eq:E_f_prime_assump_app}, this implies $Q_*$ is a probability measure. From Lemma \ref{lemma:f_star_formula} we have
\begin{align}\label{eq:LT_deriv_identity}
f((f^*)_+^\prime( g-\nu_*))=( g-\nu_*)(f^*)_+^\prime( g-\nu_*)-f^*( g-\nu_*)
\end{align}
 and so we can compute
\begin{align}
&\sup_{Q\in\mathcal{P}(\Omega)}\{E_Q[ g]-D_f(Q\|P)\}\geq  E_{Q_*}[ g]-D_f(Q_*\|P)\\
=&\nu_*+E_P[( g-\nu_*) (f^*)^\prime_+( g-\nu_*)-f( (f^*)^\prime_+( g-\nu_*))]\notag\\
=&\nu_*+E_P[f^*( g-\nu_*)]\geq\inf_{\nu\in\mathbb{R}}\{\nu+E_P[f^*( g-\nu)]\}\notag\\
=&\sup_{Q:D_f(Q\|P)<\infty}\{E_Q[ g]-D_f(Q\|P)\}\,,\notag
\end{align}
where we used \req{eq:LT_deriv_identity} to go from the second to the third line and we used \req{eq:Df_Gibbs} to obtain the last line. The equality \eqref{eq:gibbs_equality} then completes the proof.
\item Strict convexity of $f$ implies $f^*$ is $C^1$ (see   Theorem 26.3 in \cite{rockafellar1970convex}).  $g$ is bounded and so the dominated convergence theorem implies that the map $h:\mathbb{R}\to\mathbb{R}$, $h(\nu)= E_P[(f^*)^\prime(g-\nu)]$ is continuous. From Lemma \ref{lemma:nu0} we see that $\nu_0\equiv f_+^\prime(1)$ satisfies  $(f^*)^\prime(\nu_0)=1$. Convexity of $f^*$ implies that $(f^*)^\prime$ is nondecreasing, therefore
\begin{align}
h(\|g\|_\infty-\nu_0)\leq (f^*)^\prime(\nu_0)=1
\end{align}
and
\begin{align}
h(-\|g\|_\infty-\nu_0)\geq (f^*)^\prime(\nu_0)=1\,.
\end{align}
Continuity therefore implies there exists $\nu_*\in [-\|g\|_\infty-\nu_0,\|g\|_\infty-\nu_0]$ with $h(\nu_*)=1$ as claimed.
\end{enumerate}
\end{proof}

Finally, we prove the characterization from Theorem \ref{thm:inf_conv_sol} in the more general case of $D_f^\Gamma(\mu\|P)$ where $\mu\in M(S)$. 
\begin{theorem}\label{thm:inf_conv_sol_app}
Let  $\Gamma\subset C_b(S)$ be admissible and $f\in\mathcal{F}_1(a,b)$ be admissible, where   $a\geq 0$.  Fix $P\in\mathcal{P}(S)$, $\mu\in M(S)$.  Suppose we have $ g_*\in\Gamma$ and $\nu_*\in\mathbb{R}$ that satisfy the following:
\begin{enumerate}
\item $f((f^*)^\prime_+( g_*-\nu_*))\in L^1(P)$,
\item  $E_P[(f^*)^\prime_+( g_*-\nu_*)]=1$,
\item $W^\Gamma(\mu,\eta_*)=\int  g_*d\mu-\int g_*d\eta_*$, where  $d\eta_*\equiv (f^*)^\prime_+( g_*-\nu_*)dP$.
\end{enumerate}
Then $\eta_*\in\mathcal{P}(S)$ solves the infimal convolution problem \eqref{eq:inf_conv_app} (\req{eq:inf_conv} for the case of $\mu=Q\in\mathcal{P}(S)$) and
\begin{align}\label{eq:Df_Phi_formula_app}
D_f^\Gamma(\mu\|P)=\int  g_*d\mu-(\nu_*+E_P[f^*( g_*-\nu_*)])\,.
\end{align}
If $f$ is strictly convex then $\eta_*$ is the unique solution to the infimal convolution problem.
\end{theorem}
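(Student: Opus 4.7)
The plan is to directly verify that the candidate measure $\eta_*$ achieves the infimum in \eqref{eq:inf_conv_app} by computing $D_f(\eta_*\|P) + W^\Gamma(\mu,\eta_*)$ explicitly and sandwiching between a lower bound from the definition of $D_f^\Gamma(\mu\|P)$ and an upper bound from the infimal convolution representation already established in Theorem~\ref{thm:f_div_inf_convolution_app}.

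First I would check that $\eta_* \in \mathcal{P}(S)$. Since $f \in \mathcal{F}_1(a,b)$ with $a \geq 0$, Lemma~\ref{lemma:f_star_inc} shows that $f^*$ is nondecreasing, so $(f^*)'_+ \geq 0$; combined with condition~2, this makes $\eta_*$ a positive measure of total mass one. Admissibility of $f$ gives $\{f^*<\infty\} = \mathbb{R}$, hence $\{f^*<\infty\}^o = \mathbb{R}$, and so Lemma~\ref{lemma:f_star_formula} applies pointwise:
\begin{equation*}
f^*(g_* - \nu_*) = (g_* - \nu_*)(f^*)'_+(g_* - \nu_*) - f\bigl((f^*)'_+(g_* - \nu_*)\bigr).
\end{equation*}
Each term on the right is $P$-integrable: $g_* - \nu_*$ is bounded (as $g_* \in C_b(S)$), the monotone function $(f^*)'_+$ evaluated on a bounded set is bounded, and $f((f^*)'_+(g_* - \nu_*)) \in L^1(P)$ by condition~1. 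Integrating against $P$, rearranging, and using $d\eta_* = (f^*)'_+(g_* - \nu_*) dP$ together with condition~2 ($E_P[(f^*)'_+(g_* - \nu_*)] = 1$), I obtain
\begin{equation*}
D_f(\eta_*\|P) = E_P\bigl[f\bigl((f^*)'_+(g_* - \nu_*)\bigr)\bigr] = \int g_* \, d\eta_* - \nu_* - E_P[f^*(g_* - \nu_*)].
\end{equation*}

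Next, invoking condition~3 to replace $\int g_* d\eta_*$ via $W^\Gamma(\mu,\eta_*) = \int g_* d\mu - \int g_* d\eta_*$, I get
\begin{equation*}
D_f(\eta_*\|P) + W^\Gamma(\mu,\eta_*) = \int g_* \, d\mu - \bigl(\nu_* + E_P[f^*(g_* - \nu_*)]\bigr).
\end{equation*}
The right-hand side is exactly the value of the objective functional defining $D_f^\Gamma(\mu\|P)$ at the admissible pair $(g_*, \nu_*)$, so $D_f^\Gamma(\mu\|P) \geq \int g_* d\mu - (\nu_* + E_P[f^*(g_* - \nu_*)])$. Conversely, Part~1 of Theorem~\ref{thm:f_div_inf_convolution_app} (applicable since both $f$ and $\Gamma$ are admissible) gives the infimal convolution identity, hence $D_f^\Gamma(\mu\|P) \leq D_f(\eta_*\|P) + W^\Gamma(\mu,\eta_*)$. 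Chaining the two inequalities shows both that \eqref{eq:Df_Phi_formula_app} holds and that $\eta_*$ attains the infimum in \eqref{eq:inf_conv_app}. Uniqueness under strict convexity of $f$ is then immediate from Part~2 of Theorem~\ref{thm:f_div_inf_convolution_app}.

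The only nontrivial step is justifying the pointwise Legendre-duality identity from Lemma~\ref{lemma:f_star_formula} and its term-by-term integration; everything else is bookkeeping, since the deep work — namely the infimal convolution formula and the existence/uniqueness structure — is already done in Theorem~\ref{thm:f_div_inf_convolution_app}. I expect no further obstacles.
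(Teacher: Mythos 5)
Your proof is correct and follows essentially the same route as the paper's: a sandwich argument with the lower bound obtained by evaluating the variational objective at $(g_*,\nu_*)$ and the upper bound from Part~1 of Theorem~\ref{thm:f_div_inf_convolution_app}, with uniqueness delegated to Part~2. The only difference is cosmetic — where you inline the Legendre-duality identity of Lemma~\ref{lemma:f_star_formula} to compute $D_f(\eta_*\|P)$ directly, the paper packages that same computation by citing Part~2 of Theorem~\ref{thm:Gibbs_optimizer_app}.
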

\begin{proof}
Admissibility of $f$ implies $\{f^*<\infty\}=\mathbb{R}$. Convexity of $f^*$ then implies that the right derivative $(f^*)_+^\prime$   exists everywhere and so  $\range( g_*-\nu_*)\subset \{(f^*)^\prime_+<\infty\}$. Therefore we can use Theorem \ref{thm:Gibbs_optimizer_app} to conclude that
\begin{align}
d\eta_*=  (f^*)^\prime_+( g_*-\nu_*)dP
\end{align}
is a probability measure, $D_f(\eta_*\|P)<\infty$, and 
\begin{align}
\sup_{Q: D_f(Q\|P)<\infty}\{E_Q[ g_*]-D_f(Q\|P)\}= &E_{\eta_*}[ g_*]-D_f(\eta_*\|P)=\nu_*+E_P[f^*( g_*-\nu_*)]\\
=&\inf_{\nu\in\mathbb{R}}\{\nu+E_P[f^*( g_*-\nu)]\}\,.\notag
\end{align}
In particular,
\begin{align}
 D_f(\eta_*\|P)-E_{\eta_*}[ g_*]=-(\nu_*+E_P[f^*( g_*-\nu_*)])\,.
\end{align}

Using Part 1 of Theorem \ref{thm:f_div_inf_convolution_app} we can compute
\begin{align}
D_f^\Gamma(\mu\|P)\leq& D_f(\eta_*\|P)+W^\Gamma(\mu,\eta_*)\\
=&D_f(\eta_*\|P)+\int  g_*d\mu-\int g_*d\eta_*\notag\\
=&\int  g_*d\mu-(\nu_*+E_P[f^*( g_*-\nu_*)])\notag\\
=&\int  g_*d\mu-\inf_{\nu\in\mathbb{R}}\{\nu+E_P[f^*( g_*-\nu)]\}\notag\\
\leq&\sup_{ g\in\Gamma}\left\{\int  g d\mu-\inf_{\nu\in\mathbb{R}}\{\nu+E_P[f^*( g-\nu)]\}\right\}=D_f^\Gamma(\mu\|P)\,.\notag
\end{align}
Therefore $\eta_*$ solves the infimal convolution problem and \eqref{eq:Df_Phi_formula_app} holds.  If $f$ is strictly convex then, as shown in Theorem \ref{thm:f_div_inf_convolution_app}, the solution to the infimal convolution problem is unique.
\end{proof}

\section{Strict Concavity  of the $(f,\Gamma)$-Divergence Objective Functional}\label{app:Taylor}
Here we will (formally) compute the Taylor expansion of the objective functional in \eqref{eq:gen_f_def} for the $(f,\Gamma)$-divergences. The computation is very similar to the  classical $f$-divergence case considered in Appendix C of \cite{birrell2020optimizing}. First define
\begin{align}
    H_f[ g;Q,P]=E_Q[ g]-\inf_{\nu\in\mathbb{R}}\{\nu+E_P[f^*( g-\nu)]\}
\end{align}
and note that this map is concave in $ g$, due to the convexity of $f^*$.  In fact, under weak assumptions it is strictly concave, as we now show. Take a line segment $ g_\epsilon= g_0+\epsilon\psi\in\Gamma$, $\epsilon\in(-\delta,\delta)$. We will compute $\frac{d^2}{d\epsilon^2}|_{\epsilon=0}H_f[ g_\epsilon;Q,P]$.

The optimization problem $\inf_{\nu\in\mathbb{R}}\{\nu+E_P[f^*( g_\epsilon-\nu)]\}$ is solved by $\nu_\epsilon$ that satisfies
\begin{align}
    0=\partial_\nu|_{\nu=\nu_\epsilon}\{\nu+E_P[f^*( g_\epsilon-\nu)]\}\,,
\end{align}
i.e.,
\begin{align}\label{eq:taylor1}
E_P[(f^*)^\prime( g_\epsilon-\nu_\epsilon)]=1
\end{align}
for all $\epsilon$. Differentiating this at $\epsilon=0$ we find
\begin{align}\label{eq:taylor2}
\nu^\prime_0=E_{P_0}[\psi]\,,\,\,\,\,\,dP_0\equiv \frac{(f^*)^{\prime\prime}( g_0-\nu_0)}{E_P[(f^*)^{\prime\prime}( g_0-\nu_0)]}dP\,.
\end{align}
Note that convexity of $f^*$ implies $E_P[(f^*)^{\prime\prime}( g_0-\nu_0)]\geq 0$.  We assume this inequality is strict. We can compute
\begin{align} 
    \frac{d}{d\epsilon}|_{\epsilon=0} H[ g_\epsilon;Q,P]=&E_Q[\psi]-\nu_0^\prime-E_P[(f^*)^\prime( g_0-\nu_0)\psi]+E_P[(f^*)^\prime( g_0-\nu_0)]\nu_0^\prime \notag\\
    =&E_Q[\psi]-E_P[(f^*)^\prime( g_0-\nu_0)\psi]\,,\label{eq:first:deriv:appendix}\\
    \frac{d^2}{d\epsilon^2}|_{\epsilon=0} H_f[ g_\epsilon;Q,P]=&-\nu_0^{\prime\prime}-E_P[(f^*)^{\prime\prime}( g_0-\nu_0)(\psi-\nu_0^\prime)^2]+E_P[(f^*)^\prime( g_0-\nu_0)]\nu_0^{\prime\prime}\notag\\
    =&-E_P[(f^*)^{\prime\prime}( g_0-\nu_0)]\Var_{P_0}[\psi]\,,
    \label{eq:second:deriv:appendix}
\end{align}
where we used \req{eq:taylor1} and \req{eq:taylor2} to simplify and
\begin{align}
    \nu_0={\mbox{argmin}}_{\nu\in\mathbb{R}}\{\nu+E_P[f^*( g_0-\nu)]\}\,.
\end{align}
In particular, the second derivative is strictly negative when $\Var_{P_0}[\psi]\neq 0$, i.e., $H_f[ g;Q,P]$ is strictly concave at $ g_0$ in all directions, $\psi$, of nonzero variance under $P_0$. This can be made more explicit in the KL case. First recall the objective functional from \req{eq:Lambda_KL},
\begin{align}\label{eq:H_KL_def}
    H_{KL}[ g;Q,P]\equiv E_Q[  g]-\inf_{\nu\in\mathbb{R}}\{\nu+E_P[f_{KL}^*( g-\nu)]\}=E_Q[  g]-\log E_P[e^{ g}]\,.
\end{align}
Fixing $ g_0\in\Gamma$ and perturbing in a direction $\psi$ we can compute
\begin{align}\label{eq:HKL_taylor}
    \frac{d^2}{d\epsilon^2}|_{\epsilon=0}H_{KL}[ g_0+\epsilon\psi;Q,P]=&-(E_P[\psi^2e^{ g_0}] E_P[e^{ g_0}]^{-1}-E_P[\psi e^{ g_0}]^2E_P[e^{ g_0}]^{-2})\\
    =&-\Var_{P_0}[\psi],\,\,\,\, dP_0\equiv e^{ g_0}dP/E_P[e^{ g_0}]\,.\notag
\end{align}
Thus we again have strict convexity in all directions $\psi$ of nonzero variance under $P_0$.

{  \section{ Additional Figures} \label{app:extra_figs}}

\newpage

\begin{figure}[h]
  \centering
\begin{subfigure}[b]{1\textwidth}
   \includegraphics[width=1\linewidth]{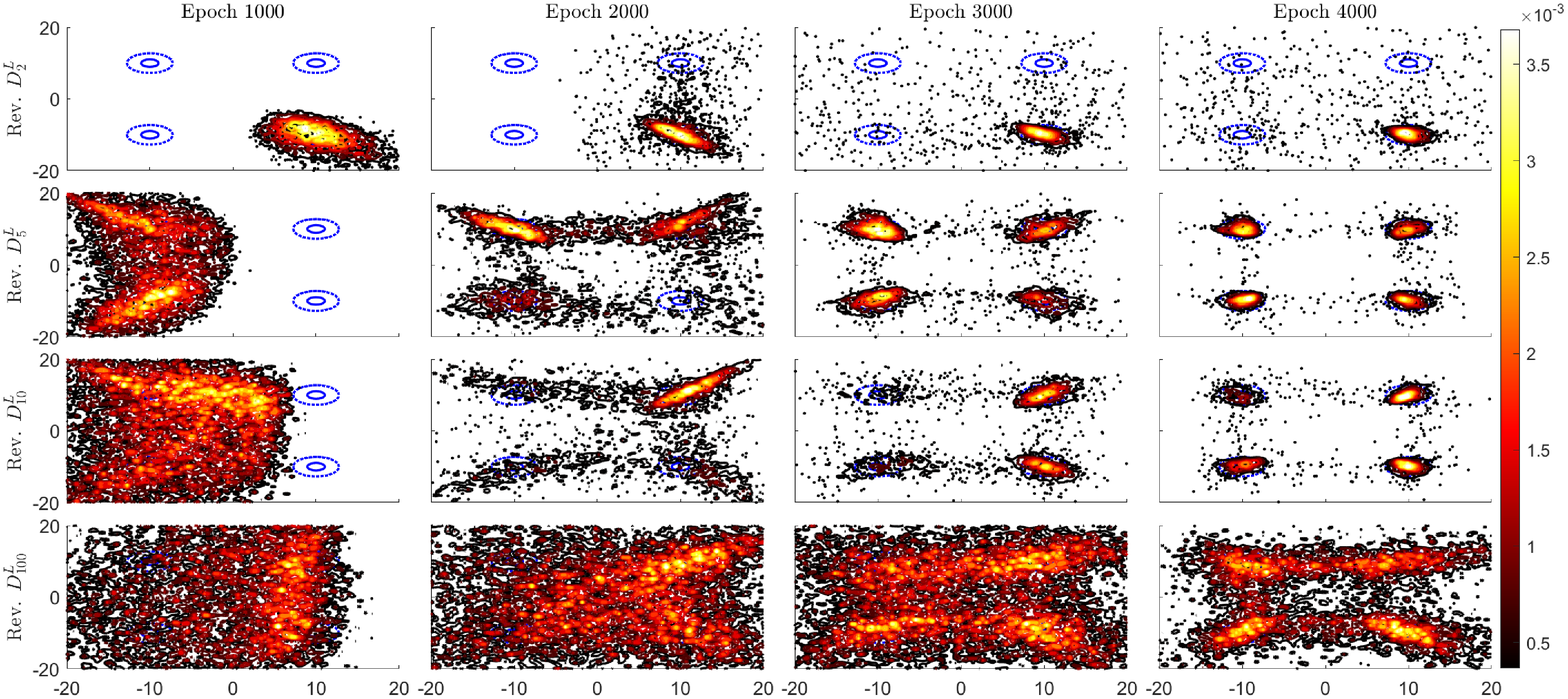}
   \caption{}
\end{subfigure}

\begin{subfigure}[b]{1\textwidth}
   \includegraphics[width=1\linewidth]{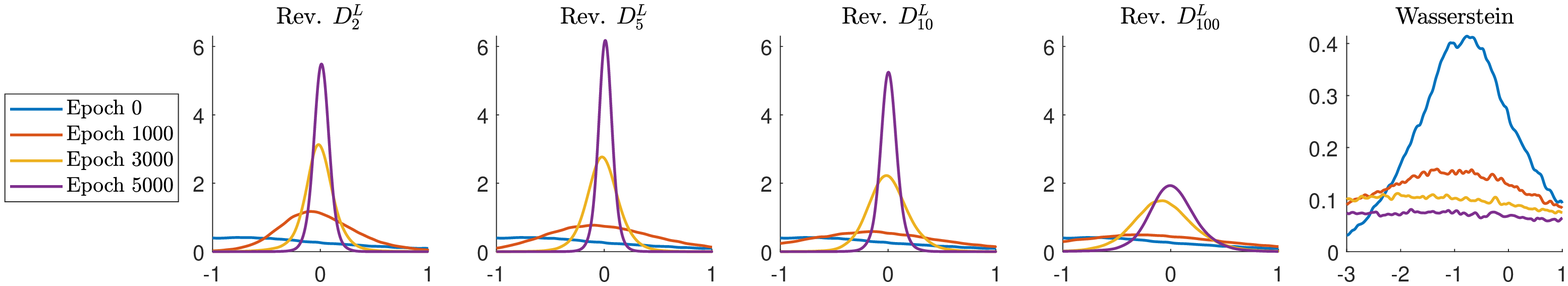}
   \caption{}
\end{subfigure}

\begin{subfigure}[b]{.99\textwidth}
 \begin{minipage}[b]{0.49\linewidth}
  \centering
\includegraphics[scale=.49]{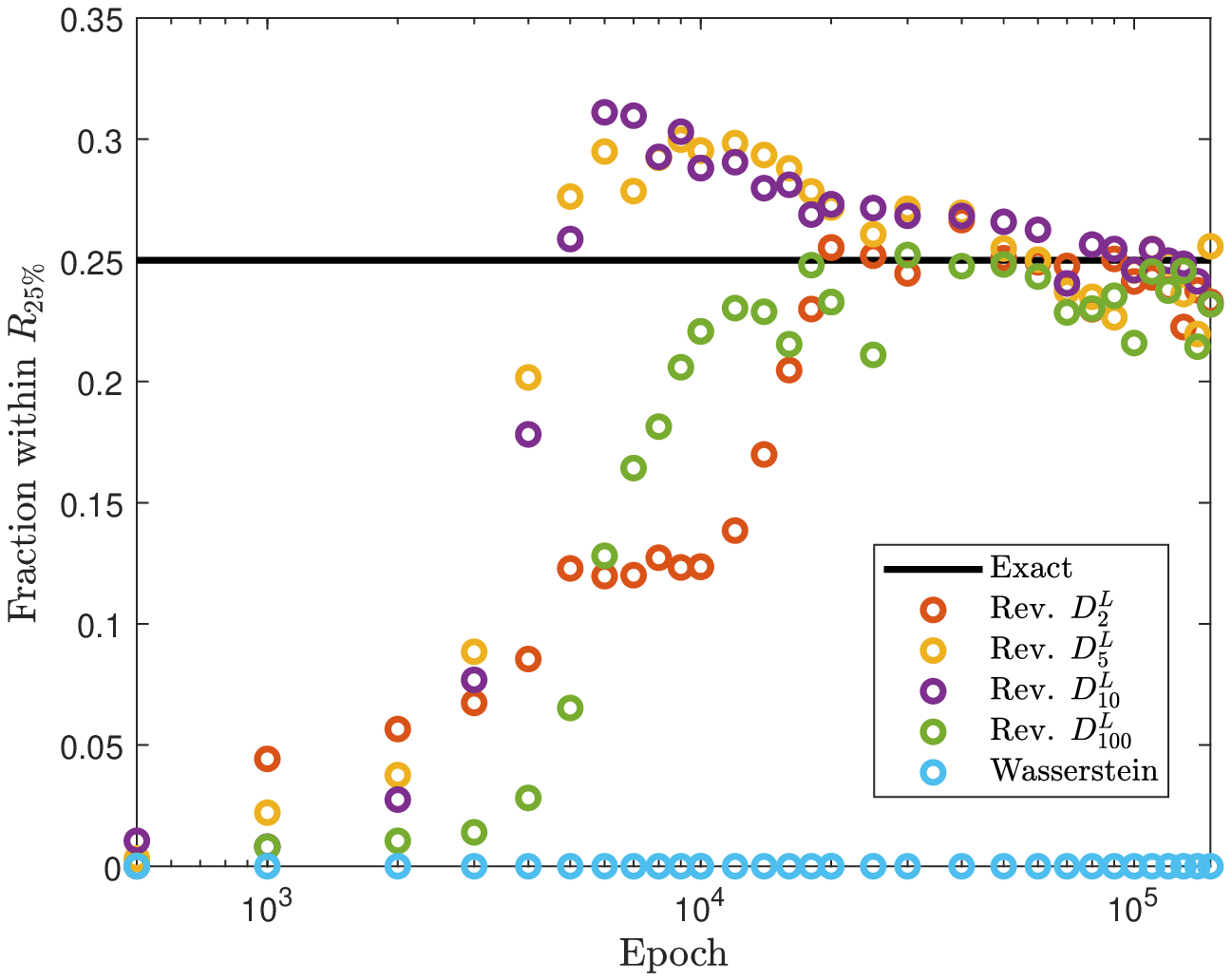}\end{minipage}
\begin{minipage}[b]{0.49\linewidth}
\centering
\includegraphics[scale=.49]{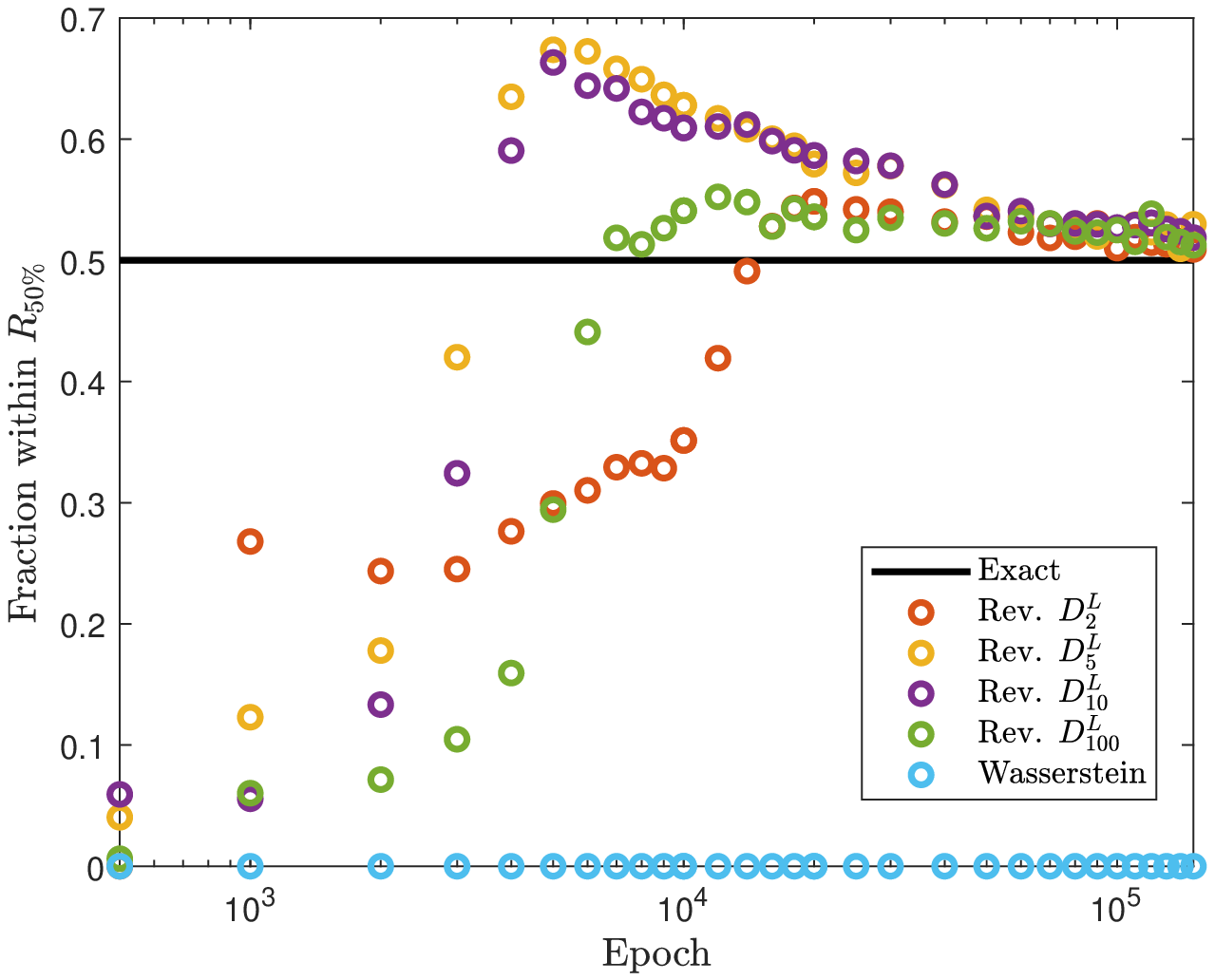} \end{minipage}
   \caption{}
\end{subfigure}
   \caption{{  Here we present generator samples and their statistical behavior from Wasserstein and reverse Lipschitz $\alpha$-GAN methods using the same setup as in Figure \ref{fig:gen_alpha_GAN_2}, except that training was done with a much larger set of samples (100000 samples). We obtain similar results, with the primary difference being that the training converges faster. }
     }\label{fig:gen_alpha_GAN_2}
\end{figure}

\newpage

\begin{figure}[h]
  \centering
\includegraphics[scale=.465]{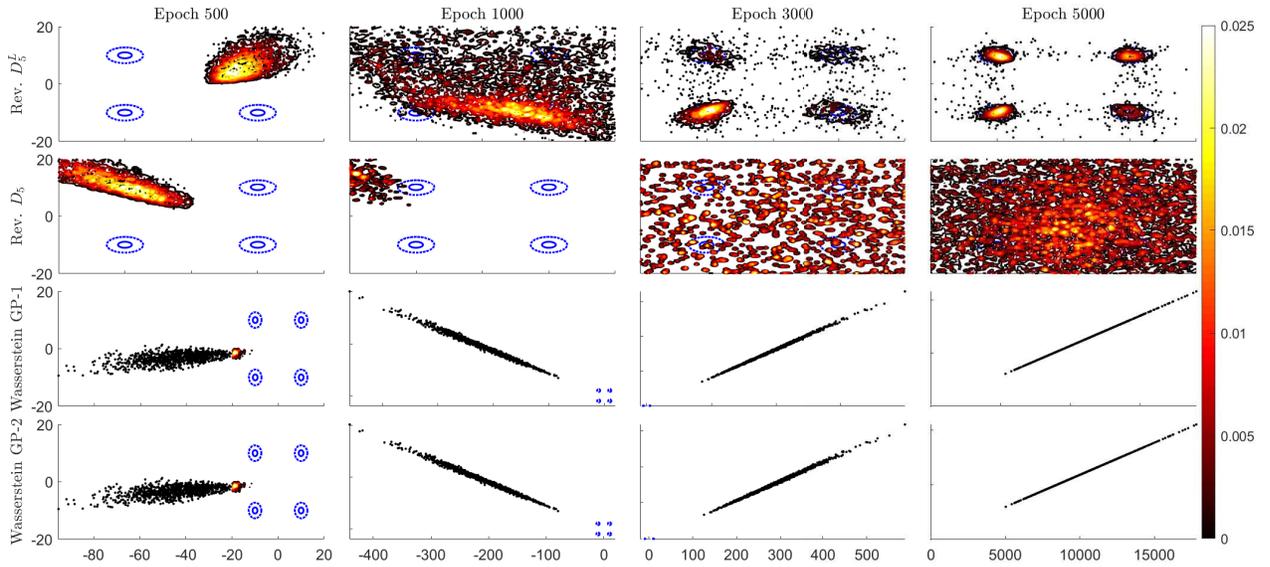} 

\caption{ {  
Here we present generator samples  from WGAN-GP, reverse classical $f$-GAN (denoted $D_\alpha$), and reverse Lipschitz $\alpha$-GAN  using the setup described in Section \ref{sec:submanifold_ex} and Figure \ref{fig:gen_alpha_GAN_2}, except that we do not embed in higher-dimensional space; the results are similar to what was described in Section \ref{sec:submanifold_ex}. In the case of classical $f$-GAN this is intriguing since, unlike in Figure \ref{fig:gen_alpha_GAN_2}, here we have $D_f(P_\theta\|Q)<\infty$ yet the classical $f$-GAN still fails to converge. This suggests that the Lipschitz constraint aids in the stability of the training even in such cases where the classical $f$-divergence is finite. We show the result only for $\alpha=5$ but the behavior for other values of $\alpha$ is similar.
}}\label{fig:2Dstudent_GAN}
   
\end{figure}

\section*{Acknowledgments}
The authors are grateful to Dipjyoti Paul for  providing code to build upon and for helping us to run simulations. The authors also want to acknowledge the anonymous referees for valuable comments, suggestions and insights.  The research of J.B., M.K., and L. R.-B.  was partially supported by NSF TRIPODS  CISE-1934846.   
The research of M. K. and L. R.-B.   was partially supported by the National Science Foundation (NSF) under the grant DMS-2008970
 and by the Air Force Office of Scientific Research (AFOSR) under the grant FA-9550-18-1-0214. 
The research of P.D. was supported in part by the National Science Foundation (NSF) under the grant DMS-1904992 and by the Air Force Office of Scientific Research (AFOSR) under the grant FA-9550-18-1-0214. This work was performed in part using high performance computing equipment obtained under a grant from the Collaborative R\&D Fund managed by the Massachusetts Technology Collaborative.

\bibliography{f_Gamma_v2.bbl}

\end{document}